\newcommand\numberthis{\addtocounter{equation}{1}\tag{\theequation}}
\newcommand\smallO{
  \mathchoice
    {{\scriptstyle\mathcal{O}}}
    {{\scriptstyle\mathcal{O}}}
    {{\scriptscriptstyle\mathcal{O}}}
    {\scalebox{.7}{$\scriptscriptstyle\mathcal{O}$}}
  }
\def \btheta {{\boldsymbol{\theta}}}
\definecolor{ERM_cl}{RGB}{51,51,51}
\definecolor{MAML_cl}{RGB}{0,127,0}
\definecolor{BAMAML_cl}{RGB}{0,76,204}
\definecolor{BIMAML_cl}{RGB}{196,51,51}
\newtheorem{theorem}{Theorem}
\newtheorem{lemma}{Lemma}
\newtheorem{proposition}{Proposition}
\newtheorem{remark}[theorem]{Remark}
\newtheorem{corollary}{Corollary}
\newtheorem{assumption}{Assumption}
\newenvironment{proof}{\paragraph{Proof:}}{\hfill$\square$}
\newcommand{\emphblockoption}{drop shadow,
    colframe=black!60,
    colback=black!10,
    coltitle=white!, 
    left=.2pt,
    right=.2pt,
    boxrule=1pt,
    arc=1pt}
\begin{document}

%

%

\twocolumn[

\aistatstitle{
Is Bayesian Model-Agnostic Meta Learning Better than Model-Agnostic Meta Learning, Provably?
}

\aistatsauthor{ Lisha Chen \And  Tianyi Chen }

\aistatsaddress{ 
Rensselaer Polytechnic Institute  } ]

\begin{abstract}
  Meta learning aims at learning a model that can quickly adapt to unseen tasks.
  Widely used meta learning methods include model-agnostic meta learning (MAML), implicit MAML, Bayesian MAML. Thanks to its ability of modeling uncertainty, Bayesian MAML often has advantageous empirical performance. 
  However, 
  the theoretical understanding of Bayesian MAML is still limited, especially on questions such as if and when Bayesian MAML has provably better performance than MAML. 
  In this paper, 
  we aim to provide theoretical justifications for Bayesian MAML's advantageous performance by comparing the meta test risks of MAML and Bayesian MAML. In the meta linear regression, under both the distribution agnostic and linear centroid cases, we have established that Bayesian MAML indeed has provably lower meta test risks than MAML. 
We verify our theoretical results through experiments,
the code of which is available at \href{https://github.com/lisha-chen/Bayesian-MAML-vs-MAML}{https://github.com/lisha-chen/Bayesian-MAML-vs-MAML}.
\end{abstract}

\section{INTRODUCTION} 
\label{sec:introduction}

Meta learning, also referred to as ``learning to learn'', 
usually learns a model that can quickly adapt to new tasks~\citep{l2l_book_1998,hospedalesmeta,vilalta2002perspective,vanschoren2018meta,bengio_learnsynaptic,Schmidhuber95onlearning,hochreiter2001_l2l}.
The key idea of meta-learning is to learn a ``prior'' model from multiple existing tasks with a hope that the learned model is able to quickly adapt to unseen tasks.
Meta learning has been used in various machine learning scenarios including few-shot learning~\citep{snell2017prototypical,obamuyide2019model}, 
continual learning~\citep{harrison2020continuous,javed2019meta}, 
and personalized learning~\citep{madotto2019personalizing}.
In addition, meta learning has also been successfully implemented in different data limited applications including language and vision tasks~\citep{achille2019task2vec,li2018learning,hsu2018unsupervised,liu2019learning,zintgraf2019fast,Wang_2019_ICCV,obamuyide2019model}.
One of the popular meta-learning approaches is the model agnostic meta-learning (MAML) method~\citep{Finn2017_maml,vuorio2019multimodal,yin2020meta,obamuyide2019model}, 
which learns an initial model that can adapt to new tasks using one step gradient update.
Despite its success, MAML still suffers from overfitting when it is trained with few data, which motivates  Bayesian MAML (BaMAML)~\citep{grant2018recasting,ravi2018_ABML,yoon2018_BMAML}. 
Instead of point estimation of task specific model parameters, as in MAML and its variants~\citep{rajeswaran2019_imaml}, BaMAML obtains a posterior distribution of task specific parameters as a function of the task data and the initial model parameters, as illustrated in Figure~\ref{fig:MAML_BaMAML_intro}.
For example, in 5-way 1-shot classification on TieredImageNet, BaMAML has 35.2\% performance gain over MAML in terms of accuracy ~\citep{nguyen2020_VAMPIRE}.
In spite of BaMAML's impressive empirical performance, 
its theoretical understanding is still very limited, no need to mention a sound justification for its performance gain over MAML.
\begin{figure}[t]
  \centering
  \includegraphics[width=0.99\linewidth]{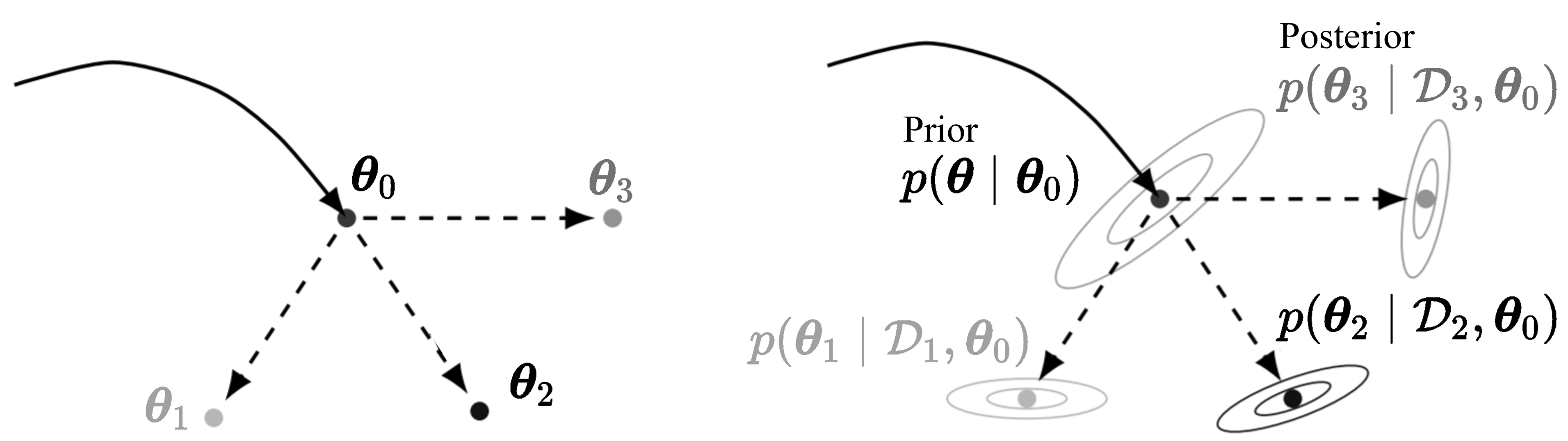}
  \caption{MAML (left) and Bayesian MAML (right).
  }
  \label{fig:MAML_BaMAML_intro}
  \vspace{-0.2cm}
\end{figure}
In this context, this paper aims to answer the following question:
\begin{center}
   \small{
   \textsf{
   Is Bayesian Model-Agnostic Meta Learning Better than Model-Agnostic Meta Learning, Provably?
   }}
\end{center}
In an attempt to provide an affirmative answer to this question, our paper 
analyzes the meta-test risks of one-step MAML and BaMAML to make a fair comparison between them.
In a high level, our theoretical results suggest that compared to one-step MAML, BaMAML
1) harnesses 
flexibility in the trade-off between prior and likelihood  based on their quality to improve model adaptation capacity;
and, 2)
leverages the posterior distribution instead of a point estimation in inference, which allows model averaging to reduce variance.


\subsection{Related Works} 
\label{sub:related_works}
 

Early works of meta learning build black-box recurrent models that can make predictions based on few examples from new tasks~\citep{schmidhuber1993_recurrent,hochreiter2001_l2l,andrychowicz2016_l2l,chen2017_l2l}, or learn shared feature representation among multiple tasks~\citep{snell2017prototypical,vinyals2016matching}. 
More recently, some methods have been developed to find the initialization of model parameters that can quickly adapt to new tasks with few optimization steps~\citep{Finn2017_maml,nichol2018first,rothfuss2018promp}.
The empirical success of meta learning has also stimulated recent interests on building the theoretical foundation of these methods.
To put our work in context, we review prior art that is grouped into the following categories.

\textbf{Theory of meta-learning.}
One line of theoretical works study the convergence of meta-learning algorithms under different settings.
These works include
analysis of the regret bound for an online meta-learning algorithm~\citep{finn2019_onlineML},
the convergence and sample complexity of gradient based MAML~\citep{fallah2020_convergence_maml},
sufficient conditions for its convergence to the exact solution for an approximate bilevel optimization method~\citep{franceschi2018_bilevel_maml_approx},
sample complexity for a bilevel formulation for meta-learning, named implicit MAML (iMAML)~\citep{rajeswaran2019_imaml},
and the global convergence guarantee of MAML with overparameterized deep neural nets (DNNs)~\citep{wang2020_global_converge_maml_dnn,wang2020_global_opt_maml}. There are also works that study the convergence of general compositional~\citep{chen2021composition} or bilevel~\citep{chen2021closing,yang2021provably,liu2021towards} optimzation algorithms which can be applied to analyze the convergence of one-step or bilevel MAML.

Another line of works analyze the generalization error bound  of meta learning  methods under different settings based on their optimization trajectory.
For instance, meta-learning in the linear centroid model for ridge regression~\citep{denevi2018_l2l_linear_centroid},
MAML with sufficiently wide DNNs~\citep{wang2020_global_converge_maml_dnn},
meta-learning in online convex optimization~\citep{balcan2019_gbml_online_convex}, 
and MAML for strongly convex objective functions on recurring and unseen tasks~\citep{fallah2021_generalization_unseen}.
Recently, information theoretical generalization error bounds of meta learning are also proposed by~\cite{jose2021information,rezazadeh2021conditional,jose2021transfer,chen2021generalization}, which bounds the meta learning generalization error in terms of mutual information between the input meta-training data and the output of the meta-learning algorithms rather than gradient norm of the algorithms during optimization.

Our work is also inspired by several pioneering works that analyze the optimization, modeling and statistical errors of meta-learning methods.
Gao et al.~\citeyearpar{gao2020_model_opt_tradeoff_ml} study the modeling and optimization error trade-off in MAML and compare the trade-off with that of empirical risk minimization (ERM). 
Collins et al.~\citeyearpar{collins2020_task_landscape_erm_maml} further analyze the effect of different factors on the optimal population risk, such as task hardness in task landscape.
Bai et al.~\citeyearpar{bai2021_trntrn_trnval} study how the dataset split between the training and validation affects the performance of iMAML under a noiseless realizable centroid model.
But none of them tackle the meta-test risk of BaMAML.
Furthermore, from the technical aspect,
{compared to~\cite{bai2021_trntrn_trnval},
our analysis does not require strong assumption on noiseless realizable model; compared to~\cite{gao2020_model_opt_tradeoff_ml}, our analysis provides a sharper characterization of statistical error bound in the high-dimensional asymptotic case.}

\textbf{Bayesian model agnostic meta-learning.}
From a hierarchical probabilistic modeling perspective, learning the initialization in MAML is tantamount to learning the prior distribution of model parameters shared across different tasks~\citep{grant2018recasting}, which leads to a hierarchical Bayes formulation that we call
BaMAML thereafter.
Empirically, they have better performance in few-shot meta learning settings and tend to reduce over-fitting in the data-limited regimes. 
Several variants of BaMAML have been  proposed  based on different Bayesian inference methods~\citep{grant2018recasting,finn2018_PLATIPUS,yoon2018_BMAML,gordon2018_VERSA,nguyen2020_VAMPIRE}.
Despite the superior empirical performance of BaMAML methods compared to non-Bayesian ones, very few works study their theory.
A related line of works extend the PAC-Bayes framework to meta learning~\citep{amit2018_pac_bayes_ml_prior,rothfuss2021_pacoh_pac_bayes_ml,ding2021bridging,farid2021generalization}, to provide a PAC-Bayes meta-test error bound. 
Different from the PAC-Bayes framework that bounds the Gibbs risk, we bound the Bayes risk~\citep{Sheth2017ExcessRiskBayes}.
While these works provide the meta-test error bound for BaMAML,
exactly when BaMAML is provably better than non-Bayesian methods are not fully understood.
Different from these works,
we explicitly compare MAML and BaMAML in terms of meta-test error, consisting of the optimal population risks and statistical errors.


\subsection{Our Contributions} 
\label{sub:our_contributions}
The goal of this paper is to provide justification on the observed empirical performance gain of BaMAML over MAML.
Our contributions are summarized below.
\begin{itemize}
  \item[\bf C1)] Under the meta-linear regression setting, we decompose the meta-test risk into population risk and statistical error terms, which capture the bias and variance of the estimated parameter, respectively.
  We prove that BaMAML with proper choice of hyperparameters has smaller optimal population risk
  and dominating constant in statistical error than MAML, therefore smaller meta-test risk.
  \item[\bf C2)] With additional linear centroid model assumption for task data distribution, we prove that BaMAML has strictly smaller dominating constant in statistical error than MAML in the high dimensional asymptotic case.
  \item[\bf C3)] We conduct simulations on meta linear regression to verify our theory. And we also perform experiments beyond linear case, where similar conclusions can be drawn. 
\end{itemize}
Our theoretical analysis justifies BaMAML for reducing the optimal population risk and statistical errors, thus the meta-test risk.
And to our best knowledge, we are the first to make a comparison between MAML and BaMAML,
which is complementary to existing works~\citep{gao2020_model_opt_tradeoff_ml,collins2020_task_landscape_erm_maml} that compare MAML against empirical risk minimization. 

\section{PROBLEM DEFINITION AND SOLUTIONS} 
\label{sec:preliminaries}

In this section, we first introduce the general meta-learning setting 
and the formulations of two meta learning methods, MAML and BaMAML.
Then we focus on meta-linear regression, 
where solutions to the empirical and population level risks are obtained in closed form.


\subsection{Problem Setup} 
\label{sub:formal_setup}

In our meta-learning setting, 
assume task $\tau$ are drawn from a task distribution, i.e. $\tau \sim \mathcal{T}$, with input features $\mathbf{x}_{\tau} \in \mathcal{X}_{\tau} \subset \mathbb{R}^d$ and target labels $y_{\tau} \in \mathcal{Y}_{\tau} \subset \mathbb{R}$.
For each task $\tau$, we observe $N$ samples drawn i.i.d. from $\mathcal{P}_{\tau}$ in the dataset
$\mathcal{D}_{\tau} = \{(\mathbf{x}_{\tau,n},y_{\tau,n})\}_{n=1}^N$, 
and $\mathcal{D}_{\tau}$ is divided into the train and validation datasets, denoted as $\mathcal{D}_{\tau}^{{\rm trn}}$ and $\mathcal{D}_{\tau}^{{\rm val}}$, respectively. Here $|\mathcal{D}_{\tau}^{{\rm trn}}|=N_1$ and $|\mathcal{D}_{\tau}^{{\rm val}}|=N_2$ with $N=N_1+N_2$.
Given the data $\mathcal{D}_{\tau}$,
we use the empirical loss  ${\ell_{\tau}}(h_{\tau}, \mathcal{D}_{\tau})$ of per-task hypothesis $h_{\tau} \in \mathcal{H}_{\tau}$ as a measure of the performance. 

And the goal for initialization based meta learning methods, such as MAML~\citep{finn2018_PLATIPUS} and BaMAML~\citep{yoon2018_BMAML},
is to learn an initial parameter $\btheta_0 \in \mathbf{\Theta}_0$,
which, with an adaptation method and the training data, can produce a per-task hypothesis $h_{\tau}$ that performs well on the validation data for task $\tau$.
Formally, for a meta-learning method, 
${\mathcal{A}}: \mathbf{\Theta}_0 \times (\mathcal{X}_{\tau}\times \mathcal{Y}_{\tau})^{N_1} \rightarrow \mathcal{H}_{\tau}$, 
represents the adaptation method or base-learner.
Given $T$ tasks with corresponding data, our meta-learning objective is to find $\btheta_0$ that minimizes the empirical loss, given by
\begin{equation}\label{eq:emp_loss}
  {\mathcal{L}}^{\cal A}(\btheta_0, \mathcal{D})
\coloneqq 
\frac{1}{T} \sum_{\tau=1}^{T} \ell_{\tau}({\mathcal{A}}(\btheta_0,\mathcal{D}_{\tau}^{\mathrm{trn}}),\mathcal{D}_{\tau}^{\mathrm{val}}).
\end{equation}
And the corresponding meta-test risk is defined as
the expectation of the per-task loss $\ell_{\tau} $ over the task and data distribution, given by
\begin{equation}\label{eq:R}
\mathcal{R}^{\cal A}(\btheta_0)
\coloneqq  
\mathbb{E}_{\tau } 
\big[ \mathbb{E}_{\mathcal{D}_{\tau}} \big[\ell_{\tau}({\mathcal{A}}(\btheta_0,\mathcal{D}_{\tau}^{\mathrm{trn}}), \mathcal{D}_{\tau}^{\rm val})\big]\big]. 
\end{equation}

Denote 
$
\mathbf{X}_{\tau}^{\mathrm{all}} := 
[\mathbf{x}_{\tau, 1},\ldots,
\mathbf{x}_{\tau,N}]^{\top} \in \mathbb{R}^{N\times d}$, 
$\mathbf{y}_{\tau}^{\mathrm{all}} := 
[{y}_{\tau, 1},\ldots,
{y}_{\tau,N}]^{\top} \in \mathbb{R}^{N} 
$
for ease of discussion,
where ``all'' can also be ``trn'' for training and ``val'' for validation with $N_1$ and $N_2$ data points, respectively.
Throughout the discussion of this paper, 
we adopt a probabilistic perspective~\citep{grant2018recasting,finn2018_PLATIPUS}, with $\ell_{\tau}$ defined as the negative log likelihood, given by
{\small
\begin{align}\label{eq:l_tau}
  \ell_{\tau}
  &({\mathcal{A}}(\btheta_0,\mathcal{D}_{\tau}^{\mathrm{trn}}),\mathcal{D}_{\tau}^{\mathrm{val}}) 
  \!=\! -\frac{1}{N_2}\log  {p}(\mathbf{y}^{\mathrm{val}}_{\tau}| \mathbf{X}^{\mathrm{val}}_{\tau}, \btheta_0, \mathcal{D}_{\tau}^{\mathrm{trn}}) \nonumber\\
   =&\!- \frac{1}{N_2}\log \int {p}(\mathbf{y}^{\mathrm{val}}_{\tau}\mid \mathbf{X}^{\mathrm{val}}_{\tau}, \btheta_{\tau})
  p_{{\mathcal{A}}}(\btheta_{\tau}\mid \btheta_0, \mathcal{D}_{\tau}^{\mathrm{trn}}) d\btheta_{\tau} \!\!
\end{align}
}
\hspace{-2mm}where $p_{{\mathcal{A}}}(\btheta_{\tau}\mid \btheta_0, \mathcal{D}_{\tau}^{\mathrm{trn}})$ is the posterior distribution induced by ${\mathcal{A}}$.
And the likelihood ${p}(\mathbf{y}^{\mathrm{val}}_{\tau}\mid \mathbf{X}^{\mathrm{val}}_{\tau}, \btheta_{\tau}, \mathcal{D}_{\tau}^{\mathrm{trn}})
=\prod_{n=1}^{N_2} p(y_{\tau,n}\mid \mathbf{x}_{\tau,n},\btheta_{\tau})$.
Note that,
for a point estimate method $\mathcal{A}$, such as MAML,
the posterior distribution $p_{{\mathcal{A}}}(\btheta_{\tau}\mid \btheta_0, \mathcal{D}_{\tau}^{\mathrm{trn}})$ reduces to a Dirac delta function
$\delta(\btheta_{\tau} - \hat{\btheta}_{\tau}^{\mathcal{A}})$.
And $\cal{A}$ specifies a mapping from the initial parameter $\btheta_{0}$ to the task-specific parameter $\hat{\btheta}_{\tau}^{\cal{A}}(\btheta_{0}, \mathcal{D}_{\tau}^{\rm trn})$.

In the meta training stage, we obtain $\hat{\btheta}_0^{\mathcal{A}}$ by minimizing~\eqref{eq:emp_loss} under each meta learning method $\mathcal{A}$.
And in the meta testing stage, we evaluate the test error of $\hat{\btheta}_0^{\mathcal{A}}$ on~\eqref{eq:R} for different methods.
\paragraph{Methods.}
We proceed to introduce the general formulations of MAML and BaMAML.
Considering MAML with one step gradient update as the baseline method for meta-learning \citep{Finn2017_maml},
the task-specific parameter $\hat{\btheta}_{\tau}^{\mathrm{ma}}(\btheta_0)$ is obtained from the initial parameter $\btheta_0$ by taking one step gradient descent with step size $\alpha$ of the per-task loss function $\ell_{\tau}$.
Combined with the empirical loss defined in~\eqref{eq:emp_loss}, we have
\begin{tcolorbox}[emphblock]
the empirical loss of MAML is given by
\vspace{-1.5mm}
\begin{align}\label{eq:MAML_emp_risk}
  &{\mathcal{L}}^{\mathrm{ma}}({\btheta_0},\mathcal{D} )
  = 
  \frac{1}{T} \sum_{\tau=1}^{T} {\ell}_{\tau}(\hat{\btheta}_{\tau}^{\mathrm{ma}}(\btheta_0,\mathcal{D}_{\tau}^{\mathrm{trn}}), \mathcal{D}_{\tau}^{\mathrm{val}}) \\
   \nonumber
  &\mathrm{s.t.}\,\, 
  \hat{\btheta}_{\tau}^{\mathrm{ma}}(\btheta_0,\mathcal{D}_{\tau}^{\mathrm{trn}})
  = \btheta_0  - \frac{\alpha}{2}  \nabla_{\btheta_0} {\ell}_{\tau}(\btheta_0,\mathcal{D}_{\tau}^{\mathrm{trn}}).
\end{align}
\end{tcolorbox}

BaMAML obtains an approximation of the posterior distribution $p(\btheta_{\tau}\mid \mathcal{D}_{\tau}^{{\rm trn}}, \btheta_0)$
instead of a point estimate $\hat{\btheta}_{\tau}^{\cal{A}}(\btheta_{0}, \mathcal{D}_{\tau}^{{\rm trn}})$.
In general,
the true posterior distribution can be difficult to compute exactly.
Alternatively, the approximate distribution $\hat{p} (\btheta_{\tau} \mid \mathcal{D}_{\tau}^{{\rm trn}}, \btheta_0)$ can be obtained via variational inference~\citep{nguyen2020_VAMPIRE}, Markov chain Monte-Carlo sampling or Laplace approximation~\citep{grant2018recasting}.
Here we adopt the variational inference formulation, 
by minimizing the divergence between the approximate and the true posterior distribution. 
Define $\mathrm{D}_{\mathrm{KL}}(\cdot\|\cdot)$ as the KL-divergence between two distributions, we have
\begin{tcolorbox}[emphblock]
the empirical loss of BaMAML is given by
\vspace{-1.5mm}
\small
{\begin{align}\label{eq:theta_tau_BaMAML_theta0}
&\mathcal{L}^{\mathrm{ba}}(\btheta_0, \mathcal{D})
= \frac{1}{T} \sum_{\tau=1}^{T} {\ell}_{\tau}(\hat{p} (\btheta_{\tau} \mid \mathcal{D}_{\tau}^{{\rm trn}}, \btheta_0), \mathcal{D}_{\tau}^{\mathrm{val}}) \\
\nonumber
  &\mathrm{s.t.}\, \hat{p} (\btheta_{\tau} | \mathcal{D}_{\tau}^{{\rm trn}}, \btheta_0)
  \! = \!
   \mathop{\arg\min}_{q({\btheta}_{\tau})\in \mathcal{Q}}
    \mathrm{D}_{\mathrm{KL}}\big(q(\btheta_{\tau})\| 
    p(\btheta_{\tau}| \mathcal{D}_{\tau}^{{\rm trn}}, \btheta_0)\big)
\end{align}
}
\end{tcolorbox}
It is worth mentioning that BaMAML formulation in this paper contains iMAML, or iMAML~\citep{rajeswaran2019_imaml} as a special case. Therefore, results obtained for BaMAML naturally implies the results for iMAML with small difference. We point out this reduction in the next remark, and provide detailed discussion in the appendix.

\begin{table*}[ht!]
  \caption{Weight matrices for the closed form solutions of method $\mathcal{A}$.}
  \label{tab:weight_matrices}
  \centering
  \begin{tabular}{ l| l}
  \hline
  \hline
  Method & Weight matrices \\
  \hline
  MAML~\citep{gao2020_model_opt_tradeoff_ml}
  & $\mathbf{W}_{\tau}^{\mathrm{ma}} =  (\mathbf{I}-\alpha  \mathbf{Q}_{\tau}) \mathbf{Q}_{\tau}(\mathbf{I}-\alpha  \mathbf{Q}_{\tau})$ \\
  & $\hat{\mathbf{W}}_{\tau}^{\mathrm{ma}} 
  = 
  (\mathbf{I}-
  {\alpha} \hat{\mathbf{Q}}_{\tau,N_1}) 
  \hat{\mathbf{Q}}_{\tau,N_2}
  (\mathbf{I}- 
  {\alpha} \hat{\mathbf{Q}}_{\tau,N_1})$ \\
  \hline
  BaMAML
  & $\mathbf{W}_{\tau}^{\mathrm{ba}} =  ((s\gamma)^{-1} \mathbf{Q}_{\tau}+\mathbf{I})^{-1}\mathbf{Q}_{\tau}(\gamma ^{-1} \mathbf{Q}_{\tau}+\mathbf{I})^{-1}$\\
  & $\hat{\mathbf{W}}_{\tau}^{\mathrm{ba}} 
      =  ((s\gamma)^{-1} \hat{\mathbf{Q}}_{\tau,N}+\mathbf{I})^{-1}\hat{\mathbf{Q}}_{\tau,N_2}(\gamma ^{-1} \hat{\mathbf{Q}}_{\tau,N_1}+\mathbf{I})^{-1}$\\
  \hline
  \hline
\end{tabular}
\end{table*}
\begin{remark}[Reduction to iMAML]
\label{rmk: reduce_bimaml}
When $\mathcal{Q}$ is chosen to be the set of Dirac Delta functions and the KL-divergence in~\eqref{eq:theta_tau_BaMAML_theta0} is replaced by the cross entropy, then
\eqref{eq:theta_tau_BaMAML_theta0} reduces to 
\begin{align}
  &\hat{p} (\btheta_{\tau} \mid \mathcal{D}_{\tau}^{{\rm trn}}, \btheta_0)
  = \delta (\btheta_{\tau}
  - \hat{\btheta}_{\tau}^{\rm map}), \\
  \nonumber
  &\mathrm{with}\,\,\hat{\btheta}_{\tau}^{\rm map} = {\arg\max}_{\btheta_{\tau}} {p} (\btheta_{\tau} \mid \mathcal{D}_{\tau}^{{\rm trn}}, \btheta_0).
\end{align}
\end{remark}

\subsection{Meta Linear Regression} 
\label{sub:solutions}

\paragraph{Data model.}
Under the meta linear regression setting, 
with the feature $\mathbf{x}_{\tau} \in \mathbb{R}^d$, the target $y_{\tau} \in \mathbb{R}$, 
and the ground truth parameter of task $\tau$, $\btheta_{\tau}^{\mathrm{gt}} \in \mathbb{R}^d$, 
we assume the data generation model for task $\tau$ is 
\begin{equation}\label{eq:linear_data_generate}
  y_{\tau}={\btheta}^{\mathrm{gt}\top}_{\tau} \mathbf{x}_{\tau}+\epsilon_{\tau},{\rm with}~\epsilon_{\tau} \stackrel{\text{iid}}{\sim} \mathcal{N}\left(0, \sigma_{\tau}^{2}\right), 
  \mathbf{Q}_{\tau}\coloneqq \mathbb{E}[\mathbf{x}_{\tau} \mathbf{x}_{\tau}^{\top}].
\end{equation}


Given the estimate of ${\btheta}^{\mathrm{gt}}_{\tau}$ denoted as $\hat{\btheta}^{\mathcal{A}}_{\tau}$, then the conditional probability $p(y_{\tau}\mid \mathbf{x}_{\tau}, \hat{\btheta}^{\mathcal{A}}_{\tau}) = \mathcal{N}(\hat{\btheta}^{\mathcal{A}\top}_{\tau} \mathbf{x}_{\tau}, \sigma_{\tau}^2)$.
Thus, ignoring the constant, the negative log likelihood in~\eqref{eq:l_tau}, $ -\log p(\mathbf{y}_{\tau}^{\rm val}\mid \mathbf{X}_{\tau}^{\rm val}, {\btheta}_{\tau}) $,  becomes the squared error $ \|\mathbf{y}_{\tau}^{\rm val} - \mathbf{X}_{\tau}^{\rm val} {\btheta}_{\tau}\|^2$. 
Note that $\sigma_{\tau}$ depends on task $\tau$ generally, but does not pose challenges to analysis, therefore we assume $\sigma_{\tau} = 1$  in this paper for simplicity.

By plugging $\hat{\btheta}_{\tau}^{\cal A}$ into~\eqref{eq:R},  and with the squared error as the meta-linear regression loss, the empirical loss, meta-test risk along with their optimal solutions can be computed analytically with closed-form, whose derivations are deferred to the appendix.
We summarize the results for different methods in Proposition~\ref{prop:solutions}, 
where the optimal solutions for MAML
 derived in previous work~\citep{gao2020_model_opt_tradeoff_ml} are also included.

\begin{proposition}
\label{prop:solutions}
\emph{\textbf{(Empirical and population level solutions)}} Under data model \eqref{eq:linear_data_generate},
the meta-test risk of method $\mathcal{A}$ can be computed by
\begin{align}
   &\mathcal{R}^{\mathcal{A}}({\btheta_0} )
    \label{eq:meta_risk_form}
    = \mathbb{E}_{\tau}\big[\|\btheta_0-\btheta^{\mathrm{gt}}_{\tau}\|^2_{\mathbf{W}_{\tau}^{\mathcal{A}}}\big] + 1.
 \end{align} 
The optimal solutions to the meta-test risk and empirical loss are given below respectively
\begin{subequations}
\begin{align}\label{eq:theta_0_star_A_sln}
  \hspace{-2mm}&\btheta_{0}^{\mathcal{A}} 
  \!\coloneqq\! \mathop{\arg\min}_{\btheta_0}
  \mathcal{R}^{\mathcal{A}}({\btheta_0} )
  \!=\!\mathbb{E}_{\tau}\big[\mathbf{W}_{\tau}^{\mathcal{A}}\big]^{-1} \mathbb{E}_{\tau}\big[\mathbf{W}_{\tau}^{\mathcal{A}} {\btheta}_{\tau}^{gt}\big] \\
  \label{eq:theta_0_hat_A_sln}
  \hspace{-2mm}&\hat{\btheta}_{0}^{\mathcal{A}} 
  \coloneqq \mathop{\arg\min}_{\btheta_0}
  \mathcal{L}^{\mathcal{A}} (\btheta_0, \mathcal{D} ) \nonumber \\
  \hspace{-2mm}&\quad ~ = \Big(\sum_{\tau=1}^{T}
  \hat{\mathbf{W}}_{\tau}^{\mathcal{A}}\Big)^{-1}
  \Big(\sum_{\tau=1}^{T}\hat{\mathbf{W}}_{\tau}^{\mathcal{A}}\btheta_{\tau}^{\text{gt}} 
  \Big)
  + \Delta_{T}^{\mathcal{A}}  . 
\end{align}
\end{subequations}
where the error term $\Delta_{T}^{\mathcal{A}}$ is a polynomial function of $T,N,d$ caused by the noise $\epsilon$, and specified in the appendix. 
And $\hat{\mathbf{Q}}_{\tau,N} \coloneqq \frac{1}{N} \mathbf{X}^{\mathrm{all} \top}_{\tau}\mathbf{X}_{\tau}^{\mathrm{all}}$, $s = N_1 / N$.
The weight matrices of different methods, $\mathbf{W}_{\tau}^{\mathcal{A}}$ and $\hat{\mathbf{W}}_{\tau}^{\mathcal{A}}$, are given in Table~\ref{tab:weight_matrices}.

\end{proposition}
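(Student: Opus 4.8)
The plan is to prove the three claims --- the risk identity~\eqref{eq:meta_risk_form}, the population minimizer~\eqref{eq:theta_0_star_A_sln}, and the empirical minimizer~\eqref{eq:theta_0_hat_A_sln} --- one method at a time; the MAML weight matrices are already available from~\citep{gao2020_model_opt_tradeoff_ml}, so the new work is the BaMAML case, which by Remark~\ref{rmk: reduce_bimaml} also subsumes iMAML. The first step is to write the task-adaptation in closed form. For MAML, one gradient step on the squared training loss gives $\hat{\btheta}_\tau^{\mathrm{ma}}(\btheta_0) = (\mathbf{I}-\alpha\hat{\mathbf{Q}}_{\tau,N_1})\btheta_0 + \alpha\hat{\mathbf{Q}}_{\tau,N_1}\btheta_\tau^{\mathrm{gt}}$ up to a term linear in the training noise. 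For BaMAML, because the likelihood in~\eqref{eq:linear_data_generate} is Gaussian and the implicit prior on $\btheta_\tau$ is Gaussian centered at $\btheta_0$ with isotropic covariance scaled by $\gamma$, the exact posterior $p(\btheta_\tau\mid \mathcal{D}_\tau^{\mathrm{trn}},\btheta_0)$ is Gaussian and therefore lies in the variational family $\mathcal{Q}$, so $\hat p$ equals that posterior $\mathcal{N}(\mu_\tau,\Sigma_\tau)$; here $\mu_\tau$ is a ridge-type estimator affine in $\btheta_0$ with $\mu_\tau-\btheta_\tau^{\mathrm{gt}} = (\gamma^{-1}\hat{\mathbf{Q}}_{\tau,N_1}+\mathbf{I})^{-1}(\btheta_0-\btheta_\tau^{\mathrm{gt}})$ up to training noise, and $\Sigma_\tau$ does not depend on $\btheta_0$.

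Second, I would substitute these maps into the per-task loss and take expectations over the task data. For a point estimate $\ell_\tau = \tfrac{1}{N_2}\|\mathbf{y}_\tau^{\mathrm{val}}-\mathbf{X}_\tau^{\mathrm{val}}\hat\btheta_\tau\|^2$; writing $\mathbf{y}_\tau^{\mathrm{val}}=\mathbf{X}_\tau^{\mathrm{val}}\btheta_\tau^{\mathrm{gt}}+\epsilon_\tau^{\mathrm{val}}$ and using independence and unit variance of the noise, the expectation over the validation set equals $(\hat\btheta_\tau-\btheta_\tau^{\mathrm{gt}})^\top\mathbf{Q}_\tau(\hat\btheta_\tau-\btheta_\tau^{\mathrm{gt}})+1$, the additive $1$ being the irreducible noise term in~\eqref{eq:meta_risk_form}, and composing with the adaptation map identifies $\mathbf{W}_\tau^{\mathrm{ma}}$. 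For BaMAML, by~\eqref{eq:l_tau} the loss is $-\tfrac{1}{N_2}\log$ of the Gaussian posterior-predictive $\mathcal{N}(\mathbf{X}_\tau^{\mathrm{val}}\mu_\tau,\ \mathbf{I}+\mathbf{X}_\tau^{\mathrm{val}}\Sigma_\tau\mathbf{X}_\tau^{\mathrm{val}\top})$, i.e.\ a quadratic form in $\mathbf{y}_\tau^{\mathrm{val}}-\mathbf{X}_\tau^{\mathrm{val}}\mu_\tau$ with precision $(\mathbf{I}+\mathbf{X}_\tau^{\mathrm{val}}\Sigma_\tau\mathbf{X}_\tau^{\mathrm{val}\top})^{-1}$ plus a log-determinant term; applying the Woodbury identity rewrites that precision through $\mathbf{X}_\tau^{\mathrm{trn}\top}\mathbf{X}_\tau^{\mathrm{trn}}+\mathbf{X}_\tau^{\mathrm{val}\top}\mathbf{X}_\tau^{\mathrm{val}}=N\hat{\mathbf{Q}}_{\tau,N}$, which is exactly where the factor $((s\gamma)^{-1}\hat{\mathbf{Q}}_{\tau,N}+\mathbf{I})^{-1}$ of $\hat{\mathbf{W}}_\tau^{\mathrm{ba}}$ comes from (using $N_1=sN$); pairing it with the $(\gamma^{-1}\hat{\mathbf{Q}}_{\tau,N_1}+\mathbf{I})^{-1}$ factor carried by $\mu_\tau-\btheta_\tau^{\mathrm{gt}}$ and the validation Gram $\hat{\mathbf{Q}}_{\tau,N_2}$ in the middle yields the asymmetric $\hat{\mathbf{W}}_\tau^{\mathrm{ba}}$ of Table~\ref{tab:weight_matrices}, while the log-determinant and the remaining noise terms collapse again to the additive $1$. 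Replacing the empirical Gram matrices by their expectations $\mathbf{Q}_\tau$ turns $\hat{\mathbf{W}}_\tau^{\mathcal{A}}$ into $\mathbf{W}_\tau^{\mathcal{A}}$ and gives~\eqref{eq:meta_risk_form}.

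Third, I would minimize. Both $\mathcal{R}^{\mathcal{A}}(\btheta_0)=\mathbb{E}_\tau\|\btheta_0-\btheta_\tau^{\mathrm{gt}}\|^2_{\mathbf{W}_\tau^{\mathcal{A}}}+1$ and $\mathcal{L}^{\mathcal{A}}(\btheta_0,\mathcal{D})$ are convex quadratics in $\btheta_0$, so the minimizers come from the first-order condition; the population matrices $\mathbf{W}_\tau^{\mathcal{A}}$ and the empirical matrices $\hat{\mathbf{W}}_\tau^{\mathcal{A}}$ have positive semidefinite symmetric part, which makes $\mathbb{E}_\tau[\mathbf{W}_\tau^{\mathcal{A}}]$ and $\sum_{\tau}\hat{\mathbf{W}}_\tau^{\mathcal{A}}$ invertible (a.s.\ for the latter) and the minimizer unique. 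Setting $\nabla_{\btheta_0}\mathcal{R}^{\mathcal{A}}=0$ gives~\eqref{eq:theta_0_star_A_sln} immediately; setting $\nabla_{\btheta_0}\mathcal{L}^{\mathcal{A}}=0$ gives the same structure but now retains the training- and validation-noise terms dropped when passing to the population, and collecting all cross-terms between the weight matrices and the noise into $\Delta_T^{\mathcal{A}}$ yields~\eqref{eq:theta_0_hat_A_sln}; that $\Delta_T^{\mathcal{A}}$ is a polynomial in $T,N,d$ is a direct but tedious computation that I would defer to the appendix.

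The main obstacle is the BaMAML computation of the second step: evaluating the integral in~\eqref{eq:l_tau} exactly (the log-determinant and the non-identity posterior-predictive precision), and then invoking the right matrix identities (Woodbury / push-through) to collapse the answer to the compact, asymmetric form of Table~\ref{tab:weight_matrices} and to check that the noise contribution is precisely the additive $1$. A secondary difficulty is the noise bookkeeping needed to extract $\Delta_T^{\mathcal{A}}$ and to justify that the empirical argmin is well defined.
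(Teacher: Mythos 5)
Your overall architecture coincides with the paper's own proof: write the adaptation in closed form (one gradient step for MAML; the exact Gaussian posterior, which lies in $\mathcal{Q}$, for BaMAML), plug it into the squared/NLL loss of \eqref{eq:l_tau}, identify the quadratic form in $\btheta_0-\btheta_\tau^{\rm gt}$, and get \eqref{eq:theta_0_star_A_sln} and \eqref{eq:theta_0_hat_A_sln} from the first-order conditions of two convex quadratics, collecting all noise-dependent pieces of the empirical stationarity equation into $\Delta_T^{\mathcal{A}}$. Your BaMAML computation via the Gaussian posterior predictive plus Woodbury is the same calculation the paper does by writing $-\log p(\mathcal{D}^{\rm val}\mid\mathcal{D}^{\rm trn},\btheta_0)$ as a difference of two marginal log-likelihoods and applying the Binomial inverse theorem, so that part is fine.

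The gap is in the step that claims \eqref{eq:meta_risk_form}: you assert that "replacing the empirical Gram matrices by their expectations turns $\hat{\mathbf{W}}_\tau^{\mathcal{A}}$ into $\mathbf{W}_\tau^{\mathcal{A}}$" and that "the log-determinant and the remaining noise terms collapse again to the additive $1$". Neither is exact at finite adaptation-sample size. First, $\mathbb{E}\big[\hat{\mathbf{W}}_\tau^{\mathcal{A}}\big]\neq\mathbf{W}_\tau^{\mathcal{A}}$: e.g.\ for MAML, $\mathbb{E}\big[(\mathbf{I}-\alpha\hat{\mathbf{Q}}_{\tau,N})\mathbf{Q}_\tau(\mathbf{I}-\alpha\hat{\mathbf{Q}}_{\tau,N})\big]=\mathbf{W}_\tau^{\rm ma}+\tfrac{\alpha^2}{N}\big(\mathbb{E}[\mathbf{x}\mathbf{x}^\top\mathbf{Q}_\tau\mathbf{x}\mathbf{x}^\top]-\mathbf{Q}_\tau^3\big)$, and the nonlinear (inverse) dependence on $\hat{\mathbf{Q}}$ in the BaMAML case makes the discrepancy even less trivial. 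Second, the training noise propagated through the adaptation map contributes strictly positive extra terms beyond the "$+1$": $\tfrac{\alpha^2}{N_a}\mathbb{E}_\tau[\mathrm{tr}(\mathbf{Q}_\tau^2)]$ for MAML and, for BaMAML, a trace-difference term of the form $\tfrac{1}{N_a}\mathbb{E}\big[\mathrm{tr}\big((\mathbf{I}+(\gamma s)^{-1}\hat{\mathbf{Q}})^{-1}-(\mathbf{I}+\gamma^{-1}\hat{\mathbf{Q}})^{-1}\big)\big]$ coming precisely from the log-determinant you propose to absorb. The paper resolves this by computing the finite-sample risk $\mathcal{R}^{\mathcal{A}}_{N_a}(\btheta_0)$ exactly, including these $O(1/N_a)$ corrections, and then \emph{defining} the $\mathcal{R}^{\mathcal{A}}$ of \eqref{eq:meta_risk_form} (and of the decomposition used later) as the limit $N_a\to\infty$, in which both the weight-matrix discrepancy and the extra trace terms vanish. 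To make your argument close, you must either carry these correction terms explicitly and invoke that limit, or otherwise justify why they disappear; as written, the identity \eqref{eq:meta_risk_form} with the Table~\ref{tab:weight_matrices} matrices does not follow. (A minor additional point: positive semidefiniteness alone does not give invertibility of $\mathbb{E}_\tau[\mathbf{W}_\tau^{\mathcal{A}}]$ or $\sum_\tau\hat{\mathbf{W}}_\tau^{\mathcal{A}}$; you need the positive-definiteness supplied by Assumption~\ref{asmp:bounded_data_matrix_eigenvalues} and the admissible range of $\alpha$, respectively a rank condition on the sampled Grams.)
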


Note that, in the meta linear regression case in Proposition~\ref{prop:solutions}, BaMAML further assumes the prior distribution $\btheta_{\tau} \sim \mathcal{N}(\btheta_{0}, 1/\gamma_b)$ with $\gamma_b = \gamma N_1$, resulting in the weight matrices $\mathbf{W}_{\tau}^{\mathrm{ba}}, \hat{\mathbf{W}}_{\tau}^{\mathrm{ba}}$ in Table~\ref{tab:weight_matrices} depending on $\gamma$ and $s$.
The posterior follows a Gaussian distribution,
$p(\btheta_{\tau}\mid \mathcal{D}_{\tau}^{{\rm trn}}, \btheta_0)
= \mathcal{N}(\mu_{\btheta_{\tau}}, \Sigma_{\btheta_{\tau}}) $,
 where the parameters $\Sigma_{\btheta_{\tau}}$ and $\mu_{\btheta_{\tau}}$ are given by
 \begin{subequations}
\begin{align}
  \Sigma_{\btheta_{\tau}}&= (  N_1 \hat{\mathbf{Q}}_{\tau,N_1} + \gamma_b  \mathbf{I} )^{-1} , \\
  \mu_{\btheta_{\tau}}&= \Sigma_{\btheta_{\tau}} (   \mathbf{X}_{\tau}^{{{\rm trn}}\top} \mathbf{y}_{\tau}^{{\rm trn}} + \gamma_b \btheta_0).
\end{align}
\end{subequations}
If $p(\btheta_{\tau}\mid \mathcal{D}_{\tau}^{{\rm trn}}, \btheta_0) \in \mathcal{Q}$,
then $\hat{p} (\btheta_{\tau} \mid \mathcal{D}_{\tau}^{{\rm trn}}, \btheta_0) = p(\btheta_{\tau}\mid \mathcal{D}_{\tau}^{{\rm trn}}, \btheta_0)$, which holds for the meta linear regression case analyzed in this paper,
with $\cal Q$ specified as the set of Gaussian distributions.

Next, we will use the closed-form solutions of different methods in Proposition~\ref{prop:solutions} to compute their generalization errors in Section~\ref{sec:generalization_error_analysis}.

\section{META-TEST RISK ANALYSIS} 
\label{sec:generalization_error_analysis}

In this section, we will compare the meta-test risk of MAML and BaMAML.
By the definition of the meta-test risk $\mathcal{R}^{\mathcal{A}}$ in~\eqref{eq:R},
it can be decomposed into the optimal population risk and  statistical errors, as summarized in Proposition~\ref{prop:meta_test_risk_decompose}.
\begin{tcolorbox}[emphblock]
\begin{proposition}[Meta-test risk decomposition]
\label{prop:meta_test_risk_decompose}
In meta-linear regression, the meta-test risk for method $\mathcal{A}$ can be decomposed into optimal population risks and statistical errors, given by
\begin{equation}\label{eq:meta_test_risk_decompose_final}
 \hspace{-2mm}
 \mathcal{R}^{\mathcal{A}}(\hat{\btheta}_0^{\mathcal{A}})
 = \hspace{-2mm}
  \underbracket{  \mathcal{R}^{\mathcal{A}}({\btheta}_0^{\mathcal{A}})}_{\text{\tiny \makecell{optimal\\population~risk}}}
  +
  \underbracket{\|\hat{\btheta}_0^{\mathcal{A}}- {\btheta}_0^{\mathcal{A}}\|^2_{\mathbb{E}_{\tau}[\mathbf{W}_{\tau}^{\mathcal{A}}]}}_{\text{statistical~error}~\mathcal{E}^{2}_{\mathcal{A}} (\hat{\btheta}_0^{\mathcal{A}})}
  .
\end{equation}
\end{proposition}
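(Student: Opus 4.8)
The plan is to exploit the fact that, by Proposition~\ref{prop:solutions}, the population meta-test risk $\mathcal{R}^{\mathcal{A}}(\btheta_0)$ is an \emph{exact} quadratic function of $\btheta_0$, namely $\mathcal{R}^{\mathcal{A}}(\btheta_0) = \mathbb{E}_{\tau}[\|\btheta_0-\btheta^{\mathrm{gt}}_{\tau}\|^2_{\mathbf{W}_{\tau}^{\mathcal{A}}}] + 1$, whose Hessian (up to a factor of $2$) is the matrix $\mathbf{W} := \mathbb{E}_{\tau}[\mathbf{W}_{\tau}^{\mathcal{A}}]$ and whose unique minimizer is $\btheta_0^{\mathcal{A}}$ from \eqref{eq:theta_0_star_A_sln}. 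For a quadratic, the second-order Taylor expansion around the minimizer is exact with zero remainder, and that expansion is precisely the claimed decomposition. So the whole argument reduces to one direct computation plus the observation that $\mathcal{R}^{\mathcal{A}}(\cdot)$ is a deterministic function of its argument, so it may be evaluated at the data-dependent point $\hat{\btheta}_0^{\mathcal{A}}$.

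Concretely, I would first take an arbitrary $\btheta_0$, insert the splitting $\btheta_0 - \btheta^{\mathrm{gt}}_{\tau} = (\btheta_0 - \btheta_0^{\mathcal{A}}) + (\btheta_0^{\mathcal{A}} - \btheta^{\mathrm{gt}}_{\tau})$ inside the weighted norm, and expand the quadratic form into three terms: a variance term $\|\btheta_0 - \btheta_0^{\mathcal{A}}\|^2_{\mathbf{W}}$ (the vector $\btheta_0 - \btheta_0^{\mathcal{A}}$ does not depend on $\tau$, so it pulls out of $\mathbb{E}_\tau$), a bias term $\mathbb{E}_{\tau}[\|\btheta_0^{\mathcal{A}} - \btheta^{\mathrm{gt}}_{\tau}\|^2_{\mathbf{W}_{\tau}^{\mathcal{A}}}] = \mathcal{R}^{\mathcal{A}}(\btheta_0^{\mathcal{A}}) - 1$, and a cross term $2(\btheta_0 - \btheta_0^{\mathcal{A}})^{\top}\mathbb{E}_{\tau}[\mathbf{W}_{\tau}^{\mathcal{A}}(\btheta_0^{\mathcal{A}} - \btheta^{\mathrm{gt}}_{\tau})]$. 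The second step is to kill the cross term: substituting $\btheta_0^{\mathcal{A}} = \mathbb{E}_{\tau}[\mathbf{W}_{\tau}^{\mathcal{A}}]^{-1}\mathbb{E}_{\tau}[\mathbf{W}_{\tau}^{\mathcal{A}}\btheta_{\tau}^{\mathrm{gt}}]$ gives $\mathbb{E}_{\tau}[\mathbf{W}_{\tau}^{\mathcal{A}}]\btheta_0^{\mathcal{A}} = \mathbb{E}_{\tau}[\mathbf{W}_{\tau}^{\mathcal{A}}\btheta_{\tau}^{\mathrm{gt}}]$, hence $\mathbb{E}_{\tau}[\mathbf{W}_{\tau}^{\mathcal{A}}(\btheta_0^{\mathcal{A}} - \btheta^{\mathrm{gt}}_{\tau})] = 0$; this is exactly the first-order stationarity (normal equation) satisfied by $\btheta_0^{\mathcal{A}}$. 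Finally, since the resulting identity $\mathcal{R}^{\mathcal{A}}(\btheta_0) = \mathcal{R}^{\mathcal{A}}(\btheta_0^{\mathcal{A}}) + \|\btheta_0 - \btheta_0^{\mathcal{A}}\|^2_{\mathbf{W}}$ holds for every fixed $\btheta_0$, I would specialize $\btheta_0 = \hat{\btheta}_0^{\mathcal{A}}$ to obtain \eqref{eq:meta_test_risk_decompose_final}.

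There is essentially no real obstacle: the only point requiring care is the invertibility of $\mathbf{W} = \mathbb{E}_{\tau}[\mathbf{W}_{\tau}^{\mathcal{A}}]$, which is needed both to define $\btheta_0^{\mathcal{A}}$ in \eqref{eq:theta_0_star_A_sln} and to justify the cross-term identity; this follows from the positive (semi)definiteness of the weight matrices in Table~\ref{tab:weight_matrices} under the admissible ranges of $\alpha$ (for MAML) and $\gamma, s$ (for BaMAML), together with $\mathbf{Q}_\tau \succ 0$. I would record this as a standing nondegeneracy assumption, after which the expansion above is routine.
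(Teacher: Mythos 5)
Your proposal is correct and follows essentially the same route as the paper's proof: expand the weighted norm around $\btheta_0^{\mathcal{A}}$, use the normal equation $\mathbb{E}_{\tau}[\mathbf{W}_{\tau}^{\mathcal{A}}]\btheta_0^{\mathcal{A}} = \mathbb{E}_{\tau}[\mathbf{W}_{\tau}^{\mathcal{A}}\btheta_{\tau}^{\mathrm{gt}}]$ to annihilate the cross term, and evaluate at $\hat{\btheta}_0^{\mathcal{A}}$, starting from the closed form $\mathcal{R}^{\mathcal{A}}(\btheta_0)=\mathbb{E}_{\tau}[\|\btheta_0-\btheta_{\tau}^{\mathrm{gt}}\|^2_{\mathbf{W}_{\tau}^{\mathcal{A}}}]+1$ of Proposition~\ref{prop:solutions}. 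Your additional remarks (exact quadratic Taylor expansion, invertibility of $\mathbb{E}_{\tau}[\mathbf{W}_{\tau}^{\mathcal{A}}]$) are harmless refinements of the same argument.
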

\end{tcolorbox}
Invoking the definition of $\btheta_0^{\mathcal{A}}$ in~\eqref{eq:theta_0_star_A_sln} as the optimal solution for $\min_{\btheta_0} \mathcal{R}^{\mathcal{A}}(\btheta_0)$, the optimal population risk $\mathcal{R}^{\mathcal{A}}(\btheta_0^{\mathcal{A}})$ is defined as the minimum meta-test risk, 
which captures the error resulting from limited model adaptation capacity.
On the other hand, the statistical error captures error resulting from using finite samples instead of population statistics.
We will next show that both errors are smaller under BaMAML than those under MAML in Sections~\ref{sub:model_error_analysis} and~\ref{sub:statistical_error_analysis}.

\subsection{Optimal Population Risk Analysis} 
\label{sub:model_error_analysis}

We first analyze and compare the optimal population risk of different methods.
Before proceeding to the theoretical results,
we make the following basic assumptions.
\begin{assumption}[Bounded eigenvalues]
\label{asmp:bounded_data_matrix_eigenvalues}
  For any $\tau$, $0<\ushort{\lambda}\leq\lambda(\mathbf{Q}_{\tau})\leq \bar{\lambda}$, where $\lambda(\mathbf{Q}_{\tau})$ represents the eigenvalues of $\mathbf{Q}_{\tau}$.
\end{assumption}

\begin{assumption}
  \emph{\textbf{(Sub-gaussian task parameter and bounded features)}}
  \label{asmp:sub_gaussian_task_para}
  The ground truth parameter 
  $\btheta_{\tau}^{\mathrm{gt}}$ is independent of $\mathbf{X}_{\tau}$ and satisfies
  that the individual entries $\big\{\btheta_{\tau, i}^{\mathrm{gt}}-\btheta_{0, i}^{\mathcal{A}}\big\}_{i \in[d], \tau \in[T]}$ are independent and $\mathcal{O}( R / \sqrt{d})$-sub-gaussian.
  In addition, $\|\mathbb{E}[\btheta_{\tau}^{\mathrm{gt}}-\btheta_{0}^{\mathcal{A}}]\| \leq M $.
  The inputs $\|\mathbf{x}_{\tau,i}\| \leq K$. 
  $R,K$ are constants.
\end{assumption}

Note that, 
these assumptions can be easily satisfied in data generation model~\eqref{eq:linear_data_generate} by controlling the hyperparameters.
And they  are also standard in analyzing the optimal population risks for meta-linear regression~\citep{gao2020_model_opt_tradeoff_ml,collins2020_task_landscape_erm_maml}.

Next we will show in Theorem~\ref{thm:bamaml_lower_model_err} that 
one can always find a range of the regularizer weight $\gamma$ such that BaMAML has smaller optimal population risk than MAML.
\begin{theorem}
\emph{\textbf{(Optimal population risks)}}
\label{thm:bamaml_lower_model_err}
In the meta-linear regression with data model~\eqref{eq:linear_data_generate},
recall that $\btheta_0^{\rm ma}$ and $\btheta_0^{\rm ba}$ are the minimizers of
$\mathcal{R}^{\mathrm{ma}}({\btheta};\alpha)$ and $\mathcal{R}^{\mathrm{ba}}({\btheta}; \gamma)$,
respectively.
Define $r^{\rm ma} \coloneqq \min_{\alpha}\mathcal{R} (\btheta_0^{\rm ma};\alpha) - 1 >0 $,
$C_{\btheta} \coloneqq \max\{ \big((M + \|\btheta_0^{\mathrm{im}} \|)^2 + {R^2}\big)^{\frac{1}{2}}, \big((M + \|\btheta_0^{\mathrm{ma}} \|)^2 + {R^2}\big)^{\frac{1}{2}}\}$.
Under Assumptions~\ref{asmp:bounded_data_matrix_eigenvalues}-\ref{asmp:sub_gaussian_task_para},
when $\gamma$ satisfies
\begin{align}
  0<\gamma < \big((r^{\mathrm{ma}})^{-\frac{1}{2}}
  C_{\btheta}\bar{\lambda}^{\frac{1}{2}} - 1 \big)^{-1}\ushort{\lambda} 
\end{align}
BaMAML has smaller optimal population risk, i.e.
\begin{align}
 \mathcal{R}^{\mathrm{ba}}(\btheta_0^{\mathrm{ba}}; \gamma) < \min_{\alpha}~ \mathcal{R}^{\mathrm{ma}}(\btheta_0^{\mathrm{ma}}; \alpha ) .  
\end{align}

\end{theorem}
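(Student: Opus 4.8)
The plan is to cancel the common additive constant in the two optimal population risks and then bound the BaMAML part quantitatively. By Proposition~\ref{prop:solutions}, $\mathcal{R}^{\mathrm{ba}}(\btheta_0^{\mathrm{ba}};\gamma)=\mathbb{E}_\tau[\|\btheta_0^{\mathrm{ba}}-\btheta_\tau^{\mathrm{gt}}\|^2_{\mathbf{W}_\tau^{\mathrm{ba}}}]+1$ and $\min_\alpha\mathcal{R}^{\mathrm{ma}}(\btheta_0^{\mathrm{ma}};\alpha)=r^{\mathrm{ma}}+1$ carry the \emph{same} constant $1$, so it suffices to prove $\mathbb{E}_\tau[\|\btheta_0^{\mathrm{ba}}-\btheta_\tau^{\mathrm{gt}}\|^2_{\mathbf{W}_\tau^{\mathrm{ba}}}]<r^{\mathrm{ma}}$. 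If $B\coloneqq(r^{\mathrm{ma}})^{-1/2}C_\btheta\bar\lambda^{1/2}\le 1$ the interval in the statement is empty and the claim is vacuous, so I assume $B>1$.

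For the upper bound I would exploit that $\btheta_0^{\mathrm{ba}}$ minimizes $\mathcal{R}^{\mathrm{ba}}(\cdot;\gamma)$: substituting a convenient reference $\btheta_0^{\mathrm{ref}}\in\{\btheta_0^{\mathrm{ma}},\btheta_0^{\mathrm{im}}\}$ (either works, and both are precisely the points for which $C_\btheta$ gives a bound; recall $\btheta_0^{\mathrm{im}}$ also agrees with $\btheta_0^{\mathrm{ba}}$ in the Gaussian meta-linear case through Remark~\ref{rmk: reduce_bimaml}) only increases the objective, giving $\mathbb{E}_\tau[\|\btheta_0^{\mathrm{ba}}-\btheta_\tau^{\mathrm{gt}}\|^2_{\mathbf{W}_\tau^{\mathrm{ba}}}]\le\big(\sup_\tau\|\mathbf{W}_\tau^{\mathrm{ba}}\|_{\mathrm{op}}\big)\,\mathbb{E}_\tau[\|\btheta_0^{\mathrm{ref}}-\btheta_\tau^{\mathrm{gt}}\|^2]$. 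Assumption~\ref{asmp:sub_gaussian_task_para} then bounds the second factor by $C_\btheta^2$: its mean part has norm at most $M+\|\btheta_0^{\mathrm{ref}}\|$ and the $d$ independent sub-gaussian coordinates contribute $\mathcal{O}(R^2)$ to the variance, which is exactly the quantity $C_\btheta^2$ appearing in the statement.

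The remaining ingredient is the operator-norm estimate. Since the three factors of $\mathbf{W}_\tau^{\mathrm{ba}}=((s\gamma)^{-1}\mathbf{Q}_\tau+\mathbf{I})^{-1}\mathbf{Q}_\tau(\gamma^{-1}\mathbf{Q}_\tau+\mathbf{I})^{-1}$ are all functions of $\mathbf{Q}_\tau$, hence simultaneously diagonalizable, I bound the norm of the product by the product of the norms and evaluate each on the spectrum: by Assumption~\ref{asmp:bounded_data_matrix_eigenvalues}, $\|((s\gamma)^{-1}\mathbf{Q}_\tau+\mathbf{I})^{-1}\|_{\mathrm{op}}=\tfrac{s\gamma}{s\gamma+\ushort{\lambda}}$, $\|\mathbf{Q}_\tau\|_{\mathrm{op}}\le\bar\lambda$, and $\|(\gamma^{-1}\mathbf{Q}_\tau+\mathbf{I})^{-1}\|_{\mathrm{op}}=\tfrac{\gamma}{\gamma+\ushort{\lambda}}$. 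Because $s=N_1/N\le 1$ and $t\mapsto t/(t+\ushort{\lambda})$ is increasing, the first factor is at most $\tfrac{\gamma}{\gamma+\ushort{\lambda}}$, so $\sup_\tau\|\mathbf{W}_\tau^{\mathrm{ba}}\|_{\mathrm{op}}\le\bar\lambda\gamma^2/(\ushort{\lambda}+\gamma)^2$; this vanishes as $\gamma\to0$, which is the qualitative reason a favorable $\gamma$ must exist.

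Putting the pieces together, $\mathbb{E}_\tau[\|\btheta_0^{\mathrm{ba}}-\btheta_\tau^{\mathrm{gt}}\|^2_{\mathbf{W}_\tau^{\mathrm{ba}}}]\le\bar\lambda\gamma^2C_\btheta^2/(\ushort{\lambda}+\gamma)^2$, whose right side is strictly increasing in $\gamma>0$; requiring it to be smaller than $r^{\mathrm{ma}}$, taking square roots (all quantities positive) and cross-multiplying gives $\gamma\big(\bar\lambda^{1/2}C_\btheta-(r^{\mathrm{ma}})^{1/2}\big)<(r^{\mathrm{ma}})^{1/2}\ushort{\lambda}$, i.e. exactly $\gamma<\ushort{\lambda}/(B-1)$, which is the hypothesis. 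I expect the only mildly delicate points to be (i) the product-norm bound together with the clean removal of the $s$-dependence via $s\le1$, and (ii) ensuring the reference substituted into the minimizing inequality is one to which Assumption~\ref{asmp:sub_gaussian_task_para} literally applies --- handled by the iMAML reduction and by the maximum built into $C_\btheta$; everything else is the cancellation of the constant plus a one-variable inequality.
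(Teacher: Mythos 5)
Your proposal is correct and, at its core, it is the paper's own argument: reduce to showing $\mathbb{E}_{\tau}\big[\|\btheta_0^{\mathrm{ba}}-\btheta_{\tau}^{\mathrm{gt}}\|^2_{\mathbf{W}_{\tau}^{\mathrm{ba}}}\big]<r^{\mathrm{ma}}$, bound it by $\big(\sup_{\tau}\|\mathbf{W}_{\tau}^{\mathrm{ba}}\|\big)\,\mathbb{E}_{\tau}\big[\|\btheta_0-\btheta_{\tau}^{\mathrm{gt}}\|^2\big]$, use Assumption~\ref{asmp:sub_gaussian_task_para} to get the factor $C_{\btheta}^2$, bound the commuting product by $\bar{\lambda}/(1+\gamma^{-1}\ushort{\lambda})^2$ via Assumption~\ref{asmp:bounded_data_matrix_eigenvalues} and $s\le 1$, and solve the resulting one-variable inequality for $\gamma$ — exactly the chain in the appendix (Theorems~\ref{thm:lower_risk_bi_ma}--\ref{thm:same_risk_ba_bi}). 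The one genuine difference is your minimizer-substitution step: instead of bounding the risk at $\btheta_0^{\mathrm{ba}}$ itself, you use optimality of $\btheta_0^{\mathrm{ba}}$ to evaluate $\mathcal{R}^{\mathrm{ba}}$ at a reference point $\btheta_0^{\mathrm{ma}}$ or $\btheta_0^{\mathrm{im}}$, which is a nice refinement because $C_{\btheta}$ is defined through $\|\btheta_0^{\mathrm{im}}\|$ and $\|\btheta_0^{\mathrm{ma}}\|$ only; the paper's ``similar to the proof of Theorem~\ref{thm:lower_risk_bi_ma}'' step for BaMAML implicitly needs a second-moment bound at $\btheta_0^{\mathrm{ba}}$, and your trick cleanly avoids that. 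Two small caveats: (i) the paper additionally proves $(r^{\mathrm{ma}})^{-1/2}C_{\btheta}\bar{\lambda}^{1/2}-1>0$ (via $r^{\mathrm{ma}}<C_{\btheta}^2\bar{\lambda}$), so the admissible interval is never empty, whereas you only dismiss that case as vacuous — acceptable for the literal statement, but worth adding so the theorem is not void; (ii) your parenthetical that $\btheta_0^{\mathrm{im}}$ ``agrees with'' $\btheta_0^{\mathrm{ba}}$ through Remark~\ref{rmk: reduce_bimaml} is not accurate (the two population minimizers are weighted by different matrices $\mathbf{W}_{\tau}^{\mathrm{im}}\neq\mathbf{W}_{\tau}^{\mathrm{ba}}$ and need not coincide), but your argument never actually uses this, so it is harmless and should simply be deleted.
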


{Theorem~\ref{thm:bamaml_lower_model_err} states that regardless of the choice of $\alpha$, we can always find $\gamma > 0$ such that the BaMAML method has smaller meta-test risk than the MAML method.
}

{Note that, the choice of $\gamma $  represents trade-off between adaptation speed and optimal population risk, because $\btheta_{\tau}^{\rm ba}(\btheta_0)$ is a weighted average of the prior $\btheta_0$ and the ground truth paramter $\btheta_{\tau}^{\mathrm{gt}}$. The larger $\gamma$, the higher weight for the prior $\btheta_0$, then the closer the initial parameter $\btheta_0$ is to the optimal $\btheta_{\tau}^{\rm ba}(\btheta_0)$, and the faster the adaptation speed. On the other hand, the larger $\gamma$, the larger the optimal population risk is.
This inspires us to select model hyperparameter
based on our practical needs
for the specific problem.
}
Combined with the optimal population risk  of ERM (or modeling error in the paper) established in~\cite{gao2020_model_opt_tradeoff_ml}, 
our Theorem~\ref{thm:bamaml_lower_model_err}  also implies that BaMAML has lower optimal population risk than ERM.



\subsection{Statistical Error Analysis} 
\label{sub:statistical_error_analysis}

We next study and compare the statistical errors of different methods defined in \eqref{eq:meta_test_risk_decompose_final}.
We first bound the statistical errors of MAML and BaMAML methods.

\begin{theorem}
\label{thm:bound_MAML_stats_err}
\emph{\textbf{(Statistical error of MAML)}}
Suppose Assumptions~\ref{asmp:bounded_data_matrix_eigenvalues}-\ref{asmp:sub_gaussian_task_para} hold. $T = \Omega (d)$, $M = \smallO(R/\sqrt{T})$.
Denote $\|\cdot\|_{\mathrm{op}}$ as the operator norm.
Define function
\begin{align*}\label{eq:C_0_xx}
  \hspace{-3mm} C_0^{\mathcal{A}} 
  \coloneqq &  
  [\inf_{\tau}\lambda_{\min}({\mathbf{W}}^{\mathcal{A}}_{\tau})]^{-1}
  [\sup_{\tau}\lambda_{\max}({\mathbf{W}}^{\mathcal{A}}_{\tau})]^{2}
  \hspace{-2mm} \numberthis
\end{align*}
and define $\varrho$ as a higher order term given by
\begin{align*}
\label{eq:Delta_high_order}
 \varrho
 =& \frac{1}{T}\big(1+ \frac{d}{N}\big) \big(\widetilde{\mathcal{O}}(\frac{1}{\sqrt{d}}) + \widetilde{\mathcal{O}}(\sqrt{\frac{d}{T}})\big)\\
 &+ 
 \Big(\widetilde{\mathcal{O}}(\sqrt{\frac{d}{T}}) + 
 \widetilde{\mathcal{O}}(\frac{d}{N})
 \Big) M^2 + \frac{1}{T}\widetilde{\mathcal{O}}(\frac{d}{N})
 \numberthis
\end{align*}
where $\widetilde{\mathcal{O}}(\cdot)$ hides $\log (TNd)$ factor.
With probability at least $1-Td^{-10}$, we have
\begin{align*}
\mathcal{E}_{\mathrm{ma}}^2 (\hat{\btheta}_0^{\mathrm{ma}})
\leq 
\frac{R^2}{T} 2C_0^{\mathrm{ma}}
+\frac{d}{TN} 2C^{\mathrm{ma}}_{1}
+ \varrho
\numberthis 
\end{align*}
where $C^{\mathrm{ma}}_0$ is given by~\eqref{eq:C_0_xx}, and
\begin{subequations}
\begin{align}
\nonumber
  C^{\mathrm{ma}}_0 = 
  &(1-\alpha \ushort{\lambda}   )^{4} 
  (1-\alpha \bar{\lambda}   )^{-2} 
  \ushort{\lambda}^{-1}\bar{\lambda}^2
  \label{eq:C0_ma_bound} \\
  C_{1}^{\mathrm{ma}} 
= & s^{-1} + (1-s)^{-1}(1-\alpha \bar{\lambda }  )^{-2}
 \alpha^2 \bar{\lambda}^3
\ushort{\lambda}^{-1}   .
\end{align}
\end{subequations}

\end{theorem}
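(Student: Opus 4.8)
The starting point is the decomposition in Proposition~\ref{prop:meta_test_risk_decompose}, which identifies the statistical error as $\mathcal{E}_{\mathrm{ma}}^2(\hat{\btheta}_0^{\mathrm{ma}}) = \|\hat{\btheta}_0^{\mathrm{ma}} - \btheta_0^{\mathrm{ma}}\|^2_{\mathbb{E}_\tau[\mathbf{W}_\tau^{\mathrm{ma}}]}$, together with the closed-form expressions in Proposition~\ref{prop:solutions}: $\btheta_0^{\mathrm{ma}} = \mathbb{E}_\tau[\mathbf{W}_\tau^{\mathrm{ma}}]^{-1}\mathbb{E}_\tau[\mathbf{W}_\tau^{\mathrm{ma}}\btheta_\tau^{\mathrm{gt}}]$ and $\hat{\btheta}_0^{\mathrm{ma}} = (\sum_\tau \hat{\mathbf{W}}_\tau^{\mathrm{ma}})^{-1}(\sum_\tau \hat{\mathbf{W}}_\tau^{\mathrm{ma}}\btheta_\tau^{\mathrm{gt}}) + \Delta_T^{\mathrm{ma}}$. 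First I would subtract these two expressions and split the difference into (i) a ``design'' part coming from replacing population matrices $\mathbf{W}_\tau^{\mathrm{ma}}$ and their average by empirical ones $\hat{\mathbf{W}}_\tau^{\mathrm{ma}}$, and (ii) the noise part $\Delta_T^{\mathrm{ma}}$. The design part I would further rewrite, via the standard algebra for differences of weighted least-squares solutions, as $(\tfrac1T\sum_\tau \hat{\mathbf{W}}_\tau^{\mathrm{ma}})^{-1}\cdot \tfrac1T\sum_\tau(\hat{\mathbf{W}}_\tau^{\mathrm{ma}} - \mathbf{W}_\tau^{\mathrm{ma}})(\btheta_\tau^{\mathrm{gt}} - \btheta_0^{\mathrm{ma}})$, so that the summand has mean close to zero once we condition appropriately — the centering at $\btheta_0^{\mathrm{ma}}$ is exactly what makes this a sum of (nearly) mean-zero terms, which is why Assumption~\ref{asmp:sub_gaussian_task_para} is phrased relative to $\btheta_0^{\mathcal{A}}$ and why $M = \smallO(R/\sqrt T)$ is needed to control the residual mean.

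The bulk of the work is then to bound the two scalar contributions that appear in the theorem: the $R^2/T \cdot 2C_0^{\mathrm{ma}}$ term and the $d/(TN)\cdot 2C_1^{\mathrm{ma}}$ term. I would handle these by a matrix-concentration argument on $\tfrac1T\sum_\tau \mathbf{A}_\tau$ where $\mathbf{A}_\tau := (\hat{\mathbf{W}}_\tau^{\mathrm{ma}} - \mathbf{W}_\tau^{\mathrm{ma}})(\btheta_\tau^{\mathrm{gt}} - \btheta_0^{\mathrm{ma}})$ (a vector average) and on $\tfrac1T\sum_\tau \hat{\mathbf{W}}_\tau^{\mathrm{ma}}$ (to invert it and compare with $\mathbb{E}_\tau[\mathbf{W}_\tau^{\mathrm{ma}}]$). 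For the first: conditionally on the designs, the task parameters are sub-gaussian with parameter $\mathcal{O}(R/\sqrt d)$ per coordinate, so $\tfrac1T\sum_\tau \mathbf{B}_\tau(\btheta_\tau^{\mathrm{gt}}-\btheta_0^{\mathrm{ma}})$ with $\mathbf{B}_\tau := \hat{\mathbf{W}}_\tau^{\mathrm{ma}} - \mathbf{W}_\tau^{\mathrm{ma}}$ has squared $\mathbb{E}_\tau[\mathbf{W}_\tau^{\mathrm{ma}}]$-norm that concentrates around $\tfrac{R^2}{Td}\sum_i \tfrac1T\sum_\tau \|\mathbf{B}_\tau e_i\|^2_{...}$, which after taking operator-norm bounds on $\mathbf{B}_\tau$ yields the $R^2/T$ scaling with constant governed by $\sup_\tau\lambda_{\max}(\mathbf{W}_\tau^{\mathrm{ma}})^2 / \inf_\tau\lambda_{\min}(\mathbf{W}_\tau^{\mathrm{ma}})$, i.e. exactly $C_0^{\mathrm{ma}}$; I then substitute the eigenvalue formula for $\mathbf{W}_\tau^{\mathrm{ma}} = (\mathbf{I}-\alpha\mathbf{Q}_\tau)\mathbf{Q}_\tau(\mathbf{I}-\alpha\mathbf{Q}_\tau)$ under Assumption~\ref{asmp:bounded_data_matrix_eigenvalues} — whose eigenvalues lie between $(1-\alpha\bar\lambda)^2\ushort\lambda$-ish and $(1-\alpha\ushort\lambda)^2\bar\lambda$-ish — to arrive at the displayed bound~\eqref{eq:C0_ma_bound}. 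The $d/(TN)$ term comes from the fluctuation of $\hat{\mathbf{Q}}_{\tau,N_1}$, $\hat{\mathbf{Q}}_{\tau,N_2}$ around $\mathbf{Q}_\tau$: a Taylor/perturbation expansion of $\hat{\mathbf{W}}_\tau^{\mathrm{ma}} - \mathbf{W}_\tau^{\mathrm{ma}}$ in these fluctuations, combined with the sample-covariance concentration bound $\|\hat{\mathbf{Q}}_{\tau,n} - \mathbf{Q}_\tau\|_{\mathrm{op}} = \widetilde{\mathcal{O}}(\sqrt{d/n})$ for bounded features, contributes a variance of order $d/(N_1)$ from the two $(\mathbf{I}-\alpha\hat{\mathbf{Q}}_{\tau,N_1})$ factors and order $d/N_2$ from the $\hat{\mathbf{Q}}_{\tau,N_2}$ factor; tracking constants (the $s^{-1}$ for the $N_2$ piece with $N_2 = (1-s)N$ absorbed the other way, and the $(1-\alpha\bar\lambda)^{-2}\alpha^2\bar\lambda^3\ushort\lambda^{-1}$ for the $N_1$ pieces) reproduces $C_1^{\mathrm{ma}}$. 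Everything that is lower-order — cross terms, the inverse-matrix perturbation $\|(\tfrac1T\sum\hat{\mathbf{W}}_\tau)^{-1} - \mathbb{E}_\tau[\mathbf{W}_\tau]^{-1}\|$, the bias from $M \ne 0$, and the noise term $\Delta_T^{\mathrm{ma}}$ (which is $\mathcal{O}(d/(TN))$ in mean but with a $\widetilde{\mathcal{O}}(d/(TN))$-type fluctuation and additional $M^2$-weighted pieces) — gets collected into $\varrho$ as displayed.

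I would assemble the pieces with a union bound over the $T$ tasks for the per-task sample-covariance events (each at level $d^{-10}$, giving the stated $1-Td^{-10}$) plus a couple of additional high-probability events for the $\tau$-averaged sub-gaussian concentration and the invertibility of $\tfrac1T\sum_\tau \hat{\mathbf{W}}_\tau^{\mathrm{ma}}$ (which needs $T = \Omega(d)$ so that the empirical average of $d\times d$ matrices is well-conditioned). The main obstacle I expect is the bookkeeping in the perturbation expansion of $\hat{\mathbf{W}}_\tau^{\mathrm{ma}} - \mathbf{W}_\tau^{\mathrm{ma}}$: it is a product of three factors each perturbed, it is quadratic in $\mathbf{Q}_\tau$, and to get the \emph{sharp} leading constants $C_0^{\mathrm{ma}}, C_1^{\mathrm{ma}}$ rather than a crude bound one must be careful to separate the first-order (mean-zero in the $\tau$-sum, variance-$d/TN$) contribution from the second-order contribution, and to keep the $\alpha$-dependence of each factor explicit rather than bounding $\|\mathbf{I}-\alpha\mathbf{Q}_\tau\|$ by a single constant. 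The other delicate point is ensuring the centering $\btheta_\tau^{\mathrm{gt}}-\btheta_0^{\mathrm{ma}}$ is genuinely (near-)mean-zero under Assumption~\ref{asmp:sub_gaussian_task_para}; this is where the $M = \smallO(R/\sqrt T)$ hypothesis is consumed, turning what would otherwise be an $O(M^2)$ non-vanishing bias into one of the lower-order terms inside $\varrho$.
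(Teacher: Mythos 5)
Your opening rewrite of the ``design'' part is where the argument breaks. From \eqref{eq:theta_0_hat_A_sln} the correct identity is $\hat{\btheta}_0^{\mathrm{ma}}-\btheta_0^{\mathrm{ma}} = \big(\sum_\tau \hat{\mathbf{W}}_\tau^{\mathrm{ma}}\big)^{-1}\sum_\tau \hat{\mathbf{W}}_\tau^{\mathrm{ma}}(\btheta_\tau^{\mathrm{gt}}-\btheta_0^{\mathrm{ma}}) + \Delta_T^{\mathrm{ma}}$, i.e.\ the \emph{full} empirical weights act on the centered task parameters. Your claimed form with $(\hat{\mathbf{W}}_\tau^{\mathrm{ma}}-\mathbf{W}_\tau^{\mathrm{ma}})$ in place of $\hat{\mathbf{W}}_\tau^{\mathrm{ma}}$ is not an identity: the subtracted quantity $\frac1T\sum_\tau \mathbf{W}_\tau^{\mathrm{ma}}(\btheta_\tau^{\mathrm{gt}}-\btheta_0^{\mathrm{ma}})$ vanishes only in expectation over $\tau$ (that is what defines $\btheta_0^{\mathrm{ma}}$), not for a finite sample of $T$ tasks, where it fluctuates at scale $R/\sqrt{T}$. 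That discarded fluctuation is precisely the source of the leading $\frac{R^2}{T}\,2C_0^{\mathrm{ma}}$ term, so your route cannot recover it; and your subsequent heuristic — bounding $\|\hat{\mathbf{W}}_\tau^{\mathrm{ma}}-\mathbf{W}_\tau^{\mathrm{ma}}\|_{\mathrm{op}}$ by $\sup_\tau\lambda_{\max}(\mathbf{W}_\tau^{\mathrm{ma}})$ — is internally inconsistent, since that difference is $\widetilde{\mathcal{O}}(\sqrt{d/N})$ and would yield an $\frac{R^2}{T}\cdot\frac{d}{N}$-type quantity, which in the paper appears only inside the higher-order term $\varrho$. The paper instead keeps the $\hat{\mathbf{W}}_\tau^{\mathrm{ma}}$-weighted sum, conditions on the designs, applies the (non-zero-mean) Hanson--Wright inequality to the sub-gaussian vector of centered task parameters, and uses the matrix concentration of $\frac1T\sum_\tau\hat{\mathbf{W}}_\tau^{\mathrm{ma}}$ and $\frac1T\sum_\tau(\hat{\mathbf{W}}_\tau^{\mathrm{ma}})^2$ to obtain $\frac{R^2}{T}\big(\lambda_{\min}(\mathbb{E}[\mathbf{W}_\tau^{\mathrm{ma}}])^{-1}\lambda_{\max}(\mathbb{E}[(\mathbf{W}_\tau^{\mathrm{ma}})^2])+\cdots\big)$, which is then turned into $C_0^{\mathrm{ma}}$ via Assumption~\ref{asmp:bounded_data_matrix_eigenvalues}; the hypothesis $M=\smallO(R/\sqrt T)$ is consumed there to push the bias pieces into $\varrho$, as you correctly anticipated.

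The second misattribution concerns the $\frac{d}{TN}\,2C_1^{\mathrm{ma}}$ term. You derive it from a perturbation expansion of $\hat{\mathbf{W}}_\tau^{\mathrm{ma}}-\mathbf{W}_\tau^{\mathrm{ma}}$ in $\hat{\mathbf{Q}}_{\tau,N_1},\hat{\mathbf{Q}}_{\tau,N_2}$ and relegate $\Delta_T^{\mathrm{ma}}$ to $\varrho$; in the paper it is exactly the other way around. The covariance fluctuations enter only multiplied by $R^2$ (or $M^2$) and are absorbed into $\varrho$, whereas $C_1^{\mathrm{ma}}$ comes from the label-noise term $\Delta_T^{\mathrm{ma}}$: the two quadratic forms in $\mathbf{e}_\tau^{\mathrm{val}}$ and $\mathbf{e}_\tau^{\mathrm{trn}}$ are handled by Hanson--Wright plus trace computations, giving contributions of order $\frac{d}{TN_2}$ (with constant $1$, from the $(\mathbf{I}-\alpha\hat{\mathbf{Q}}_{\tau,N_1})\frac{1}{N_2}\mathbf{X}_{\tau,N_2}^{\mathrm{val}\top}\mathbf{e}_\tau^{\mathrm{val}}$ piece) and $\frac{d}{TN_1}$ (with the $\alpha^2\bar\lambda^3\ushort{\lambda}^{-1}(1-\alpha\bar\lambda)^{-2}$-type constant, from the $\frac{\alpha}{N_1}\hat{\mathbf{Q}}_{\tau,N_2}\mathbf{X}_{\tau,N_1}^{\mathrm{trn}\top}\mathbf{e}_\tau^{\mathrm{trn}}$ piece); writing $N_1=sN$, $N_2=(1-s)N$ produces the split-ratio dependence in $C_1^{\mathrm{ma}}$. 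Your design-fluctuation route carries the wrong prefactor ($R^2$ rather than the unit noise variance) and so cannot produce this term or its constant. The union-bound/high-probability bookkeeping you describe matches the paper, but without fixing these two attributions the stated constants do not follow from your outline.
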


Note that $M= \smallO(R/\sqrt{T})$ can be achieved when different tasks have similar $\mathbf{Q}_{\tau}$, for example, when input feature normalization is performed.
Analogous to Theorem~\ref{thm:bound_MAML_stats_err}, we bound the BaMAML statistical error next.

\begin{theorem}
\label{thm:bound_BaMAML_stats_err}
\emph{\textbf{(Statistical error of BaMAML)}}
Suppose Assumptions~\ref{asmp:bounded_data_matrix_eigenvalues}-\ref{asmp:sub_gaussian_task_para} hold. 
With probability at least $1-Td^{-10}$, 
we have
\begin{align*}
\mathcal{E}_{\mathrm{ba}}^2 (\hat{\btheta}_0^{\mathrm{ba}})
\leq 
\frac{R^2}{T} 2C^{\mathrm{ba}}_0
+\frac{d}{TN} 2C^{\mathrm{ba}}_{1} + \varrho
\numberthis
\end{align*}
where 
$\varrho$ is given by~\eqref{eq:Delta_high_order},
$C^{\mathrm{ba}}_0$ is given by~\eqref{eq:C_0_xx}, 
and 
\begin{subequations}
\begin{align}
\nonumber
  C^{\mathrm{ba}}_0  = 
  &(1+\gamma^{-1} \ushort{\lambda})^{-4} 
  (1+(\gamma s)^{-1} \bar{\lambda} )^2 
  \ushort{\lambda}^{-1}\bar{\lambda}^2  
  \label{eq:C0_ba_bound} \\
  C_{1}^{\mathrm{ba}} = & 1 .
\end{align}
\end{subequations}

\end{theorem}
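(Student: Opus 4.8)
The plan is to prove Theorem~\ref{thm:bound_BaMAML_stats_err} in complete parallel with Theorem~\ref{thm:bound_MAML_stats_err}, since both statistical errors, once written through the closed forms of Proposition~\ref{prop:solutions}, have the identical structure and differ only in the weight matrices from Table~\ref{tab:weight_matrices} and hence in the spectral constants. So I would first isolate a method-agnostic bound on $\mathcal{E}_{\mathcal{A}}^2(\hat{\btheta}_0^{\mathcal{A}}) = \|\hat{\btheta}_0^{\mathcal{A}} - \btheta_0^{\mathcal{A}}\|^2_{\mathbb{E}_\tau[\mathbf{W}_\tau^{\mathcal{A}}]}$ that depends on $\{\mathbf{W}_\tau^{\mathcal{A}},\hat{\mathbf{W}}_\tau^{\mathcal{A}}\}$ only through $\inf_\tau\lambda_{\min}(\mathbf{W}_\tau^{\mathcal{A}})$, $\sup_\tau\lambda_{\max}(\mathbf{W}_\tau^{\mathcal{A}})$, the Lipschitz dependence of $\hat{\mathbf{W}}_\tau^{\mathcal{A}}$ on the sample covariances $\hat{\mathbf{Q}}_{\tau,\cdot}$, and the operator norms of the ``regularization'' factors in $\hat{\mathbf{W}}_\tau^{\mathcal{A}}$; then I would specialize these inputs to BaMAML. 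The higher-order bookkeeping term $\varrho$ of \eqref{eq:Delta_high_order} being identical for the two theorems is a hint that exactly this kind of generic lemma is in force.

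For the decomposition, using \eqref{eq:theta_0_star_A_sln}--\eqref{eq:theta_0_hat_A_sln} together with the first-order optimality $\mathbb{E}_\tau[\mathbf{W}_\tau^{\mathrm{ba}}(\btheta_\tau^{\mathrm{gt}}-\btheta_0^{\mathrm{ba}})]=0$, I would write
\begin{equation*}
\hat{\btheta}_0^{\mathrm{ba}}-\btheta_0^{\mathrm{ba}} = \Big(\tfrac1T\textstyle\sum_\tau\hat{\mathbf{W}}_\tau^{\mathrm{ba}}\Big)^{-1}\Big(\tfrac1T\textstyle\sum_\tau\hat{\mathbf{W}}_\tau^{\mathrm{ba}}(\btheta_\tau^{\mathrm{gt}}-\btheta_0^{\mathrm{ba}})\Big)+\Delta_T^{\mathrm{ba}},
\end{equation*}
split the numerator into the population-weighted average $\tfrac1T\sum_\tau\mathbf{W}_\tau^{\mathrm{ba}}(\btheta_\tau^{\mathrm{gt}}-\btheta_0^{\mathrm{ba}})$ plus the residual $\tfrac1T\sum_\tau(\hat{\mathbf{W}}_\tau^{\mathrm{ba}}-\mathbf{W}_\tau^{\mathrm{ba}})(\btheta_\tau^{\mathrm{gt}}-\btheta_0^{\mathrm{ba}})$, and replace the inverse normalizer by $(\mathbb{E}_\tau\mathbf{W}_\tau^{\mathrm{ba}})^{-1}$ up to a perturbation. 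By Assumption~\ref{asmp:sub_gaussian_task_para} the population-weighted average is a sum of $T$ independent, mean-zero, $\mathcal{O}(R/\sqrt d)$-sub-gaussian contributions (plus a deterministic part of size $M$), so a vector Bernstein / sub-gaussian concentration argument bounds its squared $\|\cdot\|_{\mathbb{E}_\tau[\mathbf{W}^{\mathrm{ba}}]}$-norm by $\tfrac{R^2}{T}[\inf_\tau\lambda_{\min}(\mathbf{W}_\tau^{\mathrm{ba}})]^{-1}[\sup_\tau\lambda_{\max}(\mathbf{W}_\tau^{\mathrm{ba}})]^2 + M^2(\cdots)$, i.e.\ the first term with $C_0^{\mathrm{ba}}$ of \eqref{eq:C_0_xx}, the factor $2$ absorbing cross terms; the $M^2$ pieces, the residual, and the normalizer perturbation all feed into $\varrho$, exactly as in the MAML proof.

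For the noise term $\Delta_T^{\mathrm{ba}}$, I would trace it to the cross terms obtained when $\mathbf{y}_\tau^{\mathrm{trn}}$ and $\mathbf{y}_\tau^{\mathrm{val}}$ are replaced via \eqref{eq:linear_data_generate}, each contributing a term controlled by $\|((s\gamma)^{-1}\hat{\mathbf{Q}}_{\tau,N}+\mathbf{I})^{-1}\hat{\mathbf{Q}}_{\tau,N_2}(\gamma^{-1}\hat{\mathbf{Q}}_{\tau,N_1}+\mathbf{I})^{-1}\mathbf{X}_\tau^{\mathrm{trn}\top}\boldsymbol{\epsilon}_\tau/N\|^2$; a $\chi^2$/sub-exponential bound giving $\|\mathbf{X}_\tau^{\mathrm{trn}\top}\boldsymbol{\epsilon}_\tau\|^2=\widetilde{\mathcal{O}}(dN)$ turns this into order $d/(TN)$ times squared operator norms of the regularization factors. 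Crucially, since $\hat{\mathbf{Q}}_{\tau,\cdot}\succeq 0$, both $((s\gamma)^{-1}\hat{\mathbf{Q}}_{\tau,N}+\mathbf{I})^{-1}$ and $(\gamma^{-1}\hat{\mathbf{Q}}_{\tau,N_1}+\mathbf{I})^{-1}$ are contractions with operator norm $\le 1$ (unlike the MAML factor $\mathbf{I}-\alpha\hat{\mathbf{Q}}_{\tau,N_1}$), which collapses the per-task coefficient to $C_1^{\mathrm{ba}}=1$. A union bound over the $T$ tasks for the sample-covariance, task-sub-gaussianity, and noise concentration events yields the overall $1-Td^{-10}$ probability.

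The remaining ingredient is the spectral bound. Since $\mathbf{W}_\tau^{\mathrm{ba}}$ in Table~\ref{tab:weight_matrices} is a matrix function of $\mathbf{Q}_\tau$, its eigenvalues are $g_\gamma(\lambda)=\lambda((s\gamma)^{-1}\lambda+1)^{-1}(\gamma^{-1}\lambda+1)^{-1}$ with $\lambda\in[\ushort{\lambda},\bar{\lambda}]$ by Assumption~\ref{asmp:bounded_data_matrix_eigenvalues}; bounding numerators and denominators and then using $s=N_1/N\le 1$ to consolidate the regularization factors gives $\lambda_{\max}(\mathbf{W}_\tau^{\mathrm{ba}})\le\bar{\lambda}(1+\gamma^{-1}\ushort{\lambda})^{-2}$ and $\lambda_{\min}(\mathbf{W}_\tau^{\mathrm{ba}})\ge\ushort{\lambda}(1+(\gamma s)^{-1}\bar{\lambda})^{-2}$, which substituted into \eqref{eq:C_0_xx} produce precisely the stated $C_0^{\mathrm{ba}}$. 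I expect the main obstacle to be the perturbation analysis of the inverse normalizer $\big(\tfrac1T\sum_\tau\hat{\mathbf{W}}_\tau^{\mathrm{ba}}\big)^{-1}-\big(\mathbb{E}_\tau\mathbf{W}_\tau^{\mathrm{ba}}\big)^{-1}$: one must show that, after compounding with the randomness of the numerator and taking a union bound over $\tau$, its contribution lands entirely inside $\varrho$ with the claimed $\tfrac1T(1+\tfrac dN)$, $\sqrt{d/T}$, $d/N$ and $M^2$ dependencies, which requires simultaneously a matrix concentration estimate $\|\tfrac1T\sum_\tau\hat{\mathbf{W}}_\tau^{\mathrm{ba}}-\mathbb{E}_\tau\mathbf{W}_\tau^{\mathrm{ba}}\|_{\mathrm{op}}=\widetilde{\mathcal{O}}(\sqrt{d/T}+d/N)$ and the conditioning bound $\lambda_{\min}(\mathbb{E}_\tau\mathbf{W}_\tau^{\mathrm{ba}})\ge\inf_\tau\lambda_{\min}(\mathbf{W}_\tau^{\mathrm{ba}})>0$; once these are in place the argument is a direct re-run of the MAML proof with the BaMAML weight matrices.
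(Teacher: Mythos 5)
Your proposal follows essentially the same route as the paper: the paper's appendix likewise isolates a method-agnostic bound on the noise-free term $\|\mathbf{U}_{\mathcal{A}}^{\top}\mathbf{z}_{\mathcal{A}}\|^2_{\mathbb{E}_{\tau}[\mathbf{W}_{\tau}^{\mathcal{A}}]}$ (its Lemma on the statistical error not caused by data noise, proved via Hanson--Wright and the concentration of $\frac1T\sum_\tau\hat{\mathbf{W}}_{\tau,N}^{\mathcal{A}}$), then specializes to BaMAML by bounding the eigenvalues of $\mathbf{W}_\tau^{\mathrm{ba}}=((s\gamma)^{-1}\mathbf{Q}_\tau+\mathbf{I})^{-1}\mathbf{Q}_\tau(\gamma^{-1}\mathbf{Q}_\tau+\mathbf{I})^{-1}$ exactly as you do to obtain $C_0^{\mathrm{ba}}$, and controls the noise term $\Delta_T^{\mathrm{ba}}$ using the contraction property of the BaMAML regularization factors to get $C_1^{\mathrm{ba}}=1$, with all residual and $M^2$ contributions absorbed into $\varrho$ and a union bound giving $1-Td^{-10}$. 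Your sketch is consistent with that argument, differing only in superficial choices of concentration tools (vector Bernstein vs.\ the paper's Hanson--Wright-type lemmas).
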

Theorems~\ref{thm:bound_MAML_stats_err} and~\ref{thm:bound_BaMAML_stats_err}
show that the statistical errors of MAML and BaMAML have similar decreasing rates, that is, $\mathcal{O}(T^{-1})$ and $\mathcal{O}(N^{-1})$.
The difference lies in their coefficients.
For the dominating constants 
$C_0^{\mathrm{ma}}$ in~\eqref{eq:C0_ma_bound} and $C_0^{\mathrm{ba}}$ in~\eqref{eq:C0_ba_bound}, 
given any $\alpha$, choose 
\begin{align*}
  \gamma < \min \{\bar{\lambda}, \frac{1}{2} \bar{\lambda}^{-1} \ushort{\lambda}^{2} s 
  (1-\alpha \bar{\lambda} )^{2} 
  (1-\alpha \ushort{\lambda})^{-1} \} 
  \numberthis
\end{align*} 
then $C_0^{\mathrm{ma}} > C_0^{\mathrm{ba}}$.
In terms of the dependence on $N$, 
given any $\alpha$, since $C_1^{\mathrm{ma}} > s^{-1} > 1$,
thus $C_1^{\mathrm{ma}} > C_1^{\mathrm{ba}}$,
i.e. MAML has larger coefficients than BaMAML.
Therefore the statistical error of BaMAML is lower when $N$ is small, which is typical in few-shot learning.
Nevertheless, Theorems~\ref{thm:bound_MAML_stats_err} and~\ref{thm:bound_BaMAML_stats_err} only give the worst-case upper bounds of the statistical errors of two methods, which can be inaccurate in some cases.
To precisely characterize the statistical errors of BaMAML and MAML, 
we will provide sharper analysis next based on an additional assumption.

\subsection{Sharp Statistical Error Analysis} 
\label{sub:S_statistical_error_analysis}

To precisely quantify the dominating constants in the statistical error,
we further make assumptions on the task and data distributions.

\begin{assumption}[Linear centroid model]
\label{asmp:linear_centroid_model}
1) The inputs are standard Gaussian: $\mathbf{x}_{\tau, i} \stackrel{\mathrm{iid}}{\sim} \mathcal{N}\left(\mathbf{0}, \mathbf{I}_{d}\right)$. 
Then $\mathbf{Q}_{\tau} = \mathbf{I}_d$, therefore $\mathbf{W}^{\cal{A}}_{\tau} = w_{\cal{A}} \mathbf{I}_d$.
This implies that for different methods, 
the optimal initial parameters are the same, that is, $\btheta_0^* = \mathbb{E}_{\tau}[\btheta_{\tau}^{\mathrm{gt}}]$.
2) The ground truth parameter 
$\btheta_{\tau}^{\mathrm{gt}}$ is independent of $\mathbf{X}_{\tau}$ and satisfies
\begin{align}
\mathbb{E}_{\btheta_{\tau}^{\mathrm{gt}}}\big[\big(
\btheta_{\tau}^{\mathrm{gt}}-
\btheta_{0}^{*}\big)
\big(\btheta_{\tau}^{\mathrm{gt}}-
\btheta_{0}^{*}\big)^{\top}\big]
=\frac{R_{}^{2}}{d} \mathbf{I}_{d}
\end{align}
where $R$ is a constant, and the individual entries $\{\btheta_{\tau, i}^{\mathrm{gt}}-\btheta_{0, i}^{*}\}_{i \in[d], \tau \in[T]}$ are i.i.d. mean-zero and $\mathcal{O}( R / \sqrt{d})$-sub-gaussian.
\end{assumption}

Note that  Assumption~\ref{asmp:linear_centroid_model} has also been used in~\cite{bai2021_trntrn_trnval,denevi2018_l2l_linear_centroid}, 
whereas we do not make the noiseless realizable assumption compared to~\cite{bai2021_trntrn_trnval}, thus less restrictive.
Based on this assumption, we can obtain the dominating constant exactly, as stated in Theorems~\ref{thm:concentration_MAML_statistical_error} and \ref{thm:concentration_BaMAML_statistical_error}.

\begin{theorem}
\label{thm:concentration_MAML_statistical_error}
\emph{\textbf{(Statistical error of MAML)}}
Suppose Assumptions~\ref{asmp:bounded_data_matrix_eigenvalues},\ref{asmp:linear_centroid_model} hold,
 $T=\Omega(d), d / N= \eta > 0$, and $\alpha > 0$.
 Define 
 $$w_{\cal A} \coloneqq \frac{1}{d} \mathrm{tr}(\mathbb{E}[\mathbf{W}_{\tau}^{\cal A}]), ~~
 \tilde{C}_0^{\cal A} \coloneqq \frac{1}{d}\big\langle\mathbb{E}^{-2}\big[\hat{\mathbf{W}}_{\tau}^{\mathcal{A}}\big], \mathbb{E}\big[(\hat{\mathbf{W}}_{\tau}^{\mathcal{A}})^{2}\big]\big\rangle$$
With probability at least $1-T{d}^{-10}$, the statistical error in \eqref{eq:meta_test_risk_decompose_final} under MAML satisfies
\begin{align*}
\mathcal{E}_{\mathrm{ma}}^2 (\hat{\btheta}_0^{\mathrm{ma}}) =
\frac{R^{2}}{T} 
 w_{\mathrm{ma}}\tilde{C}^{\mathrm{ma}}_0 
+\frac{d}{TN}
w_{\mathrm{ma}}\tilde{C}^{\mathrm{ma}}_{1}
+ \varrho 
\numberthis 
\end{align*}
where 
$\varrho$ is given by~\eqref{eq:Delta_high_order}.
The dominating constant $\tilde{C}_0^{\mathrm{ma}}$ satisfies
\begin{align}
\inf _{\text{\tiny $\makecell{\alpha >0 \\ s \in(0,1)}$}} \lim _{\text{\tiny $\makecell{d, N \rightarrow \infty\\ d / N \rightarrow \eta}$}} \tilde{C}_0^{\mathrm{ma}}=1 + \eta .
\end{align}

\end{theorem}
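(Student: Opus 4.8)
The plan is to split the statement into two pieces: (i) reduce the meta-test statistical error to an explicit spectral functional in which $\tilde C_0^{\mathrm{ma}}$ appears, and (ii) evaluate $\lim_{d,N\to\infty}\tilde C_0^{\mathrm{ma}}$ in closed form and then minimize it over $\alpha$ and $s$. For (i): by Assumption~\ref{asmp:linear_centroid_model} we have $\mathbf{Q}_\tau=\mathbf{I}_d$, so $\mathbf{W}_\tau^{\mathrm{ma}}=w_{\mathrm{ma}}\mathbf{I}_d$ with $w_{\mathrm{ma}}=(1-\alpha)^2$ and $\btheta_0^{\mathrm{ma}}=\btheta_0^{*}=\mathbb{E}_\tau[\btheta_\tau^{\mathrm{gt}}]$. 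Writing $\btheta_\tau^{\mathrm{gt}}=\btheta_0^{*}+\mathbf{v}_\tau$ with the $\mathbf{v}_\tau$ i.i.d., mean zero, $\mathrm{Cov}(\mathbf{v}_\tau)=\tfrac{R^2}{d}\mathbf{I}_d$, independent of $\mathbf{X}_\tau$, the closed form of $\hat\btheta_0^{\mathrm{ma}}$ in Proposition~\ref{prop:solutions} gives $\hat\btheta_0^{\mathrm{ma}}-\btheta_0^{\mathrm{ma}}=(\sum_\tau\hat{\mathbf{W}}_\tau^{\mathrm{ma}})^{-1}\sum_\tau\hat{\mathbf{W}}_\tau^{\mathrm{ma}}\mathbf{v}_\tau+\Delta_T^{\mathrm{ma}}$. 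I would condition on $\{\mathbf{X}_\tau\}$, take the $\mathbf{v}$-expectation of $\|\cdot\|^2$, and use matrix concentration (matrix Bernstein, with $T=\Omega(d)$ giving operator-norm closeness and failure probability $\le Td^{-10}$) to replace $\tfrac1T\sum_\tau\hat{\mathbf{W}}_\tau^{\mathrm{ma}}$ and $\tfrac1T\sum_\tau(\hat{\mathbf{W}}_\tau^{\mathrm{ma}})^2$ by $\mathbb{E}[\hat{\mathbf{W}}_\tau^{\mathrm{ma}}]$ and $\mathbb{E}[(\hat{\mathbf{W}}_\tau^{\mathrm{ma}})^2]$; the $\mathbf{v}$-term then produces $\tfrac{R^2}{T}\cdot\tfrac1d\langle\mathbb{E}^{-2}[\hat{\mathbf{W}}_\tau^{\mathrm{ma}}],\mathbb{E}[(\hat{\mathbf{W}}_\tau^{\mathrm{ma}})^2]\rangle=\tfrac{R^2}{T}\tilde C_0^{\mathrm{ma}}$, the noise term $\Delta_T^{\mathrm{ma}}$ produces the $\tfrac{d}{TN}\tilde C_1^{\mathrm{ma}}$ term, and cross/lower-order terms are collected into $\varrho$; multiplying by $w_{\mathrm{ma}}$ from the weighted norm yields the displayed identity. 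Everything then reduces to computing $\lim\tilde C_0^{\mathrm{ma}}$.

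For step (ii) I would exploit that $\mathcal{D}_\tau^{\mathrm{trn}}$ and $\mathcal{D}_\tau^{\mathrm{val}}$ are disjoint, so $\hat{\mathbf{Q}}_1:=\hat{\mathbf{Q}}_{\tau,N_1}$ and $\hat{\mathbf{Q}}_2:=\hat{\mathbf{Q}}_{\tau,N_2}$ are independent, rotationally-invariant Wishart matrices with mean $\mathbf{I}_d$. Hence $\mathbb{E}[\hat{\mathbf{W}}_\tau^{\mathrm{ma}}]=\mathbb{E}[(\mathbf{I}-\alpha\hat{\mathbf{Q}}_1)^2]=b\,\mathbf{I}_d$ with $b\to a:=(1-\alpha)^2+\alpha^2 c_1$, $c_1:=\eta/s$ (using $\tfrac1d\mathrm{tr}\hat{\mathbf{Q}}_1\to1$, $\tfrac1d\mathrm{tr}\hat{\mathbf{Q}}_1^2\to1+c_1$); and, integrating out $\hat{\mathbf{Q}}_2$ with the Wishart identity $\mathbb{E}[(\hat{\mathbf{Q}}_2)_{jk}(\hat{\mathbf{Q}}_2)_{li}]=\delta_{jk}\delta_{li}+N_2^{-1}(\delta_{jl}\delta_{ki}+\delta_{ji}\delta_{kl})$,
\begin{align*}
\tfrac1d\mathrm{tr}\,\mathbb{E}\big[(\hat{\mathbf{W}}_\tau^{\mathrm{ma}})^2\big]
&=\mathbb{E}\big[\tfrac1d\mathrm{tr}(\mathbf{I}-\alpha\hat{\mathbf{Q}}_1)^4\big]\\
&\quad+\tfrac{d}{N_2}\,\mathbb{E}\big[\big(\tfrac1d\mathrm{tr}(\mathbf{I}-\alpha\hat{\mathbf{Q}}_1)^2\big)^2\big]+o(1).
\end{align*}
By the Marchenko--Pastur law $\tfrac1d\mathrm{tr}(\mathbf{I}-\alpha\hat{\mathbf{Q}}_1)^k\to a_k:=\int(1-\alpha x)^k\,d\mu_{c_1}(x)$ (so $a_2=a$), the normalized trace $\tfrac1d\mathrm{tr}(\mathbf{I}-\alpha\hat{\mathbf{Q}}_1)^2$ concentrates around $a$, and $\mathbb{E}[(\hat{\mathbf{W}}_\tau^{\mathrm{ma}})^2]$ is also a multiple of $\mathbf{I}_d$; hence $\tilde C_0^{\mathrm{ma}}=b^{-2}\cdot\tfrac1d\mathrm{tr}\,\mathbb{E}[(\hat{\mathbf{W}}_\tau^{\mathrm{ma}})^2]\to a^{-2}\big(a_4+\tfrac{\eta}{1-s}\,a^2\big)=\tfrac{a_4}{a^2}+\tfrac{\eta}{1-s}=:L(\alpha,s)$.

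It remains to minimize $L(\alpha,s)$ over $\alpha>0$ and $s\in(0,1)$. Cauchy--Schwarz gives $a_4=\int[(1-\alpha x)^2]^2\,d\mu_{c_1}\ge\big(\int(1-\alpha x)^2\,d\mu_{c_1}\big)^2=a^2$, with equality only if $(1-\alpha x)^2$ is $\mu_{c_1}$-a.s.\ constant, which is impossible for $\alpha>0$ since $\mu_{c_1}$ is non-degenerate; so $a_4/a^2>1$, and $\eta/(1-s)>\eta$ for $s\in(0,1)$, whence $L(\alpha,s)>1+\eta$ for every admissible pair and $\inf L\ge1+\eta$. For the matching upper bound I would take $s=s_n\to0^{+}$ and $\alpha_n=s_n/\eta=1/c_{1,n}$, so that $\alpha_n^{\,k}m_k(c_{1,n})=\Theta(c_{1,n}^{-1})\to0$ for every $k\ge1$ (the $k$-th Marchenko--Pastur moment $m_k(c_1)$ being a degree-$(k-1)$ polynomial in $c_1$); then $a\to1$, $a_4\to1$, $\eta/(1-s_n)\to\eta$, and $L(\alpha_n,s_n)\to1+\eta$. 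Combining the two bounds gives $\inf_{\alpha>0,\,s\in(0,1)}L(\alpha,s)=1+\eta$, as claimed.

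The main obstacle is not this optimization --- a one-line Cauchy--Schwarz plus an explicit extremizing sequence --- but the random-matrix bookkeeping behind the asymptotic identity for $\tilde C_0^{\mathrm{ma}}$. One must rigorously justify the operator-norm replacements $\tfrac1T\sum_\tau\hat{\mathbf{W}}_\tau^{\mathrm{ma}}\approx\mathbb{E}[\hat{\mathbf{W}}_\tau^{\mathrm{ma}}]$ and $\tfrac1T\sum_\tau(\hat{\mathbf{W}}_\tau^{\mathrm{ma}})^2\approx\mathbb{E}[(\hat{\mathbf{W}}_\tau^{\mathrm{ma}})^2]$ with probability $1-Td^{-10}$ (this is where $T=\Omega(d)$ is used and where every subleading term is pushed into $\varrho$), and separately control the $O(1/d^2)$-scale fluctuation of $\tfrac1d\mathrm{tr}(\mathbf{I}-\alpha\hat{\mathbf{Q}}_1)^2$ around its mean so that the limit in step (ii) is exact; one also needs the Marchenko--Pastur moment estimates to hold uniformly as $c_1\to\infty$ along the extremizing sequence, which the choice $\alpha_n=1/c_{1,n}$ is tailored to. These steps are standard but load-bearing.
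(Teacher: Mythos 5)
Your proposal is correct, and its overall architecture matches the paper's: decompose $\hat\btheta_0^{\mathrm{ma}}-\btheta_0^{\mathrm{ma}}$ into the task-heterogeneity term plus the noise term $\Delta_T^{\mathrm{ma}}$, condition on the design matrices, use operator-norm concentration of $\tfrac1T\sum_\tau\hat{\mathbf{W}}_\tau^{\mathrm{ma}}$ and $\tfrac1T\sum_\tau(\hat{\mathbf{W}}_\tau^{\mathrm{ma}})^2$ (the paper does this with sub-gaussian covariance concentration and an $\varepsilon$-net plus Hanson--Wright, you invoke matrix Bernstein; both give the $1-Td^{-10}$ event and push the $\widetilde{\mathcal O}(\sqrt{d/T})$, $\widetilde{\mathcal O}(1/\sqrt d)$ fluctuations into $\varrho$), so that the heterogeneity term becomes $\tfrac{R^2}{T}\tilde C_0^{\mathrm{ma}}$ and the noise term $\tfrac{d}{TN}\tilde C_1^{\mathrm{ma}}$. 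Where you differ is in the two sub-steps for $\tilde C_0^{\mathrm{ma}}$. First, to integrate out the validation data you use the Gaussian/Wishart fourth-moment identity $\mathbb{E}[\hat{\mathbf{Q}}_2 M\hat{\mathbf{Q}}_2]=M(1+N_2^{-1})+N_2^{-1}\mathrm{tr}(M)\mathbf{I}$, giving $\mathrm{tr}\,\mathbb{E}[(\hat{\mathbf{W}}^{\mathrm{ma}})^2]=(1+N_2^{-1})\mathbb{E}[\mathrm{tr}(M^2)]+N_2^{-1}\mathbb{E}[\mathrm{tr}^2(M)]$ with $M=(\mathbf{I}-\alpha\hat{\mathbf{Q}}_1)^2$; the paper obtains exactly the same expression (its Lemma on the MAML constant) via a permutation/isotropy argument, so this is the same formula by a slightly different computation. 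Second, for the asymptotics the paper works at finite $d,N$, using Jensen/Cauchy--Schwarz to get the lower bound $(d+N_2+1)/N_2$ and a chain of upper bounds that sends $\alpha\to0$ and $s\to0$ separately; you instead pass to the exact Marchenko--Pastur limit $L(\alpha,s)=a_4/a_2^2+\eta/(1-s)$ with aspect ratio $c_1=\eta/s$, prove $L>1+\eta$ by Cauchy--Schwarz, and exhibit the coupled extremizing sequence $\alpha_n=s_n/\eta=1/c_{1,n}$ so that every $\alpha_n^k m_k(c_{1,n})=\Theta(c_{1,n}^{-1})\to0$. The coupling is the right precaution, since for fixed $\alpha$ and $s\to0$ the ratio $a_4/a_2^2\sim c_1$ blows up; your route is arguably cleaner than the paper's upper-bound chain (which implicitly needs $(1-\alpha\lambda_i)^2\le1$ along the way), at the cost of having to justify convergence of the expected Wishart trace moments and the $O(d^{-2})$ variance of $\tfrac1d\mathrm{tr}(M)$, which you correctly flag; both routes land on $\inf_{\alpha>0,\,s\in(0,1)}\lim\tilde C_0^{\mathrm{ma}}=1+\eta$.
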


\begin{figure*}[t]
  \centering
  \includegraphics[width=0.99\linewidth]{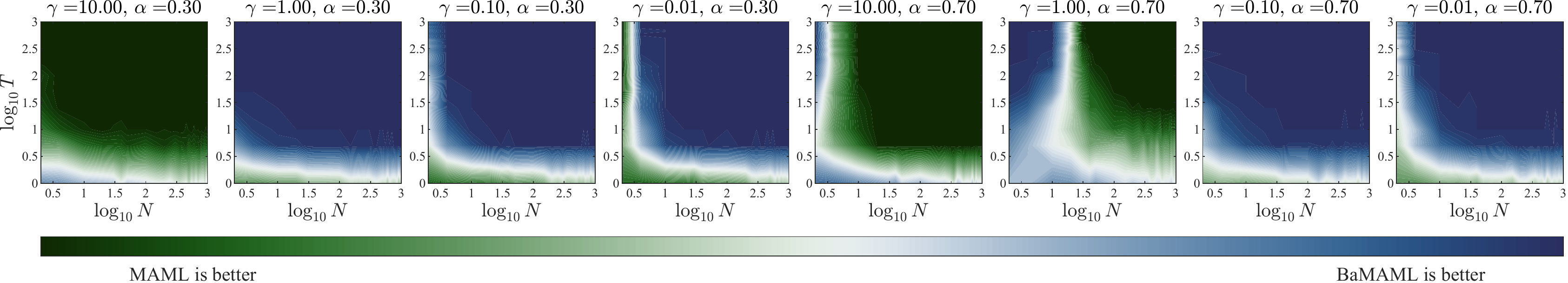}
  \caption{Contour plots of the probability that the \textcolor{BAMAML_cl}{BaMAML} estimate has lower expected loss than the \textcolor{MAML_cl}{MAML} estimate 
  The axes are the log number of tasks ($\log_{10} T$) and the log number of data points ($\log_{10} N$) used for meta-test optimization, 
  and the values of $\alpha,\gamma$ are given in subfigure titles.
  }
  \label{fig:linear_regress_contour}
\end{figure*}

Similarly, we can obtain the concentration of the statistical error of BaMAML.

\begin{theorem}
\label{thm:concentration_BaMAML_statistical_error} 
\emph{\textbf{(Statistical error of BaMAML)}}
Suppose Assumptions~\ref{asmp:bounded_data_matrix_eigenvalues},\ref{asmp:linear_centroid_model} hold,
$T=\Omega(d), d / N=\eta > 0$, and $\gamma >0$. Then with probability at least $1-T{d}^{-10}$, the statistical error in \eqref{eq:meta_test_risk_decompose_final} under BaMAML satisfies
\begin{align*}
\mathcal{E}_{\mathrm{ba}}^2 (\hat{\btheta}_0^{\mathrm{ba}})
=
\frac{R^{2}}{T} w_{\mathrm{ba}} \tilde{C}^{\mathrm{ba}}_0 
+\frac{d}{TN}
w_{\mathrm{ba}}\tilde{C}^{\mathrm{ba}}_{1} 
+ \varrho
\numberthis
\end{align*}
where 
$\varrho$ is given by~\eqref{eq:Delta_high_order}.
In addition, the dominating constant $\tilde{C}_0^{\mathrm{ba}} $ satisfies
\begin{equation}
\inf _{\tiny \makecell{\gamma>0 \\ s \in (0,1)}} \lim _{\text{\tiny $\makecell{d, N \rightarrow \infty\\ d / N \rightarrow \eta}$}} \tilde{C}_0^{\mathrm{ba}}
\begin{cases}
  =1, & \eta \in (0, 1], \\
  \leq \eta, & \eta \in (1, \infty).
\end{cases} 
\end{equation}
\end{theorem}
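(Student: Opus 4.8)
The plan is to mirror the MAML analysis from Theorem~\ref{thm:concentration_MAML_statistical_error}, exploiting the isotropy furnished by Assumption~\ref{asmp:linear_centroid_model}. Since $\mathbf{Q}_\tau = \mathbf{I}_d$, both the population and empirical BaMAML weight matrices from Table~\ref{tab:weight_matrices} become functions only of the sample covariances $\hat{\mathbf{Q}}_{\tau,N}, \hat{\mathbf{Q}}_{\tau,N_1}, \hat{\mathbf{Q}}_{\tau,N_2}$, and the population weight is the scalar $w_{\mathrm{ba}}\,\mathbf{I}_d$. First I would start from the closed-form solutions in Proposition~\ref{prop:solutions}: write $\hat{\btheta}_0^{\mathrm{ba}} - \btheta_0^{*} = (\sum_\tau \hat{\mathbf{W}}_\tau^{\mathrm{ba}})^{-1}(\sum_\tau \hat{\mathbf{W}}_\tau^{\mathrm{ba}}(\btheta_\tau^{\mathrm{gt}}-\btheta_0^{*})) + \Delta_T^{\mathrm{ba}}$, plug into $\mathcal{E}_{\mathrm{ba}}^2 = \|\hat{\btheta}_0^{\mathrm{ba}} - \btheta_0^{*}\|^2_{w_{\mathrm{ba}}\mathbf{I}_d}$, and expand. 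The cross terms and the $\Delta_T^{\mathrm{ba}}$ contributions get collected into the higher-order term $\varrho$ (same bookkeeping as in Theorem~\ref{thm:bound_MAML_stats_err}), leaving a leading quadratic form in the sub-gaussian vectors $\btheta_\tau^{\mathrm{gt}}-\btheta_0^{*}$ with deterministic-after-conditioning matrix coefficients.

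The key computation is to take expectation over $\btheta_\tau^{\mathrm{gt}}-\btheta_0^{*}$ (conditioned on all the $\mathbf{X}_\tau$), using $\mathbb{E}[(\btheta_\tau^{\mathrm{gt}}-\btheta_0^{*})(\btheta_\tau^{\mathrm{gt}}-\btheta_0^{*})^\top] = (R^2/d)\mathbf{I}_d$ from Assumption~\ref{asmp:linear_centroid_model}, which produces the structure $\frac{R^2}{T}\cdot\frac{1}{d}\langle (\frac{1}{T}\sum_\tau \hat{\mathbf{W}}_\tau^{\mathrm{ba}})^{-2}, \frac{1}{T}\sum_\tau (\hat{\mathbf{W}}_\tau^{\mathrm{ba}})^2\rangle$ up to lower-order terms; concentrating the empirical averages of $\hat{\mathbf{W}}_\tau^{\mathrm{ba}}$ and $(\hat{\mathbf{W}}_\tau^{\mathrm{ba}})^2$ around their expectations (via matrix concentration, using $T=\Omega(d)$ and the operator-norm control of $\hat{\mathbf{W}}_\tau^{\mathrm{ba}}$ afforded by Assumption~\ref{asmp:bounded_data_matrix_eigenvalues}) yields the claimed form $\frac{R^2}{T}w_{\mathrm{ba}}\tilde{C}_0^{\mathrm{ba}} + \frac{d}{TN}w_{\mathrm{ba}}\tilde{C}_1^{\mathrm{ba}} + \varrho$ with $\tilde{C}_0^{\mathrm{ba}} = \frac{1}{d}\langle \mathbb{E}^{-2}[\hat{\mathbf{W}}_\tau^{\mathrm{ba}}], \mathbb{E}[(\hat{\mathbf{W}}_\tau^{\mathrm{ba}})^2]\rangle$. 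The $\frac{d}{TN}$ term arises from isolating the part of $\mathbb{E}[(\hat{\mathbf{W}}_\tau^{\mathrm{ba}})^2] - \mathbb{E}^2[\hat{\mathbf{W}}_\tau^{\mathrm{ba}}]$ that scales like $d/N$ (Wishart fluctuations of $\hat{\mathbf{Q}}_{\tau,N}$ around $\mathbf{I}_d$).

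The heart of the theorem — and the main obstacle — is the high-dimensional asymptotic evaluation of $\inf_{\gamma>0,\,s\in(0,1)}\lim_{d/N\to\eta}\tilde{C}_0^{\mathrm{ba}}$. Here I would diagonalize: because $\hat{\mathbf{Q}}_{\tau,N_1}, \hat{\mathbf{Q}}_{\tau,N_2}$ are built from disjoint Gaussian samples but $\hat{\mathbf{Q}}_{\tau,N}$ shares both, the matrix $\hat{\mathbf{W}}_\tau^{\mathrm{ba}} = ((s\gamma)^{-1}\hat{\mathbf{Q}}_{\tau,N}+\mathbf{I})^{-1}\hat{\mathbf{Q}}_{\tau,N_2}(\gamma^{-1}\hat{\mathbf{Q}}_{\tau,N_1}+\mathbf{I})^{-1}$ is not simultaneously diagonalizable with one spectral measure, so I would pass to the Marchenko–Pastur limit and express both $\mathbb{E}[\hat{\mathbf{W}}_\tau^{\mathrm{ba}}]$ and $\mathbb{E}[(\hat{\mathbf{W}}_\tau^{\mathrm{ba}})^2]$ as deterministic functionals (via free probability / deterministic equivalents for products of resolvents of independent Wishart blocks). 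Then $\tilde{C}_0^{\mathrm{ba}}$ becomes an explicit function of $(\gamma, s, \eta)$, and I would analyze its infimum: for $\eta\le 1$ I expect the regime $\gamma\to 0$ (vanishing regularization, so BaMAML's adaptation recovers the ground-truth direction with no shrinkage bias) to drive $\tilde{C}_0^{\mathrm{ba}}\to 1$, since in the underparameterized regime $\hat{\mathbf{Q}}_{\tau,N_1}$ is invertible and the variance inflation disappears; for $\eta>1$ the sample covariance $\hat{\mathbf{Q}}_{\tau,N_1}$ is singular, the $\gamma\to 0$ limit is no longer benign, and one only gets the bound $\le\eta$ after optimizing $\gamma,s$ — paralleling the $1+\eta$ value for MAML but strictly improving on it. Proving these limits rigorously requires uniform control of the deterministic equivalents as $\gamma\to 0$ near $\eta=1$, which is the delicate part; I would handle the boundary by a careful case split and by bounding $\tilde{C}_0^{\mathrm{ba}}$ from above using the explicit functional rather than trying to evaluate the exact infimum in the $\eta>1$ case.
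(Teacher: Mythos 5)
Your overall architecture (closed-form solutions from Proposition~\ref{prop:solutions}, Hanson--Wright plus matrix concentration for the quadratic form in $\btheta_\tau^{\mathrm{gt}}-\btheta_0^{*}$, then Marchenko--Pastur asymptotics for $\tilde{C}_0^{\mathrm{ba}}$) matches the paper's, but two substantive points go wrong. First, the bookkeeping: you fold the noise term $\Delta_T^{\mathrm{ba}}$ into $\varrho$ and claim the $\tfrac{d}{TN}w_{\mathrm{ba}}\tilde{C}_1^{\mathrm{ba}}$ term comes from Wishart fluctuations of $\hat{\mathbf{W}}_\tau^{\mathrm{ba}}$. In the paper this term is exactly the quadratic form in the label noise $\mathbf{e}_\tau$ (the $\Delta_T^{\mathrm{ba}}$ contribution, via $\mathrm{tr}(\mathbf{U}_{e1,\mathrm{ba}}\mathbf{U}_{e1,\mathrm{ba}}^{\top})$ and its analogue on the training split), while $\varrho$ keeps only higher-order corrections; fluctuations of the weight matrices only perturb the constant multiplying $R^2/T$. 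With your assignment the stated identity cannot be recovered, since a term of order $d/(TN)$ with an explicit constant is lost.

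Second, and more importantly, the heart of the theorem. You propose deterministic equivalents for products of resolvents of the dependent blocks $\hat{\mathbf{Q}}_{\tau,N_1},\hat{\mathbf{Q}}_{\tau,N_2},\hat{\mathbf{Q}}_{\tau,N}$ and conjecture that $\gamma\to 0$ is the optimizing regime for $\eta\le 1$. The paper avoids the product structure entirely by the identity $\hat{\mathbf{W}}_\tau^{\mathrm{ba}}=\tfrac{\gamma s}{1-s}\big[(\mathbf{I}_d+\gamma^{-1}\hat{\mathbf{Q}}_{\tau,N_1})^{-1}-(\mathbf{I}_d+(\gamma s)^{-1}\hat{\mathbf{Q}}_{\tau,N})^{-1}\big]$ (used in Lemma~\ref{lemma:dominate_constant_stats_err}), so only Stieltjes transforms of single sample covariances are needed (Theorem~\ref{thm:asymptotic_baMAML_constant}), and the upper bounds are obtained along the sequence $\gamma\to\infty$ with $s\gamma\to 0$, combined with the generic Jensen lower bound $\tilde{C}_0^{\mathrm{ba}}\ge 1$. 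Your heuristic for $\gamma\to 0$ ("no shrinkage bias, variance inflation disappears") does not apply: $\tilde{C}_0^{\mathrm{ba}}$ is scale-invariant and measures the spectral dispersion of $\hat{\mathbf{W}}_\tau^{\mathrm{ba}}$, and as $\gamma\to 0$ (with $N_1>d$) it tends to the dispersion ratio of $\hat{\mathbf{Q}}_{\tau,N_1}^{-1}-s\hat{\mathbf{Q}}_{\tau,N}^{-1}$, which is strictly larger than $1$ for any fixed $\eta>0$; so you have not exhibited a sequence of $(\gamma,s)$ attaining the claimed values, and the "delicate uniform control near $\eta=1$" you anticipate is in the wrong corner of hyperparameter space. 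In contrast, in the paper's regime $(\mathbf{I}_d+\gamma^{-1}\hat{\mathbf{Q}}_{\tau,N_1})^{-1}\to\mathbf{I}_d$ and $(\mathbf{I}_d+(\gamma s)^{-1}\hat{\mathbf{Q}}_{\tau,N})^{-1}$ tends to the projection onto the null space of $\hat{\mathbf{Q}}_{\tau,N}$, giving $1$ for $\eta\in(0,1]$ and the bound $\eta$ for $\eta>1$ essentially by counting rank.
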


Theorems~\ref{thm:concentration_MAML_statistical_error} and \ref{thm:concentration_BaMAML_statistical_error}
state that when $T,d$ are large and $T = \tilde{\Omega}(d)$,
the statistical errors of MAML and BaMAML
are dominated by $R^2/T$ times 
$\tilde{C}^{\mathrm{ma}}_0$ and $\tilde{C}^{\mathrm{ba}}_0$, respectively.
Therefore we can compare the statistical errors of MAML and BaMAML based on the optimal hyperparameters $\alpha, \gamma$ and split ratio $s$ below.
\begin{corollary}
\label{crlr:compare_maml_bamaml}
\emph{\textbf{(Dominating constants in statistical errors)}}
The dominating constants in the statistical errors of MAML and BaMAML satisfy
\begin{equation}\label{eq:constant_compare_maml_bamaml}
 \inf_{\text{\tiny $\makecell{\alpha >0 \\ s \in(0,1)}$}} \lim _{\text{\tiny $\makecell{d, N \rightarrow \infty\\ d / N \rightarrow \eta}$}} 
 \tilde{C}^{\mathrm{ma}}_0 >
 \inf_{\text{\tiny $\makecell{\gamma >0 \\ s \in(0,1)}$}} \lim _{\text{\tiny $\makecell{d, N \rightarrow \infty\\ d / N \rightarrow \eta}$}}
\tilde{C}^{\mathrm{ba}}_0.
\end{equation}
\end{corollary}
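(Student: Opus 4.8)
The plan is to derive Corollary~\ref{crlr:compare_maml_bamaml} directly from Theorems~\ref{thm:concentration_MAML_statistical_error} and~\ref{thm:concentration_BaMAML_statistical_error}, which already isolate the relevant infima of the dominating constants in the high-dimensional asymptotic regime. First I would invoke Theorem~\ref{thm:concentration_MAML_statistical_error} to record that, under Assumptions~\ref{asmp:bounded_data_matrix_eigenvalues} and~\ref{asmp:linear_centroid_model} with $T=\Omega(d)$ and $d/N\to\eta$, the MAML dominating constant satisfies
\begin{align*}
\inf_{\alpha>0,\, s\in(0,1)} \lim_{d,N\to\infty,\, d/N\to\eta} \tilde{C}_0^{\mathrm{ma}} = 1+\eta.
\end{align*}
Then I would invoke Theorem~\ref{thm:concentration_BaMAML_statistical_error} to record the matching quantity for BaMAML, namely that the same iterated infimum/limit of $\tilde{C}_0^{\mathrm{ba}}$ equals $1$ when $\eta\in(0,1]$ and is at most $\eta$ when $\eta\in(1,\infty)$.

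The remaining step is a two-case comparison. For $\eta\in(0,1]$, the MAML value $1+\eta$ strictly exceeds $1$, since $\eta>0$; this handles the small-$\eta$ (few-data-relative-to-dimension, but not overparameterized) regime. For $\eta\in(1,\infty)$, the MAML value $1+\eta$ strictly exceeds $\eta$, again because the gap is exactly $1>0$, and the BaMAML infimum is bounded above by $\eta$; this handles the overparameterized regime. In both cases the strict inequality in~\eqref{eq:constant_compare_maml_bamaml} follows, completing the proof.

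Since the substantive work has already been carried out in establishing Theorems~\ref{thm:concentration_MAML_statistical_error} and~\ref{thm:concentration_BaMAML_statistical_error} (in particular the exact evaluation $1+\eta$ for MAML and the piecewise upper bound for BaMAML, including that both infima are taken over the same set of split ratios $s$ and the limits are taken along the same scaling $d/N\to\eta$), there is no real obstacle here: the corollary is an immediate numerical consequence, with the only point worth flagging being that the comparison must be carried out separately on $\eta\le 1$ and $\eta>1$ because the BaMAML bound changes form across $\eta=1$, while the constant additive gap of $1$ makes the inequality strict and uniform in $\eta$ in either regime.
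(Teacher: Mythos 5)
Your proposal is correct and follows essentially the same route as the paper: the paper's own argument (appendix, comparison of dominating constants) likewise reads off $\inf\lim \tilde{C}_0^{\mathrm{ma}} = 1+\eta$ and the piecewise value/bound for $\tilde{C}_0^{\mathrm{ba}}$ from the asymptotic-constant theorems and concludes by the same two-case comparison ($1+\eta>1$ for $\eta\in(0,1]$ and $1+\eta>\eta$ for $\eta>1$). Nothing is missing.
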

Corollary~\ref{crlr:compare_maml_bamaml} justifies the provable benefit of BaMAML in terms of strictly smaller statistical error, contributing to smaller meta-test risk. This is achieved in the high dimension limit regime as $d,N \to \infty$ and $d/N \to \eta$, which is common in the overparameterized case.
 



\section{EXPERIMENTS} 
\label{sec:experiments}


In this section, we present empirical experiments on synthetic and real datasets to verify our theorems.
For synthetic datasets, we perform linear and sinusoidal regression.
For real datasets, we use miniImageNet.
By default, the experiments are repeated 5 times with the average, best and worst performance displayed.
In our experiments, we also use ERM as a baseline for comparison.
In the meta learning setting, 
ERM minimizes the average loss over all data,
its meta-test risk and optimal solutions can be obtained by taking $\alpha = 0, N_1 = 0, N_2=N$ in that of MAML~\citep{gao2020_model_opt_tradeoff_ml}. 

All our experiments are conducted on a workstation with
an Intel i9-9960x CPU with 128GB memory and four
NVIDIA RTX 2080Ti GPUs each with 11GB memory. 
Our experiments for linear synthetic data  are conducted on MATLAB R2021a with CPU only.
And our experiments for sinewave regression and real classification  are conducted on Python 3.7, PyTorch 1.9.1 with one GPU. 
More results can be found in the supplementary material.


\subsection{Linear Regression} 
\label{subs:linear_regression}

\noindent\textbf{Experiment settings.} For linear regression, we generate synthetic data according to the following task parameter
$
\label{eq:linear_synthetic_data_gen}
\mathbf{V} \sim U(\mathbb{SO}(d)),
{\btheta}_{\tau} \sim U([0,2]^{d})$,
${\lambda}_{\tau} \sim U([0.1,2]^{d}), 
\mathbf{Q}_{\tau} =  \mathbf{V}\mathrm{diag}({\lambda}_{\tau})\mathbf{V}^{\top},
\mathbf{x}_{\tau} \sim \mathcal{N}(0, \mathbf{Q}_{\tau}),
$
and data model in~\eqref{eq:linear_data_generate},
where $\mathbb{SO}(d)$ is the special orthogonal group in dimension $d$.


\begin{figure}[t]
  \centering
  \begin{subfigure}[t]{0.25\textwidth}
  \centering
  \includegraphics[width=.9\linewidth]{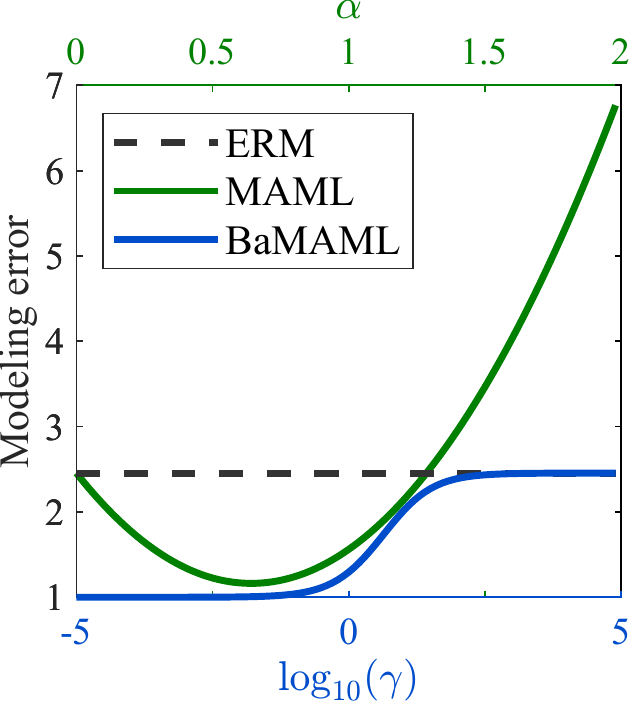}
  \caption{$\mathcal{R}(\btheta_0^{\mathcal{A}})$ v.s. $\alpha,\gamma$}
  \label{fig:modeling_err}
  \end{subfigure}%
  \begin{subfigure}[t]{0.25\textwidth}
  \centering
  \includegraphics[width=.9\linewidth]{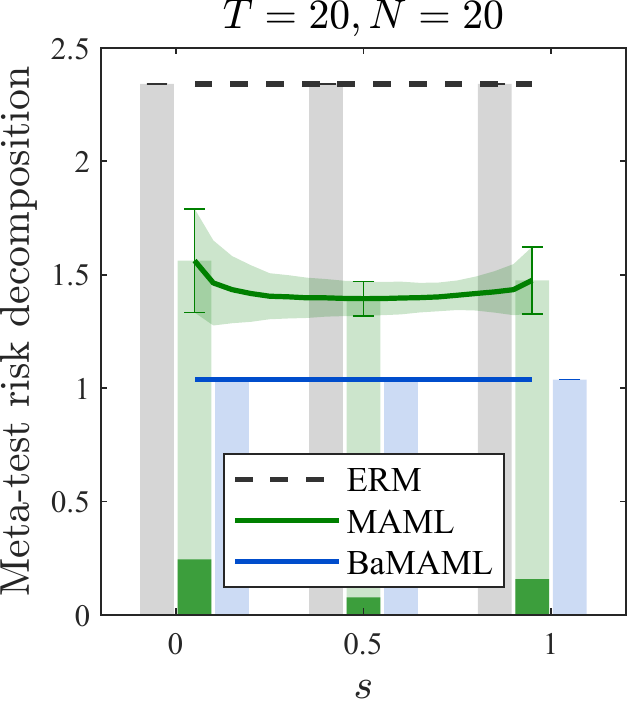}
  \caption{$\mathcal{R}(\hat{\btheta}_0^{\mathcal{A}})$ v.s. $s$}
  \label{fig:meta_test_risks_vs_s}
  \end{subfigure}
  \caption{Optimal population risks $\mathcal{R}(\btheta_0^{\mathcal{A}})$ v.s.  $\alpha,\gamma$,
  and meta-test risks $\mathcal{R}(\hat{\btheta}_0^{\mathcal{A}})$ v.s. train validation split ratio $s$ for \textcolor{ERM_cl}{ERM}, \textcolor{MAML_cl}{MAML} and \textcolor{BAMAML_cl}{BaMAML}. 
  In Figure~\ref{fig:meta_test_risks_vs_s}, the lighter color bars represent the meta-test risks and the darker color bars represent the statistical errors.
  }
  \label{fig:model_err_total_risk}
    \vspace{-0.4cm}
\end{figure}

\begin{figure}[t]
  \centering
  \includegraphics[width=0.99\linewidth]{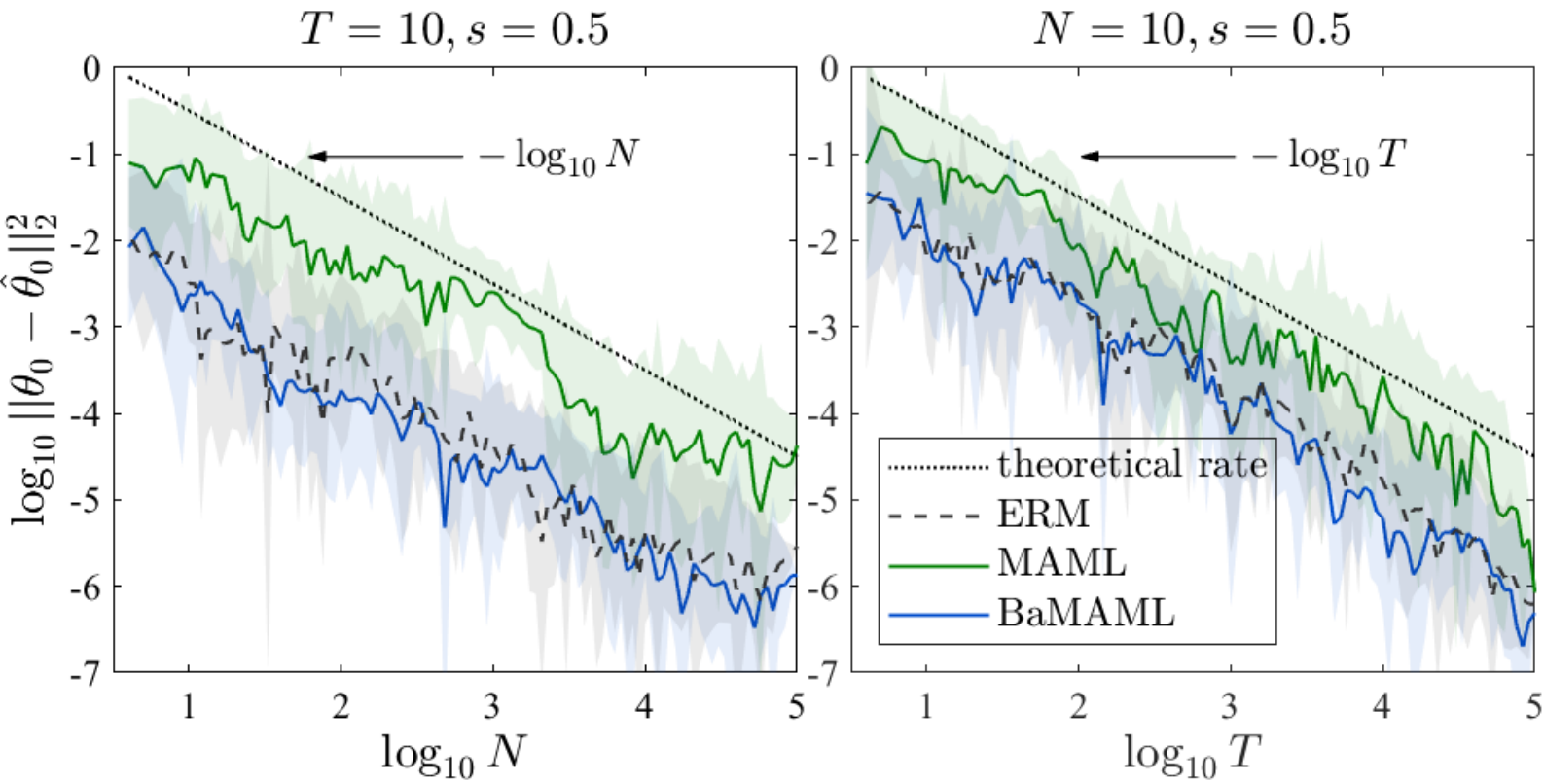}
  \caption{Statistical error v.s. $N$ and $T$ for \textcolor{ERM_cl}{ERM}, \textcolor{MAML_cl}{MAML} and \textcolor{BAMAML_cl}{BaMAML}. The  dotted line serves as a reference of the theoretical decay rate.
  } 
  \label{fig:stats_err_vs_NT}
  \vspace{-0.4cm}
\end{figure}

\begin{figure*}[t]
  \centering
  \begin{subfigure}[t]{0.19\textwidth}
  \centering
  \includegraphics[width=.95\linewidth]{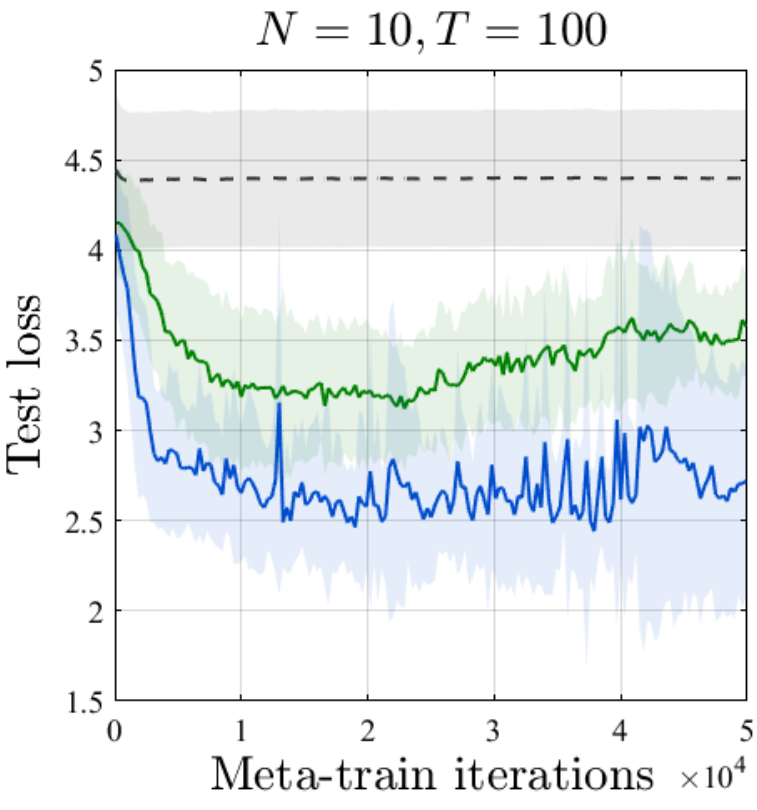}
  \caption{}
  \label{sfig:test_loss_T100_N10}
  \end{subfigure}%
  \begin{subfigure}[t]{0.19\textwidth}
  \centering
  \includegraphics[width=.95\linewidth]{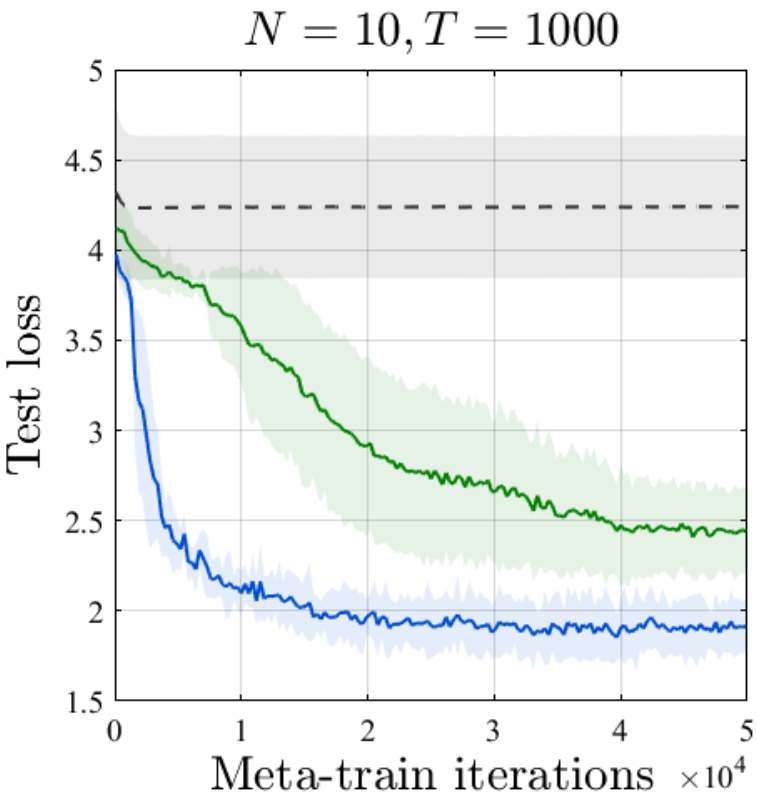}
  \caption{}
  \label{sfig:test_loss_T1000_N10}
  \end{subfigure}
  \begin{subfigure}[t]{0.19\textwidth}
  \centering
  \includegraphics[width=.95\linewidth]{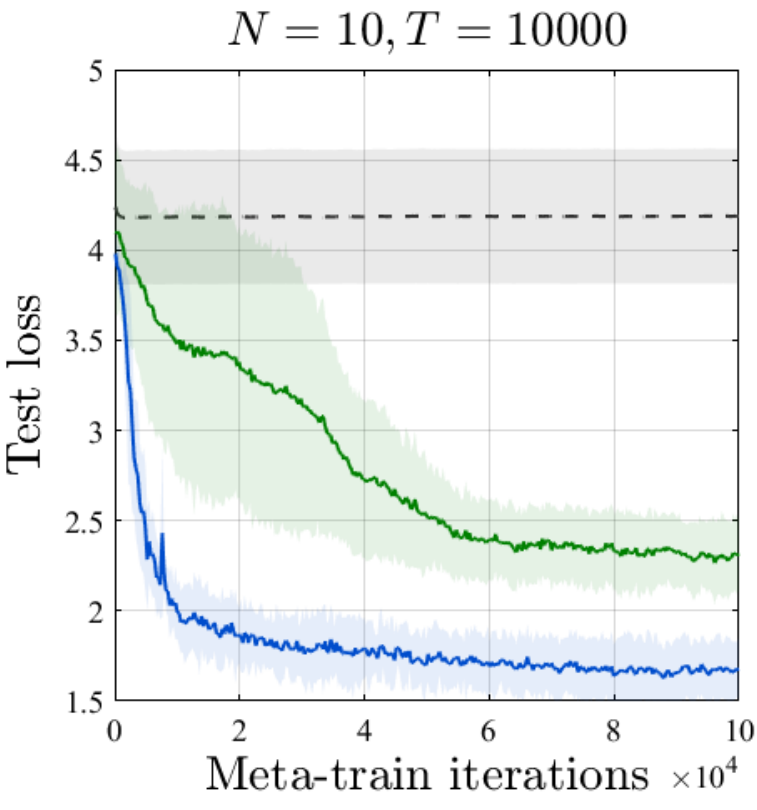}
  \caption{}
  \label{sfig:test_loss_T10000_N10}
  \end{subfigure}
  \begin{subfigure}[t]{0.19\textwidth}
  \centering
  \includegraphics[width=.95\linewidth]{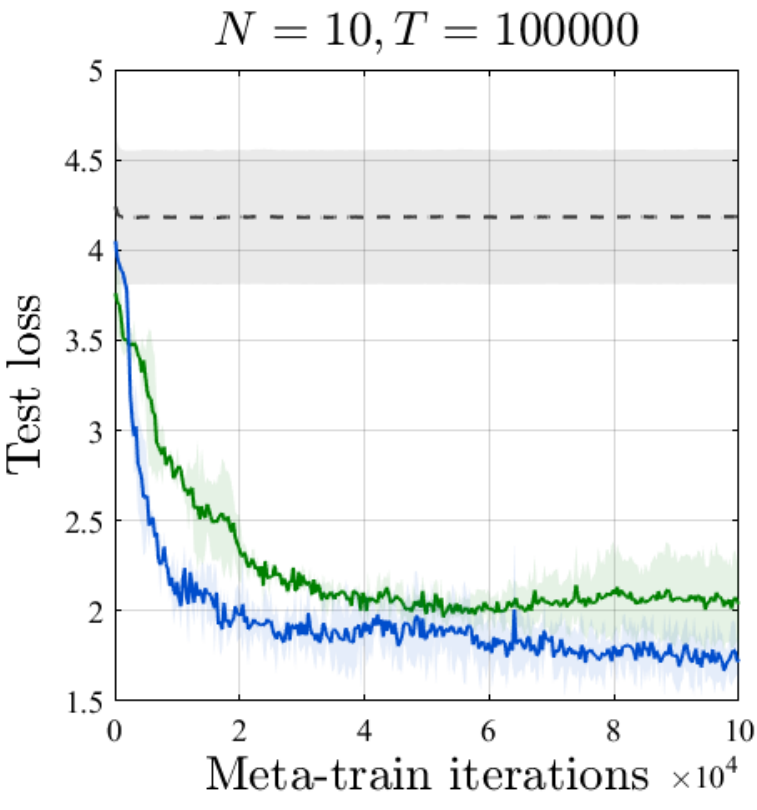}
  \caption{}
  \label{sfig:test_loss_T100000_N10}
  \end{subfigure}
  \begin{subfigure}[t]{0.19\textwidth}
  \centering
  \includegraphics[width=.95\linewidth]{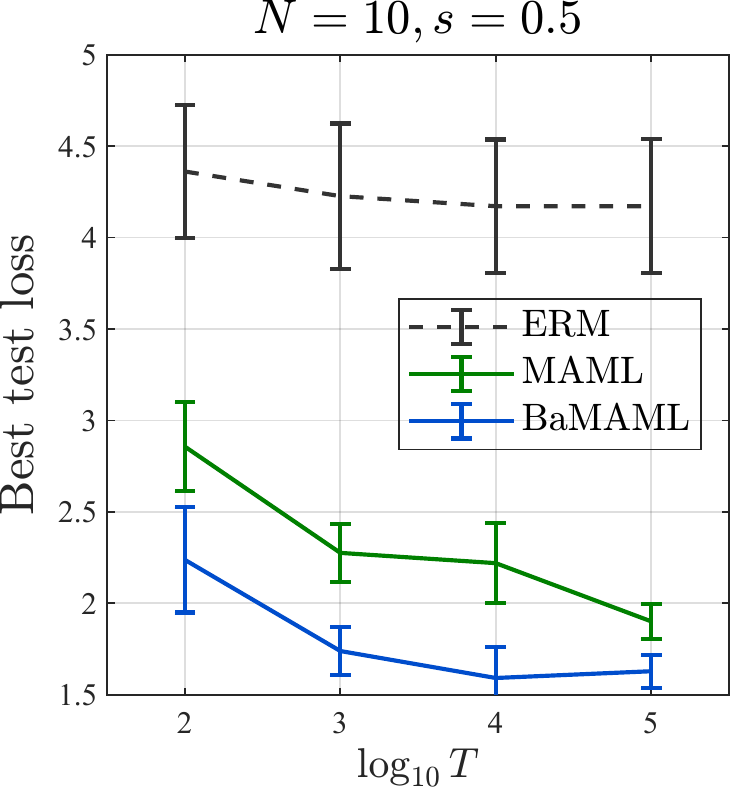}
  \caption{}
  \label{sfig:best_test_loss_T_N10}
  \end{subfigure}
  \begin{subfigure}[t]{0.19\textwidth}
    \centering
    \includegraphics[width=.95\linewidth]{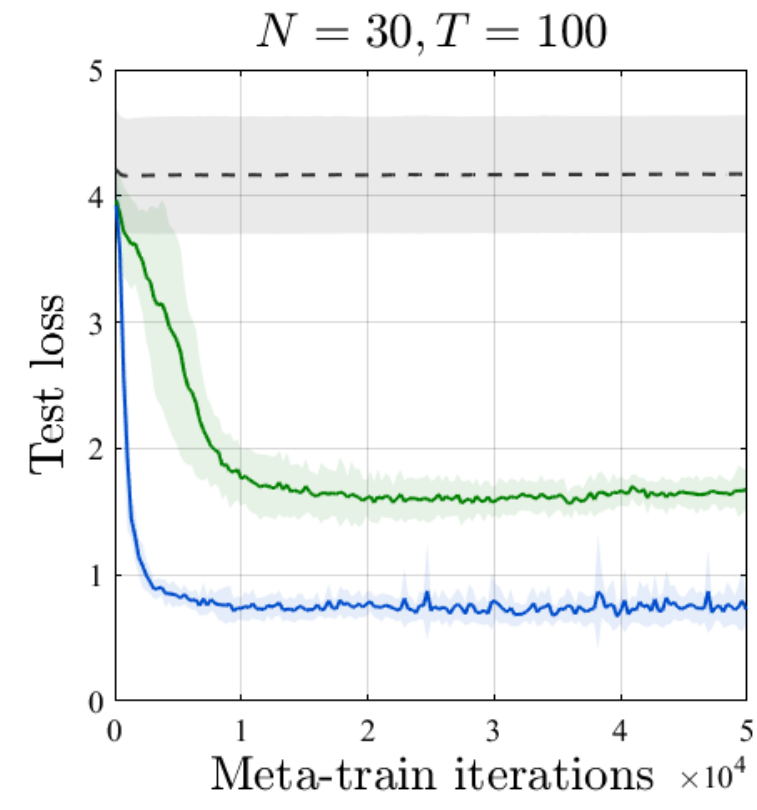}
    \caption{}
    \label{sfig:test_loss_T100_N30}
    \end{subfigure}
  \begin{subfigure}[t]{0.19\textwidth}
  \centering
  \includegraphics[width=.95\linewidth]{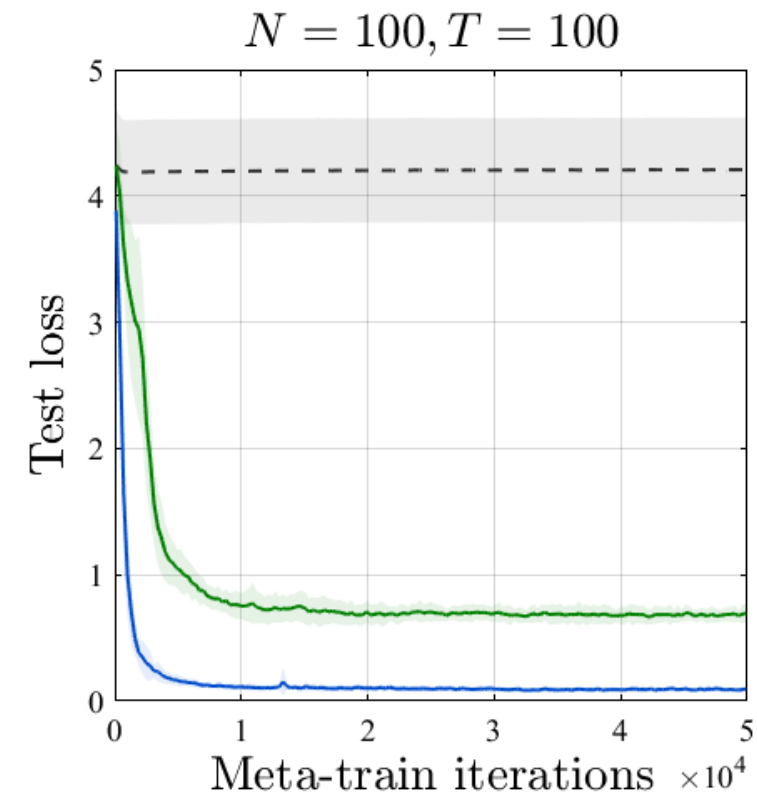}
  \caption{}
  \label{sfig:test_loss_T100_N100}
  \end{subfigure}
  \begin{subfigure}[t]{0.19\textwidth}
  \centering
  \includegraphics[width=.95\linewidth]{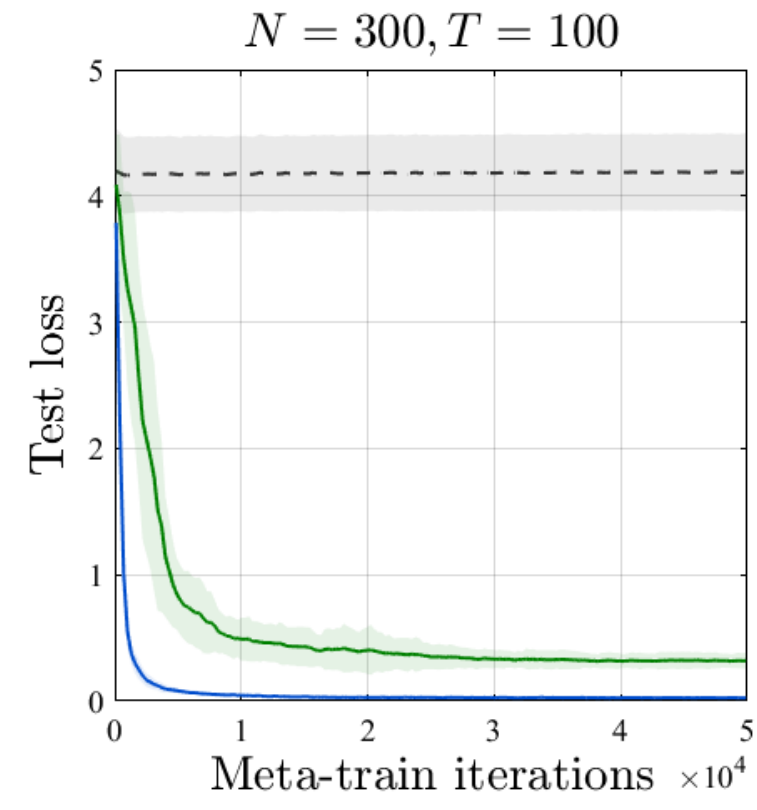}
  \caption{}
  \label{sfig:test_loss_T100_N300}
  \end{subfigure}
  \begin{subfigure}[t]{0.19\textwidth}
  \centering
  \includegraphics[width=.95\linewidth]{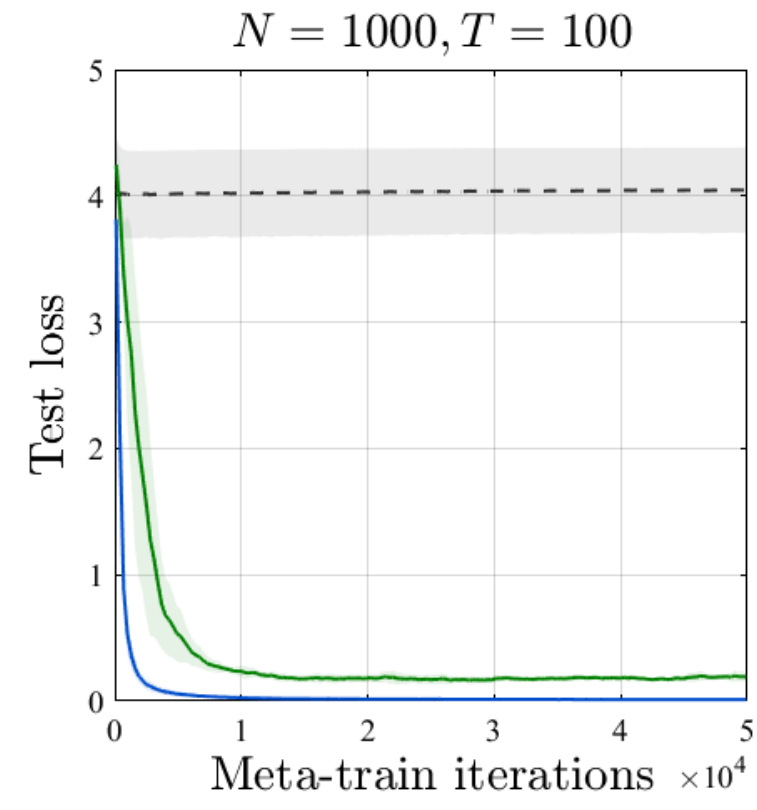}
  \caption{}
  \label{sfig:test_loss_T100_N1000}
  \end{subfigure}
  \begin{subfigure}[t]{0.19\textwidth}
  \centering
  \includegraphics[width=.95\linewidth]{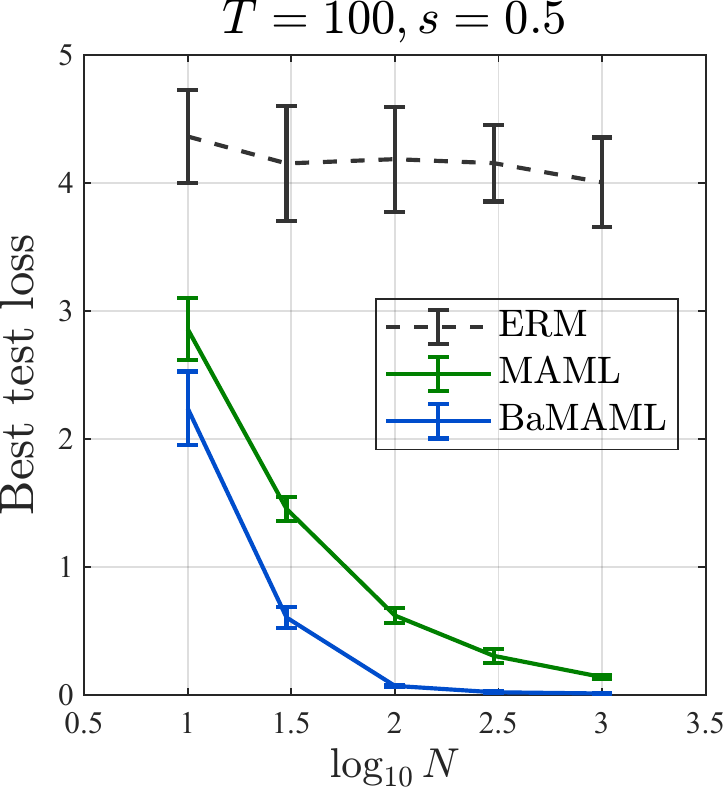}
  \caption{}
  \label{sfig:best_test_loss_T100_N}
  \end{subfigure}
  \caption{Testing mean squared error v.s. meta train iterations for \textcolor{ERM_cl}{ERM}, \textcolor{MAML_cl}{MAML}, 
  and \textcolor{BAMAML_cl}{BaMAML} in sinusoidal regression experiments with varied number of data ($N$) per task and the number of tasks ($T$) used for meta-training.
  }
  \label{fig:sin_regress}
\end{figure*}

\noindent\textbf{Results.} We present experiments for $d=1$. 
To compare the meta-test performance of MAML and BaMAML, we present contour plots of probability that BaMAML has lower loss than MAML in Figure~\ref{fig:linear_regress_contour},
where darker blue represents higher probability that BaMAML is better than MAML, and darker green vice versa.
The results indicate that with sufficient adaptation tasks or data,  and proper choice of $\gamma$, BaMAML performs better than MAML in terms of test error.

In Figure~\ref{fig:modeling_err}, we report the meta-test risks~\eqref{eq:R} for MAML and BaMAML under different hyperparameters 
$\alpha, \gamma$.
Figure~\ref{fig:modeling_err} shows that with proper choice of hyperparameter $\gamma$, BaMAML can achieve lower meta-test risk than MAML, verifying Theorem~1.
Also, when $\gamma \to 0$, the meta-test risk of BaMAML approaches 1, and when $\gamma \to \infty$, the meta-test risk of BaMAML approaches that of ERM.
This further demonstrates the trade-off between fast adaptation and optimal population risk (meta-test risk) that depends on $\gamma$.
On the other hand, MAML is relatively more sensitive to the step size. Too large step size $\alpha$ can lead to very large meta-test risk of MAML, going beyond that of ERM.
Besides, we can see from the empirical optimal solution that, 
contrary to non-Bayesian methods as analyzed in~\cite{bai2021_trntrn_trnval},
BaMAML is robust against different training and validation  data split. This is demonstrated in Figure~\ref{fig:meta_test_risks_vs_s}, where the meta-test risk of BaMAML remains unchanged when $s$ varies while that of MAML is more sensitive to the change in $s$ due to its statistical error, verifying Theorems~\ref{thm:bound_MAML_stats_err}-\ref{thm:concentration_BaMAML_statistical_error}.
Figure~\ref{fig:meta_test_risks_vs_s} also demonstrates the decomposition of meta-test risk into optimal population risk and statistical error, where the meta-test risks are mainly dominated by the optimal population risks in this case.


Next we fix $\alpha = 0.7, \gamma = 10^{-1}$, which are approximately optimal for each method.
We vary number of data samples ($N$) and 
number of tasks ($T$), and compare the statistical error of ERM, MAML, and BaMAML in Figure~\ref{fig:stats_err_vs_NT}.
The statistical errors of all methods decrease as the number of data samples increases.
When the number of data samples is small, MAML has the largest statistical error, followed by BaMAML and ERM.
Similar trends exist with increasing number of tasks.
Figure~\ref{fig:stats_err_vs_NT} have also shown the dotted black line as a reference, indicating the theoretical decay rate of the statistical errors. 
Since MAML and BaMAML has the same slope as the reference line,
it verifies the theoretical decay rate in Theorems~\ref{thm:bound_MAML_stats_err}-\ref{thm:concentration_BaMAML_statistical_error}.





\subsection{Sinusoidal Regression} 
\label{subs:sinusoidal_regression}
\noindent\textbf{Experiment settings.} For sinusoidal regression,
following~\cite{yoon2018_BMAML}, 
the $N$-shot dataset for each task is obtained from $x \sim U([-5.0,5.0])$ and then by computing its corresponding $y$ from the sinusoidal function $y=A \sin (w x+b)+\epsilon$, with task-dependent parameters amplitude $A$, frequency $w$, and phase $b$, and observation noise $\epsilon$. For each task, the parameters are sampled from $A \sim U([0.1,5.0]), b \sim U([0.0,2 \pi]), w \sim U([0.5,2.0]), \epsilon \sim \mathcal{N}\left(0,(0.01 A)^{2}\right)$.
For all experiments under this setting, we used a neural network with 3 layers, each of which consists of 40 hidden units.

\noindent\textbf{Results.} 
Figure~\ref{fig:sin_regress} shows the testing error v.s. the meta-train iterations for the compared methods,
where we can see that BaMAML converges to a point with lowest meta-test error.
For ERM and MAML, when $T$ is small (e.g. $T=100$), the meta test error decreases as the number of meta iterations increases in the beginning, 
but increases later, showing a tendency to overfit.
Besides,
the meta-test error decreases with increasing number of tasks or number of per-task data, with a similar trend as the linear regression even in the nonlinear sinewave regression.
As the number of tasks or number of per-task data increases, the performance gap between MAML and BaMAML reduces, demonstrating that BaMAML has more significant performance gain in limited data settings.

\section{CONCLUSIONS} 
\label{sec:conclusion}

In this paper, we study  what makes BaMAML provably better than MAML under the meta linear regression setting.
The meta-test risk can be decomposed into the optimal population risk and statistical error.
Our analysis shows that, with proper choice of hyperparameters, BaMAML has smaller optimal population risk than MAML, demonstrating better adaptation ability to new data.
And for statistical errors, MAML and BaMAML have the same dependence rate on the number of tasks and the number of data per task for training, while BaMAML has lower upper bound of the corresponding coefficients, thus lower upper bound of statistical error.
And in the high dimensional asymptotic regime, BaMAML has strictly smaller statistical error than MAML. 
The experiments on synthetic and real datasets corroborate our theoretical findings.
Building upon the current work, our future work includes analyzing the performance in nonlinear meta learning algorithms such as BaMAML with overparameterized neural networks.

\section*{Acknowledgments}
This work was supported by 
the Rensselaer-IBM AI Research Collaboration (\href{http://airc.rpi.edu}{http://airc.rpi.edu}), part of the IBM AI Horizons Network (\href{http://ibm.biz/AIHorizons}{http://ibm.biz/AIHorizons}), and the National Science Foundation under the project NSF 2134168.

\newpage
\bibliographystyle{plainnat}
\bibliography{myabrv,bmaml,maml_theory,maml}

\begin{thebibliography}{59}
\providecommand{\natexlab}[1]{#1}
\providecommand{\url}[1]{\texttt{#1}}
\expandafter\ifx\csname urlstyle\endcsname\relax
  \providecommand{\doi}[1]{doi: #1}\else
  \providecommand{\doi}{doi: \begingroup \urlstyle{rm}\Url}\fi

\bibitem[Achille et~al.(2019)Achille, Lam, Tewari, Ravichandran, Maji, Fowlkes,
  Soatto, and Perona]{achille2019task2vec}
Alessandro Achille, Michael Lam, Rahul Tewari, Avinash Ravichandran, Subhransu
  Maji, Charless~C Fowlkes, Stefano Soatto, and Pietro Perona.
\newblock Task2vec: Task embedding for meta-learning.
\newblock In \emph{Proc. International Conference on Computer Vision}, pages
  6430--6439, Seoul, Korea, 2019.

\bibitem[Amit and Meir(2018)]{amit2018_pac_bayes_ml_prior}
Ron Amit and Ron Meir.
\newblock Meta-learning by adjusting priors based on extended pac-bayes theory.
\newblock In \emph{Proc. International Conference on Machine Learning}, pages
  205--214, Stockholm, Sweden, 2018.

\bibitem[Andrychowicz et~al.(2016)Andrychowicz, Denil, Gomez, Hoffman, Pfau,
  Schaul, Shillingford, and De~Freitas]{andrychowicz2016_l2l}
Marcin Andrychowicz, Misha Denil, Sergio Gomez, Matthew~W Hoffman, David Pfau,
  Tom Schaul, Brendan Shillingford, and Nando De~Freitas.
\newblock Learning to learn by gradient descent by gradient descent.
\newblock In \emph{Proc. Advances in Neural Information Processing Systems},
  pages 3981--3989, Barcelona, Spain, 2016.

\bibitem[Bai et~al.(2021)Bai, Chen, Zhou, Zhao, Lee, Kakade, Wang, and
  Xiong]{bai2021_trntrn_trnval}
Yu~Bai, Minshuo Chen, Pan Zhou, Tuo Zhao, Jason Lee, Sham Kakade, Huan Wang,
  and Caiming Xiong.
\newblock How important is the train-validation split in meta-learning?
\newblock In \emph{Proc. International Conference on Machine Learning}, pages
  543--553, Virtual, 2021.

\bibitem[Balcan et~al.(2019)Balcan, Khodak, and
  Talwalkar]{balcan2019_gbml_online_convex}
Maria-Florina Balcan, Mikhail Khodak, and Ameet Talwalkar.
\newblock Provable guarantees for gradient-based meta-learning.
\newblock In \emph{Proc. International Conference on Machine Learning}, pages
  424--433, Long Beach, CA, 2019.

\bibitem[Bengio et~al.(1991)Bengio, Bengio, and Cloutier]{bengio_learnsynaptic}
Yoshua Bengio, Samy Bengio, and Jocelyn Cloutier.
\newblock Learning a synaptic learning rule.
\newblock In \emph{International Joint Conference on Neural Networks},
  volume~2, page 969, Seattle, WA, 1991.

\bibitem[Chen et~al.(2021{\natexlab{a}})Chen, Shui, and
  Marchand]{chen2021generalization}
Qi~Chen, Changjian Shui, and Mario Marchand.
\newblock Generalization bounds for meta-learning: An information-theoretic
  analysis.
\newblock In \emph{Proc. Advances in Neural Information Processing Systems},
  virtual, 2021{\natexlab{a}}.

\bibitem[Chen et~al.(2021{\natexlab{b}})Chen, Sun, and Yin]{chen2021closing}
Tianyi Chen, Yuejiao Sun, and Wotao Yin.
\newblock Closing the gap: Tighter analysis of alternating stochastic gradient
  methods for bilevel problems.
\newblock In A.~Beygelzimer, Y.~Dauphin, P.~Liang, and J.~Wortman Vaughan,
  editors, \emph{Proc. Advances in Neural Information Processing Systems},
  virtual, 2021{\natexlab{b}}.

\bibitem[Chen et~al.(2021{\natexlab{c}})Chen, Sun, and
  Yin]{chen2021composition}
Tianyi Chen, Yuejiao Sun, and Wotao Yin.
\newblock Solving stochastic compositional optimization is nearly as easy as
  solving stochastic optimization.
\newblock \emph{IEEE Transactions on Signal Processing}, 69:\penalty0
  4937--4948, 2021{\natexlab{c}}.

\bibitem[Chen et~al.(2017)Chen, Hoffman, Colmenarejo, Denil, Lillicrap,
  Botvinick, and Freitas]{chen2017_l2l}
Yutian Chen, Matthew~W Hoffman, Sergio~G{\'o}mez Colmenarejo, Misha Denil,
  Timothy~P Lillicrap, Matt Botvinick, and Nando Freitas.
\newblock Learning to learn without gradient descent by gradient descent.
\newblock In \emph{Proc. International Conference on Machine Learning}, pages
  748--756, Sydney, Australia, 2017.

\bibitem[{Collins} et~al.(2020){Collins}, {Mokhtari}, and
  {Shakkottai}]{collins2020_task_landscape_erm_maml}
Liam {Collins}, Aryan {Mokhtari}, and Sanjay {Shakkottai}.
\newblock How does the task landscape affect maml performance?
\newblock \emph{arXiv preprint arXiv:2010.14672}, October 2020.

\bibitem[Denevi et~al.(2018)Denevi, Ciliberto, Stamos, and
  Pontil]{denevi2018_l2l_linear_centroid}
Giulia Denevi, Carlo Ciliberto, Dimitris Stamos, and Massimiliano Pontil.
\newblock Learning to learn around a common mean.
\newblock In \emph{Proc. Advances in Neural Information Processing Systems},
  volume~31, Montreal, Canada, 2018.

\bibitem[Ding et~al.(2021)Ding, Chen, Levinboim, Goodman, and
  Soricut]{ding2021bridging}
Nan Ding, Xi~Chen, Tomer Levinboim, Sebastian Goodman, and Radu Soricut.
\newblock Bridging the gap between practice and pac-bayes theory in few-shot
  meta-learning.
\newblock In \emph{Proc. Advances in Neural Information Processing Systems},
  Virtual, 2021.

\bibitem[Fallah et~al.(2020)Fallah, Mokhtari, and
  Ozdaglar]{fallah2020_convergence_maml}
Alireza Fallah, Aryan Mokhtari, and Asuman Ozdaglar.
\newblock On the convergence theory of gradient-based model-agnostic
  meta-learning algorithms.
\newblock In \emph{Proc. International Conference on Artificial Intelligence
  and Statistics}, pages 1082--1092, Virtual, 2020.

\bibitem[Fallah et~al.(2021)Fallah, Mokhtari, and
  Ozdaglar]{fallah2021_generalization_unseen}
Alireza Fallah, Aryan Mokhtari, and Asuman Ozdaglar.
\newblock Generalization of model-agnostic meta-learning algorithms: Recurring
  and unseen tasks.
\newblock In \emph{Proc. Advances in Neural Information Processing Systems},
  Virtual, 2021.

\bibitem[Farid and Majumdar(2021)]{farid2021generalization}
Alec Farid and Anirudha Majumdar.
\newblock Generalization bounds for meta-learning via {PAC}-bayes and uniform
  stability.
\newblock In \emph{Proc. Advances in Neural Information Processing Systems},
  Virtual, 2021.

\bibitem[Finn et~al.(2017)Finn, Abbeel, and Levine]{Finn2017_maml}
Chelsea Finn, Pieter Abbeel, and Sergey Levine.
\newblock Model-agnostic meta-learning for fast adaptation of deep networks.
\newblock In \emph{Proc. International Conference on Machine Learning}, page
  1126–1135, Sydney, Australia, 2017.

\bibitem[Finn et~al.(2018)Finn, Xu, and Levine]{finn2018_PLATIPUS}
Chelsea Finn, Kelvin Xu, and Sergey Levine.
\newblock Probabilistic model-agnostic meta-learning.
\newblock In \emph{Proc. Advances in Neural Information Processing Systems},
  Montreal, Canada, 2018.

\bibitem[Finn et~al.(2019)Finn, Rajeswaran, Kakade, and
  Levine]{finn2019_onlineML}
Chelsea Finn, Aravind Rajeswaran, Sham Kakade, and Sergey Levine.
\newblock Online meta-learning.
\newblock In \emph{Proc. International Conference on Machine Learning}, pages
  1920--1930, Long Beach, CA, 2019.

\bibitem[Franceschi et~al.(2018)Franceschi, Frasconi, Salzo, Grazzi, and
  Pontil]{franceschi2018_bilevel_maml_approx}
Luca Franceschi, Paolo Frasconi, Saverio Salzo, Riccardo Grazzi, and
  Massimiliano Pontil.
\newblock Bilevel programming for hyperparameter optimization and
  meta-learning.
\newblock In \emph{Proc. International Conference on Machine Learning}, pages
  1568--1577, Stockholm, Sweden, 2018.

\bibitem[Gao and Sener(2020)]{gao2020_model_opt_tradeoff_ml}
Katelyn Gao and Ozan Sener.
\newblock Modeling and optimization trade-off in meta-learning.
\newblock In \emph{Proc. Advances in Neural Information Processing Systems},
  volume~33, Virtual, 2020.

\bibitem[Gordon et~al.(2018)Gordon, Bronskill, Bauer, Nowozin, and
  Turner]{gordon2018_VERSA}
Jonathan Gordon, John Bronskill, Matthias Bauer, Sebastian Nowozin, and Richard
  Turner.
\newblock Meta-learning probabilistic inference for prediction.
\newblock In \emph{Proc. International Conference on Learning Representations},
  Vancouver, Canada, 2018.

\bibitem[Grant et~al.(2018)Grant, Finn, Levine, Darrell, and
  Griffiths]{grant2018recasting}
Erin Grant, Chelsea Finn, Sergey Levine, Trevor Darrell, and Thomas Griffiths.
\newblock Recasting gradient-based meta-learning as hierarchical bayes.
\newblock In \emph{Proc. International Conference on Learning Representations},
  Vancouver, Canada, 2018.

\bibitem[Harrison et~al.(2020)Harrison, Sharma, Finn, and
  Pavone]{harrison2020continuous}
James Harrison, Apoorva Sharma, Chelsea Finn, and Marco Pavone.
\newblock Continuous meta-learning without tasks.
\newblock In \emph{Proc. Advances in Neural Information Processing Systems},
  volume~33, virtual, 2020.

\bibitem[Hochreiter et~al.(2001)Hochreiter, Younger, and
  Conwell]{hochreiter2001_l2l}
Sepp Hochreiter, A~Steven Younger, and Peter~R Conwell.
\newblock Learning to learn using gradient descent.
\newblock In \emph{Proc. International Conference on Artificial Neural
  Networks}, pages 87--94, Vienna, Austria, 2001.

\bibitem[Hospedales et~al.(2020)Hospedales, Antoniou, Micaelli, and
  Storkey]{hospedalesmeta}
Timothy~M Hospedales, Antreas Antoniou, Paul Micaelli, and Amos~J Storkey.
\newblock Meta-learning in neural networks: A survey.
\newblock \emph{IEEE transactions on pattern analysis and machine
  intelligence}, 2020.

\bibitem[Hsu et~al.(2018)Hsu, Levine, and Finn]{hsu2018unsupervised}
Kyle Hsu, Sergey Levine, and Chelsea Finn.
\newblock Unsupervised learning via meta-learning.
\newblock In \emph{Proc. International Conference on Learning Representations},
  Vancouver, Canada, 2018.

\bibitem[Javed and White(2019)]{javed2019meta}
Khurram Javed and Martha White.
\newblock Meta-learning representations for continual learning.
\newblock In \emph{Proc. Advances in Neural Information Processing Systems},
  volume~32, pages 1820--1830, Vancouver, Canada, 2019.

\bibitem[Jose and Simeone(2021)]{jose2021information}
Sharu~Theresa Jose and Osvaldo Simeone.
\newblock Information-theoretic generalization bounds for meta-learning and
  applications.
\newblock \emph{Entropy}, 23\penalty0 (1):\penalty0 126, 2021.

\bibitem[Jose et~al.(2021)Jose, Simeone, and Durisi]{jose2021transfer}
Sharu~Theresa Jose, Osvaldo Simeone, and Giuseppe Durisi.
\newblock Transfer meta-learning: Information-theoretic bounds and information
  meta-risk minimization.
\newblock \emph{IEEE Transactions on Information Theory}, 2021.

\bibitem[Li et~al.(2018)Li, Yang, Song, and Hospedales]{li2018learning}
Da~Li, Yongxin Yang, Yi-Zhe Song, and Timothy~M Hospedales.
\newblock Learning to generalize: Meta-learning for domain generalization.
\newblock In \emph{Proc. Association for the Advancement of Artificial
  Intelligence}, New Orleans, LA, 2018.

\bibitem[Liu et~al.(2019)Liu, Zhou, Long, Jiang, and Zhang]{liu2019learning}
Lu~Liu, Tianyi Zhou, Guodong Long, Jing Jiang, and Chengqi Zhang.
\newblock Learning to propagate for graph meta-learning.
\newblock \emph{Proc. Advances in Neural Information Processing Systems},
  32:\penalty0 1039--1050, 2019.

\bibitem[Liu et~al.(2021)Liu, Liu, Zeng, and Zhang]{liu2021towards}
Risheng Liu, Yaohua Liu, Shangzhi Zeng, and Jin Zhang.
\newblock Towards gradient-based bilevel optimization with non-convex followers
  and beyond.
\newblock In A.~Beygelzimer, Y.~Dauphin, P.~Liang, and J.~Wortman Vaughan,
  editors, \emph{Advances in Neural Information Processing Systems}, 2021.

\bibitem[Madotto et~al.(2019)Madotto, Lin, Wu, and
  Fung]{madotto2019personalizing}
Andrea Madotto, Zhaojiang Lin, Chien-Sheng Wu, and Pascale Fung.
\newblock Personalizing dialogue agents via meta-learning.
\newblock In \emph{Proc. Annual Meeting of the Association for Computational
  Linguistics}, pages 5454--5459, Florence, Italy, 2019.

\bibitem[Nguyen et~al.(2020)Nguyen, Do, and Carneiro]{nguyen2020_VAMPIRE}
Cuong Nguyen, Thanh-Toan Do, and Gustavo Carneiro.
\newblock Uncertainty in model-agnostic meta-learning using variational
  inference.
\newblock In \emph{Proc. Winter Conference on Applications of Computer Vision},
  pages 3090--3100, Snowmass Village, CO, 2020.

\bibitem[Nichol et~al.(2018)Nichol, Achiam, and Schulman]{nichol2018first}
Alex Nichol, Joshua Achiam, and John Schulman.
\newblock On first-order meta-learning algorithms.
\newblock \emph{arXiv preprint arXiv:1803.02999}, 2018.

\bibitem[Obamuyide and Vlachos(2019)]{obamuyide2019model}
Abiola Obamuyide and Andreas Vlachos.
\newblock Model-agnostic meta-learning for relation classification with limited
  supervision.
\newblock In \emph{Proc. Annual Meeting of the Association for Computational
  Linguistics}, pages 5873--5879, Florence, Italy, 2019.

\bibitem[Rajeswaran et~al.(2019)Rajeswaran, Finn, Kakade, and
  Levine]{rajeswaran2019_imaml}
Aravind Rajeswaran, Chelsea Finn, Sham~M Kakade, and Sergey Levine.
\newblock Meta-learning with implicit gradients.
\newblock In \emph{Proc. Advances in Neural Information Processing Systems},
  pages 113--124, Vancouver, Canada, 2019.

\bibitem[Ravi and Beatson(2019)]{ravi2018_ABML}
Sachin Ravi and Alex Beatson.
\newblock Amortized bayesian meta-learning.
\newblock In \emph{Proc. International Conference on Learning Representations},
  New Orleans, LA, 2019.

\bibitem[Rezazadeh et~al.(2021)Rezazadeh, Jose, Durisi, and
  Simeone]{rezazadeh2021conditional}
Arezou Rezazadeh, Sharu~Theresa Jose, Giuseppe Durisi, and Osvaldo Simeone.
\newblock Conditional mutual information-based generalization bound for meta
  learning.
\newblock In \emph{2021 IEEE International Symposium on Information Theory
  (ISIT)}, pages 1176--1181. IEEE, 2021.

\bibitem[Rothfuss et~al.(2018)Rothfuss, Lee, Clavera, Asfour, and
  Abbeel]{rothfuss2018promp}
Jonas Rothfuss, Dennis Lee, Ignasi Clavera, Tamim Asfour, and Pieter Abbeel.
\newblock Promp: Proximal meta-policy search.
\newblock In \emph{Proc. International Conference on Learning Representations},
  Vancouver, Canada, 2018.

\bibitem[Rothfuss et~al.(2021)Rothfuss, Fortuin, Josifoski, and
  Krause]{rothfuss2021_pacoh_pac_bayes_ml}
Jonas Rothfuss, Vincent Fortuin, Martin Josifoski, and Andreas Krause.
\newblock Pacoh: Bayes-optimal meta-learning with pac-guarantees.
\newblock In \emph{Proc. International Conference on Machine Learning}, pages
  9116--9126, Virtual, 2021.

\bibitem[Schmidhuber(1993)]{schmidhuber1993_recurrent}
J.~Schmidhuber.
\newblock A neural network that embeds its own meta-levels.
\newblock In \emph{Proc. IEEE International Conference on Neural Networks},
  volume~1, pages 407--412, 1993.

\bibitem[Schmidhuber(1995)]{Schmidhuber95onlearning}
Jürgen Schmidhuber.
\newblock On learning how to learn learning strategies.
\newblock Technical report, Technical University of Munich, 1995.

\bibitem[Sheth and Khardon(2017)]{Sheth2017ExcessRiskBayes}
Rishit Sheth and Roni Khardon.
\newblock Excess risk bounds for the bayes risk using variational inference in
  latent gaussian models.
\newblock In \emph{Proc. Advances in Neural Information Processing Systems},
  Long Beach, CA, 2017.

\bibitem[Snell et~al.(2017)Snell, Swersky, and Zemel]{snell2017prototypical}
Jake Snell, Kevin Swersky, and Richard Zemel.
\newblock Prototypical networks for few-shot learning.
\newblock In \emph{Proc. Advances in Neural Information Processing Systems},
  pages 4080--4090, Long Beach, CA, 2017.

\bibitem[Thrun and Pratt(1998)]{l2l_book_1998}
Sebastian Thrun and Lorien Pratt.
\newblock \emph{Learning to Learn: Introduction and Overview}.
\newblock Kluwer Academic Publishers, USA, 1998.
\newblock ISBN 0792380479.

\bibitem[Vanschoren(2018)]{vanschoren2018meta}
Joaquin Vanschoren.
\newblock Meta-learning: A survey.
\newblock \emph{arXiv e-prints}, pages arXiv--1810, 2018.

\bibitem[Vershynin(2018)]{vershynin2018high}
Roman Vershynin.
\newblock \emph{High-Dimensional Probability: An Introduction with Applications
  in Data Science}.
\newblock Cambridge Series in Statistical and Probabilistic Mathematics.
  Cambridge University Press, 2018.

\bibitem[Vilalta and Drissi(2002)]{vilalta2002perspective}
Ricardo Vilalta and Youssef Drissi.
\newblock A perspective view and survey of meta-learning.
\newblock \emph{Artificial intelligence review}, 18\penalty0 (2):\penalty0
  77--95, 2002.

\bibitem[Vinyals et~al.(2016)Vinyals, Blundell, Lillicrap, Wierstra,
  et~al.]{vinyals2016matching}
Oriol Vinyals, Charles Blundell, Timothy Lillicrap, Daan Wierstra, et~al.
\newblock Matching networks for one shot learning.
\newblock In \emph{Proc. Advances in Neural Information Processing Systems},
  volume~29, pages 3630--3638, Barcelona, Spain, 2016.

\bibitem[Vuorio et~al.(2019)Vuorio, Sun, Hu, and Lim]{vuorio2019multimodal}
Risto Vuorio, Shao-Hua Sun, Hexiang Hu, and Joseph~J Lim.
\newblock Multimodal model-agnostic meta-learning via task-aware modulation.
\newblock In \emph{Proc. Advances in Neural Information Processing Systems},
  pages 1--12, Vancouver, Canada, 2019.

\bibitem[Wang et~al.(2020{\natexlab{a}})Wang, Sun, and
  Li]{wang2020_global_converge_maml_dnn}
Haoxiang Wang, Ruoyu Sun, and Bo~Li.
\newblock Global convergence and generalization bound of gradient-based
  meta-learning with deep neural nets.
\newblock \emph{arXiv preprint arXiv:2006.14606}, 2020{\natexlab{a}}.

\bibitem[Wang et~al.(2020{\natexlab{b}})Wang, Cai, Yang, and
  Wang]{wang2020_global_opt_maml}
Lingxiao Wang, Qi~Cai, Zhuoran Yang, and Zhaoran Wang.
\newblock On the global optimality of model-agnostic meta-learning.
\newblock In \emph{Proc. International Conference on Machine Learning}, pages
  9837--9846, Virtual, 2020{\natexlab{b}}.

\bibitem[Wang et~al.(2019)Wang, Ramanan, and Hebert]{Wang_2019_ICCV}
Yu-Xiong Wang, Deva Ramanan, and Martial Hebert.
\newblock Meta-learning to detect rare objects.
\newblock In \emph{Proc. International Conference on Computer Vision}, Seoul,
  Korea, October 2019.

\bibitem[Yang et~al.(2021)Yang, Ji, and Liang]{yang2021provably}
Junjie Yang, Kaiyi Ji, and Yingbin Liang.
\newblock Provably faster algorithms for bilevel optimization.
\newblock \emph{Advances in Neural Information Processing Systems}, 34, 2021.

\bibitem[Yin et~al.(2020)Yin, Tucker, Zhou, Levine, and Finn]{yin2020meta}
Mingzhang Yin, George Tucker, Mingyuan Zhou, Sergey Levine, and Chelsea Finn.
\newblock Meta-learning without memorization.
\newblock In \emph{Proc. International Conference on Learning Representations},
  virtual, 2020.

\bibitem[Yoon et~al.(2018)Yoon, Kim, Dia, Kim, Bengio, and Ahn]{yoon2018_BMAML}
Jaesik Yoon, Taesup Kim, Ousmane Dia, Sungwoong Kim, Yoshua Bengio, and Sungjin
  Ahn.
\newblock Bayesian model-agnostic meta-learning.
\newblock In \emph{Proc. Advances in Neural Information Processing Systems},
  volume~31, Montreal, Canada, 2018.

\bibitem[Zintgraf et~al.(2019)Zintgraf, Shiarli, Kurin, Hofmann, and
  Whiteson]{zintgraf2019fast}
Luisa Zintgraf, Kyriacos Shiarli, Vitaly Kurin, Katja Hofmann, and Shimon
  Whiteson.
\newblock Fast context adaptation via meta-learning.
\newblock In \emph{Proc. International Conference on Machine Learning}, pages
  7693--7702, Long Beach, CA, 2019.

\end{thebibliography}


\clearpage
\appendix

\thispagestyle{empty}

\onecolumn \makesupplementtitle

In this appendix, we first present the problem setting, then some basic supporting lemmas, and the missing derivations of some claims, as well as the proofs of all the lemmas and theorems in the paper, which is followed by details on our experiments along with additional experimental results. 


\section{Formulations and closed-form solutions} 
\label{app_sec:methods_formulation_sln}

In this section, we will introduce the definition and computation of meta-test (population) risks and empirical losses for the four methods that we will discuss, including ERM, MAML, iMAML, BaMAML.
This prepares for the analysis of optimal population risk and statistical error of the four methods in later sections.

\subsection{Empirical risk minimization formulation} 
\label{app_sub:erm}
In the meta learning setting, 
ERM minimizes the average loss over all data,
its empirical loss, meta-test risk and their optimal solutions can be obtained by taking $\alpha = 0, N_1 = 0, N_2=N$ in that of MAML~\citep{gao2020_model_opt_tradeoff_ml}, i.e. 
 $\hat{\btheta}_{\tau}^{\mathrm{er}} (\btheta_0, \mathcal{D}_{\tau}^{\mathrm{trn}}) = \btheta_0$, 
and based on the definition in~\eqref{eq:emp_loss}, the empirical loss of ERM is given by
\begin{align}\label{eq:erm_emp_risk}
  {\mathcal{L}}^{\mathrm{er}}({\btheta_0}, \mathcal{D} )
  =
  \frac{1}{T N} \sum_{\tau=1}^{T} 
  {}\|\mathbf{y}_{\tau,N}^{\mathrm{all}} -  \mathbf{X}_{\tau,N}^{\mathrm{all}} \btheta_0 \|^2.
\end{align}

For brevity, denote 
$\mathbf{e}_{\tau, N}^{\mathrm{all}} = 
[\epsilon_{\tau, 1},\ldots,
\epsilon_{\tau, N}]^{\top} \in \mathbb{R}^{N}
$.
And define $\hat{\btheta}_{0}^{\mathrm{er}}$ as the minimizer of the ERM empirical loss, given by
\begin{align}\label{eq:theta_0_hat_erm_derive}
  \hat{\btheta}_{0}^{\mathrm{er}} 
  = \mathop{\arg\min}_{\btheta_0} 
  {\mathcal{L}}^{\mathrm{er}}({\btheta_0}, \mathcal{D} )
  =& \mathop{\arg\min}_{\btheta_0} \frac{1}{TN} \sum_{\tau=1}^{T}
  \|\mathbf{X}_{\tau, N}^{\text{all}}\btheta_{\tau}^{\text{gt}} + \mathbf{e}^{\text{all}}_{\tau,N} - \mathbf{X}^{\text{all}}_{\tau, N}\btheta_0\|^2. 
\end{align}

Using the optimality condition, we have
\begin{subequations}
\begin{align}\label{eq:theta_0_hat_erm_sln}
  &\hat{\btheta}_{0}^{\mathrm{er}} 
  = \Big(\sum_{\tau=1}^{T}
  \hat{\mathbf{W}}_{\tau,N}^{\mathrm{er}}\Big)^{-1} 
  \Big(\sum_{\tau=1}^{T}\hat{\mathbf{W}}_{\tau}^{\mathrm{er}}\btheta_{\tau}^{\text{gt}} \Big) + \Delta_T^{\rm er}\\
  &\Delta_T^{\rm er} = \Big(\sum_{\tau=1}^{T}
  \hat{\mathbf{W}}_{\tau,N}^{\mathrm{er}}\Big)^{-1}
  \Big(\sum_{\tau=1}^{T} \frac{1}{N} \mathbf{X}^{\text{all}\top}_{\tau, N}\mathbf{e}^{\text{all}}_{\tau,N} \Big) \\
  \label{eq:W_hat_erm}
  &\hat{\mathbf{W}}_{\tau}^{\mathrm{er}} =  \frac{1}{N} \mathbf{X}^{\text{all}\top}_{\tau, N}\mathbf{X}_{\tau, N}^{\text{all}}.
\end{align}
\end{subequations}

Based on the definition in~\eqref{eq:R}, denote the number of adaptation data during meta testing as $N_a$, then
the total meta-test risk of ERM can be specified by
\begin{align}\label{eq:erm_metatest_risk}
  \mathcal{R}_{N_a}^{\mathrm{er}}({\btheta_0} )
  \coloneqq
  \mathbb{E}_{\tau}\Big[
  \mathbb{E}_{\mathcal{D}_{\tau,N_a}}\big[
  \mathbb{E}_{p(\mathbf{x}_{\tau},y_{\tau}\mid \tau) }
  [\big({y}_{\tau} - 
  \hat{\btheta}_{\tau}^{\mathrm{er}}(\btheta_0, \mathcal{D}_{\tau,N_a})^{\top}
  \mathbf{x}_{\tau})^2]
  \big]\Big]
\end{align}
where
$\hat{\btheta}_{\tau}^{\mathrm{er}}(\btheta_0, \mathcal{D}_{\tau}) = \btheta_0$, plugging which into~\eqref{eq:erm_metatest_risk}, we have
\begin{subequations}
\begin{align}\label{eq:erm_metatest_risk_simple}
  &\mathcal{R}_{N_a}^{\mathrm{er}}({\btheta_0} )
  = \mathbb{E}_{\tau}[ (y_{\tau}-{\btheta}_0^{\top} \mathbf{x}_{\tau})^{2} ]
  = \mathbb{E}_{\tau}[\|\btheta_0-\btheta^{\mathrm{gt}}_{\tau}\|^2_{\mathbf{W}_{\tau,N_a}^{\mathrm{er}}}] + 1 \\
  \label{eq:W_N_er}
 &\mathbf{W}_{\tau,N_a}^{\mathrm{er}} = \mathbb{E}[\mathbf{x}_{\tau}\mathbf{x}_{\tau}^{\top}\mid \tau] =  \mathbf{Q}_{\tau}.
\end{align}
\end{subequations}

By the general definition of optimal population risk in~\eqref{eq:R},
the ERM optimal population risk is given by
\begin{subequations}
\begin{align}\label{eq:erm_model_err}
  &\mathcal{R}^{\mathrm{er}}({\btheta_0} )
  \coloneqq
  \lim_{N_a \to \infty} 
  \mathcal{R}_{N_a}^{\mathrm{er}}({\btheta_0} )
  =
  \mathbb{E}_{\tau}[\|\btheta_0-\btheta^{\mathrm{gt}}_{\tau}\|^2_{\mathbf{W}_{\tau}^{\mathrm{er}}}] + 1 \\
  \label{eq:W_er}
  & \mathbf{W}_{\tau}^{\mathrm{er}} =  \mathbf{Q}_{\tau}.
\end{align}
\end{subequations}

Define $\btheta_{0}^{\mathrm{er}}$ as the minimizer of the ERM optimal population risk, given by
\begin{align}\label{eq:theta_0_star_erm_sln}
  \btheta_{0}^{\mathrm{er}} 
  = \mathop{\arg\min}_{\btheta_0} 
  \mathcal{R}^{\mathrm{er}}({\btheta_0} )
  = \mathop{\arg\min}_{\btheta_0} 
  \mathbb{E}_{\tau}\big[\|\btheta_0-\btheta^{\mathrm{gt}}_{\tau}\|^2_{\mathbf{W}_{\tau}^{\mathrm{er}}}\big]
  =\mathbb{E}_{\tau}\big[\mathbf{W}_{\tau}^{\mathrm{er}}\big]^{-1} \mathbb{E}_{\tau}\big[\mathbf{W}_{\tau}^{\mathrm{er}} {\btheta}_{\tau}^{\mathrm{gt}}\big].
\end{align}


From \eqref{eq:W_hat_erm}\eqref{eq:W_N_er}\eqref{eq:W_er}, we have the property $\mathbb{E}[\hat{\mathbf{W}}_{\tau,N}^{\mathrm{er}}] = \mathbf{W}_{\tau,N}^{\mathrm{er}} = \mathbf{W}_{\tau}^{\mathrm{er}}$, which will be used in later sections to derive the specific optimal population risks and statistical errors.

\subsection{Model agnostic meta learning method} 
\label{app_sub:maml}

For the one-step model agnostic meta learning (MAML) method, 
the task-specific parameter $\hat{\btheta}_{\tau}^{\mathrm{ma}}$ is computed from the initial parameter $\btheta_0$ by taking one-step gradient descent of the empirical loss function as shown below
\begin{align}\label{eq:theta_tau_maml_theta0}
  \hat{\btheta}_{\tau}^{\mathrm{ma}} (\btheta_0, \mathcal{D}_{\tau}) 
  = \btheta_0 - \frac{\alpha}{2}  \nabla_{\btheta_0} 
  \ell_{\tau}\big(\btheta_0, \mathcal{D}_{\tau} \big)
  = (\mathbf{I} - {\alpha}\hat{\mathbf{Q}}_{\tau,N}) \btheta_0
  + \frac{\alpha}{N} 
  \mathbf{X}_{\tau,N}^{\top}\mathbf{y}_{\tau,N}
\end{align}
\footnote{Note that, here we define the learning rate  as $\alpha/2$ to cancel the scale factor 2 from the derivative for notation simplicity.}
where $\alpha  >0$ is twice the stepsize, and $N$ is the number of adaptation data.
During meta-training, $N=N_1$, is the number of the training data.
From the definition in~\eqref{eq:emp_loss} or~\eqref{eq:R}, and combined with $\hat{\btheta}_{\tau}^{\mathrm{ma}}$ in \eqref{eq:theta_tau_maml_theta0}, the empirical loss of MAML is given by
\begin{align}\label{eq:MAML_emp_risk_app}
  {\mathcal{L}}^{\mathrm{ma}}({\btheta_0}, \mathcal{D})
  = 
  \frac{1}{T N_2} \sum_{\tau=1}^{T} 
  \|\mathbf{y}_{\tau,N_2}^{\rm val} - 
  \mathbf{X}_{\tau,N_2}^{\mathrm{val}}
  \hat{\btheta}_{\tau}^{\mathrm{ma}} (\btheta_0, \mathcal{D}_{\tau}^{\rm trn})\|^2.
\end{align}

The minimizer of the MAML empirical loss is defined as
\begin{align}\label{eq:theta_0_hat_MAML_derive}
  \hat{\btheta}_{0}^{\mathrm{ma}} 
  = \mathop{\arg\min}_{\btheta_0} 
  {\mathcal{L}}^{\mathrm{ma}}({\btheta_0}, \mathcal{D})
  =& \mathop{\arg\min}_{\btheta_0} \frac{1}{TN_2} \sum_{\tau=1}^{T} 
  \|\mathbf{X}_{\tau, N_2}^{\text{val}}\btheta_{\tau}^{\text{gt}} + \mathbf{e}^{\text{val}}_{\tau,N_2} - \mathbf{X}^{\text{val}}_{\tau, N_2}\hat{\btheta}^{\mathrm{ma}}_{\tau}({\btheta_0}, \mathcal{D}_{\tau}^{\text{trn}})\|^2. 
\end{align}

Using the optimality condition, we have
\begin{subequations}
\begin{align}\label{eq:theta_0_hat_MAML_sln_app}
  &\hat{\btheta}_{0}^{\mathrm{ma}} 
  = \Big(\sum_{\tau=1}^{T}
  \hat{\mathbf{W}}_{\tau}^{\mathrm{ma}}\Big)^{-1} 
  \Big(\sum_{\tau=1}^{T}\hat{\mathbf{W}}_{\tau}^{\mathrm{ma}}\btheta_{\tau}^{\text{gt}} \Big) +  \Delta_T^{\rm ma}\\ 
  &\Delta_T^{\rm ma} 
  = \Big(\sum_{\tau=1}^{T}
  \hat{\mathbf{W}}_{\tau}^{\mathrm{ma}}\Big)^{-1} \Big(
  \sum_{\tau=1}^{T} \big(\mathbf{I}- 
  {\alpha }\hat{\mathbf{Q}}_{\tau,N_1}^{\text{trn}}
  \big)
  \big(
  \frac{1}{N_2}\mathbf{X}_{\tau,N_2}^{\text{val}\top} \mathbf{e}_{\tau,N_2}^{\text{val}} - \frac{\alpha  }{N_1}
  \hat{\mathbf{Q}}_{\tau,N_2}^{\text{val}}\mathbf{X}_{\tau,N_1}^{\text{trn}\top} \mathbf{e}_{\tau,N_1}^{\text{trn}}
  \big)
  \Big) \\
  \label{eq:W_hat_tau_ma}
  & 
  \hat{\mathbf{W}}_{\tau}^{\mathrm{ma}} 
  =  
  (\mathbf{I}-
  {\alpha}\hat{\mathbf{Q}}_{\tau,N_1}^{\text{trn}}) 
  \hat{\mathbf{Q}}_{\tau,N_2}^{\text{val}}
  (\mathbf{I}- 
  {\alpha}\hat{\mathbf{Q}}_{\tau,N_1}^{\text{trn}}). 
\end{align}
\end{subequations}

Based on~\eqref{eq:R}, the MAML meta-test risk is defined as~\citep{gao2020_model_opt_tradeoff_ml}
\begin{subequations}
\begin{align}\label{eq:maml_pop_risk}
  &\mathcal{R}_{N_a}^{\mathrm{ma}}({\btheta_0} )
  =
  \mathbb{E}[
  (y_{\tau}-\hat{\btheta}_{\tau}^{\mathrm{ma}} (\btheta_0, \mathcal{D}_{\tau, N_a}) ^{\top} \mathbf{x}_{\tau})^{2} ]
  =
  \mathbb{E}_{\tau}\big[\|\btheta_0-\btheta^{\mathrm{gt}}_{\tau}\|^2_{\mathbf{W}_{\tau,N_a}^{\mathrm{ma}}}\big] 
  + 1 + \frac{\alpha^2}{N_a} \mathbb{E}_{\tau}[\mathrm{tr}(\mathbf{Q}^2_{\tau})] \\
  & \label{eq:W_tau_N_ma}
  \mathbf{W}_{\tau,N_a}^{\mathrm{ma}} 
  =
  \mathbb{E}_{\hat{\mathbf{Q}}_{\tau,N}}\big[(\mathbf{I} - {\alpha} \hat{\mathbf{Q}}_{\tau,N}) \mathbf{Q}_{\tau}(\mathbf{I} - {\alpha} \hat{\mathbf{Q}}_{\tau,N})\big]  \nonumber \\
  &\quad \quad ~~ =
  \big(\mathbf{I}-\alpha \mathbf{Q}_{\tau}\big) \mathbf{Q}_{\tau}\big(\mathbf{I}-\alpha  \mathbf{Q}_{\tau}\big)+\frac{\alpha^{2}}{N_a}\Big(\mathbb{E}_{\mathbf{x}_{\tau, i}}\big[\mathbf{x}_{\tau, i} \mathbf{x}_{\tau, i}^{\top} \mathbf{Q}_{\tau} \mathbf{x}_{\tau, i} \mathbf{x}_{\tau, i}^{\top}\big]-\mathbf{Q}_{\tau}^{3}\Big). 
\end{align}
\end{subequations}
 Note that,
 $\lim_{N_a \to \infty}
 \mathbf{W}_{\tau,N_a}^{\mathrm{ma}} \to \mathbf{W}_{\tau}^{\mathrm{ma}} 
 $, and $\lim_{N_a \to \infty} \frac{\alpha^2}{N_a} \mathrm{tr}(\mathbf{Q}^2_{\tau}) =0$.
 Therefore, from the definition of optimal population risk in~\eqref{eq:R}, we have the MAML optimal population risk is
 \begin{align*}
  \mathcal{R}^{\mathrm{ma}}(\btheta_0)& 
  = \lim_{N_a \to \infty} 
  \mathcal{R}_{N_a}^{\mathrm{ma}}(\btheta_0)
  = \mathbb{E}_{\tau}\big[\|\btheta_0-\btheta^{\mathrm{gt}}_{\tau}\|^2_{\mathbf{W}_{\tau}^{\mathrm{ma}}}\big] 
  + 1
  \numberthis. 
 \end{align*}

In MAML, define $\btheta_{0}^{\mathrm{ma}}$ as the minimizer of the MAML optimal population risk, given by
\begin{align}\label{eq:theta_0_star_MAML_sln_app}
  \btheta_{0}^{\mathrm{ma}} 
  = \mathop{\arg\min}_{\btheta_0} 
  \mathcal{R}^{\mathrm{ma}}({\btheta_0} )
  = \mathop{\arg\min}_{\btheta_0} 
  \mathbb{E}_{\tau}\big[\|\btheta_0-\btheta^{\mathrm{gt}}_{\tau}\|^2_{\mathbf{W}_{\tau}^{\mathrm{ma}}}\big]
  =\mathbb{E}_{\tau}\big[\mathbf{W}_{\tau}^{\mathrm{ma}}\big]^{-1} \mathbb{E}_{\tau}\big[\mathbf{W}_{\tau}^{\mathrm{ma}} {\btheta}_{\tau}^{\mathrm{gt}}\big]
  .
\end{align}

It is worth noting that, from Lemma~\ref{lemma:concentration_W_N}, we have the property $\mathbb{E}_{\mathbf{x}_{\tau}}[\hat{\mathbf{W}}_{\tau,N}^{\mathrm{ma}}] = \mathbf{W}_{\tau,N}^{\mathrm{ma}}$, $\lim_{N \to \infty}
\mathbf{W}_{\tau,N}^{\mathrm{ma}} = \mathbf{W}_{\tau}^{\mathrm{ma}} $, which will be used in later sections to derive the specific optimal population risk and statistical error.


\subsection{Implicit model agnostic meta learning method} 
\label{app_sub:bimaml_method}

  For the iMAML method, 
  the task-specific parameter $\hat{\btheta}_{\tau}^{\mathrm{im}}$ is computed from the initial parameter $\btheta_0$ by optimizing the regularized task-specific empirical loss, given by
  \begin{align}\label{eq:theta_tau_biMAML_theta0}
    \hat{\btheta}_{\tau}^{\mathrm{im}} (\btheta_0) 
    = \mathop{\arg\min}_{\btheta_{\tau}} 
    \frac{1}{ N} 
    \|\mathbf{y}_{\tau,N}- \mathbf{X}_{\tau,N} {\btheta}_{\tau} \|^{2} 
     + \gamma \|\btheta_{\tau} - \btheta_0\|^2
  \end{align}
  where $\gamma $ is the weight of the regularizer,
  and $\mathcal{D}_{\tau,N_a}$ is the adaptation data during meta-testing or training data during meta-training.
    The estimated task-specific parameter can be computed by
  \begin{align}\label{eq:theta_tau_hat_biMAML}
    \hat{\btheta}_{\tau,N}^{\mathrm{im}}(\btheta_0,\mathcal{D}_{\tau})
    &= (\hat{\mathbf{Q}}_{\tau,N} +  \gamma  \mathbf{I})^{-1}(\frac{1}{N} \mathbf{X}_{\tau,N}^{\top}\mathbf{y}_{\tau,N} + \gamma  \btheta_0). 
  \end{align}

  The empirical loss of iMAML is defined as the average per-task loss, which is computed by
  \begin{align}\label{eq:biMAML_emp_risk_app}
    {\mathcal{L}}_{T,N}^{\mathrm{im}}({\btheta_0}, \mathcal{D})
    =\frac{1}{TN_2} \sum_{\tau=1}^{T} 
    \|\mathbf{y}^{\text{val}}_{\tau, N_2} - \mathbf{X}^{\text{val}}_{\tau, N_2}\hat{\btheta}^{\mathrm{im}}_{\tau}({\btheta_0}, \mathcal{D}_{\tau}^{\text{trn}})\|^2
  \end{align}
  whose minimizer is
  \begin{align*}\label{eq:theta_0_hat_biMAML_derive}
    \hat{\btheta}_{0}^{\mathrm{im}} 
    =& \mathop{\arg\min}_{\btheta_0} 
    {\mathcal{L}}^{\mathrm{im}}_{T,N}({\btheta_0}, \mathcal{D} ) 
    = \mathop{\arg\min}_{\btheta_0} \frac{1}{TN_2} \sum_{\tau=1}^{T} 
    \|\mathbf{X}_{\tau, N_2}^{\text{val}}\btheta_{\tau}^{\text{gt}} + \mathbf{e}^{\text{val}}_{\tau,N_2} - \mathbf{X}^{\text{val}}_{\tau, N_2}\hat{\btheta}^{\mathrm{im}}_{\tau}({\btheta_0}, \mathcal{D}_{\tau}^{\text{trn}})\|^2. \numberthis
  \end{align*}
  To solve for $\btheta_0^{\mathrm{im}}$ in the above equation, using the optimality condition, we obtain
  \begin{subequations}
    \begin{align}\label{eq:theta_0_hat_biMAML_sln_app}
      &\hat{\btheta}_{0}^{\mathrm{im}} 
      = \Big(\sum_{\tau=1}^{T}\hat{\mathbf{W}}_{\tau}^{\mathrm{im}}\Big)^{-1}
       \Big(\sum_{\tau=1}^{T}
       \hat{\mathbf{W}}_{\tau}^{\mathrm{im}} \btheta_{\tau}^{\mathrm{gt}} \Big) + \Delta_{T}^{\rm im} \\
       &\Delta_{T}^{\rm im}
      =\Big(\sum_{\tau=1}^{T}\hat{\mathbf{W}}_{\tau}^{\mathrm{im}}\Big)^{-1}
       \Big(\sum_{\tau=1}^{T}\gamma {\Sigma}_{\btheta_{\tau}}
       \frac{1}{N_2}\mathbf{X}^{\text{val}\top}_{\tau, N_2}\mathbf{e}_{\tau, N_2}^{\text{val}}
       -\gamma^{-1}\hat{\mathbf{W}}_{\tau}^{\mathrm{im}}\frac{1}{N_1}  \mathbf{X}_{\tau, N_1}^{\text{trn} \top} \mathbf{e}_{\tau, N_1}^{\text{trn}}\Big) \\
       &{\Sigma}_{\btheta_{\tau}, N_1} 
       = (\frac{1}{N_1} \mathbf{X}_{\tau, N_1}^{\mathrm{trn}\top} \mathbf{X}_{\tau, N_1}^{\mathrm{trn}}
       + \gamma  \mathbf{I})^{-1} 
       = ( \hat{\mathbf{Q}}_{\tau, N_1}
       + \gamma  \mathbf{I})^{-1} \\
       \label{eq:W_hat_bi}
       &\hat{\mathbf{W}}_{\tau}^{\mathrm{im}} = \gamma^2    
       {\Sigma}_{\btheta_{\tau}, N_1}
      \frac{1}{N_2}\mathbf{X}^{\text{val}\top}_{\tau, N_2}\mathbf{X}^{\text{val}}_{\tau, N_2}
      {\Sigma}_{\btheta_{\tau}, N_1}
      = \gamma^2  {\Sigma}_{\btheta_{\tau}, N_1}
      \hat{\mathbf{Q}}_{\tau, N_2}
      {\Sigma}_{\btheta_{\tau}, N_1}. 
    \end{align}
    \end{subequations}

  The iMAML meta-test risk is defined as 
\begin{subequations}\label{eq:bimaml_pop_risk}
  \begin{align}\label{eq:bimaml_pop_risk_R}
  \hspace{-5mm}
  &\mathcal{R}_{N_a}^{\mathrm{im}}({\btheta_0} )
  =
  \mathbb{E}\big[ 
  \big(y_{\tau}-\hat{\btheta}_{\tau}^{\mathrm{im}} (\btheta_0, \mathcal{D}_{\tau,N_a}) ^{\top} \mathbf{x}_{\tau}\big)^{2} \big] \nonumber \\
  &\quad\quad \quad ~~ =
  \mathbb{E}_{\tau}\big[\|\btheta_0-\btheta^{\mathrm{gt}}_{\tau}\|^2_{\mathbf{W}_{\tau,N_a}^{\mathrm{im}}}\big] 
  + 1 
  +\frac{1}{N_a} \mathbb{E}[ \gamma^{-2} \operatorname{tr}(\mathbf{W}_{\tau,N_a}^{\mathrm{im}} \hat{\mathbf{Q}}_{\tau,N_a})] \\
  &\mathbf{W}_{\tau,N_a}^{\mathrm{im}}
  =
  \mathbb{E}_{{\mathbf{x}}_{\tau}}\big[(\hat{\mathbf{Q}}_{\tau,N_a} + \gamma \mathbf{I} )^{-1} \mathbf{Q}_{\tau}
  (\hat{\mathbf{Q}}_{\tau,N_a} + \gamma \mathbf{I} )^{-1}\big] 
  \nonumber 
  = \mathbf{W}_{\tau}^{\mathrm{im}} 
  + \\
  &\mathbb{E}_{{\mathbf{x}}_{\tau}}\big[{\Sigma}_{\theta_{\tau}}\big(\mathbf{Q}_{\tau}-\hat{\mathbf{Q}}_{\tau,N_a}\big) \mathbf{W}_{\tau}^{\mathrm{im}}\big(\mathbf{Q}_{\tau}-\hat{\mathbf{Q}}_{\tau,N_a}\big) {\Sigma}_{\theta_{\tau}} +
  {\Sigma}_{\theta_{\tau}}\big(\mathbf{Q}_{\tau}-\hat{\mathbf{Q}}_{\tau,N_a}\big) \mathbf{W}_{\tau}^{\mathrm{im}}+\mathbf{W}_{\tau}^{\mathrm{im}}\big(\mathbf{Q}_{\tau}-\hat{\mathbf{Q}}_{\tau,N_a}\big) {\Sigma}_{\theta_{\tau}}\big]
\end{align}
\end{subequations}
where $\mathbf{W}_{\tau}^{\mathrm{im}} =
(\gamma ^{-1}\mathbf{Q}_{\tau}+\mathbf{I})^{-1}\mathbf{Q}_{\tau}(\gamma ^{-1}\mathbf{Q}_{\tau}+\mathbf{I})^{-1} $.

Let 
${\Sigma}_{\theta_{\tau}} = (\hat{\mathbf{Q}}_{\tau,N}+\gamma \mathbf{I})^{-1}$, 
and $\mathbf{W}_{\tau,N_a}^{\mathrm{im}} = \gamma^2{\Sigma}_{\btheta_{\tau}}  \mathbf{Q}_{\tau} {\Sigma}_{\btheta_{\tau}} $.
And we simplify the notation of $\mathbf{X}_{\tau,N_a}, \mathbf{y}_{\tau,N_a}, \hat{\mathbf{Q}}_{\tau, N_a}$ as
$\mathbf{X}_{\tau}, \mathbf{y}_{\tau},  \hat{\mathbf{Q}}_{\tau}$.
The derivation of \eqref{eq:bimaml_pop_risk} is given below
\begin{align*}
\label{eq:total_emp_risk_imaml_theta0_a}
    \mathcal{R}_{N_a}^{\mathrm{im}}({\btheta}_0)
    =& \mathbb{E}\big[ %
    \|\hat{\btheta}^{\mathrm{im}}_{\tau}(\btheta_0, \mathcal{D}_{\tau,N_a})-\btheta^{\mathrm{gt}}_{\tau}\|^2_{\mathbf{Q}_{\tau}}\big] + 1 
    = \mathbb{E}\big[ \|(\hat{\mathbf{Q}}_{\tau} + \gamma \mathbf{I})^{-1}(\frac{1}{N_a} \mathbf{X}_{\tau}^{\top}\mathbf{y}_{\tau} + \gamma {\btheta}_0)-\btheta^{\mathrm{gt}}_{\tau}\|^2_{\mathbf{Q}_{\tau}}\big] + 1 \\
    \stackrel{(a)}{=}& \mathbb{E}\Big[
     {\btheta}_0^{\top} 
     \mathbf{W}_{\tau,N_a}^{\mathrm{im}} {\btheta}_0
    +2 \gamma(\frac{1}{N_a}\mathbf{y}_{\tau}^{\top}\mathbf{X}_{\tau} 
    {\Sigma}_{\btheta_{\tau}} - \btheta_{\tau}^{\mathrm{gt}\top})
     \mathbf{Q}_{\tau} {\Sigma}_{\btheta_{\tau}}{\btheta}_0 
    + \frac{1}{N_a}\mathbf{y}_{\tau}^{\top}\mathbf{X}_{\tau}
    {\Sigma}_{\btheta_{\tau}}  \mathbf{Q}_{\tau} {\Sigma}_{\btheta_{\tau}} \frac{1}{N_a} \mathbf{X}_{\tau}^{\top}\mathbf{y}_{\tau} \\
    & - 2\btheta_{\tau}^{\mathrm{gt}\top}
     \mathbf{Q}_{\tau} {\Sigma}_{\btheta_{\tau}}
    \frac{1}{N_a} \mathbf{X}_{\tau}^{\top}\mathbf{y}_{\tau}
    + \btheta_{\tau}^{\mathrm{gt}\top} \mathbf{Q}_{\tau} \btheta_{\tau}^{\mathrm{gt}}
    \Big] + 1 \numberthis
\end{align*}
where $(a)$ follows from the definition of ${\Sigma}_{\btheta_{\tau}}$, $\mathbf{W}_{\tau,N_a}^{\mathrm{im}}$.
Applying the fact that $\mathbf{y}_{\tau} = \mathbf{X}_{\tau}\btheta_{\tau}^{\mathrm{gt}} + \mathbf{e}_{\tau}$ and \(\mathbb{E}_{\mathbf{e}_{\tau}}[\mathbf{e}_{\tau}] = \mathbf{0}\), one can further derive
\begin{align*}
  \mathcal{R}_{N_a}^{\mathrm{im}}({\btheta}_0)
  {=}& 
  \mathbb{E}\Big[  {\btheta}_0^{\top} 
   \mathbf{W}_{\tau,N_a}^{\mathrm{im}} {\btheta}_0
  +2 \gamma(\btheta_{\tau}^{\mathrm{gt}\top}\hat{\mathbf{Q}}_{\tau}
  {\Sigma}_{\btheta_{\tau}} - \btheta_{\tau}^{\mathrm{gt}\top})
   \mathbf{Q}_{\tau} {\Sigma}_{\btheta_{\tau}}{\btheta}_0 \\
  &+ \btheta_{\tau}^{\mathrm{gt}\top}
  \hat{\mathbf{Q}}_{\tau}
  {\Sigma}_{\btheta_{\tau}}  \mathbf{Q}_{\tau} {\Sigma}_{\btheta_{\tau}} \hat{\mathbf{Q}}_{\tau}\btheta_{\tau}^{\mathrm{gt}}
  - 2\btheta_{\tau}^{\mathrm{gt}\top}
   \mathbf{Q}_{\tau} {\Sigma}_{\btheta_{\tau}}
  \hat{\mathbf{Q}}_{\tau}\btheta_{\tau}^{\mathrm{gt}}
  + \btheta_{\tau}^{\mathrm{gt}\top} \mathbf{Q}_{\tau} \btheta_{\tau}^{\mathrm{gt}} \\
  &+\frac{1}{N_a^2}
  \mathbf{e}_{\tau}^{\top}
  \mathbf{X}_{\tau}
  {\Sigma}_{\btheta_{\tau}}  \mathbf{Q}_{\tau} {\Sigma}_{\btheta_{\tau}} \mathbf{X}_{\tau}^{\top}
  \mathbf{e}_{\tau}
  \Big] + 1.
\end{align*}
Based on the linearity of trace and expectation, and the cyclic property of trace, the last term inside the expectation in the above equation can be computed as 
\begin{align*}
&\mathbb{E}_{\mathbf{e}_{\tau}}[\mathbf{e}_{\tau}^{\top}
  \mathbf{X}_{\tau}
  {\Sigma}_{\theta_{\tau}}  \mathbf{Q}_{\tau} {\Sigma}_{\theta_{\tau}} \mathbf{X}_{\tau}^{\top}
  \mathbf{e}_{\tau}^{}] 
  = \mathrm{tr}(
  \mathbf{X}_{\tau}
  {\Sigma}_{\theta_{\tau}}  \mathbf{Q}_{\tau} {\Sigma}_{\theta_{\tau}} \mathbf{X}_{\tau}^{\top}
  \mathbb{E}_{\mathbf{e}_{\tau}}[\mathbf{e}_{\tau}^{}\mathbf{e}_{\tau}^{\top}]) \\
  =& \mathrm{tr}(
  \mathbf{X}_{\tau}
  {\Sigma}_{\theta_{\tau}}  \mathbf{Q}_{\tau} {\Sigma}_{\theta_{\tau}} \mathbf{X}_{\tau}^{\top})
  = N_a \mathrm{tr}(
  {\Sigma}_{\theta_{\tau}} \mathbf{Q}_{\tau} {\Sigma}_{\theta_{\tau}} \hat{\mathbf{Q}}_{\tau})
  = N_a \mathrm{tr}(
  \mathbf{W}_{\tau,N_a}^{\rm im} \hat{\mathbf{Q}}_{\tau}) ;
\end{align*}
also, based on the Woodbury matrix identity,
$\mathbf{I} - \hat{\mathbf{Q}}_{\tau} 
  {\Sigma}_{\theta_{\tau}} = 
\mathbf{I} -  
  {\Sigma}_{\theta_{\tau}}
  \hat{\mathbf{Q}}_{\tau}
  = \gamma{\Sigma}_{\theta_{\tau}}$, therefore\\
\(\begin{aligned}
  (\btheta_{\tau}^{\mathrm{gt}\top}\hat{\mathbf{Q}}_{\tau}
  {\Sigma}_{\btheta_{\tau}} - \btheta_{\tau}^{\mathrm{gt}\top})=
  \btheta_{\tau}^{\mathrm{gt}\top}(\hat{\mathbf{Q}}_{\tau}
  {\Sigma}_{\btheta_{\tau}} - \mathbf{I})
  = - \gamma \btheta_{\tau}^{\mathrm{gt}\top}{\Sigma}_{\theta_{\tau}}
\end{aligned}\), and 
\begin{align*}
  &\btheta_{\tau}^{\mathrm{gt}\top}
  \hat{\mathbf{Q}}_{\tau}
  {\Sigma}_{\btheta_{\tau}}  \mathbf{Q}_{\tau} {\Sigma}_{\btheta_{\tau}} \hat{\mathbf{Q}}_{\tau}\btheta_{\tau}^{\mathrm{gt}}
  - 2\btheta_{\tau}^{\mathrm{gt}\top}
   \mathbf{Q}_{\tau} {\Sigma}_{\btheta_{\tau}}
  \hat{\mathbf{Q}}_{\tau}\btheta_{\tau}^{\mathrm{gt}}
  + \btheta_{\tau}^{\mathrm{gt}\top} \mathbf{Q}_{\tau} \btheta_{\tau}^{\mathrm{gt}}\\
  =& \btheta_{\tau}^{\mathrm{gt}\top}
  \big(
  (\hat{\mathbf{Q}}_{\tau}
  {\Sigma}_{\btheta_{\tau}}-\mathbf{I})  \mathbf{Q}_{\tau} {\Sigma}_{\btheta_{\tau}} \hat{\mathbf{Q}}_{\tau}
  + \mathbf{Q}_{\tau} (\mathbf{I}-{\Sigma}_{\btheta_{\tau}}
  \hat{\mathbf{Q}}_{\tau})
  \big)
  \btheta_{\tau}^{\mathrm{gt}} \\
  =& \btheta_{\tau}^{\mathrm{gt}\top}
  \big(
  -\gamma {\Sigma}_{\btheta_{\tau}}  \mathbf{Q}_{\tau} {\Sigma}_{\btheta_{\tau}} \hat{\mathbf{Q}}_{\tau}
  + \mathbf{Q}_{\tau} \gamma {\Sigma}_{\btheta_{\tau}}
  \big)
  \btheta_{\tau}^{\mathrm{gt}} 
  = \gamma^{-1} \btheta_{\tau}^{\mathrm{gt}\top}
  \big(
  -  \mathbf{W}_{\tau,N_a}^{\rm im}  \hat{\mathbf{Q}}_{\tau}
  + (\hat{\mathbf{Q}}_{\tau}+\gamma \mathbf{I})
  \mathbf{W}_{\tau,N_a}^{\mathrm{im}}
  \big)
  \btheta_{\tau}^{\mathrm{gt}} .
\end{align*}

Combining these equalities and rearranging the equations we obtain
\begin{align*}
  &\mathcal{R}_{N_a}^{\mathrm{im}}({\btheta}_0)
  {=} \mathbb{E}\Big[
   {\btheta}_0^{\top} 
   \mathbf{W}_{\tau,N_a}^{\mathrm{im}} {\btheta}_0
  -2 \btheta_{\tau}^{\mathrm{gt}\top}
  \mathbf{W}_{\tau,N_a}^{\mathrm{im}} {\btheta}_0 \\
  &\quad \quad \quad\quad~~ 
  + \gamma^{-1} \btheta_{\tau}^{\mathrm{gt}\top}
  \big(
  -  \mathbf{W}_{\tau,N_a}^{\rm im}  \hat{\mathbf{Q}}_{\tau}
  + (\hat{\mathbf{Q}}_{\tau}+\gamma \mathbf{I})
  \mathbf{W}_{\tau,N_a}^{\mathrm{im}}
  \big)
  \btheta_{\tau}^{\mathrm{gt}} +\frac{1}{N_a  \gamma^{2}} 
  \mathrm{tr}(
   \mathbf{W}_{\tau,N_a}^{\mathrm{im}} \hat{\mathbf{Q}}_{\tau})
  \Big] + 1 \\
  \stackrel{(b)}{=}& \mathbb{E}\Big[
  \|{\btheta}_0 - {\btheta}_{\tau}^{\mathrm{gt}}\|^2_{ \mathbf{W}_{\tau,N_a}^{\mathrm{im}}}
  + \gamma^{-1}\btheta_{\tau}^{\mathrm{gt}\top}
  \big(
  - \mathbf{W}_{\tau,N_a}^{\mathrm{im}} \hat{\mathbf{Q}}_{\tau}
  +\hat{\mathbf{Q}}_{\tau}
  \mathbf{W}_{\tau,N_a}^{\mathrm{im}} 
  \big)
  \btheta_{\tau}^{\mathrm{gt}} 
   +\frac{1}{N_a  \gamma^{2}} 
  \mathrm{tr}(
   \mathbf{W}_{\tau,N_a}^{\mathrm{im}} \hat{\mathbf{Q}}_{\tau})
  \Big] + 1 \\
  \stackrel{(c)}{=}& \mathbb{E}\Big[
  \|{\btheta}_0 - {\btheta}_{\tau}^{\mathrm{gt}}\|^2_{\mathbf{W}_{\tau,N_a}^{\mathrm{im}}}
   +\frac{1}{N_a  \gamma^{2}} 
  \mathrm{tr}(
   \mathbf{W}_{\tau,N_a}^{\mathrm{im}} \hat{\mathbf{Q}}_{\tau})
  \Big] + 1 
  \numberthis
\end{align*}
where
$(b)$ follows from rearranging the equations;
$(c)$ follows from the fact that\\
$\btheta_{\tau}^{gt\top}
  \big(
  \mathbf{W}_{\tau,N_a}^{\mathrm{im}} \hat{\mathbf{Q}}_{\tau}
  \big)
  \btheta_{\tau}^{\rm gt} 
  = \big(\btheta_{\tau}^{\rm gt \top}
  ( \mathbf{W}_{\tau,N_a}^{\mathrm{im}} \hat{\mathbf{Q}}_{\tau})
  \btheta_{\tau}^{\rm gt} \big)^{\top}
  = \btheta_{\tau}^{\rm gt \top}
  \big(
  \hat{\mathbf{Q}}_{\tau}
  \mathbf{W}_{\tau,N_a}^{\mathrm{im}} 
  \big)
  \btheta_{\tau}^{\rm gt} $.

Since $\lim_{N_a \to \infty} \frac{1}{N_a} \mathbb{E}[ \gamma^{-2} \operatorname{tr}(\mathbf{W}_{\tau,N_a}^{\mathrm{im}} \hat{\mathbf{Q}}_{\tau,N_a})]  =0$,
from the definition of optimal population risk in~\eqref{eq:R}, the  optimal population risk of iMAML is given by
  \begin{subequations}
  \begin{align}\label{eq:biMAML_model_err}
    &\mathcal{R}^{\mathrm{im}}({\btheta_0} )
    \coloneqq 
    \lim_{N_a \to \infty} 
    \mathcal{R}_{N_a}^{\mathrm{im}}({\btheta_0} )
    = \mathbb{E}_{\tau}\big[\|\btheta_0-\btheta^{\mathrm{gt}}_{\tau}\|^2_{\mathbf{W}_{\tau}^{\mathrm{im}}}\big] + 1 \\
    \label{eq:W_tau_bimaml}
    &\mathbf{W}_{\tau}^{\mathrm{im}} =(\gamma ^{-1}\mathbf{Q}_{\tau}+\mathbf{I})^{-1}\mathbf{Q}_{\tau}(\gamma ^{-1}\mathbf{Q}_{\tau}+\mathbf{I})^{-1}
  \end{align}
  \end{subequations}
  whose minimizer is given by
  \begin{align}\label{eq:theta_0_star_biMAML_sln}
  \btheta_{0}^{\mathrm{im}} 
    = \mathop{\arg\min}_{\btheta_0}   \mathcal{R}^{\mathrm{im}}({\btheta_0} )
    =\mathbb{E}_{\tau}\big[\mathbf{W}_{\tau}^{\mathrm{im}}\big]^{-1} \mathbb{E}_{\tau}\big[\mathbf{W}_{\tau}^{\mathrm{im}} {\btheta}_{\tau}^{\mathrm{gt}}\big].
  \end{align}

  It is worth noting that, from Lemma~\ref{lemma:concentration_W_N}, we have the property $\mathbb{E}_{{\mathbf{x}}_{\tau}}[\hat{\mathbf{W}}_{\tau,N}^{\mathrm{im}}] = \mathbf{W}_{\tau,N}^{\mathrm{im}}$, $\lim_{N \to \infty}
 \mathbf{W}_{\tau,N}^{\mathrm{im}} = \mathbf{W}_{\tau}^{\mathrm{im}} $, which will be used in later sections to derive the specific optimal population risk and statistical error.

\subsection{Bayes  model agnostic meta learning method} 
\label{app_sub:bayes_biMAML_method}

For the Bayes  model agnostic meta learning (BaMAML) method, 
instead of obtaining a point estimation of the task-specific parameter, during adaptation, it obtains the posterior distribution or its approximation, given by
\begin{align}\label{eq:theta_tau_baMAML_theta0}
  \hat{p} (\btheta_{\tau} \mid \mathcal{D}_{\tau}, \btheta_0)
  =& \mathop{\arg\min}_{q({\btheta}_{\tau})\in \mathcal{Q}}
    \mathbb{E}_{p(\mathbf{x}_{\tau},y_{\tau}\mid \tau)}\big[
    \mathrm{KL}(q(\btheta_{\tau})\| 
    p(\btheta_{\tau}\mid \mathcal{D}_{\tau}, \btheta_0)) \big]
\end{align}
where $p(\btheta_{\tau}\mid \mathcal{D}_{\tau}, \btheta_0)$ can be computed from Bayes rule via 
\begin{align}\label{eq:posterior_theta_tau}
  p(\btheta_{\tau}\mid \mathcal{D}_{\tau,N}, \btheta_0) \propto p(\mathcal{D}_{\tau,N} \mid \btheta_{\tau}) p(\btheta_{\tau}\mid \btheta_0).
\end{align}
Assuming $y_{\tau}\mid \mathbf{x}_{\tau},\btheta_{\tau} \sim \mathcal{N}(\btheta_{\tau}^{\top}\mathbf{x}_{\tau}, 1)$,
the likelihood $p(\mathcal{D}_{\tau} \mid \btheta_{\tau})$ can be expressed by
\begin{align}\label{eq:likelihood_theta_tau}
  p(\mathcal{D}_{\tau,N} \mid \btheta_{\tau})
  =& \prod_{n=1}^{N} p(y_{\tau,n}\mid \mathbf{x}_{\tau,n},\btheta_{\tau})
  \propto
  \exp\{-\frac{1}{2}\|\mathbf{y}_{\tau,N} - \mathbf{X}_{\tau,N}{\btheta}_{\tau}\|^2\}.
\end{align}
Assuming the prior $\btheta_{\tau}\mid \btheta_{0} \sim \mathcal{N}(\btheta_{0}, \frac{1}{\gamma_b} \mathbf{I}_d)$, with $\gamma_b $ being the prespecified weight of the prior or regularizer. The prior can be expressed by
\begin{align}\label{eq:prior_theta_tau}
  p(\btheta_{\tau}\mid \btheta_{0})
  \propto& \exp\{-\frac{\gamma_b }{2}\|\btheta_{\tau} - \btheta_0\|^2\}.
\end{align}

Combining~\eqref{eq:posterior_theta_tau}-\eqref{eq:prior_theta_tau}, the posterior distribution of the per-task parameter satisfies
\begin{align}\label{eq:theta_tau_posterior}
  p(\btheta_{\tau}\mid \mathcal{D}_{\tau}, \btheta_0) 
  &\propto \exp\{
-\frac{1}{2}\|\mathbf{y}_{\tau, N} - \mathbf{X}_{\tau, N} {\btheta}_{\tau}\|^2
-\frac{\gamma_b }{2} \|{\btheta}_{\tau} - {\btheta}_{0}\|^2\} \\
  &\propto \exp\{-\frac{1}{2}(\btheta_{\tau} - \mu_{\btheta_{\tau},N})^{\top}\Sigma_{\btheta_{\tau},N}^{-1}(\btheta_{\tau} - \mu_{\btheta_{\tau},N})\}= \mathcal{N}(\mu_{\btheta_{\tau},N}, \Sigma_{\btheta_{\tau},N}) \\
  \label{eq:mu_Sigma_theta_tau}
  &\text{with }\Sigma_{\btheta_{\tau},N}=  (\mathbf{X}^{\top}_{\tau,N} \mathbf{X}_{\tau,N} +  \gamma_b \mathbf{I})^{-1},~~
  \mu_{\btheta_{\tau},N}= \Sigma_{\btheta_{\tau},N} ( \mathbf{X}_{\tau,N}^{\top} \mathbf{y}_{\tau,N} + \gamma_b  \btheta_0).
\end{align}

If $p(\btheta_{\tau}\mid \mathcal{D}_{\tau}, \btheta_0) \in \mathcal{Q}$,
then $\hat{p} (\btheta_{\tau} \mid \mathcal{D}_{\tau}, \btheta_0) = p(\btheta_{\tau}\mid \mathcal{D}_{\tau}, \btheta_0)$, which holds in our analysis since $\mathcal{Q}$ is defined to be the set of Gaussian distributions.
The  empirical loss of BaMAML is  
\begin{align}\label{eq:baMAML_emp_risk_app}
  {\mathcal{L}}_{T,N}^{\mathrm{ba}}({\btheta_0} )
  & \coloneqq 
  \frac{1}{T N_2}\sum_{\tau=1}^{T}\Big[ 
  -\int \log p(\mathcal{D}_{\tau}^{\text{val}} \mid \btheta_{\tau})
  \hat{p}^{\mathrm{ba}}(\btheta_{\tau} \mid \mathcal{D}_{\tau}^{\text{trn}}, \btheta_0) d\btheta_{\tau}
   \Big] \nonumber\\
 \mathrm{s.t.} &\quad 
  \hat{p}^{\mathrm{ba}}(\btheta_{\tau} \mid \mathcal{D}_{\tau}^{\text{trn}}, \btheta_0) 
  = \mathop{\arg\min}_{q({\btheta}_{\tau})\in \mathcal{Q}}
    \mathrm{KL}(q(\btheta_{\tau})\| 
    p(\btheta_{\tau}\mid \mathcal{D}_{\tau}^{\text{trn}}, \btheta_0)) 
\end{align}
where $\hat{p}^{\mathrm{ba}}(\btheta_{\tau} \mid \mathcal{D}_{\tau}^{\text{trn}}, \btheta_0) = {p}^{\mathrm{ba}}(\btheta_{\tau} \mid \mathcal{D}_{\tau}^{\text{trn}}, \btheta_0) = \mathcal{N}(\mu_{\btheta_{\tau},N_1}, \Sigma_{\btheta_{\tau},N_1})$ is the solution of the inner problem.
Therefore
\begin{align*}
  {\mathcal{L}}_{T,N}^{\mathrm{ba}}({\btheta_0} ) &=
\frac{1}{T N_2}\sum_{\tau=1}^{T} -\log 
p(\mathcal{D}_{\tau}^{\mathrm{val}} \mid \mathcal{D}_{\tau}^{\mathrm{trn}},\btheta_0) \\
&= - \frac{1}{T N_2}\sum_{\tau=1}^{T} [\log 
p(\mathcal{D}_{\tau}^{\mathrm{val}}, \mathcal{D}_{\tau}^{\mathrm{trn}} \mid \btheta_0) - \log p( \mathcal{D}_{\tau}^{\mathrm{trn}} \mid \btheta_0)] 
\end{align*}
where
\begin{align}\label{eq:derive_L_val_trn_ba}
  &\log p(\mathcal{D}_{\tau}^{\mathrm{val}}, \mathcal{D}_{\tau}^{\mathrm{trn}} \mid \btheta_0) - \log p( \mathcal{D}_{\tau}^{\mathrm{trn}} \mid \btheta_0) 
= \log  {p}(\mathbf{y}^{\mathrm{all}}_{\tau,N} 
  \mid \mathbf{X}^{\mathrm{all}}_{\tau,N}, \btheta_0)
  - \log  {p}(\mathbf{y}^{\mathrm{trn}}_{\tau,N_1} 
  \mid \mathbf{X}^{\mathrm{trn}}_{\tau,N_1}, \btheta_0) \nonumber\\
=&  -\frac{1}{2}\|\mathbf{y}^{\mathrm{all}}_{\tau,N} 
  - \mathbf{X}^{\mathrm{all}}_{\tau,N} \btheta_0\|^2_{\Sigma^{-1}_{y,N}}
  +\frac{1}{2}\|\mathbf{y}^{\mathrm{trn}}_{\tau,N_1} 
  - \mathbf{X}^{\mathrm{trn}}_{\tau,N_1} \btheta_0\|^2_{\Sigma^{-1}_{y,N_1}}
\end{align}
with $\Sigma^{-1}_{y,N} 
= (\mathbf{I}_N + \gamma_b^{-1}\mathbf{X}_{\tau,N}\mathbf{X}^{\top}_{\tau,N})^{-1}$.
The last equation is because $ {p}(\mathbf{y}^{\rm all}_{\tau,N} \mid \mathbf{X}^{\rm all}_{\tau,N}, \btheta_0)$ can be computed by
\begin{align*}\label{eq:theta_0_hat_baMAML_derive}
  & {p}(\mathbf{y}^{\rm all}_{\tau,N} 
  \mid \mathbf{X}^{\rm all}_{\tau,N}, \btheta_0)
  = \int {p}(\mathbf{y}^{\rm all}_{\tau,N} 
  \mid \mathbf{X}^{\rm all}_{\tau,N}, \btheta_{\tau})
  {p}(\btheta_{\tau} \mid \btheta_0) d \btheta_{\tau} \\
  \propto  &
  \int \exp \{-\frac{1}{2 }
  \|\mathbf{y}_{\tau,N}^{\text{all}} - \mathbf{X}_{\tau,N}^{\text{all}}\btheta_{\tau}\|^2 
-\frac{\gamma_b }{2} \|{\btheta}_{\tau} - {\btheta}_{0}\|^2\}d\btheta_{\tau} \\
  =  &
  \int \exp \{-\frac{1}{2}(\btheta_{\tau} - \mu_{\btheta_{\tau}})^{\top}\Sigma_{\btheta_{\tau}}^{-1}
  (\btheta_{\tau} - \mu_{\btheta_{\tau}})
  + \frac{1}{2}\mu_{\btheta_{\tau}}^{\top}\Sigma_{\btheta_{\tau}}^{-1}\mu_{\btheta_{\tau}}
 - \frac{\gamma_b }{2}{\btheta}_{0}^{\top}{\btheta}_{0}
 -\frac{1}{2} 
 \mathbf{y}_{\tau,N}^{\mathrm{all}\top}
 \mathbf{y}_{\tau,N}^{\mathrm{all}}\}
 d\btheta_{\tau} \\
\propto & \exp\{ -\frac{1}{2}
  ( - \mu_{\btheta_{\tau}}^{\top}\Sigma_{\btheta_{\tau}}^{-1}\mu_{\btheta_{\tau}} + {\gamma_b }{\btheta}_{0}^{\top}{\btheta}_{0}
  +
 \mathbf{y}_{\tau,N}^{\mathrm{all}\top}
 \mathbf{y}_{\tau,N}^{\mathrm{all}}
 )\}
 =  \exp\{ -\frac{1}{2}
 \|\mathbf{y}^{\mathrm{all}}_{\tau,N} 
  - \mathbf{X}^{\mathrm{all}}_{\tau,N} \btheta_0\|^2_{\Sigma^{-1}_{y,N}} \}
\numberthis
\end{align*}
where the last equation follows from the Binomial inverse theorem.
Similarly, $ {p}(\mathbf{y}^{\rm trn}_{\tau,N} \mid \mathbf{X}^{\rm trn}_{\tau,N}, \btheta_0) 
\propto \exp\{ -\frac{1}{2}
\|\mathbf{y}^{\mathrm{trn}}_{\tau,N_1} 
 - \mathbf{X}^{\mathrm{trn}}_{\tau,N_1} \btheta_0\|^2_{\Sigma^{-1}_{y,N_1}} \}$. 

To solve for $\hat{\btheta}_0^{\mathrm{ba}}$, using the optimality condition, we obtain
\begin{subequations}
\begin{align}\label{eq:theta_0_hat_baMAML_sln1}
  &\hat{\btheta}_{0}^{\mathrm{ba}} 
  = \Big(\sum_{\tau=1}^{T}\hat{\mathbf{W}}_{\tau, N}^{\mathrm{ba}}\Big)^{-1}
   \Big(\sum_{\tau=1}^{T}
   \hat{\mathbf{W}}_{\tau, N}^{\mathrm{ba}} \btheta_{\tau}^{\mathrm{gt}} \Big)+ \Delta_T^{\rm ba}\\
  &\Delta_T^{\rm ba}
  = \Big(\sum_{\tau=1}^{T}\hat{\mathbf{W}}_{\tau, N}^{\mathrm{ba}}\Big)^{-1}
  \frac{1}{N_2} \Big(\sum_{\tau=1}^{T}\mathbf{X}_{\tau,N}^{\text{all}\top}\Sigma_{y,N}^{-1} \mathbf{e}_{\tau,N}^{\text{all}} - \mathbf{X}_{\tau,N_1}^{\text{trn}\top}\Sigma_{y,N_1}^{-1} \mathbf{e}_{\tau,N_1}^{\text{trn}}
   \Big) \\
   \label{eq:W_hat_ba1}
  &\hat{\mathbf{W}}_{\tau, N}^{\mathrm{ba}} = 
 \Big((\frac{\gamma_b}{N})^{-1}\hat{\mathbf{Q}}_{\tau,N} + \mathbf{I}\Big)^{-1}
\hat{\mathbf{Q}}_{\tau,N_2}
\Big((\frac{\gamma_b}{N_1})^{-1}\hat{\mathbf{Q}}_{\tau,N_1} + \mathbf{I}\Big)^{-1}\\
&\quad\quad~\, 
= \Big(({\gamma s})^{-1}\hat{\mathbf{Q}}_{\tau,N} + \mathbf{I}\Big)^{-1}
\hat{\mathbf{Q}}_{\tau,N_2}
\Big( {\gamma}^{-1}\hat{\mathbf{Q}}_{\tau,N_1} + \mathbf{I}\Big)^{-1}
\end{align}
\end{subequations}
where the last equation is because we choose $\gamma_b = N_1 \gamma $ for a fair comparison with iMAML.

Based on~\eqref{eq:R}, the BaMAML meta-test risk is defined as 
\begin{subequations}
\begin{align}\label{eq:bamaml_pop_risk1}
  &\mathcal{R}_{N_a}^{\mathrm{ba}}({\btheta_0} )
  = 
  \frac{1}{N}\mathbb{E}\big[-\log  {p}(\mathbf{y}_{\tau,N} 
  \mid \mathbf{X}_{\tau,N}, \mathcal{D}_{\tau, N_a}, \btheta_0) \big] \nonumber \\
  =&
  \mathbb{E}_{\tau}\Big[\|\btheta_0-\btheta^{\mathrm{gt}}_{\tau}\|^2_{\mathbf{W}_{\tau,N_a}^{\mathrm{ba}}}\Big] 
  + 1 
  + \frac{1}{N_a}\mathbb{E}\big[\mathrm{tr}\big((\mathbf{I}_d + (\gamma s)^{-1}\hat{\mathbf{Q}}_{\tau,N+N_a})^{-1} 
  - (\mathbf{I}_d + \gamma ^{-1}\hat{\mathbf{Q}}_{\tau,N_a})^{-1}
   \big)\big] \\
&\mathbf{W}_{\tau,N_a}^{\mathrm{ba}} 
\label{eq:W_tau_N_ba_1}
= \mathbb{E}_{\mathbf{x}_{\tau}}
\big[ \frac{\gamma_b}{N} [(\mathbf{I}_{d}+\gamma_b^{-1}N_a \hat{\mathbf{Q}}_{\tau,N_a})^{-1} 
- (\mathbf{I}_{d}+\gamma_b^{-1}(N+N_a) \hat{\mathbf{Q}}_{\tau,N+N_a})^{-1}]\big] \nonumber \\
&\quad\quad~~~ 
=\mathbb{E}_{\mathbf{x}_{\tau}} \big[\big((\gamma s)^{-1}\hat{\mathbf{Q}}_{\tau,N+N_a} + \mathbf{I}\big)^{-1}
\hat{\mathbf{Q}}_{\tau,N}
\big(\gamma^{-1}\hat{\mathbf{Q}}_{\tau,N_a} + \mathbf{I}\big)^{-1} \big]
\end{align}
\end{subequations}

Taking limits of $\mathcal{R}_{N_a}^{\mathrm{ba}}$ w.r.t. $N_a$  further leads to 
\begin{align}
\mathcal{R}^{\mathrm{ba}}({\btheta_0} ) \coloneqq
\lim_{N_a \to \infty}  \mathcal{R}_{N_a}^{\mathrm{ba}}({\btheta_0} )
  = & 
  \lim_{N_a \to \infty}
  \mathbb{E}_{\tau}\big[\|\btheta_0-\btheta^{\mathrm{gt}}_{\tau}\|^2_{\mathbf{W}_{\tau,N_a}^{\mathrm{ba}}}\big] 
  + 1 
  = 
  \mathbb{E}_{\tau}\big[\|\btheta_0-\btheta^{\mathrm{gt}}_{\tau}\|^2_{\mathbf{W}_{\tau}^{\mathrm{ba}}}\big] 
  + 1 
\end{align}

$\btheta_{0}^{\mathrm{ba}}$ is defined to be the minimizer of $\mathcal{R}^{\mathrm{ba}}({\btheta_0} )$, given by
\begin{subequations}
\begin{align}\label{eq:theta_0_star_baMAML_sln}
  &\btheta_{0}^{\mathrm{ba}} 
  = \mathop{\arg\min}_{\btheta_0} 
  \mathcal{R}^{\mathrm{ba}}({\btheta_0} )
  = \mathop{\arg\min}_{\btheta_0} 
  \mathbb{E}_{\tau}\big[\|\btheta_0-\btheta^{\mathrm{gt}}_{\tau}\|^2_{\mathbf{W}_{\tau}^{\mathrm{ba}}}\big]
  =\mathbb{E}_{\tau}\big[\mathbf{W}_{\tau}^{\mathrm{ba}}\big]^{-1} \mathbb{E}_{\tau}\big[\mathbf{W}_{\tau}^{\mathrm{ba}} {\btheta}_{\tau}^{\mathrm{gt}}\big], \\
  &\text{with }
  \mathbf{W}_{\tau}^{\mathrm{ba}} 
  = 
  \big(({\gamma s})^{-1}{\mathbf{Q}}_{\tau} + \mathbf{I}\big)^{-1} {\mathbf{Q}}_{\tau}
  \big({\gamma}^{-1}{\mathbf{Q}}_{\tau} + \mathbf{I}\big)^{-1}.
\end{align}
\end{subequations}

It is worth noting that, from Lemma~\ref{lemma:concentration_W_N}, we have the property $\mathbb{E}_{{\mathbf{x}}_{\tau}}[\hat{\mathbf{W}}_{\tau,N}^{\mathrm{ba}}] = \mathbf{W}_{\tau,N}^{\mathrm{ba}}$, $\lim_{N \to \infty}
\mathbf{W}_{\tau,N}^{\mathrm{ba}} = \mathbf{W}_{\tau}^{\mathrm{ba}} $, which will be used in later sections to derive the specific optimal population risk and statistical error.

The above discussion provides proof for Proposition~\ref{prop:solutions}.
Next we analyze the optimal population risk and the statistical error based on the solutions.

\section{Optimal population risk and statistical error analysis} 
\label{app_sec:model_stats_error}

\paragraph{Meta-test risk decomposition.}
Recall the meta-test risk function for the method ${\cal A}$ of $\btheta_0$ in \eqref{eq:R}. 
For method $\mathcal{A}$, 
{plugging $\hat{\btheta}_0^{\mathcal{A}}$ into \eqref{eq:R} and taking the limit over $N_a$, the number of data during adaptation, we have}

\begin{align*}\label{eq:meta_test_risk_decompose}
&
 \mathcal{R}_{}^{\mathcal{A}}(\hat{\btheta}_0^{\mathcal{A}})=
\mathbb{E}_{\tau}\left[\|\hat{\btheta}_0^{\mathcal{A}}-\btheta^{\mathrm{gt}}_{\tau}\|^2_{\mathbf{W}_{\tau}^{\mathcal{A}}}\right] + 1 
{\stackrel{{(a)}}{=}}
\mathbb{E}_{\tau}\left[\|\hat{\btheta}_0^{\mathcal{A}}- {\btheta}_0^{\mathcal{A}} +{\btheta}_0^{\mathcal{A}} -\btheta^{\mathrm{gt}}_{\tau}\|^2_{\mathbf{W}_{\tau}^{\mathcal{A}}}\right] + 1 \\
=&
\mathbb{E}_{\tau}\left[\|\hat{\btheta}_0^{\mathcal{A}}- {\btheta}_0^{\mathcal{A}}\|^2_{\mathbf{W}_{\tau}^{\mathcal{A}}}
+\|{\btheta}_0^{\mathcal{A}} -\btheta^{\mathrm{gt}}_{\tau}\|^2_{\mathbf{W}_{\tau}^{\mathcal{A}}}
+ 2(\hat{\btheta}_0^{\mathcal{A}}- {\btheta}_0^{\mathcal{A}})^{\top}
\mathbf{W}_{\tau}^{\mathcal{A}}
({\btheta}_0^{\mathcal{A}} -\btheta^{\mathrm{gt}}_{\tau})
\right] + 1 \\
{\stackrel{(b)}{=}}&
\mathbb{E}_{\tau}\left[\|\hat{\btheta}_0^{\mathcal{A}}- {\btheta}_0^{\mathcal{A}}\|^2_{\mathbf{W}_{\tau}^{\mathcal{A}}}
+\|{\btheta}_0^{\mathcal{A}} -\btheta^{\mathrm{gt}}_{\tau}\|^2_{\mathbf{W}_{\tau}^{\mathcal{A}}}
+ 2(\hat{\btheta}_0^{\mathcal{A}}- {\btheta}_0^{\mathcal{A}})^{\top}
\mathbf{W}_{\tau}^{\mathcal{A}}
(\mathbb{E}_{\tau}\left[\mathbf{W}_{\tau}^{\mathcal{A}}\right]^{-1} \mathbb{E}_{\tau}\left[\mathbf{W}_{\tau}^{\mathcal{A}} {\btheta}_{\tau}^{\mathrm{gt}}\right] -\btheta^{\mathrm{gt}}_{\tau})
\right] + 1 \\
{\stackrel{(c)}{=}}&
\mathbb{E}_{\tau}\left[\|\hat{\btheta}_0^{\mathcal{A}}- {\btheta}_0^{\mathcal{A}}\|^2_{\mathbf{W}_{\tau}^{\mathcal{A}}}
+\|{\btheta}_0^{\mathcal{A}} -\btheta^{\mathrm{gt}}_{\tau}\|^2_{\mathbf{W}_{\tau}^{\mathcal{A}}}
\right] + 1\numberthis
\end{align*}
where
{$(a)$ follows from
 $\mathbf{W}_{\tau}^{\mathcal{A}} = \lim_{N_a\to \infty} \mathbf{W}_{\tau, N_a}^{\mathcal{A}}$},
$(b)$ is by plugging 
$\btheta_0^{\mathcal{A}} = 
\mathbb{E}_{\tau}\big[\mathbf{W}_{\tau}^{\mathcal{A}}\big]^{-1} \mathbb{E}_{\tau}\big[\mathbf{W}_{\tau}^{\mathcal{A}} {\btheta}_{\tau}^{\mathrm{gt}}\big]$,
$(c)$ is because
$\mathbb{E}_{\tau}[\mathbf{W}_{\tau}^{\mathcal{A}}
(\mathbb{E}_{\tau}\big[\mathbf{W}_{\tau}^{\mathcal{A}}\big]^{-1} \mathbb{E}_{\tau}\big[\mathbf{W}_{\tau}^{\mathcal{A}} {\btheta}_{\tau}^{\mathrm{gt}}\big] -\btheta^{\mathrm{gt}}_{\tau})]
=\mathbb{E}_{\tau}\big[\mathbf{W}_{\tau}^{\mathcal{A}} {\btheta}_{\tau}^{\mathrm{gt}}\big]
-\mathbb{E}_{\tau}\big[\mathbf{W}_{\tau}^{\mathcal{A}} {\btheta}_{\tau}^{\mathrm{gt}}\big] =0$.
From~\eqref{eq:meta_test_risk_decompose} 
and the definition of $\mathcal{R}_{N_a}^{\mathcal{A}}(\cdot)$ in~\eqref{eq:R},
we can decompose the meta-test risk to the  optimal population risk  and statistical error as follows
\begin{equation}\label{eq:meta_test_risk_decompose_final_app}
  \lim_{N_a\to \infty} \mathcal{R}_{N_a}^{\mathcal{A}}(\hat{\btheta}_0^{\mathcal{A}})= 
  \underbracket{\lim_{N_a\to \infty} \mathcal{R}_{N_a}^{\mathcal{A}}({\btheta}_0^{\mathcal{A}})}_{\text{optimal population risk}}
  +
  \underbracket{\|\hat{\btheta}_0^{\mathcal{A}}- {\btheta}_0^{\mathcal{A}}\|^2_{\mathbb{E}_{\tau}[\mathbf{W}_{\tau}^{\mathcal{A}}]}}_{\text{statistical error}~\mathcal{E}^{2}_{\mathcal{A}} (\hat{\btheta}_0^{\mathcal{A}})}
.
\end{equation}
This completes the proof of Proposition~3 in the main paper.
Note that, the statistical error 
$\mathcal{E}^{2}_{\mathcal{A}} (\hat{\btheta}_0^{\mathcal{A}})$ is resulted from finite random data samples during meta-training to obtain the estimation of the parameter $\hat{\btheta}_0^{\mathcal{A}}$, but not from $N_a$ in~\eqref{eq:meta_test_risk_decompose_final_app}, which is the number of adaptation data during meta-testing.

\subsection{Optimal population risk} 
\label{app_sub:model_error}

The optimal population risk under different methods is given by
$\mathcal{R}^{{\cal A}}(\btheta_0^{{\cal A}}) = \min_{\btheta_0}\mathcal{R}^{{\cal A}}(\btheta_0)$.
Based on the results in Section~\ref{app_sec:methods_formulation_sln}, we compute the optimal population risk of each method.

For \textbf{ERM}, the optimal population risk is computed by
\begin{align}\label{eq:erm_model_error}
  \mathcal{R}^{\mathrm{er}}(\btheta_0^{\mathrm{er}} )
  = \mathbb{E}_{\tau}\big[\|\btheta_0^{\mathrm{er}}-\btheta^{\mathrm{gt}}_{\tau}\|^2_{\mathbf{W}^{\mathrm{er}}_{\tau}}\big] + 1.
\end{align}

For \textbf{MAML}, the optimal population risk is computed by
\begin{align}\label{eq:maml_model_error}
  \mathcal{R}^{\mathrm{ma}}(\btheta_0^{\mathrm{ma}} , \alpha)
  = \mathbb{E}_{\tau}\big[\|\btheta_0^{\mathrm{ma}}-\btheta^{\mathrm{gt}}_{\tau}\|^2_{\mathbf{W}^{\mathrm{ma}}_{\tau}(\alpha)}\big] + 1.
\end{align}
Note that when $\alpha   = 0$, 
 $\mathcal{R}^{\mathrm{ma}}(\btheta_0^{\mathrm{ma}}, \alpha ) = \mathcal{R}^{\mathrm{er}}(\btheta_0^{\mathrm{er}})$.

\textbf{Comparison of ERM and MAML optimal population risk.}
To compare the optimal population risk of ERM and MAML, 
as shown in~\citep{gao2020_model_opt_tradeoff_ml}, when 
$\|\mathbf{Q}_{\tau}\| \leq \bar{\lambda},
0< \alpha \leq 1/\bar{\lambda}$,
$\mathcal{R}^{\mathrm{ma}}(\btheta_0^{\mathrm{ma}}, \alpha)$
is monotonically decreasing.
Therefore 
$\mathcal{R}^{\mathrm{ma}}(\btheta_0^{\mathrm{ma}},\alpha) < \mathcal{R}^{\mathrm{er}}(\btheta_0^{\mathrm{er}})$
when $ 0 < \alpha \leq  1/ \bar{\lambda}$.

For \textbf{iMAML}, the optimal population risk is computed by
\begin{align}\label{eq:bimaml_model_error}
  \mathcal{R}^{\mathrm{im}}(\btheta_0^{\mathrm{im}},\gamma )
  = \mathbb{E}_{\tau}\big[\|\btheta_0^{\mathrm{im}}-\btheta^{\mathrm{gt}}_{\tau}\|^2_{\mathbf{W}^{\mathrm{im}}_{\tau}(\gamma)}\big] + 1.
\end{align}

\textbf{Comparison of MAML and iMAML optimal population risk.}
We can see that as $\gamma  \to \infty, \mathbf{W}_{\tau}^{\mathrm{im}}(\gamma) \to \mathbf{W}_{\tau}^{\mathrm{er}}=  \mathbf{Q}_{\tau}, \mathcal{R}^{\mathrm{im}}({\btheta_0},\gamma) \to \mathcal{R}^{\mathrm{er}}({\btheta_0})$; as 
$\gamma  \to 0, \mathbf{W}_{\tau}^{\mathrm{im}}(\gamma) \to 0, \mathcal{R}^{\mathrm{im}}({\btheta_0}) \to 1$.
To explicitly compare the optimal population risk of MAML and iMAML, 
we will show next that when $\gamma $ takes certain values, the corresponding risks satisfy $\mathcal{R}^{\mathrm{im}}({\btheta_0^{\mathrm{im}}},\gamma) < \mathcal{R}^{\mathrm{ma}}({\btheta_0^{\mathrm{ma}}}, \alpha)$.

\begin{corollary}
\label{crlr:assmp1_2}
Based on Assumption~\ref{asmp:bounded_data_matrix_eigenvalues},
for any $ \tau \sim p(\mathcal{T}) $, and $0< \alpha < 1/\bar{\lambda}$, $\|\mathbf{W}^{\mathrm{ma}}_{\tau}\| >0$.
And generally in a typical multi-task learning setting, the tasks are not all identical, therefore
there exist $ \tau \sim p(\mathcal{T}), \tau \in T, \btheta_{0}^{{\cal A}} - \btheta_{\tau}^{\mathrm{gt}}\neq \mathbf{0}$.
Therefore there exists   $\tau \in T $ such that $(\btheta_0^{\mathrm{ma}}-\btheta^{\mathrm{gt}}_{\tau})^{\top}{\mathbf{W}_{\tau}^{\mathrm{ma}}} 
  (\btheta_0^{\mathrm{ma}}-\btheta^{\mathrm{gt}}_{\tau}) > 0$.
\end{corollary}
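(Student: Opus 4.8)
The plan is to split the corollary into its two ingredients --- strict positive definiteness of $\mathbf{W}_{\tau}^{\mathrm{ma}}$, and non-degeneracy of the task distribution --- and then combine them. First I would establish that $\mathbf{W}_{\tau}^{\mathrm{ma}} \succ 0$ for every $\tau$ whenever $0<\alpha<1/\bar\lambda$. Recall from Table~\ref{tab:weight_matrices} that $\mathbf{W}_{\tau}^{\mathrm{ma}} = (\mathbf{I}-\alpha\mathbf{Q}_{\tau})\mathbf{Q}_{\tau}(\mathbf{I}-\alpha\mathbf{Q}_{\tau})$. Since $\mathbf{Q}_{\tau}$ is symmetric, $\mathbf{I}-\alpha\mathbf{Q}_{\tau}$ and $\mathbf{Q}_{\tau}$ are simultaneously diagonalizable, so the eigenvalues of $\mathbf{W}_{\tau}^{\mathrm{ma}}$ are exactly $\{(1-\alpha\lambda)^2\lambda\}$ as $\lambda$ ranges over the eigenvalues of $\mathbf{Q}_{\tau}$. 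By Assumption~\ref{asmp:bounded_data_matrix_eigenvalues}, $\ushort\lambda\le\lambda\le\bar\lambda$, and the hypothesis $\alpha<1/\bar\lambda$ gives $\alpha\lambda\le\alpha\bar\lambda<1$, hence $(1-\alpha\lambda)^2>0$; combined with $\lambda\ge\ushort\lambda>0$ this shows every eigenvalue of $\mathbf{W}_{\tau}^{\mathrm{ma}}$ is at least $\ushort\lambda(1-\alpha\bar\lambda)^2>0$, so $\mathbf{W}_{\tau}^{\mathrm{ma}}\succ 0$ and in particular $\|\mathbf{W}_{\tau}^{\mathrm{ma}}\|>0$.

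Second, I would argue the non-degeneracy claim. Using the closed form $\btheta_0^{\mathrm{ma}} = \mathbb{E}_{\tau}[\mathbf{W}_{\tau}^{\mathrm{ma}}]^{-1}\mathbb{E}_{\tau}[\mathbf{W}_{\tau}^{\mathrm{ma}}\btheta_{\tau}^{\mathrm{gt}}]$ from~\eqref{eq:theta_0_star_A_sln}, the optimal initialization is a (matrix-weighted) convex-type average of the $\btheta_{\tau}^{\mathrm{gt}}$'s with strictly positive-definite weights; such an average cannot coincide simultaneously with all of the $\btheta_{\tau}^{\mathrm{gt}}$'s unless they are all identical. Hence, in any multi-task setting where the ground-truth parameters are not all equal --- which holds with probability one under the data-generation model~\eqref{eq:linear_data_generate} whenever the task-parameter distribution is non-degenerate --- there exists at least one $\tau$ with $\btheta_0^{\mathrm{ma}}-\btheta_{\tau}^{\mathrm{gt}}\neq\mathbf{0}$.

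Finally I would combine the two: for the task $\tau$ from the second step, set $\mathbf{v}:=\btheta_0^{\mathrm{ma}}-\btheta_{\tau}^{\mathrm{gt}}\neq\mathbf{0}$; by the first step $\mathbf{W}_{\tau}^{\mathrm{ma}}\succ 0$, so $(\btheta_0^{\mathrm{ma}}-\btheta_{\tau}^{\mathrm{gt}})^{\top}\mathbf{W}_{\tau}^{\mathrm{ma}}(\btheta_0^{\mathrm{ma}}-\btheta_{\tau}^{\mathrm{gt}})=\mathbf{v}^{\top}\mathbf{W}_{\tau}^{\mathrm{ma}}\mathbf{v}\ge\ushort\lambda(1-\alpha\bar\lambda)^2\|\mathbf{v}\|^2>0$, which is the assertion. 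I do not expect any real obstacle here: the computation is mechanical, and the only genuinely non-deductive point is the ``tasks are not all identical'' statement in the second step, which should be presented as a mild standing assumption on the task distribution rather than derived, exactly as the corollary's wording ("generally in a typical multi-task learning setting") already suggests.
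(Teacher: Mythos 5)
Your proposal is correct and follows essentially the same route the paper intends: the corollary is stated in the paper without a separate proof, being treated as immediate from Assumption~\ref{asmp:bounded_data_matrix_eigenvalues} (which with $0<\alpha<1/\bar{\lambda}$ makes $\mathbf{W}_{\tau}^{\mathrm{ma}}=(\mathbf{I}-\alpha\mathbf{Q}_{\tau})\mathbf{Q}_{\tau}(\mathbf{I}-\alpha\mathbf{Q}_{\tau})$ positive definite) together with the standing ``tasks are not all identical'' assumption. Your spectral computation $(1-\alpha\lambda)^{2}\lambda\geq\ushort{\lambda}(1-\alpha\bar{\lambda})^{2}>0$ in fact makes explicit the positive-definiteness step that the paper leaves implicit (it only asserts $\|\mathbf{W}_{\tau}^{\mathrm{ma}}\|>0$), and your handling of the non-degeneracy point as a mild assumption matches the paper's wording.
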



First we show that the minimum value of the MAML population risk is larger than $1$, i.e., $\mathcal{R}^{\mathrm{ma}}({\btheta_0^{\mathrm{ma}}},\alpha) > 1$.
According to Corollary~\ref{crlr:assmp1_2}, it is apparent that
\begin{align}\label{eq:bimaml_risk_bigger0}
  \mathbb{E}_{\tau}\big[
  \|\btheta_0^{\mathrm{ma}}-\btheta^{\mathrm{gt}}_{\tau}\|^2_{\mathbf{W}_{\tau}^{\mathrm{ma}}} \big]
  &= \mathbb{E}_{\tau}\big[
  (\btheta_0^{\mathrm{ma}}-\btheta^{\mathrm{gt}}_{\tau})^{\top}{\mathbf{W}_{\tau}^{\mathrm{ma}}} 
  (\btheta_0^{\mathrm{ma}}-\btheta^{\mathrm{gt}}_{\tau})
  \big] > 0.
\end{align}
Therefore, we have
\begin{align}\label{eq:maml_pop_risk_larger_c}
\mathcal{R}^{\mathrm{ma}}({\btheta_0^{\mathrm{ma}}},\alpha) = \mathbb{E}_{\tau}\big[
  \|\btheta_0^{\mathrm{ma}}-\btheta^{\mathrm{gt}}_{\tau}\|^2_{\mathbf{W}_{\tau}^{\mathrm{ma}}} \big] + 1 > 1  .
\end{align}

Note that $\mathcal{R}^{\mathrm{ma}}({\btheta_0^{\mathrm{ma}}},\alpha)$ also depends on $\alpha  $, 
Let $r^{\mathrm{ma}} = \min_{\alpha  } \mathcal{R}^{\mathrm{ma}}({\btheta_0^{\mathrm{ma}}}, \alpha  ) - 1$.
From \eqref{eq:maml_pop_risk_larger_c} we know that $r^{\mathrm{ma}} > 0$.
We will then show one can always find certain $\gamma $ such that 
\begin{align}\label{eq:bimaml_maml_pop_risk_ineq}
  \mathcal{R}^{\mathrm{im}}({\btheta_0^{\mathrm{im}}},\gamma ) < \min_{\alpha  } \mathcal{R}^{\mathrm{ma}}({\btheta_0^{\mathrm{ma}}}, \alpha  ) = r^{\mathrm{ma}}+1
\end{align}
or equivalently 
  $\mathbb{E}_{\tau}\big[
   \|\btheta_0^{\mathrm{im}}-\btheta^{\mathrm{gt}}_{\tau}\|^2_{\mathbf{W}_{\tau}^{\mathrm{im}}} \big] < r^{\mathrm{ma}}$.

From Assumption~1, bounded eigenvalues of per-task data matrix, we can derive 
\begin{align}
  \|(\gamma ^{-1} \mathbf{Q}_{\tau} + \mathbf{I})^{-1}\| 
= 1/\lambda_{\text{min}}(\gamma ^{-1} \mathbf{Q}_{\tau} + \mathbf{I})
= 1/(\gamma ^{-1} \lambda_{\text{min}}(\mathbf{Q}_{\tau}) + 1)
\leq \frac{1}{\gamma ^{-1} \ushort{\lambda} +1}  
\end{align}
from which we can bound the optimal population risk of iMAML by 
\begin{align*}
&\mathbb{E}_{\tau}\big[
  \|\btheta_0^{\mathrm{im}}-\btheta^{\mathrm{gt}}_{\tau}\|^2_{\mathbf{W}_{\tau}^{\mathrm{im}}} \big] 
  \leq  \mathbb{E}_{\tau} [\|\btheta_0^{\mathrm{im}}-\btheta^{\mathrm{gt}}_{\tau}\|^2] \mathrm{sup}_{\tau} \|\mathbf{W}_{\tau}^{\mathrm{im}}\|
\end{align*}
where we have discussed the bound for $\mathrm{sup}_{\tau} \|\mathbf{W}_{\tau}^{\mathrm{im}}\|$ based on Assumption~1. Thus it suffices to bound 
$\mathbb{E}_{\tau} [\|\btheta_0^{\mathrm{im}}-\btheta^{\mathrm{gt}}_{\tau}\|^2]$,  as follows
\begin{align*}
  &\mathbb{E}_{\tau} [\|\btheta_0^{\mathrm{im}}-\btheta^{\mathrm{gt}}_{\tau}\|^2]
  = \|\btheta_0^{\mathrm{im}}\|^2 - 2 \btheta_0^{\mathrm{im}\top} \mathbb{E}_{\tau} [\btheta_{\tau}^{\mathrm{gt}}] + \mathrm{tr}(\mathrm{Cov}_{\tau}[\btheta_{\tau}^{\mathrm{gt}}]) + \|\mathbb{E}_{\tau} [\btheta_{\tau}^{\mathrm{gt}}]\|^2 \\
  \leq & \|\btheta_0^{\mathrm{im}}\|^2 + 2 \|\btheta_0^{\mathrm{im}} \| M + \mathrm{tr}(\mathrm{Cov}_{\tau}[\btheta_{\tau}^{\mathrm{gt}}]) + M^2 
  \leq  \|\btheta_0^{\mathrm{im}}\|^2 + 2 \|\btheta_0^{\mathrm{im}} \| M + \mathrm{tr}(\mathrm{Cov}_{\tau}[\btheta_{\tau}^{\mathrm{gt}}]) + M^2 \\
  \leq & (M + \|\btheta_0^{\mathrm{im}} \|)^2 + \frac{R^2}{d} \cdot d 
  = (M + \|\btheta_0^{\mathrm{im}} \|)^2 + {R^2}
\end{align*}
where the last inequality follows from Assumption~2 that the task parameter distribution is sub-gaussian.
Similarly $\mathbb{E}_{\tau} [\|\btheta_0^{\mathrm{im}}-\btheta^{\mathrm{gt}}_{\tau}\|^2] \leq (M + \|\btheta_0^{\mathrm{ma}} \|)^2 + {R^2}$.
 Therefore
\begin{align*}\label{eq:bound_bimaml_risk}
&\mathbb{E}_{\tau}\Big[
  \|\btheta_0^{\mathrm{im}}-\btheta^{\mathrm{gt}}_{\tau}\|^2_{\mathbf{W}_{\tau}^{\mathrm{im}}} \Big] 
  \leq  \mathbb{E}_{\tau} [\|\btheta_0^{\mathrm{im}}-\btheta^{\mathrm{gt}}_{\tau}\|^2] \mathrm{sup}_{\tau} \|\mathbf{W}_{\tau}^{\mathrm{im}}\| \\
  \leq &  \big((M + \|\btheta_0^{\mathrm{im}} \|)^2 + {R^2}\big)
  \mathbb{E}_{\tau}\Big[\|
   (\gamma ^{-1} \mathbf{Q}_{\tau} + \mathbf{I})^{-1} 
  \mathbf{Q}_{\tau}
  (\gamma ^{-1} \mathbf{Q}_{\tau} + \mathbf{I})^{-1}\| \Big] \\
  \leq &  \big((M + \|\btheta_0^{\mathrm{im}} \|)^2 + {R^2}\big)
  \mathbb{E}_{\tau}\Big[\|
  (\gamma ^{-1} \mathbf{Q}_{\tau} + \mathbf{I})^{-1} \|
  \|\mathbf{Q}_{\tau}\|
  \|(\gamma ^{-1} \mathbf{Q}_{\tau} + \mathbf{I})^{-1}\| \Big] \\
  \stackrel{{(a)}}{\leq}  
  &\frac{  \big((M + \|\btheta_0^{\mathrm{im}} \|)^2 + {R^2}\big) \bar{\lambda}}{(\gamma ^{-1} \ushort{\lambda} +1)^2}
  \numberthis
\end{align*}
where $(a)$ holds because $\|\mathbf{Q}_{\tau}\| \leq \bar{\lambda}$,
$\|(\gamma ^{-1} \mathbf{Q}_{\tau} + \mathbf{I})^{-1}\| 
\leq (\gamma ^{-1} \ushort{\lambda} +1)^{-1}$ from Assumption~1.

Let $C_{\btheta} = \max\{ \big((M + \|\btheta_0^{\mathrm{im}} \|)^2 + {R^2}\big)^{\frac{1}{2}}, \big((M + \|\btheta_0^{\mathrm{ma}} \|)^2 + {R^2}\big)^{\frac{1}{2}}\}$.
In order to ensure $\big((M + \|\btheta_0^{\mathrm{im}} \|)^2 + {R^2}\big) \bar{\lambda} \frac{1}{(\gamma ^{-1} \ushort{\lambda} +1)^2} \leq  C_{\btheta}^2 \bar{\lambda} \frac{1}{(\gamma ^{-1} \ushort{\lambda} +1)^2} < r^{\mathrm{ma}}$ 
it suffices to ensure
\begin{align}\label{eq:gamma_tau_R_ma_bi_ineq}
\gamma ^{-1} \ushort{\lambda} +1 > (r^{\mathrm{ma}})^{-\frac{1}{2}}   C_{\btheta} \bar{\lambda}^{\frac{1}{2}}. 
\end{align}
Since 
$0 < r^{\mathrm{ma}} = \mathbb{E}_{\tau}\big[
  \|\btheta_0^{\mathrm{ma}}-\btheta^{\mathrm{gt}}_{\tau}\|^2_{\mathbf{W}_{\tau}^{\mathrm{ma}}} \big] 
  \leq  C_{\btheta}^2 \mathbb{E}_{\tau}[\|\mathbf{W}_{\tau}^{\mathrm{ma}}\|]
  < C_{\btheta}^2 \bar{\lambda}$.
It follows that
\begin{align}\label{eq:R_ma_1_ineq}
  (r^{\mathrm{ma}})^{-\frac{1}{2}}  C_{\btheta}\bar{\lambda}^{\frac{1}{2}} - 1 > 0.
\end{align}
Then from \eqref{eq:gamma_tau_R_ma_bi_ineq} and \eqref{eq:R_ma_1_ineq} one can derive
\begin{align}\label{eq:gamma_tau_range}
0<\gamma  < \big((r^{\mathrm{ma}})^{-\frac{1}{2}}  C_{\btheta}\bar{\lambda}^{\frac{1}{2}} - 1 \big)^{-1}\ushort{\lambda}.  
\end{align}
In other words, by choosing \eqref{eq:gamma_tau_range}, we have $\mathcal{R}^{\mathrm{im}}(\btheta_0^{\mathrm{im}},\gamma ) < \mathcal{R}^{\mathrm{ma}}(\btheta_0^{\mathrm{ma}},\alpha  ) 
< \mathcal{R}^{\mathrm{er}}(\btheta_0^{\mathrm{er}}), \forall 0<\alpha   \leq 1/\bar{\lambda}$.
We summarize this conclusion in Theorem~\ref{thm:lower_risk_bi_ma} below.

\begin{theorem}[iMAML has lower optimal population risk than MAML]\label{thm:lower_risk_bi_ma}
Under Assumptions~1-2,
for meta-test task $\tau$ and arbitrary ${\btheta}$, 
the population risks for MAML and iMAML, as functions of ${\btheta}$, are
$$
\mathcal{R}^{\mathrm{ma}}({\btheta},\alpha) \equiv \mathbb{E}_{\tau}\big[\|{\btheta}-{\btheta}_{\tau}\|^{2}_{\mathbf{W}_{\tau}^{\mathrm{ma}}(\alpha)}\big] + 1 
\quad \text { and } \quad 
\mathcal{R}^{\mathrm{im}}({\btheta}, \gamma) \equiv \mathbb{E}_{\tau}\big[\|{\btheta}-{\btheta}_{\tau}\|^{2}_{\mathbf{W}_{\tau}^{\mathrm{im}}(\gamma)}\big] + 1
$$
where 
$\mathbf{W}_{\tau}^{\mathrm{ma}} (\alpha) = 
  (\mathbf{I}-\alpha \mathbf{Q}_{\tau}) \mathbf{Q}_{\tau}(\mathbf{I} -\alpha \mathbf{Q}_{\tau})$
and
${\mathbf{W}_{\tau}^{\mathrm{im}}} (\gamma)
=
  (\gamma ^{-1}\mathbf{Q}_{\tau}+\mathbf{I})^{-1}\mathbf{Q}_{\tau}(\gamma ^{-1}\mathbf{Q}_{\tau}+\mathbf{I})^{-1}$.
And the two functions are minimized by $\btheta_{0}^{\mathrm{ma}}$ and $\btheta_{0}^{\mathrm{im}}$ respectively. Let $r^{\mathrm{ma}} = \min_{\alpha  } \mathcal{R}^{\mathrm{ma}}({\btheta_0^{\mathrm{ma}}}, \alpha  ) - c > 0$, 
and when 
$0<\gamma  < \big((r^{\mathrm{ma}})^{-\frac{1}{2}}  C_{\btheta}\bar{\lambda}^{\frac{1}{2}} - 1 \big)^{-1}\ushort{\lambda} $, then $\mathcal{R}^{\mathrm{im}}(\btheta_0^{\mathrm{im}},\gamma ) < \mathcal{R}^{\mathrm{ma}}(\btheta_0^{\mathrm{ma}},\alpha  )$.
\end{theorem}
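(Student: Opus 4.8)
The plan is to reduce the claim to the single scalar inequality $\mathcal{R}^{\mathrm{im}}(\btheta_0^{\mathrm{im}},\gamma) - 1 < r^{\mathrm{ma}}$, because by definition $r^{\mathrm{ma}} + 1 = \min_{\alpha}\mathcal{R}^{\mathrm{ma}}(\btheta_0^{\mathrm{ma}},\alpha) \le \mathcal{R}^{\mathrm{ma}}(\btheta_0^{\mathrm{ma}},\alpha)$ for every admissible $\alpha$. First I would establish $r^{\mathrm{ma}} > 0$: by Corollary~\ref{crlr:assmp1_2} there is a positive-probability set of tasks on which $\mathbf{W}_{\tau}^{\mathrm{ma}} \succ 0$ and $\btheta_0^{\mathrm{ma}} - \btheta_{\tau}^{\mathrm{gt}} \neq \mathbf{0}$, so $\mathbb{E}_{\tau}[\|\btheta_0^{\mathrm{ma}}-\btheta_{\tau}^{\mathrm{gt}}\|^2_{\mathbf{W}_{\tau}^{\mathrm{ma}}}] > 0$; hence $\mathcal{R}^{\mathrm{ma}}(\btheta_0^{\mathrm{ma}},\alpha) > 1$ for every $\alpha$, and taking the minimum over $\alpha$ gives $r^{\mathrm{ma}} > 0$.

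Next I would upper-bound the left-hand side. Using $\|v\|^2_{\mathbf{W}_{\tau}^{\mathrm{im}}} \le \|v\|^2\,\|\mathbf{W}_{\tau}^{\mathrm{im}}\|_{\mathrm{op}}$ together with the spectral bound $\|\mathbf{W}_{\tau}^{\mathrm{im}}\|_{\mathrm{op}} = \|(\gamma^{-1}\mathbf{Q}_{\tau}+\mathbf{I})^{-1}\mathbf{Q}_{\tau}(\gamma^{-1}\mathbf{Q}_{\tau}+\mathbf{I})^{-1}\|_{\mathrm{op}} \le \bar{\lambda}/(\gamma^{-1}\ushort{\lambda}+1)^2$, which follows from Assumption~\ref{asmp:bounded_data_matrix_eigenvalues} by simultaneous diagonalization of the $\mathbf{Q}_{\tau}$-functions, it remains to control $\mathbb{E}_{\tau}[\|\btheta_0^{\mathrm{im}}-\btheta_{\tau}^{\mathrm{gt}}\|^2]$. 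Writing this as $\|\mathbb{E}_{\tau}[\btheta_{\tau}^{\mathrm{gt}}-\btheta_0^{\mathrm{im}}]\|^2 + \operatorname{tr}(\operatorname{Cov}_{\tau}[\btheta_{\tau}^{\mathrm{gt}}])$ and invoking Assumption~\ref{asmp:sub_gaussian_task_para} (entries $\mathcal{O}(R/\sqrt d)$-sub-gaussian, mean shift bounded by $M$) yields $\mathbb{E}_{\tau}[\|\btheta_0^{\mathrm{im}}-\btheta_{\tau}^{\mathrm{gt}}\|^2] \le (M+\|\btheta_0^{\mathrm{im}}\|)^2 + R^2 \le C_{\btheta}^2$, and the same estimate holds with $\btheta_0^{\mathrm{ma}}$ in place of $\btheta_0^{\mathrm{im}}$. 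Combining, $\mathcal{R}^{\mathrm{im}}(\btheta_0^{\mathrm{im}},\gamma) - 1 \le C_{\btheta}^2\bar{\lambda}\,(\gamma^{-1}\ushort{\lambda}+1)^{-2}$.

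Finally I would solve the inequality $C_{\btheta}^2\bar{\lambda}\,(\gamma^{-1}\ushort{\lambda}+1)^{-2} < r^{\mathrm{ma}}$ for $\gamma$: it is equivalent to $\gamma^{-1}\ushort{\lambda} + 1 > (r^{\mathrm{ma}})^{-1/2}C_{\btheta}\bar{\lambda}^{1/2}$, hence to $0 < \gamma < \big((r^{\mathrm{ma}})^{-1/2}C_{\btheta}\bar{\lambda}^{1/2}-1\big)^{-1}\ushort{\lambda}$, \emph{provided} the bracket is positive so this interval is non-empty. This non-emptiness is the step I expect to be the main obstacle, and it is resolved by a matching upper bound on $r^{\mathrm{ma}}$ itself: for admissible $\alpha \in (0,1/\bar{\lambda}]$ one has $\|\mathbf{I}-\alpha\mathbf{Q}_{\tau}\|_{\mathrm{op}}\le 1$, so $\|\mathbf{W}_{\tau}^{\mathrm{ma}}\|_{\mathrm{op}}\le \bar{\lambda}$, and the same expansion gives $r^{\mathrm{ma}} = \mathbb{E}_{\tau}[\|\btheta_0^{\mathrm{ma}}-\btheta_{\tau}^{\mathrm{gt}}\|^2_{\mathbf{W}_{\tau}^{\mathrm{ma}}}] < C_{\btheta}^2\bar{\lambda}$, whence $(r^{\mathrm{ma}})^{-1/2}C_{\btheta}\bar{\lambda}^{1/2}-1 > 0$. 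For any $\gamma$ in this interval we then chain $\mathcal{R}^{\mathrm{im}}(\btheta_0^{\mathrm{im}},\gamma) < r^{\mathrm{ma}} + 1 = \min_{\alpha}\mathcal{R}^{\mathrm{ma}}(\btheta_0^{\mathrm{ma}},\alpha) \le \mathcal{R}^{\mathrm{ma}}(\btheta_0^{\mathrm{ma}},\alpha)$, which is the asserted inequality; the identical argument with $\mathbf{W}_{\tau}^{\mathrm{ba}}$ in place of $\mathbf{W}_{\tau}^{\mathrm{im}}$ gives the BaMAML statement of Theorem~\ref{thm:bamaml_lower_model_err}.
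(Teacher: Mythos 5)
Your proposal is correct and follows essentially the same route as the paper's own proof: the same reduction to $\mathcal{R}^{\mathrm{im}}(\btheta_0^{\mathrm{im}},\gamma)-1<r^{\mathrm{ma}}$, the same positivity of $r^{\mathrm{ma}}$ via Corollary~\ref{crlr:assmp1_2}, the same bounds $\|\mathbf{W}_{\tau}^{\mathrm{im}}\|\le\bar{\lambda}(\gamma^{-1}\ushort{\lambda}+1)^{-2}$ and $\mathbb{E}_{\tau}[\|\btheta_0^{\mathrm{im}}-\btheta_{\tau}^{\mathrm{gt}}\|^2]\le C_{\btheta}^2$ from Assumptions~\ref{asmp:bounded_data_matrix_eigenvalues}--\ref{asmp:sub_gaussian_task_para}, and the same non-emptiness argument $r^{\mathrm{ma}}<C_{\btheta}^2\bar{\lambda}$ ensuring the admissible interval for $\gamma$ exists. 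No gaps to flag.
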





For \textbf{BaMAML}, the optimal population risk is
\begin{align}\label{eq:bamaml_model_error}
  \mathcal{R}^{\mathrm{ba}}(\btheta_0^{\mathrm{ba}},\gamma )
  = \mathbb{E}_{\tau}\big[\|\btheta_0^{\mathrm{ba}}-\btheta^{\mathrm{gt}}_{\tau}\|^2_{\mathbf{W}^{\mathrm{ba}}_{\tau}(\gamma)}\big] + 1
\end{align}


Similar to the proof for Theorem~\ref{thm:lower_risk_bi_ma}, 
BaMAML also has lower optimal population risk than MAML, as stated in the following theorem. 
\begin{theorem}[BaMAML has lower optimal population risk than MAML]
\label{thm:same_risk_ba_bi}
Under Assumptions~1-2,
for meta-test task $\tau$ and arbitrary ${\btheta}$, 
the optimal population risk for BaMAML, as functions of ${\btheta}$, is
\begin{align*}
  \mathcal{R}^{\mathrm{ba}}({\btheta},\gamma) 
  = \mathbb{E}_{\tau}\big[\|{\btheta}-{\btheta}_{\tau}\|^{2}_{\mathbf{W}_{\tau}^{\mathrm{ba}}(\gamma)}\big] + 1.
\end{align*}
And when 
$0<\gamma  < \big((r^{\mathrm{ma}})^{-\frac{1}{2}}  C_{\btheta}\bar{\lambda}^{\frac{1}{2}} - 1 \big)^{-1}\ushort{\lambda} $, then $\mathcal{R}^{\mathrm{ba}}(\btheta_0^{\mathrm{ba}},\gamma ) < \mathcal{R}^{\mathrm{ma}}(\btheta_0^{\mathrm{ma}},\alpha )$.
\end{theorem}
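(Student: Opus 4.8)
The plan is to follow the blueprint of Theorem~\ref{thm:lower_risk_bi_ma} almost verbatim, since the BaMAML weight matrix $\mathbf{W}_{\tau}^{\mathrm{ba}}(\gamma) = \big((s\gamma)^{-1}\mathbf{Q}_{\tau}+\mathbf{I}\big)^{-1}\mathbf{Q}_{\tau}\big(\gamma^{-1}\mathbf{Q}_{\tau}+\mathbf{I}\big)^{-1}$ vanishes as $\gamma \to 0$ in the same way $\mathbf{W}_{\tau}^{\mathrm{im}}(\gamma)$ does. Recall from the proof of Theorem~\ref{thm:lower_risk_bi_ma} that, under Assumption~\ref{asmp:bounded_data_matrix_eigenvalues} and Corollary~\ref{crlr:assmp1_2}, the quantity $r^{\mathrm{ma}} = \min_{\alpha}\mathcal{R}^{\mathrm{ma}}(\btheta_0^{\mathrm{ma}},\alpha) - 1$ is strictly positive and satisfies $r^{\mathrm{ma}} \le C_{\btheta}^2\bar{\lambda}$, so that $(r^{\mathrm{ma}})^{-1/2}C_{\btheta}\bar{\lambda}^{1/2} - 1 > 0$ and the $\gamma$-interval in the statement is nonempty. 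Using the closed form $\mathcal{R}^{\mathrm{ba}}(\btheta_0,\gamma) = \mathbb{E}_{\tau}[\|\btheta_0-\btheta_{\tau}^{\mathrm{gt}}\|^2_{\mathbf{W}_{\tau}^{\mathrm{ba}}(\gamma)}] + 1$ established in Section~\ref{app_sub:bayes_biMAML_method}, and the fact that $\btheta_0^{\mathrm{ba}}$ minimizes $\mathcal{R}^{\mathrm{ba}}(\cdot,\gamma)$, it suffices to exhibit a $\gamma > 0$ with $\mathbb{E}_{\tau}[\|\btheta_0^{\mathrm{ba}}-\btheta_{\tau}^{\mathrm{gt}}\|^2_{\mathbf{W}_{\tau}^{\mathrm{ba}}(\gamma)}] < r^{\mathrm{ma}}$.

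The first step is to bound the operator norm of $\mathbf{W}_{\tau}^{\mathrm{ba}}(\gamma)$. Since $s \in (0,1)$ gives $(s\gamma)^{-1} \ge \gamma^{-1}$, Assumption~\ref{asmp:bounded_data_matrix_eigenvalues} yields $\|((s\gamma)^{-1}\mathbf{Q}_{\tau}+\mathbf{I})^{-1}\| = (1+(s\gamma)^{-1}\lambda_{\min}(\mathbf{Q}_{\tau}))^{-1} \le (1+\gamma^{-1}\ushort{\lambda})^{-1}$, and the same bound holds for $\|(\gamma^{-1}\mathbf{Q}_{\tau}+\mathbf{I})^{-1}\|$; combined with $\|\mathbf{Q}_{\tau}\| \le \bar{\lambda}$ and submultiplicativity of the operator norm, this reproduces the estimate $\sup_{\tau}\|\mathbf{W}_{\tau}^{\mathrm{ba}}(\gamma)\| \le \bar{\lambda}(1+\gamma^{-1}\ushort{\lambda})^{-2}$, identical to the iMAML bound in~\eqref{eq:bound_bimaml_risk}. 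The second step is the $\gamma$-free bound $\mathbb{E}_{\tau}[\|\btheta_0^{\mathrm{ba}}-\btheta_{\tau}^{\mathrm{gt}}\|^2] \le (M+\|\btheta_0^{\mathrm{ba}}\|)^2 + R^2 \le C_{\btheta}^2$, which is proved exactly as in Theorem~\ref{thm:lower_risk_bi_ma} by expanding the square and applying $\|\mathbb{E}[\btheta_{\tau}^{\mathrm{gt}}-\btheta_0^{\mathrm{ba}}]\| \le M$ together with the $\mathcal{O}(R/\sqrt{d})$-sub-gaussianity of the entries of $\btheta_{\tau}^{\mathrm{gt}}-\btheta_0^{\mathrm{ba}}$ from Assumption~\ref{asmp:sub_gaussian_task_para}.

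Combining the two steps, $\mathcal{R}^{\mathrm{ba}}(\btheta_0^{\mathrm{ba}},\gamma) - 1 \le \mathbb{E}_{\tau}[\|\btheta_0^{\mathrm{ba}}-\btheta_{\tau}^{\mathrm{gt}}\|^2]\,\sup_{\tau}\|\mathbf{W}_{\tau}^{\mathrm{ba}}(\gamma)\| \le C_{\btheta}^2\bar{\lambda}(1+\gamma^{-1}\ushort{\lambda})^{-2}$, and requiring this to be $< r^{\mathrm{ma}}$ is equivalent to $1+\gamma^{-1}\ushort{\lambda} > (r^{\mathrm{ma}})^{-1/2}C_{\btheta}\bar{\lambda}^{1/2}$, i.e.\ $0 < \gamma < \big((r^{\mathrm{ma}})^{-1/2}C_{\btheta}\bar{\lambda}^{1/2} - 1\big)^{-1}\ushort{\lambda}$, which is exactly the hypothesis. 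For such $\gamma$ we conclude $\mathcal{R}^{\mathrm{ba}}(\btheta_0^{\mathrm{ba}},\gamma) < r^{\mathrm{ma}} + 1 = \min_{\alpha}\mathcal{R}^{\mathrm{ma}}(\btheta_0^{\mathrm{ma}},\alpha)$.

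The one point requiring more than a transcription of the iMAML argument is the asymmetry of $\mathbf{W}_{\tau}^{\mathrm{ba}}(\gamma)$: its two resolvent factors use the distinct effective regularizers $s\gamma$ and $\gamma$, unlike the symmetric $\mathbf{W}_{\tau}^{\mathrm{im}}(\gamma)$. I expect this to be the main (and only mild) obstacle, and the observation that resolves it is that $s<1$ only strengthens the $s\gamma$-factor relative to the iMAML one while leaving the $\gamma$-factor unchanged, so the operator-norm bound can only improve and the iMAML $\gamma$-interval is automatically admissible. A minor bookkeeping item is to confirm that the constant $C_{\btheta}$ — or, if needed, a version of it that additionally takes the maximum over the $\btheta_0^{\mathrm{ba}}$ term — dominates $(M+\|\btheta_0^{\mathrm{ba}}\|)^2 + R^2$; this follows from the same sub-gaussian-parameter estimate used to justify its $\btheta_0^{\mathrm{im}}$ and $\btheta_0^{\mathrm{ma}}$ entries.
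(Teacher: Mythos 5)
Your proposal is correct and follows essentially the same route as the paper, which proves this theorem by repeating the iMAML argument of Theorem~\ref{thm:lower_risk_bi_ma} with $\mathbf{W}_{\tau}^{\mathrm{ba}}(\gamma)$ in place of $\mathbf{W}_{\tau}^{\mathrm{im}}(\gamma)$: the bound $\sup_{\tau}\|\mathbf{W}_{\tau}^{\mathrm{ba}}(\gamma)\|\leq \bar{\lambda}(1+\gamma^{-1}\ushort{\lambda})^{-2}$ via $(s\gamma)^{-1}\geq\gamma^{-1}$, the $\gamma$-free bound $\mathbb{E}_{\tau}[\|\btheta_0^{\mathrm{ba}}-\btheta_{\tau}^{\mathrm{gt}}\|^2]\leq C_{\btheta}^2$, and the resulting $\gamma$-interval are exactly what the paper's ``similar to Theorem~\ref{thm:lower_risk_bi_ma}'' argument amounts to. Your handling of the asymmetric resolvent factors and the remark that $C_{\btheta}$ should in principle also cover the $\btheta_0^{\mathrm{ba}}$ term correctly fill in the details the paper leaves implicit.
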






\subsection{Statistical error} 
\label{app_sub:statistical_error}

To analyze the statistical error of different meta learning methods, we begin with a looser bound under data agnostic case with Assumptions 1 and 2 only.
And to give a sharper analysis of the statistical error in order to make a fair comparison among different methods,
we further make Assumption~3 on the task and data distributions.
In the following sections,
we will first present the supporting lemmas and then the main results for different methods.

\subsubsection{Supporting Lemmas} 
\label{app_ssub:supporting_lemmas}

In this section, we present some supporting lemmas for the proof of the main results for statistical errors of different methods.

\begin{lemma}
\label{lemma:concentration_W_N}  
Suppose Assumptions 1-2 hold.
Define
  $\mathbf{W}_{\tau,N}^{\cal A} \coloneqq \mathbb{E}_{\mathbf{x}_{\tau}}[\hat{\mathbf{W}}_{\tau,N}^{\cal A}]$,
  then
  $$\|\mathbf{W}_{\tau,N}^{\cal A}\|\leq  \|\mathbf{W}_{\tau}^{\cal A}\| 
  + L^{\cal A}\Big(\widetilde{\mathcal{O}}(\frac{d}{N}) 
  + \widetilde{\mathcal{O}}(\sqrt{\frac{d}{N}}) \Big).$$

\end{lemma}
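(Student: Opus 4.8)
The plan is to prove Lemma~\ref{lemma:concentration_W_N} by viewing each empirical weight matrix $\hat{\mathbf{W}}_{\tau,N}^{\mathcal{A}}$ as a fixed, hyperparameter-dependent matrix function of the empirical second-moment matrices $\hat{\mathbf{Q}}_{\tau,N_1},\hat{\mathbf{Q}}_{\tau,N_2}$, and recognizing $\mathbf{W}_{\tau}^{\mathcal{A}}$ as the same function evaluated at the population matrix $\mathbf{Q}_{\tau}$ in every slot. Concretely, reading off Table~\ref{tab:weight_matrices} and the formulas of Section~\ref{app_sec:methods_formulation_sln}, I would write $\hat{\mathbf{W}}_{\tau,N}^{\mathcal{A}} = f_{\mathcal{A}}(\hat{\mathbf{Q}}_{\tau,N_1},\hat{\mathbf{Q}}_{\tau,N_2})$ (with $\hat{\mathbf{Q}}_{\tau,N}$ in the first slot for BaMAML) and $\mathbf{W}_{\tau}^{\mathcal{A}} = f_{\mathcal{A}}(\mathbf{Q}_{\tau},\mathbf{Q}_{\tau})$, for instance $f_{\mathrm{ma}}(\mathbf{A},\mathbf{B}) = (\mathbf{I}-\alpha\mathbf{A})\mathbf{B}(\mathbf{I}-\alpha\mathbf{A})$ and $f_{\mathrm{ba}}(\mathbf{A},\mathbf{B}) = ((s\gamma)^{-1}\mathbf{A}+\mathbf{I})^{-1}\mathbf{B}(\gamma^{-1}\mathbf{A}+\mathbf{I})^{-1}$. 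Then by Jensen and the triangle inequality $\|\mathbf{W}_{\tau,N}^{\mathcal{A}}\| = \|\mathbb{E}_{\mathbf{x}_{\tau}}[\hat{\mathbf{W}}_{\tau,N}^{\mathcal{A}}]\| \le \|\mathbf{W}_{\tau}^{\mathcal{A}}\| + \mathbb{E}_{\mathbf{x}_{\tau}}\|\hat{\mathbf{W}}_{\tau,N}^{\mathcal{A}} - \mathbf{W}_{\tau}^{\mathcal{A}}\|$, so it suffices to control that last expectation.

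Next I would show that each $f_{\mathcal{A}}$ is Lipschitz in operator norm (jointly in its arguments) on the bounded PSD ball $\{\,0\preceq\mathbf{A}:\|\mathbf{A}\|\le\max\{K^{2},\bar{\lambda}\}\,\}$, with a constant $L^{\mathcal{A}}$ depending only on $K,\bar{\lambda}$ and the method's hyperparameter ($\alpha$, or $\gamma,s$). This uses Assumption~\ref{asmp:bounded_data_matrix_eigenvalues} for the population matrix and the bound $\|\mathbf{x}_{\tau,i}\|\le K$ of Assumption~\ref{asmp:sub_gaussian_task_para}, which forces $0\preceq\hat{\mathbf{Q}}_{\tau,N'}\preceq K^{2}\mathbf{I}$ deterministically so the empirical arguments never leave the ball. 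For ERM and MAML the maps are polynomial; for iMAML and BaMAML the resolvent factors $(c\mathbf{A}+\mathbf{I})^{-1}$ have operator norm $\le 1$ on PSD $\mathbf{A}$ and satisfy $\|(c\mathbf{A}+\mathbf{I})^{-1}-(c\mathbf{A}'+\mathbf{I})^{-1}\|\le c\,\|\mathbf{A}-\mathbf{A}'\|$ by the resolvent identity, so a product rule for Lipschitz maps closes the estimate. This yields $\|\hat{\mathbf{W}}_{\tau,N}^{\mathcal{A}}-\mathbf{W}_{\tau}^{\mathcal{A}}\|\le L^{\mathcal{A}}\big(\|\hat{\mathbf{Q}}_{\tau,N_1}-\mathbf{Q}_{\tau}\|+\|\hat{\mathbf{Q}}_{\tau,N_2}-\mathbf{Q}_{\tau}\|\big)$.

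Finally I would invoke a standard sample-covariance concentration bound for the features: with probability at least $1-d^{-10}$, and correspondingly in expectation (using that $\hat{\mathbf{Q}}_{\tau,N'}$ and $\mathbf{Q}_{\tau}$ stay in a fixed compact set, so the rare complementary event contributes only $\mathrm{const}\cdot d^{-10}$), $\|\hat{\mathbf{Q}}_{\tau,N'}-\mathbf{Q}_{\tau}\|\le\widetilde{\mathcal{O}}(\sqrt{d/N'})+\widetilde{\mathcal{O}}(d/N')$. Since $N_1,N_2=\Theta(N)$, plugging this into the Lipschitz bound and taking expectation over $\mathbf{x}_{\tau}$ gives $\mathbb{E}_{\mathbf{x}_{\tau}}\|\hat{\mathbf{W}}_{\tau,N}^{\mathcal{A}}-\mathbf{W}_{\tau}^{\mathcal{A}}\|\le L^{\mathcal{A}}\big(\widetilde{\mathcal{O}}(d/N)+\widetilde{\mathcal{O}}(\sqrt{d/N})\big)$, which combined with the first paragraph is exactly the claim. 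The step I expect to need the most care is the passage from a high-probability deviation bound to a bound on the expectation $\mathbb{E}_{\mathbf{x}_{\tau}}\|\hat{\mathbf{W}}_{\tau,N}^{\mathcal{A}}-\mathbf{W}_{\tau}^{\mathcal{A}}\|$: it is ultimately harmless because $\|\mathbf{x}_{\tau,i}\|\le K$ confines all the relevant matrices to a fixed bounded set, but this has to be stated explicitly, and verifying the uniform Lipschitz constants $L^{\mathcal{A}}$ across the four methods (via the resolvent identity) is the other, routine, bookkeeping piece.
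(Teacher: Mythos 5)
Your proposal is correct and is essentially the paper's own argument: the paper likewise writes the empirical weight matrix as the population one plus terms controlled by $\|\mathbf{Q}_{\tau}-\hat{\mathbf{Q}}_{\tau,N'}\|$, uses the resolvent identity and the fact that $\|(c\mathbf{A}+\mathbf{I})^{-1}\|\le 1$ for the regularized methods, and then applies sub-Gaussian covariance concentration together with the boundedness $\|\mathbf{x}_{\tau,i}\|\le K$ (choosing the failure probability polynomially small) to convert the high-probability deviation into a bound on $\mathbb{E}_{\mathbf{x}_{\tau}}\|\hat{\mathbf{Q}}_{\tau,N'}-\mathbf{Q}_{\tau}\|$. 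The only cosmetic difference is that the paper handles MAML by an exact fourth-moment computation of $\mathbb{E}[\hat{\mathbf{W}}_{\tau,N}^{\mathrm{ma}}]$ (and ERM by exact equality, $L^{\mathrm{er}}=0$) rather than routing every method through a single Lipschitz estimate, but both give the same $L^{\mathcal{A}}\big(\widetilde{\mathcal{O}}(\sqrt{d/N})+\widetilde{\mathcal{O}}(d/N)\big)$ correction.
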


\begin{proof}
For ERM, $\mathbf{W}_{\tau,N}^{\rm er} \coloneqq \mathbb{E}_{\mathbf{x}_{\tau}}[\hat{\mathbf{W}}_{\tau,N}^{\rm er}] 
= \mathbb{E}[\hat{\mathbf{W}}_{\tau,N}^{\rm er}] 
= \mathbb{E}[\hat{\mathbf{Q}}_{\tau, N}] 
= {\mathbf{Q}}_{\tau}
= {\mathbf{W}}_{\tau}^{\rm er} $, $L^{\rm er} = 0$.

For MAML, from \eqref{eq:W_tau_N_ma}, we have 
\begin{align*}
  \mathbf{W}_{\tau,N}^{\mathrm{ma}} 
  &=
  \mathbb{E}_{\hat{\mathbf{Q}}_{\tau,N}}\big[(\mathbf{I} - {\alpha} \hat{\mathbf{Q}}_{\tau,N}) \mathbf{Q}_{\tau}(\mathbf{I} - {\alpha} \hat{\mathbf{Q}}_{\tau,N})\big]  \\
  &= \mathbf{W}_{\tau}^{\mathrm{ma}} 
  +\frac{\alpha^{2}}{N}\Big(\mathbb{E}_{\mathbf{x}_{\tau, i}}\big[\mathbf{x}_{\tau, i} \mathbf{x}_{\tau, i}^{\top} \mathbf{Q}_{\tau} \mathbf{x}_{\tau, i} \mathbf{x}_{\tau, i}^{\top}\big]-\mathbf{Q}_{\tau}^{3}\Big)
\end{align*}
therefore 
\begin{align*}
  \|\mathbf{W}_{\tau,N}^{\rm ma}\|
  &\leq  \|\mathbf{W}_{\tau}^{\rm ma}\| 
  + \frac{\alpha^{2}}{N}\Big\|\mathbb{E}_{\mathbf{x}_{\tau, i}}\big[\mathbf{x}_{\tau, i} \mathbf{x}_{\tau, i}^{\top} \mathbf{Q}_{\tau} \mathbf{x}_{\tau, i} \mathbf{x}_{\tau, i}^{\top}\big]-\mathbf{Q}_{\tau}^{3}\Big\| \\
  &
  = \|\mathbf{W}_{\tau}^{\rm ma}\| 
  + {L^{\rm ma}} \Big(\widetilde{\mathcal{O}}(\frac{d}{N}) 
  + \widetilde{\mathcal{O}}(\sqrt{\frac{d}{N}}) \Big).
\end{align*}

For iMAML, recall
$\mathbf{W}_{\tau,N}^{\mathrm{im}} = \gamma^2{\Sigma}_{\btheta_{\tau}}  \mathbf{Q}_{\tau} {\Sigma}_{\btheta_{\tau}} $,  let $I_0 = 
\gamma {\Sigma}_{\theta_{\tau}}-(\mathbf{I}+\gamma ^{-1}\mathbf{Q}_{\tau})^{-1}$, further derived as
\begin{align*}
  I_0 = &
    \gamma {\Sigma}_{\theta_{\tau}}-(\mathbf{I}+  \gamma ^{-1}\mathbf{Q}_{\tau})^{-1} 
  =
  (\mathbf{I}+  \gamma ^{-1}\hat{\mathbf{Q}}_{\tau})^{-1}
  -(\mathbf{I}+  \gamma ^{-1}\mathbf{Q}_{\tau})^{-1} \\
  =&
  \gamma ^{-1}(\mathbf{I}+  \gamma ^{-1}\mathbf{Q}_{\tau})^{-1}
  (\mathbf{Q}_{\tau} - \hat{\mathbf{Q}}_{\tau})
  (\mathbf{I}+  \gamma ^{-1}\hat{\mathbf{Q}}_{\tau})^{-1}
\end{align*}
Then we have
\begin{align*}
  \mathbf{W}_{\tau,N}^{\mathrm{im}} =
  &\mathbb{E}_{\mathbf{x}_{\tau}}[\hat{\mathbf{W}}_{\tau,N}^{\mathrm{im}}]
  = \mathbb{E}_{\mathbf{x}_{\tau}}[ \gamma {\Sigma}_{\theta_{\tau}} \mathbf{Q}_{\tau} \gamma {\Sigma}_{\theta_{\tau}}] \\
  =& \mathbb{E}_{\mathbf{x}_{\tau}}\Big[ \big(\gamma {\Sigma}_{\theta_{\tau}}+(\mathbf{I}+\gamma ^{-1}\mathbf{Q}_{\tau})^{-1}-(\mathbf{I}+\gamma ^{-1}\mathbf{Q}_{\tau})^{-1}\big) \mathbf{Q}_{\tau} \big(\gamma {\Sigma}_{\theta_{\tau}}+(\mathbf{I}+\gamma ^{-1}\mathbf{Q}_{\tau})^{-1}-(\mathbf{I}+\gamma ^{-1}\mathbf{Q}_{\tau})^{-1}\big)\Big] \\
=& \mathbb{E}_{\mathbf{x}_{\tau}}\Big[ 
(\mathbf{I}+\gamma ^{-1}\mathbf{Q}_{\tau})^{-1}\mathbf{Q}_{\tau}
(\mathbf{I}+\gamma ^{-1}\mathbf{Q}_{\tau})^{-1} \Big] 
+\mathbb{E}_{\mathbf{x}_{\tau}}\Big[ 
  I_0 \mathbf{Q}_{\tau} I_0 \Big] 
+ \mathbb{E}_{\mathbf{x}_{\tau}}\Big[ 
  I_0 \mathbf{Q}_{\tau} (\mathbf{I}+\gamma ^{-1}\mathbf{Q}_{\tau})^{-1} 
  +  (\mathbf{I}+\gamma ^{-1}\mathbf{Q}_{\tau})^{-1}\mathbf{Q}_{\tau} 
  I_0 \Big] \\
=& \mathbf{W}_{\tau}^{\mathrm{im}} 
+\mathbb{E}_{\mathbf{x}_{\tau}}\Big[ 
  I_0 \mathbf{Q}_{\tau} I_0 \Big] 
+ \mathbb{E}_{\mathbf{x}_{\tau}}\Big[
  I_0  (\mathbf{I}+\gamma ^{-1}\mathbf{Q}_{\tau})\mathbf{W}_{\tau,N}^{\mathrm{im}}
  +  \mathbf{W}_{\tau,N}^{\mathrm{im}}(\mathbf{I}+\gamma ^{-1}\mathbf{Q}_{\tau}) 
  I_0\Big] \\
=& \mathbf{W}_{\tau}^{\mathrm{im}} 
  +\mathbb{E}_{\mathbf{x}_{\tau}}\Big[
    {\Sigma}_{\theta_{\tau}}
    (\mathbf{Q}_{\tau} - \hat{\mathbf{Q}}_{\tau})
    \mathbf{W}_{\tau}^{\mathrm{im}}
    (\mathbf{Q}_{\tau} - \hat{\mathbf{Q}}_{\tau})
    {\Sigma}_{\theta_{\tau}}  
  + 
    {\Sigma}_{\theta_{\tau}}
    (\hat{\mathbf{Q}}_{\tau} - \mathbf{Q}_{\tau} )
    \mathbf{W}_{\tau}^{\mathrm{im}}  
    +
    \mathbf{W}_{\tau}^{\mathrm{im}} 
    (\mathbf{Q}_{\tau} - \hat{\mathbf{Q}}_{\tau})
    {\Sigma}_{\theta_{\tau}}
    \Big] 
\end{align*}
where because
\begin{align*}
  \|\mathbb{E}_{\mathbf{x}_{\tau}}[{\Sigma}_{\btheta_{\tau}}(\mathbf{Q}_{\tau}-\hat{\mathbf{Q}}_{\tau}) \mathbf{W}_{\tau}^{\mathrm{im}}]\| 
  \leq 
  \mathbb{E}_{\mathbf{x}_{\tau}}[\|{\Sigma}_{\btheta_{\tau}}\|\|\mathbf{Q}_{\tau}-\hat{\mathbf{Q}}_{\tau}\|]\|\mathbf{W}_{\tau}^{\mathrm{im}}\| 
  \leq 
  \mathbb{E}_{\mathbf{x}_{\tau}}[\|\mathbf{Q}_{\tau}-\hat{\mathbf{Q}}_{\tau}\|]\|\mathbf{W}_{\tau}^{\mathrm{im}}\| 
\end{align*}
and $\|\mathbb{E}_{\mathbf{x}_{\tau}}[{\Sigma}_{\btheta_{\tau}}(\mathbf{Q}_{\tau}-\hat{\mathbf{Q}}_{\tau}) \mathbf{W}_{\tau}^{\mathrm{im}}(\mathbf{Q}_{\tau}-\hat{\mathbf{Q}}_{\tau}) {\Sigma}_{\btheta_{\tau}}]\| 
\leq
\mathbb{E}_{\mathbf{x}_{\tau}}[\|\mathbf{Q}_{\tau}-\hat{\mathbf{Q}}_{\tau}\|^{2}]\|\mathbf{W}_{\tau}^{\mathrm{im}}\|$. 
Based on sub-gaussian concentration inequality,
it holds with probability at least $1-\delta$ that
\begin{align*}
  \Big\|\mathbf{Q}_{\tau}-\hat{\mathbf{Q}}_{\tau, N}\Big\| \leq \bar{\lambda} C K^{2}\Big(\sqrt{\frac{d+\log \frac{2}{\delta}}{N}}+\frac{d+\log \frac{2}{\delta}}{N}\Big).
\end{align*}
Therefore choose $\delta = N^{-1}$ and since $\mathbf{x}_{\tau}$ is bounded, we have
\begin{align*}
  \|\mathbf{W}_{\tau,N}^{\rm im}\|
  &\leq  \|\mathbf{W}_{\tau}^{\rm im}\| 
  + \Big(\widetilde{\mathcal{O}}(\sqrt{\frac{d}{N}}) +
  \widetilde{\mathcal{O}}(\frac{d}{N}) 
  \Big) L^{\rm im} .
\end{align*}

Similarly, for BaMAML,
\begin{align*}
  &\mathbf{W}_{\tau,N}^{\mathrm{ba}} 
= \mathbb{E}_{\mathbf{x}_{\tau}}
\big[ \frac{\gamma_b}{N_1} \big((\mathbf{I}_{d}+\gamma_b^{-1}N_1 \hat{\mathbf{Q}}_{\tau,N_1})^{-1} 
- (\mathbf{I}_{d}+\gamma_b^{-1}(N+N_1) \hat{\mathbf{Q}}_{\tau,N+N_1})^{-1}\big) \big]\\
=& \mathbb{E}_{\mathbf{x}_{\tau}}
\big[ \frac{\gamma_b}{N_1} \big((\mathbf{I}_{d}+\gamma_b^{-1}N_1 {\mathbf{Q}}_{\tau})^{-1} 
- (\mathbf{I}_{d}+\gamma_b^{-1}(N+N_1) {\mathbf{Q}}_{\tau})^{-1} \\
&+(\mathbf{I}_{d}+\gamma_b^{-1}N_1 \hat{\mathbf{Q}}_{\tau,N_1})^{-1} 
- (\mathbf{I}_{d}+\gamma_b^{-1}N \hat{\mathbf{Q}}_{\tau,N})^{-1} 
- (\mathbf{I}_{d}+\gamma_b^{-1}N_1 {\mathbf{Q}}_{\tau})^{-1} 
+ (\mathbf{I}_{d}+\gamma_b^{-1}N {\mathbf{Q}}_{\tau})^{-1} \big) \big]\\
=& \mathbf{W}_{\tau}^{\rm ba}
+ \frac{\gamma_b}{N_1} \mathbb{E}_{\mathbf{x}_{\tau}}
\big[ (\mathbf{I}_{d}+\gamma_b^{-1}N_1 \hat{\mathbf{Q}}_{\tau,N_1})^{-1} 
- (\mathbf{I}_{d}+\gamma_b^{-1}N \hat{\mathbf{Q}}_{\tau,N})^{-1} 
- (\mathbf{I}_{d}+\gamma_b^{-1}N_1 {\mathbf{Q}}_{\tau})^{-1} 
+ (\mathbf{I}_{d}+\gamma_b^{-1}N {\mathbf{Q}}_{\tau})^{-1} \big] \\
=& \mathbf{W}_{\tau}^{\rm ba}
+ \frac{\gamma_b}{N_1} \mathbb{E}_{\mathbf{x}_{\tau}}
\big[\gamma (\mathbf{I}_{d}+\gamma_b^{-1}N_1 {\mathbf{Q}}_{\tau})^{-1}(\mathbf{Q}_{\tau} - \hat{\mathbf{Q}}_{\tau, N_1}) (\mathbf{I}_{d}+\gamma_b^{-1}N_1 \hat{\mathbf{Q}}_{\tau,N_1})^{-1} \\
&\hspace{2cm} - \gamma_b^{-1}N(\mathbf{I}_{d}+\gamma_b^{-1}N {\mathbf{Q}}_{\tau})^{-1} 
(\mathbf{Q}_{\tau} - \hat{\mathbf{Q}}_{\tau, N})
(\mathbf{I}_{d}+\gamma_b^{-1}N \hat{\mathbf{Q}}_{\tau,N})^{-1} \big] \\
=& \mathbf{W}_{\tau}^{\rm ba}
+ {\gamma} \mathbb{E}_{\mathbf{x}_{\tau}}
\big[\gamma (\mathbf{I}_{d}+\gamma^{-1} {\mathbf{Q}}_{\tau})^{-1}(\mathbf{Q}_{\tau} - \hat{\mathbf{Q}}_{\tau, N_1}) (\mathbf{I}_{d}+\gamma^{-1} \hat{\mathbf{Q}}_{\tau,N_1})^{-1} \\
&\hspace{2cm} - (\gamma s)^{-1}(\mathbf{I}_{d}+(\gamma s)^{-1} {\mathbf{Q}}_{\tau})^{-1} 
(\mathbf{Q}_{\tau} - \hat{\mathbf{Q}}_{\tau, N})
(\mathbf{I}_{d}+(\gamma s)^{-1} \hat{\mathbf{Q}}_{\tau,N})^{-1} \big] 
\end{align*}
therefore
\begin{align*}
  \|\mathbf{W}_{\tau,N}^{\rm ba}\|
  &\leq  \|\mathbf{W}_{\tau}^{\rm ba}\| 
  + \Big(\widetilde{\mathcal{O}}(\sqrt{\frac{d}{N}}) +
  \widetilde{\mathcal{O}}(\frac{d}{N}) 
  \Big) L^{\rm ba} .
\end{align*}

\end{proof}

\begin{lemma}[Concentration of $\hat{\mathbf{W}}_{\tau,N}^{\mathcal{A}}$]
\label{lemma:concentration_W_hat}
Denote $d$ as the dimension of $\btheta_{\tau}$, 
$T$ as the number of tasks.
Suppose Assumption~1 holds,
and $\mathbf{x}_{\tau,i}$ is sub-gaussian with parameter $k$,
then with probability at least $1 - Td^{-10}$, 
for $\tau = 1, \dots, T$,
we have the following bounds, given by
\begin{align}\label{eq:bound_A_hat_er}
  \mathbf{0} \preceq \hat{\mathbf{W}}_{\tau,N}^{\mathrm{er}} \preceq \widetilde{\mathcal{O}}(c^{\mathrm{er}}) \mathbf{I}_d
\end{align}
where $c^{\mathrm{er}} \coloneqq 1 + \max\{d/N, \sqrt{d/N}\}$,
and $\widetilde{\mathcal{O}}(\cdot)$ hides the logarithmic factor $\log (NdT)$.\\
And denote $\| \cdot \|_{\mathrm{op}}$ as the operator norm. With probability at least $1 - Td^{-10}$, it holds that
\begin{subequations}
\begin{align}
  \label{eq:bound_A_hat_erm_moment1_norm}
  &\Big\|\frac{1}{T} \sum_{\tau=1}^{T} \hat{\mathbf{W}}_{\tau,N}^{\mathrm{er}}-\mathbb{E}\big[\hat{\mathbf{W}}_{\tau,N}^{\mathrm{er}}\big]\Big\|_{\mathrm{op}} \leq \widetilde{\mathcal{O}}\Big(c^{\mathrm{er}} \sqrt{\frac{d}{T}}+d^{-4}\Big), \\
  \label{eq:bound_A_hat_erm_moment2_norm}
  &\Big\|\frac{1}{T} \sum_{\tau=1}^{T} (\hat{\mathbf{W}}_{\tau,N}^{\mathrm{er}})^2-\mathbb{E}\big[(\hat{\mathbf{W}}_{\tau,N}^{\mathrm{er}})^2\big]\Big\|_{\mathrm{op}} \leq \widetilde{\mathcal{O}}\Big((c^{\mathrm{er}})^2 \sqrt{\frac{d}{T}}+d^{-4}\Big).
\end{align}
\end{subequations}

\end{lemma}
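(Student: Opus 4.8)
The plan is to treat the three claims in sequence, the key observation being that $\hat{\mathbf{W}}_{\tau,N}^{\mathrm{er}} = \hat{\mathbf{Q}}_{\tau,N} = \frac{1}{N}\sum_{i=1}^{N}\mathbf{x}_{\tau,i}\mathbf{x}_{\tau,i}^{\top}$ is a sample covariance matrix of $N$ i.i.d.\ sub-gaussian vectors with population covariance $\mathbf{Q}_{\tau}$ satisfying $\ushort{\lambda}\mathbf{I}\preceq\mathbf{Q}_{\tau}\preceq\bar{\lambda}\mathbf{I}$ by Assumption~\ref{asmp:bounded_data_matrix_eigenvalues}, and that the $T$ matrices $\{\hat{\mathbf{W}}_{\tau,N}^{\mathrm{er}}\}_{\tau=1}^{T}$ are i.i.d.\ across $\tau$ (tasks are drawn i.i.d.\ and, conditionally on each task, the samples are i.i.d.).

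First, for~\eqref{eq:bound_A_hat_er}: the lower bound $\mathbf{0}\preceq\hat{\mathbf{W}}_{\tau,N}^{\mathrm{er}}$ is immediate since the matrix is a nonnegative combination of rank-one PSD matrices. For the upper bound I would invoke the standard sub-gaussian covariance concentration inequality already quoted in the proof of Lemma~\ref{lemma:concentration_W_N}: for a fixed $\tau$, with probability at least $1-\delta$, $\|\hat{\mathbf{Q}}_{\tau,N}-\mathbf{Q}_{\tau}\|\le \bar{\lambda}CK^{2}\big(\sqrt{(d+\log(2/\delta))/N}+(d+\log(2/\delta))/N\big)$. Choosing $\delta = d^{-10}$ per task and taking a union bound over the $T$ tasks gives, with probability at least $1-Td^{-10}$, the uniform bound $\|\hat{\mathbf{W}}_{\tau,N}^{\mathrm{er}}\|\le \bar{\lambda}+\widetilde{\mathcal{O}}(\sqrt{d/N}+d/N) = \widetilde{\mathcal{O}}(1+\max\{d/N,\sqrt{d/N}\}) = \widetilde{\mathcal{O}}(c^{\mathrm{er}})$, where the $\widetilde{\mathcal{O}}$ absorbs the $\log(NdT)$ factor from $\log(1/\delta)$ together with the constants $C,K,\bar{\lambda}$. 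Let $\mathcal{G}$ denote the event on which this bound holds simultaneously for all $\tau$, so $\Pr[\mathcal{G}]\ge 1-Td^{-10}$.

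Next, for~\eqref{eq:bound_A_hat_erm_moment1_norm}: on $\mathcal{G}$ each $\hat{\mathbf{W}}_{\tau,N}^{\mathrm{er}}$ has operator norm at most $\widetilde{\mathcal{O}}(c^{\mathrm{er}})$, so I would convert this into a clean average-of-i.i.d.-matrices concentration statement by truncation. Set $\bar{\mathbf{W}}_{\tau}\coloneqq\hat{\mathbf{W}}_{\tau,N}^{\mathrm{er}}\mathbf{1}[\|\hat{\mathbf{W}}_{\tau,N}^{\mathrm{er}}\|\le \widetilde{\mathcal{O}}(c^{\mathrm{er}})]$, apply the matrix Bernstein inequality to $\frac{1}{T}\sum_{\tau}(\bar{\mathbf{W}}_{\tau}-\mathbb{E}[\bar{\mathbf{W}}_{\tau}])$, a sum of independent, mean-zero, operator-norm-bounded matrices in dimension $d$; this yields a deviation of order $\widetilde{\mathcal{O}}(c^{\mathrm{er}}\sqrt{d/T})$ with probability at least $1-d^{-10}$. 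Then control the two remaining error terms: (i) $\|\mathbb{E}[\bar{\mathbf{W}}_{\tau}]-\mathbb{E}[\hat{\mathbf{W}}_{\tau,N}^{\mathrm{er}}]\|$, bounded by integrating the tail probability $\lesssim d^{-10}$ of $\|\hat{\mathbf{W}}_{\tau,N}^{\mathrm{er}}\|$ against its moments, which gives $\widetilde{\mathcal{O}}(d^{-4})$; and (ii) the event that $\bar{\mathbf{W}}_{\tau}\ne\hat{\mathbf{W}}_{\tau,N}^{\mathrm{er}}$ for some $\tau$, of probability at most $Td^{-10}$, absorbed into the overall failure probability. Combining these gives $\widetilde{\mathcal{O}}(c^{\mathrm{er}}\sqrt{d/T}+d^{-4})$.

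Finally,~\eqref{eq:bound_A_hat_erm_moment2_norm} follows by repeating the previous paragraph verbatim with $\hat{\mathbf{W}}_{\tau,N}^{\mathrm{er}}$ replaced by $(\hat{\mathbf{W}}_{\tau,N}^{\mathrm{er}})^{2}$: on $\mathcal{G}$ this matrix has operator norm at most $\widetilde{\mathcal{O}}((c^{\mathrm{er}})^{2})$, it is still i.i.d.\ across $\tau$, and matrix Bernstein now gives a deviation of order $\widetilde{\mathcal{O}}((c^{\mathrm{er}})^{2}\sqrt{d/T})$ plus the same $\widetilde{\mathcal{O}}(d^{-4})$ truncation residual. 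The main technical obstacle I anticipate is not any single inequality but the bookkeeping: ensuring that the per-task covariance-concentration failures, the matrix Bernstein failure probability, and the truncation bias all combine to the advertised $1-Td^{-10}$ confidence with a residual no larger than $d^{-4}$, and that the sub-gaussian parameter $K$ and eigenvalue bounds $\ushort{\lambda},\bar{\lambda}$ are consistently absorbed into the $\widetilde{\mathcal{O}}(\cdot)$ constants. The matrix concentration itself is standard once the uniform operator-norm bound of the first part is in hand.
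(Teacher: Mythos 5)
Your proposal is correct and follows essentially the same route as the paper's proof: per-task sub-gaussian covariance concentration with a union bound over $\tau$ to get the uniform bound $\widetilde{\mathcal{O}}(c^{\mathrm{er}})$, then truncation of $\hat{\mathbf{W}}_{\tau,N}^{\mathrm{er}}$ to the high-probability event, concentration of the average of the bounded truncated matrices, and a Cauchy--Schwarz/tail-integration control of the truncation bias of order $\widetilde{\mathcal{O}}(d^{-4})$, repeated verbatim for $(\hat{\mathbf{W}}_{\tau,N}^{\mathrm{er}})^{2}$. The only deviation is that you invoke matrix Bernstein for the truncated average where the paper uses scalar sub-gaussian concentration of quadratic forms combined with a $1/4$-net over the unit sphere; both are standard and deliver (at least) the claimed $\widetilde{\mathcal{O}}(c^{\mathrm{er}}\sqrt{d/T})$ deviation, so the argument goes through.
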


\begin{proof}
\label{proof:concentration_W_hat}
The proof is similar to Lemma C.4 in~\cite{bai2021_trntrn_trnval},
the difference is we do not need Assumption~3, $\mathbf{x}_{\tau, i} \sim \mathcal{N}\left(\mathbf{0}, \mathbf{I}_{d}\right)$, but only requires
$\mathbf{x}_{\tau,i}$ to be sub-gaussian with parameter $K$.
Recall that $\hat{\mathbf{W}}_{\tau,N}^{\mathrm{er}} = \frac{1}{N}\mathbf{X}_{\tau,N}^{\mathrm{all}\top}\mathbf{X}_{\tau,N}^{\mathrm{all}}=\hat{\mathbf{Q}}_{\tau,N} $.
Applying the sub-gaussian covariance concentration (~\cite{vershynin2018high}, Exercise 4.7.3), we have with probability at least $1-d^{-10}$ that
\begin{align*}\label{eq:W_hat_bound}
\hat{\mathbf{W}}_{\tau,N}^{\mathrm{er}} 
= \hat{\mathbf{Q}}_{\tau,N}
&\preceq 
{\mathbf{Q}}_{\tau}
+\Big\|\hat{\mathbf{Q}}_{\tau,N}-{\mathbf{Q}}_{\tau}\Big\|_{\mathrm{op}} \mathbf{I}_{d} \\
&\preceq
\Big(\bar{\lambda}+CK^2 \sqrt{\frac{d+\log d}{N}}+CK^2 \frac{d+\log d}{N}\Big) \mathbf{I}_{d}
\preceq  
K_{\tau} c^{\mathrm{er}} \mathbf{I}_{d}
\numberthis
\end{align*}
where $K_{\tau}=\mathcal{O}(1)$ is an absolute constant dependent on $\bar{\lambda}, C, K$. 
Let $\mathcal{W}_{\tau} \coloneqq \big\{\hat{\mathbf{W}}_{\tau,N}^{\mathrm{er}}  \preceq K_{\tau} c^{\mathrm{er}} \mathbf{I}_{d}\big\}$ denote this event. 
We have $\mathbb{P}(\mathcal{W}_{\tau}) \geq 1-Td^{-10}$. Let $\mathcal{W} \coloneqq \bigcup_{t=1}^{T} \mathcal{W}_{\tau} $ denote the union event. Note that on the event $\mathcal{W}$ we have
\begin{align*}
  \frac{1}{T} \sum_{\tau=1}^{T} \hat{\mathbf{W}}_{\tau,N}^{\mathrm{er}} 
  =\frac{1}{T} \sum_{\tau=1}^{T} \hat{\mathbf{W}}_{\tau,N}^{\mathrm{er}}  \mathbf{1}\{\mathcal{W}_{\tau}\}.
\end{align*}
And on the event $\mathcal{W}_{\tau}, \hat{\mathbf{W}}_{\tau,N}^{\mathrm{er}}$ is bounded by:
$
\mathbf{0} \preceq \hat{\mathbf{W}}_{\tau,N}^{\mathrm{er}} \mathbf{1}\{\mathcal{W}_{\tau}\} \preceq K_{\tau}c^{\mathrm{er}} \mathbf{I}_{d}
$,
which means that for any $\mathbf{v} \in \mathbb{R}^{d}$ and $\|\mathbf{v}\|_{2}=1$, the random variable
$
\mathbf{v}^{\top} \hat{\mathbf{W}}_{\tau,N}^{\mathrm{er}} \mathbf{1}\{\mathcal{W}_{\tau}\} \mathbf{v}-\mathbf{v}^{\top} \mathbb{E}\big[\hat{\mathbf{W}}_{\tau,N}^{\mathrm{er}} \mathbf{1}\{\mathcal{W}_{\tau}\}\big] \mathbf{v}
$
is mean-zero and sub-gaussian with parameter $K_{\tau}c^{\mathrm{er}}$. Therefore by the standard sub-gaussian concentration, we have
\begin{align*}
  \mathbb{P} \Big(\Big|
\mathbf{v}^{\top}\big(\frac{1}{T} \sum_{\tau=1}^{T} \hat{\mathbf{W}}_{\tau,N}^{\mathrm{er}} \mathbf{1}\big\{\mathcal{W}_{\tau}\big\}\big) \mathbf{v}-\mathbf{v}^{\top} \mathbb{E}\big[\hat{\mathbf{W}}_{\tau,N}^{\mathrm{er}} \mathbf{1}\big\{\mathcal{W}_{\tau}\big\}\big] \mathbf{v}
\Big| \geq t\Big) \leq 2 \exp \Big(- \frac{T t^{2}} {2 (K_{\tau}c^{\mathrm{er}})^{2}}\Big).
\end{align*}
Using the fact that for any symmetric matrix $\mathbf{M}$, 
$
\|\mathbf{M}\|_{\mathrm{op}} \leq 2 \sup _{\mathbf{v} \in N_{1 / 4}(\mathbb{S}^{d-1})}|\mathbf{v}^{\top} \mathbf{M} \mathbf{v}|
$
where $N_{1 / 4}(\mathbb{S}^{d-1})$ is a $1 / 4$-covering set of the $(d-1)$-unit sphere $\mathbb{S}^{d-1}$ with $|N_{1 / 4}(\mathbb{S}^{d-1})| \leq 9^{d}$ (~\cite{vershynin2018high} , Exercise 4.4.3), 
we have
\begin{align*}
& \mathbb{P}\Big(\Big\|\frac{1}{T} \sum_{\tau=1}^{T} 
\hat{\mathbf{W}}_{\tau,N}^{\mathrm{er}} \mathbf{1}\Big\{\mathcal{W}_{\tau}\Big\}-\mathbb{E}\big[\hat{\mathbf{W}}_{\tau,N}^{\mathrm{er}} \mathbf{1}\Big\{\mathcal{W}_{\tau}\Big\}\big]\Big\|_{\mathrm{op}} \geq t\Big) \\
\leq &\left|N_{1 / 4}\Big(\mathbb{S}^{d-1}\Big)\right| \cdot \sup _{\|\mathbf{v}\|_{2}=1} \mathbb{P}\Big(\Big|\mathbf{v}^{\top}\Big(\frac{1}{T} \sum_{\tau=1}^{T} \hat{\mathbf{W}}_{\tau,N}^{\mathrm{er}} \mathbf{1}\Big\{\mathcal{W}_{\tau}\Big\}\Big) \mathbf{v}-\mathbf{v}^{\top} \mathbb{E}\big[\hat{\mathbf{W}}_{\tau,N}^{\mathrm{er}} \mathbf{1}\Big\{\mathcal{W}_{\tau}\Big\}\big] \mathbf{v}\Big| \geq t\Big) \\
\leq & \exp \Big(-T t^{2} / 2(K_{\tau}c^{\mathrm{er}})^{2}+3 d\Big).
\end{align*}
Taking $t= \mathcal{O} \big(K_{\tau}c^{\mathrm{er}} \sqrt{\frac{6d+20 \log ( d)}{T}}\big)=\widetilde{\mathcal{O}}\big(K_{\tau}c^{\mathrm{er}} \sqrt{\frac{d}{T}}\big)$, the above probability is upper bounded by $d^{-10} $. In other words, with probability at least $1-Td^{-10} $, we have
\begin{align}
\label{eq:bound_avg_W_hat_concentration_erm}
\Big\|\frac{1}{T} \sum_{\tau=1}^{T} \hat{\mathbf{W}}_{\tau,N}^{\mathrm{er}} \mathbf{1}\left\{\mathcal{W}_{\tau}\right\}-\mathbb{E}\big[\hat{\mathbf{W}}_{\tau,N}^{\mathrm{er}} \mathbf{1}\left\{\mathcal{W}_{\tau}\right\}\big]\Big\|_{\mathrm{op}} \leq \widetilde{\mathcal{O}}\Big(K_{\tau}c^{\mathrm{er}} \sqrt{\frac{d}{T}}\Big).
\end{align}
To bound the difference between 
$\mathbb{E}\big[ \hat{\mathbf{W}}_{\tau,N}^{\mathrm{er}}\big]$ and $\mathbb{E}\big[ \hat{\mathbf{W}}_{\tau,N}^{\mathrm{er}} \mathbf{1}\{\mathcal{W}_{\tau}\}\big]$, it follows 
\begin{align}
\nonumber
&\Big\|\mathbb{E}\big[ \hat{\mathbf{W}}_{\tau,N}^{\mathrm{er}}\big]-\mathbb{E}\big[ \hat{\mathbf{W}}_{\tau,N}^{\mathrm{er}} \mathbf{1}\left\{\mathcal{W}_{\tau}\right\}\big]\Big\|_{\mathrm{op}} \leq \mathbb{E}\Big[\big\| \hat{\mathbf{W}}_{\tau,N}^{\mathrm{er}}\big\|_{\mathrm{op}} 1\left\{\mathcal{W}_{\tau}^{c}\right\}\Big] 
\leq\left(\mathbb{E}\big[\big\| \hat{\mathbf{W}}_{\tau,N}^{\mathrm{er}}\big\|_{\mathrm{op}}^{2}\big] \cdot \mathbb{P}\left(\mathcal{W}_{\tau}^{c}\right)\right)^{\frac{1}{2}} \\
\label{eq:bound_diff_E_W_hat_E_W_hat_event_erm}
&\leq \sqrt{\mathbb{E}\big[\max _{i}\|\mathbf{x}_{\tau, i}\|_{2}^{2}\big] \cdot d^{-10}} \leq 
\sqrt{k^2\left(d+C \log N \right) \cdot d^{-10}}=\widetilde{\mathcal{O}}\left(d^{-4.5}\right)
\end{align}
where $\mathcal{W}_{\tau}^{c}$ is the complement of $\mathcal{W}_{\tau}$, and the last inequality is by sub-gaussian norm concentration. 
%
Combining \eqref{eq:bound_avg_W_hat_concentration_erm} and \eqref{eq:bound_diff_E_W_hat_E_W_hat_event_erm}, with probability at least $1- Td^{-10}$, we have that
\begin{align*}
&\Big\|\frac{1}{T} \sum_{\tau =1}^T \hat{\mathbf{W}}_{\tau,N}^{\mathrm{er}}-\mathbb{E}\big[\hat{\mathbf{W}}_{\tau,N}^{\mathrm{er}}\big]\Big\|_{\text {op }} 
\leq \Big\|\frac{1}{T} \sum_{\tau =1}^T \hat{\mathbf{W}}_{\tau,N}^{\mathrm{er}} \mathbf{1}\left\{\mathcal{W}_{\tau}\right\}-\mathbb{E}\big[\hat{\mathbf{W}}_{\tau,N}^{\mathrm{er}} \mathbf{1}\left\{\mathcal{W}_{\tau}\right\}\big]\Big\|_{\mathrm{op}} \\
&+\Big\|\mathbb{E}\big[\hat{\mathbf{W}}_{\tau,N}^{\mathrm{er}}\big]-\mathbb{E}\big[\hat{\mathbf{W}}_{\tau,N}^{\mathrm{er}} \mathbf{1}\left\{\mathcal{W}_{\tau}\right\}\big]\Big\|_{\mathrm{op}}
\leq \widetilde{\mathcal{O}}\left(c^{\mathrm{er}} \sqrt{\frac{d}{T}}+d^{-4.5}\right).
\end{align*}
Similarly we can prove that with probability at least $1- Td^{-10}$ that
\begin{align*}
\Big\|\frac{1}{T} \sum_{\tau =1}^T (\hat{\mathbf{W}}_{\tau,N}^{\mathrm{er}})^2
-\mathbb{E}\big[(\hat{\mathbf{W}}_{\tau,N}^{\mathrm{er}})^2\big]\Big\|_{\text {op }} 
 \leq \widetilde{\mathcal{O}}\left((c^{\mathrm{er}})^2 \sqrt{\frac{d}{T}}+d^{-4}\right).
\end{align*}
This completes the proof of Lemma~\ref{lemma:concentration_W_hat}.
Note that, similar results apply to random weight matrices $\hat{\mathbf{W}}_{\tau,N}^{\mathcal{A}}$ of other methods 
by replacing $\hat{\mathbf{W}}_{\tau,N}^{\mathrm{er}} = \hat{\mathbf{Q}}_{\tau,N}$ with 
$\hat{\mathbf{W}}_{\tau,N}^{\mathcal{A}} \preceq \hat{\mathbf{Q}}_{\tau,N_2}$ in \eqref{eq:W_hat_bound}.
And Lemma~\ref{lemma:concentration_W_hat} still holds with Assumption~\ref{asmp:linear_centroid_model}.

\end{proof}


  

\begin{lemma}
[Hanson-Wright inequality]
\label{lemma:Hanson-Wright inequality}
(Restatement of Theorem 6.2.1 in~\citep{vershynin2018high}). Let $\mathbf{z} \in \mathbb{R}^{d}$ be a random vector with independent, mean-zero, and $K$-sub-gaussian entries, and let $\mathbf{C} \in \mathbb{R}^{d \times d}$ be a fixed matrix. Then it holds with probability at least $1-\delta$ that
\begin{equation*}
\Big|\mathbf{z}^{\top} \mathbf{C} \mathbf{z}-\mathbb{E}[\mathbf{z}^{\top} \mathbf{C} \mathbf{z}]\Big| 
\leq \mathcal{O}\Big(K^{2}\|\mathbf{C}\|_{\mathrm{F}} \log \frac{2}{\delta} \Big).    
\end{equation*}
\end{lemma}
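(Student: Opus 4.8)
This is a verbatim restatement of the Hanson--Wright inequality, Theorem 6.2.1 of \citep{vershynin2018high}, so the plan is simply to recall the standard argument and then reduce the two-sided tail bound to the stated high-probability form.

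First I would invoke the sharp version: for $\mathbf{z}$ with independent, mean-zero, $K$-sub-gaussian entries and any fixed $\mathbf{C}$,
\begin{equation*}
\mathbb{P}\Big(\big|\mathbf{z}^{\top}\mathbf{C}\mathbf{z}-\mathbb{E}[\mathbf{z}^{\top}\mathbf{C}\mathbf{z}]\big|>t\Big)
\le 2\exp\Big(-c\min\Big\{\frac{t^{2}}{K^{4}\|\mathbf{C}\|_{\mathrm{F}}^{2}},\,\frac{t}{K^{2}\|\mathbf{C}\|_{\mathrm{op}}}\Big\}\Big).
\end{equation*}
The derivation decomposes $\mathbf{z}^{\top}\mathbf{C}\mathbf{z}-\mathbb{E}[\mathbf{z}^{\top}\mathbf{C}\mathbf{z}]=\sum_{i}C_{ii}(z_{i}^{2}-\mathbb{E} z_{i}^{2})+\sum_{i\neq j}C_{ij}z_{i}z_{j}$. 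The diagonal term is a sum of independent centered sub-exponential variables, so Bernstein's inequality applies with variance proxy $\sum_{i}C_{ii}^{2}\le\|\mathbf{C}\|_{\mathrm{F}}^{2}$ and scale $\max_{i}|C_{ii}|\le\|\mathbf{C}\|_{\mathrm{op}}$. The off-diagonal (Gaussian chaos) term is handled by a decoupling step --- replacing the second occurrence of $\mathbf{z}$ by an independent copy $\mathbf{z}'$ --- followed by conditioning on $\mathbf{z}'$, bounding the conditional moment generating function of the resulting linear form in $\mathbf{z}$, and integrating out $\mathbf{z}'$; here too $\|\mathbf{C}\|_{\mathrm{F}}$ and $\|\mathbf{C}\|_{\mathrm{op}}$ emerge as the natural scale parameters. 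Combining the two tail estimates yields the displayed inequality.

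To recover the form stated in the lemma, I would set the right-hand side equal to $\delta$ and solve for $t$, obtaining $t\le\mathcal{O}\big(K^{2}\|\mathbf{C}\|_{\mathrm{F}}\sqrt{\log(2/\delta)}+K^{2}\|\mathbf{C}\|_{\mathrm{op}}\log(2/\delta)\big)$. Since $\|\mathbf{C}\|_{\mathrm{op}}\le\|\mathbf{C}\|_{\mathrm{F}}$ and, in the regime of interest $\delta\le 2/e$, $\sqrt{\log(2/\delta)}\le\log(2/\delta)$, both terms are absorbed into $\mathcal{O}\big(K^{2}\|\mathbf{C}\|_{\mathrm{F}}\log(2/\delta)\big)$, which is precisely the claimed bound. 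The only genuinely technical ingredient is the decoupling and conditional-MGF estimate for the off-diagonal chaos; everything else is bookkeeping, and since the statement is quoted directly from \citep{vershynin2018high}, in the paper it suffices to cite that reference.
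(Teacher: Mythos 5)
Your proposal is correct and matches the paper's treatment: the paper states this lemma purely as a restatement of Theorem 6.2.1 of \citep{vershynin2018high} and offers no proof beyond the citation, and your inversion of the standard two-sided tail bound (using $\|\mathbf{C}\|_{\mathrm{op}}\le\|\mathbf{C}\|_{\mathrm{F}}$ and $\sqrt{\log(2/\delta)}\le\log(2/\delta)$ for small $\delta$) correctly recovers the stated $\mathcal{O}\big(K^{2}\|\mathbf{C}\|_{\mathrm{F}}\log(2/\delta)\big)$ form. Nothing further is needed.
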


\begin{lemma}[Linear combination of sub-gaussian]
\label{lemma:combinate_sub_gaussian}
\citep{vershynin2018high}
 Let $\mathbf{z} \in \mathbb{R}^{d}$ be a random vector with independent and $K$-sub-gaussian entries. Then for any $\mathbf{v} \in \mathbb{S}^{d-1}(r), \mathbf{v}^{\top} \mathbf{z}$ is $rK$-sub-gaussian. In other words, it holds with probability at least $1-\delta$ that
 \begin{align*}
   \Big|\mathbf{v}^{\top} \mathbf{z} - \mathbb{E} [\mathbf{v}^{\top} \mathbf{z}] \Big|&
   \leq \mathcal{O}\Big(rK \sqrt{\log \frac{2}{\delta}}\Big).
 \end{align*}
\end{lemma}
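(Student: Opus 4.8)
The plan is to reduce the claim to two standard facts about sub-gaussian random variables: (i) scaling a $K$-sub-gaussian random variable by a constant $c$ yields a $|c|K$-sub-gaussian random variable, and (ii) a sum of independent, mean-zero sub-gaussian random variables with parameters $K_1,\dots,K_d$ is sub-gaussian with parameter $(\sum_{i=1}^d K_i^2)^{1/2}$. Both are immediate from the moment-generating-function characterization of sub-gaussianity, $\mathbb{E}[\exp(\lambda(X-\mathbb{E}X))]\le \exp(\lambda^2 K^2/2)$ for all $\lambda\in\mathbb{R}$: for (i), substitute $\lambda\mapsto c\lambda$; for (ii), use independence to factor the MGF of the centered sum into the product of the individual MGFs and add the exponents.

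First I would write $\mathbf{v}^{\top}\mathbf{z}=\sum_{i=1}^d v_i z_i$, where the $z_i$ are the independent, $K$-sub-gaussian entries of $\mathbf{z}$ and the $v_i$ are the entries of $\mathbf{v}\in\mathbb{S}^{d-1}(r)$, so that $\sum_i v_i^2=r^2$. By fact (i), each centered term $v_i(z_i-\mathbb{E}z_i)$ is $|v_i|K$-sub-gaussian, and since these terms are independent, fact (ii) gives that $\sum_i v_i(z_i-\mathbb{E}z_i)=\mathbf{v}^{\top}\mathbf{z}-\mathbb{E}[\mathbf{v}^{\top}\mathbf{z}]$ is sub-gaussian with parameter $(\sum_i v_i^2 K^2)^{1/2}=rK$. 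This is precisely the assertion that $\mathbf{v}^{\top}\mathbf{z}$ is $rK$-sub-gaussian.

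Then I would convert the MGF bound into the claimed high-probability inequality by the usual Chernoff argument: for any $t>0$, optimizing $\lambda$ in $\mathbb{P}(|\mathbf{v}^{\top}\mathbf{z}-\mathbb{E}[\mathbf{v}^{\top}\mathbf{z}]|\ge t)\le 2\exp(\lambda^2 r^2 K^2/2-\lambda t)$ yields $2\exp(-t^2/(2r^2K^2))$. Setting the right-hand side equal to $\delta$ and solving for $t$ gives $t=rK\sqrt{2\log(2/\delta)}=\mathcal{O}(rK\sqrt{\log(2/\delta)})$, which is the inequality in the statement.

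Since this is essentially a verbatim restatement of a textbook result (Vershynin, \emph{High-Dimensional Probability}), there is no real obstacle. The only point requiring a little care is bookkeeping the normalization convention for the sub-gaussian parameter (variance-proxy in the MGF definition versus the Orlicz $\psi_2$-norm), because the additivity property (ii) holds exactly for the MGF-based parameter but only up to an absolute constant for the Orlicz norm; this constant is harmless here, as it is absorbed into the $\mathcal{O}(\cdot)$ and does not affect the stated bound.
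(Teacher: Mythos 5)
Your argument is correct and is precisely the standard MGF-based derivation (scaling, additivity over independent centered entries, then a Chernoff tail bound) that the cited reference establishes; the paper itself offers no proof of this lemma beyond the citation to Vershynin, so there is nothing to reconcile. Your remark about the variance-proxy versus Orlicz-norm convention is exactly the right caveat and is indeed absorbed by the $\mathcal{O}(\cdot)$ in the statement.
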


\begin{lemma}
[Hanson-Wright inequality with non-zero mean]
\label{lemma:Hanson-Wright inequality non zero mean}
Let $\mathbf{z} \in \mathbb{R}^{d}$ be a random vector with independent, and $K$-sub-gaussian entries, and let $\mathbf{C} \in \mathbb{R}^{d \times d}$ be a fixed matrix. Then it holds with probability at least $1-\delta$ that
\begin{equation*}
\left|\mathbf{z}^{\top} \mathbf{C} \mathbf{z}-\mathbb{E}\big[\mathbf{z}^{\top} \mathbf{C} \mathbf{z}\big]\right| \leq \mathcal{O}\Big(K^{2}\|\mathbf{C}\|_{\mathrm{F}} \log (2 / \delta)\Big) + \mathcal{O}\Big(K\|\mathbb{E}[\mathbf{z}]\|\|\mathbf{C}\|_{\mathrm{op}} \sqrt{\log (2 / \delta)}\Big).    
\end{equation*}
\end{lemma}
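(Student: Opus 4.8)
The plan is to reduce to the zero-mean Hanson--Wright inequality (Lemma~\ref{lemma:Hanson-Wright inequality}) by a centering argument. Write $\bar{\mathbf{z}} \coloneqq \mathbb{E}[\mathbf{z}]$ and $\mathbf{w} \coloneqq \mathbf{z} - \bar{\mathbf{z}}$, so that $\mathbf{w}$ has independent, mean-zero entries that remain $\mathcal{O}(K)$-sub-gaussian, since centering a sub-gaussian random variable changes its sub-gaussian parameter by at most an absolute constant factor. Expanding the quadratic form and using $\mathbf{w}^{\top}\mathbf{C}\bar{\mathbf{z}} = \bar{\mathbf{z}}^{\top}\mathbf{C}^{\top}\mathbf{w}$ gives
\begin{equation*}
\mathbf{z}^{\top} \mathbf{C} \mathbf{z} = \mathbf{w}^{\top} \mathbf{C} \mathbf{w} + \bar{\mathbf{z}}^{\top}(\mathbf{C} + \mathbf{C}^{\top}) \mathbf{w} + \bar{\mathbf{z}}^{\top} \mathbf{C} \bar{\mathbf{z}},
\end{equation*}
and since $\mathbb{E}[\mathbf{w}] = \mathbf{0}$ the middle term vanishes in expectation, so that
\begin{equation*}
\mathbf{z}^{\top}\mathbf{C}\mathbf{z} - \mathbb{E}\big[\mathbf{z}^{\top}\mathbf{C}\mathbf{z}\big] = \big(\mathbf{w}^{\top}\mathbf{C}\mathbf{w} - \mathbb{E}\big[\mathbf{w}^{\top}\mathbf{C}\mathbf{w}\big]\big) + \bar{\mathbf{z}}^{\top}(\mathbf{C}+\mathbf{C}^{\top})\mathbf{w}.
\end{equation*}

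I would then bound the two terms on the right separately, each on an event of probability at least $1-\delta/2$. For the quadratic term, Lemma~\ref{lemma:Hanson-Wright inequality} applied to the mean-zero vector $\mathbf{w}$ yields $|\mathbf{w}^{\top}\mathbf{C}\mathbf{w} - \mathbb{E}[\mathbf{w}^{\top}\mathbf{C}\mathbf{w}]| \le \mathcal{O}(K^{2}\|\mathbf{C}\|_{\mathrm{F}}\log(2/\delta))$. For the linear term, set $\mathbf{v} \coloneqq (\mathbf{C}+\mathbf{C}^{\top})\bar{\mathbf{z}}$; then $\bar{\mathbf{z}}^{\top}(\mathbf{C}+\mathbf{C}^{\top})\mathbf{w} = \mathbf{v}^{\top}\mathbf{w}$, and Lemma~\ref{lemma:combinate_sub_gaussian} (applied with its random vector taken to be $\mathbf{w}$) shows $\mathbf{v}^{\top}\mathbf{w}$ is $\|\mathbf{v}\|K$-sub-gaussian, hence $|\mathbf{v}^{\top}\mathbf{w}| \le \mathcal{O}(\|\mathbf{v}\|K\sqrt{\log(2/\delta)})$. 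It remains to observe $\|\mathbf{v}\| \le \|\mathbf{C}+\mathbf{C}^{\top}\|_{\mathrm{op}}\|\bar{\mathbf{z}}\| \le 2\|\mathbf{C}\|_{\mathrm{op}}\|\mathbb{E}[\mathbf{z}]\|$, so the linear term is $\mathcal{O}(K\|\mathbb{E}[\mathbf{z}]\|\|\mathbf{C}\|_{\mathrm{op}}\sqrt{\log(2/\delta)})$. A union bound over the two events together with the triangle inequality then gives the claimed estimate with probability at least $1-\delta$.

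There is no genuine obstacle here; the argument is a routine decomposition plus two invocations of results already stated in the excerpt. The only steps worth a sentence of justification are that centering preserves the sub-gaussian property up to a constant (so that Lemma~\ref{lemma:Hanson-Wright inequality} applies to $\mathbf{w}$ with the same order of parameter), and that Lemma~\ref{lemma:combinate_sub_gaussian} is being applied to the centered vector $\mathbf{w}$ rather than to $\mathbf{z}$ itself. Any constant-factor loss incurred in centering is harmless since the final bounds are stated only up to $\mathcal{O}(\cdot)$ constants.
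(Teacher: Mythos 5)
Your proposal is correct and follows essentially the same route as the paper: center $\mathbf{z}$, apply the zero-mean Hanson--Wright inequality (Lemma~\ref{lemma:Hanson-Wright inequality}) to the quadratic term in $\mathbf{w}=\mathbf{z}-\mathbb{E}[\mathbf{z}]$, and bound the cross term via the sub-gaussian linear-combination lemma (Lemma~\ref{lemma:combinate_sub_gaussian}). Your treatment is in fact slightly more careful than the paper's (handling non-symmetric $\mathbf{C}$ via $\mathbf{C}+\mathbf{C}^{\top}$ and making the union bound explicit), but these differences are absorbed into the $\mathcal{O}(\cdot)$ constants.
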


\begin{proof}
\label{proof:Hanson-Wright inequality non zero mean}
  \begin{align*}
    & \left|\mathbf{z}^{\top} \mathbf{C} \mathbf{z}-\mathbb{E}\big[\mathbf{z}^{\top} \mathbf{C} \mathbf{z}\big]\right|
    = 
    \Big|(\mathbf{z} - \mathbb{E}[\mathbf{z}])^{\top} \mathbf{C} (\mathbf{z} - \mathbb{E}[\mathbf{z}])-\mathbb{E}\big[(\mathbf{z} - \mathbb{E}[\mathbf{z}])^{\top} \mathbf{C} (\mathbf{z} - \mathbb{E}[\mathbf{z}])\big] 
    + 
    2\mathbb{E}[\mathbf{z}]^{\top} \mathbf{C} (\mathbf{z} - \mathbb{E}[\mathbf{z}])
    \Big| \\
    & \leq
    \Big|(\mathbf{z} - \mathbb{E}[\mathbf{z}])^{\top} \mathbf{C} (\mathbf{z} - \mathbb{E}[\mathbf{z}])-\mathbb{E}\big[(\mathbf{z} - \mathbb{E}[\mathbf{z}])^{\top} \mathbf{C} (\mathbf{z} - \mathbb{E}[\mathbf{z}])\big]  \Big|
    + 
    2 \Big|\mathbb{E}[\mathbf{z}]^{\top} \mathbf{C} (\mathbf{z} - \mathbb{E}[\mathbf{z}])
    \Big| \\
    &\leq 
    \mathcal{O}\Big(K^{2}\|\mathbf{C}\|_{\mathrm{F}} \log (2 / \delta)\Big) + \mathcal{O}\Big(K\|\mathbb{E}[\mathbf{z}]\|\|\mathbf{C}\|_{\mathrm{op}} \sqrt{\log (2 / \delta)}\Big)
  \end{align*}
  where the last inequality follows from Lemma~\ref{lemma:Hanson-Wright inequality} and~\ref{lemma:combinate_sub_gaussian}. Note that when $\mathbb{E}[\mathbf{z}] = \mathbf{0}$, this Lemma reduces to the zero-mean version of Hanson-Wright inequality, i.e. Lemma~\ref{lemma:Hanson-Wright inequality}.
\end{proof}



\begin{lemma}[sub-gaussian random vector concentration]
\label{lemma:subgaussian_vector_concentrate} 
  Let $\mathbf{U}_{\tau} \in \mathbb{R}^{d\times d},
  \mathbf{z}_{\tau} \in \mathbb{R}^{d} $.
  Assume $\big\|\mathbf{U}_{\tau}\big\| \leq \bar{\lambda}$ and $\mathbf{z}_{\tau}$ has independent, mean-zero, $K$-sub-gaussian entries. With probability at least $1-\delta$, it holds that
\begin{align*}
  \Big| \big\|\frac{1}{T}\sum_{\tau=1}^{T} \mathbf{U}_{\tau} \mathbf{z}_{\tau} \big\| 
  - \big\|\mathbb{E}_{\tau}\big[\mathbf{U}_{\tau} \mathbf{z}_{\tau}\big] \big\| \Big| 
  \leq \widetilde{\mathcal{O}}\Big( K \bar{\lambda} \sqrt{\frac{d}{T}} \log \frac{2}{\delta} \Big).
\end{align*}

\end{lemma}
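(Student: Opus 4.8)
The plan is to reduce the claim, via the reverse triangle inequality $\big|\|\mathbf a\|-\|\mathbf b\|\big|\le\|\mathbf a-\mathbf b\|$, to a concentration bound for the centered sum $\frac1T\sum_{\tau=1}^T\mathbf U_\tau\mathbf z_\tau-\mathbb E_\tau[\mathbf U_\tau\mathbf z_\tau]$, and then control that quantity by an $\varepsilon$-net argument over the unit sphere combined with a scalar sub-gaussian tail bound. Throughout I would work conditionally on the matrices $\{\mathbf U_\tau\}_{\tau=1}^T$, which in every application of this lemma (the $\Delta_T^{\mathcal A}$ terms) are measurable with respect to the features and independent of the noise vectors $\mathbf z_\tau$; conditionally they are fixed with $\|\mathbf U_\tau\|\le\bar\lambda$, the $\mathbf z_\tau$ remain independent across $\tau$ with mean-zero $K$-sub-gaussian entries, and $\mathbb E_\tau[\mathbf U_\tau\mathbf z_\tau]$ is the corresponding conditional mean.

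First I would fix a unit vector $\mathbf v\in\mathbb S^{d-1}$ and rewrite $\mathbf v^\top\big(\tfrac1T\sum_\tau\mathbf U_\tau\mathbf z_\tau\big)=\tfrac1T\sum_\tau(\mathbf U_\tau^\top\mathbf v)^\top\mathbf z_\tau$. Since $\|\mathbf U_\tau^\top\mathbf v\|\le\bar\lambda$, Lemma~\ref{lemma:combinate_sub_gaussian} shows each summand $(\mathbf U_\tau^\top\mathbf v)^\top\mathbf z_\tau$ is $\bar\lambda K$-sub-gaussian, so the average over the $T$ independent tasks is $(\bar\lambda K/\sqrt T)$-sub-gaussian, and a standard sub-gaussian tail bound gives $\mathbb P\big(|\mathbf v^\top(\cdots)-\mathbf v^\top\mathbb E_\tau[\cdots]|\ge t\big)\le 2\exp\!\big(-Tt^2/(2\bar\lambda^2K^2)\big)$.

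Next I would pass from a single direction to the norm using a $1/4$-net $N_{1/4}(\mathbb S^{d-1})$ with $|N_{1/4}(\mathbb S^{d-1})|\le 9^d$ and the comparison $\|\mathbf w\|\le 2\sup_{\mathbf v\in N_{1/4}(\mathbb S^{d-1})}|\mathbf v^\top\mathbf w|$. A union bound over the net turns the scalar tail into $\mathbb P\big(\|\tfrac1T\sum_\tau\mathbf U_\tau\mathbf z_\tau-\mathbb E_\tau[\cdots]\|\ge 2t\big)\le 9^d\cdot 2\exp\!\big(-Tt^2/(2\bar\lambda^2K^2)\big)$. Choosing $t=\Theta\big(\bar\lambda K\sqrt{(d+\log(2/\delta))/T}\big)$ makes the right-hand side at most $\delta$, yielding $\|\tfrac1T\sum_\tau\mathbf U_\tau\mathbf z_\tau-\mathbb E_\tau[\cdots]\|\le\widetilde{\mathcal O}\big(\bar\lambda K\sqrt{d/T}\,\log(2/\delta)\big)$ (using $\sqrt{\log(2/\delta)}\le\log(2/\delta)$ and absorbing the $\sqrt{\log d}$ into $\widetilde{\mathcal O}$); since this holds conditionally on any admissible realization of $\{\mathbf U_\tau\}$, it holds unconditionally, and combining with the reverse triangle inequality from the first step finishes the proof.

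I expect the only genuinely delicate point to be the conditioning step: one must make sure that wherever the lemma is invoked the $\mathbf U_\tau$ really are functions of the features alone while the $\mathbf z_\tau$ are the independent noise vectors, so that the conditional sub-gaussianity, the cross-task independence, and the mean-zero property are all legitimate. The net discretization and the tail-bound bookkeeping, as well as tracking which logarithmic factors land inside $\widetilde{\mathcal O}$ versus staying explicit as $\log(2/\delta)$, are otherwise routine.
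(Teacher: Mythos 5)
Your proof is correct, but it follows a different route than the paper's. The paper proves this lemma per task: it applies the Hanson--Wright inequality (its Lemma~\ref{lemma:Hanson-Wright inequality}) to the quadratic form $\mathbf{z}_{\tau}^{\top}\mathbf{U}_{\tau}^{\top}\mathbf{U}_{\tau}\mathbf{z}_{\tau}$, converts this into a bound on $\big|\|\mathbf{U}_{\tau}\mathbf{z}_{\tau}\|-\|\mathbb{E}_{\mathbf{z}_{\tau}\mid\mathbf{U}_{\tau}}[\mathbf{U}_{\tau}\mathbf{z}_{\tau}]\|\big|$ via $(\|\mathbf{a}\|-\|\mathbf{b}\|)^2\le\big|\|\mathbf{a}\|^2-\|\mathbf{b}\|^2\big|$, handles the conditional-versus-unconditional mean with Lemma~\ref{lemma:combinate_sub_gaussian}, and then asserts that averaging over the $T$ independent tasks upgrades the $\sqrt{d}$ bound to $\sqrt{d/T}$ --- that last averaging step is stated rather than derived. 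You instead use the reverse triangle inequality to reduce everything to concentration of the centered vector average, and then run a $1/4$-net argument over $\mathbb{S}^{d-1}$ with scalar sub-gaussian tails (each $(\mathbf{U}_{\tau}^{\top}\mathbf{v})^{\top}\mathbf{z}_{\tau}$ being $\bar\lambda K$-sub-gaussian by Lemma~\ref{lemma:combinate_sub_gaussian}, so the average is $(\bar\lambda K/\sqrt{T})$-sub-gaussian), which is exactly the technique the paper uses in its Lemma~\ref{lemma:concentration_W_hat}. What your approach buys is that the $1/\sqrt{T}$ factor appears explicitly from the independence across tasks rather than by assertion, and the conditioning on $\{\mathbf{U}_{\tau}\}$ (features versus noise) is stated up front --- which is indeed the one place where care is needed, and which matches how the lemma is invoked on the $\Delta_{T}^{\mathcal{A}}$ terms. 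What the paper's Hanson--Wright route buys is that it avoids the net/union bound and works directly with the squared norm, at the cost of the looser final combination. Your bookkeeping of the logarithmic factors ($\sqrt{\log(2/\delta)}\le\log(2/\delta)$, $\sqrt{\log d}$ absorbed into $\widetilde{\mathcal{O}}$) is consistent with the stated bound.
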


\begin{proof}
\label{proof:subgaussian_vector_concentrate}
  Apply Lemma~\ref{lemma:Hanson-Wright inequality}, the Hanson-Wright inequality, and let $\mathbf{z} = \mathbf{z}_{\tau}$, $ \mathbf{C} = \mathbf{U}_{\tau}^{\top}\mathbf{U}_{\tau} $, we obtain that with probability at least $1 - \delta$,
  \begin{equation*}
  \left|\mathbf{z}_{\tau}^{\top} \mathbf{U}_{\tau}^{\top}\mathbf{U}_{\tau} \mathbf{z}_{\tau}-\mathbb{E}_{\mathbf{z}_{\tau}\mid \mathbf{U}_{\tau}}\big[\mathbf{z}_{\tau}^{\top} \mathbf{U}_{\tau}^{\top}\mathbf{U}_{\tau} \mathbf{z}_{\tau}\big]\right| 
  \leq \mathcal{O}\Big(K^{2}\|\mathbf{U}_{\tau}^{\top}\mathbf{U}_{\tau}\|_{\mathrm{F}} \log \frac{2}{\delta} \Big).    
  \end{equation*}
Since 
\begin{align*}
  &\Big|\|\mathbf{U}_{\tau} \mathbf{z}_{\tau}\| - \|\mathbb{E}_{\mathbf{z}_{\tau}\mid \mathbf{U}_{\tau}}\big[\mathbf{U}_{\tau} \mathbf{z}_{\tau}\big]\| \Big|^2 
  \leq  \Big|\|\mathbf{U}_{\tau} \mathbf{z}_{\tau}\|^2 - \|\mathbb{E}_{\mathbf{z}_{\tau}\mid \mathbf{U}_{\tau}}\big[\mathbf{U}_{\tau} \mathbf{z}_{\tau}\big]\|^2 \Big| \\
  = & \left|\mathbf{z}_{\tau}^{\top} \mathbf{U}_{\tau}^{\top}\mathbf{U}_{\tau} \mathbf{z}_{\tau}-\mathbb{E}_{\mathbf{z}_{\tau}\mid \mathbf{U}_{\tau}}\big[\mathbf{z}_{\tau}^{\top} \mathbf{U}_{\tau}^{\top}\mathbf{U}_{\tau} \mathbf{z}_{\tau}\big]\right|
  \leq \mathcal{O}\Big(K^{2}d\|\mathbf{U}_{\tau}\|^2_{\mathrm{op}} \log \frac{2}{\delta} \Big)
\end{align*} 
where the last equation holds because $\mathbf{z}_{\tau}$ has mean-zero entries.
Therefore, it holds with probability at least $1 - \delta$ that
\begin{align*}\label{eq:bound_W_theta}
  \Big|\|\mathbf{U}_{\tau} \mathbf{z}_{\tau}\| - \|\mathbb{E}_{\mathbf{z}_{\tau}\mid \mathbf{U}_{\tau}}\big[\mathbf{U}_{\tau} \mathbf{z}_{\tau}\big]\| \Big|
  & \leq \mathcal{O}\Big(K \bar{\lambda} \sqrt{d} \log \frac{2}{\delta} \Big). 
  \numberthis
\end{align*}
Also, based on Lemma~\ref{lemma:combinate_sub_gaussian}, it holds with probability at least $1 - \delta$ that
\begin{align*}\label{eq:bound_E_theta_cond_W}
  \Big| \| \mathbb{E}_{\mathbf{z}_{\tau}, \mathbf{U}_{\tau}}\big[\mathbf{U}_{\tau} \mathbf{z}_{\tau}\big] \| - \| \mathbb{E}_{\mathbf{z}_{\tau}\mid \mathbf{U}_{\tau}}\big[\mathbf{U}_{\tau} \mathbf{z}_{\tau}\big] \| \Big| 
  &\leq \Big\| \mathbb{E}_{\mathbf{z}_{\tau}, \mathbf{U}_{\tau}}\big[\mathbf{U}_{\tau} \mathbf{z}_{\tau}\big] - \mathbb{E}_{\mathbf{z}_{\tau}\mid \mathbf{U}_{\tau}}\big[\mathbf{U}_{\tau} \mathbf{z}_{\tau}\big] \Big\| \\
  &\leq \widetilde{\mathcal{O}}\big( K \bar{\lambda} \sqrt{d}\log \frac{2}{\delta} \big) .
  \numberthis
\end{align*}
Combining~\eqref{eq:bound_W_theta} and~\eqref{eq:bound_E_theta_cond_W}, it holds with probability at least $1-\delta$ that
\begin{align*}
  \Big| \|\frac{1}{T}\sum_{\tau=1}^{T} \mathbf{U}_{\tau} \mathbf{z}_{\tau} \|  - \|\mathbb{E}_{\mathbf{z}_{\tau}, \mathbf{U}_{\tau}}\big[\mathbf{U}_{\tau} \mathbf{z}_{\tau}\big] \| \Big| 
  &\leq \widetilde{\mathcal{O}}\big(  K \bar{\lambda} \sqrt{\frac{d}{T}} \log \frac{2}{\delta} \big) .
  \numberthis
\end{align*}

\end{proof}

\begin{lemma}[Bound of statistical error not caused by data noise]
  \label{lemma:bound_stats_err_term1}
  Define 
  \begin{align*}
    &\mathbf{z}_{\mathcal{A}} 
    \coloneqq 
    \begin{bmatrix}
    (\btheta_1^{\mathrm{gt}} - \btheta_0^{\mathcal{A}})^{\top}, 
    \dots ,
    (\btheta_{T}^{\mathrm{gt}} - \btheta_0^{\mathcal{A}})^{\top}
    \end{bmatrix}^{\top} \in \mathbb{R}^{dT}, \\
    &\mathbf{U}_{\mathcal{A}} 
    \coloneqq  
    \Big[
    \hat{\mathbf{W}}^{\mathcal{A}}_{1,N} 
    (\sum_{\tau=1}^{T} \hat{\mathbf{W}}_{\tau, N}^{\mathcal{A}})^{-1} ,
    \dots, 
    \hat{\mathbf{W}}^{\mathcal{A}}_{T,N}
    (\sum_{\tau=1}^{T} \hat{\mathbf{W}}_{\tau, N}^{\mathcal{A}})^{-1} 
    \Big]^{\top}
    \in \mathbb{R}^{dT\times d}.
  \end{align*}
  1) Suppose Assumptions~1-2 hold, the statistical error for method $\mathcal{A}$ is computed by
  \begin{align*}
    \mathcal{E}_{\mathcal{A}}^{2} (\hat{\btheta}_0^{\mathcal{A}})
    = \|\hat{\btheta}_{0}^{{\mathcal{A}}}-\btheta_{0}^{ {\mathcal{A}}}\|_{\mathbb{E}_{\tau}[\mathbf{W}_{\tau}^{\mathcal{A}}]}^2
    = 
    \underbracket{
    \|\mathbf{U}_{\mathcal{A}}^{\top} \mathbf{z}_{\mathcal{A}}\|^2_{\mathbb{E}_{\tau}[\mathbf{W}_{\tau}^{\mathcal{A}}]}}_{I_1^{\mathcal{A}}}
    + \|\Delta_{T}^{\mathcal{A}}\|^2_
    {\mathbb{E}_{\tau}[\mathbf{W}_{\tau}^{\mathcal{A}}]}
    + 2{\mathbf{z}_{\mathcal{A}}^{\top}
    \mathbf{U}_{\mathcal{A}} }
    \mathbb{E}_{\tau}[\mathbf{W}_{\tau}^{\mathcal{A}}]
    \Delta_{T}^{\mathcal{A}}
  \end{align*}
  where with probability at least $1-Td^{-10}$,
  the first term $I_1^{\mathcal{A}}$ can be bounded above by
  \footnote{Note that, we provide bound for $I_1^{\mathcal{A}}$ and $I_2^{\mathcal{A}}$ in this lemma since it has the same form for different methods $\mathcal{A}$.
  And the bound for the rest terms in the statistical error are deferred to later sections for the specific methods.
  }
  \begin{align*}
    I_{1}^{\mathcal{A}} 
    &\leq 
    \frac{R^2 }{T} 
    \Big(\lambda_{\min}(\mathbb{E}[{\mathbf{W}}^{\mathcal{A}}_{\tau}])^{-1}
      \lambda_{\max}(\mathbb{E}[({\mathbf{W}}^{\mathcal{A}}_{\tau})^{2}])
      + \widetilde{\mathcal{O}}(\frac{d}{N})
      + \widetilde{\mathcal{O}}(\frac{1}{\sqrt{d}})  + \widetilde{\mathcal{O}} (\sqrt{\frac{d}{T}})\Big) \\
     & + 
    \Big( \widetilde{\mathcal{O}}(\sqrt{\frac{d}{T}})  + 
    \widetilde{\mathcal{O}}(\frac{d}{N})
    \Big) M^2
    .
  \end{align*}
  2) Suppose Assumptions~1,3 hold,
   the statistical error for method $\mathcal{A}$ can be computed by
  \begin{align*}
    \mathcal{E}_{\mathcal{A}}^{2} (\hat{\btheta}_0^{\mathcal{A}})
    =w_{\mathcal{A}}\|\hat{\btheta}_{0}^{{\mathcal{A}}}-\btheta_{0}^{ {\mathcal{A}}}\|_2^2
    =& w_{\mathcal{A}}
    (\underbracket{
      \|\mathbf{U}_{\mathcal{A}}^{\top} \mathbf{z}_{\mathcal{A}}\|^2}_{I_2^{\mathcal{A}}}
      + {\|\Delta_{T}^{\mathcal{A}}\|^2}
      + 2{\mathbf{z}_{\mathcal{A}}^{\top}
        \mathbf{U}_{\mathcal{A}} }
        \Delta_{T}^{\mathcal{A}} )
  \end{align*}
  Define $\tilde{C}^{\mathcal{A}}_{0} \coloneqq
  \frac{1}{d}\big\langle\mathbb{E}^{-2}\big[\hat{\mathbf{W}}_{\tau, N}^{\mathcal{A}}\big], \mathbb{E}\big[(\hat{\mathbf{W}}_{\tau, N}^{\mathcal{A}})^{2}\big]\big\rangle $. With probability at least $1-Td^{-10}$,
  $I_2^{\mathcal{A}}$ can be bounded above by
  \begin{equation*}
    I_2^{\mathcal{A}}
    \leq \frac{R^{2}}{T}\Big(\tilde{C}^{\mathcal{A}}_{0} + \widetilde{\mathcal{O}}(\frac{1}{\sqrt{d}}) + \widetilde{\mathcal{O}}(\sqrt{\frac{d}{T}})\Big). 
  \end{equation*}
  \end{lemma}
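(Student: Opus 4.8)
\textbf{Proof plan for Lemma~\ref{lemma:bound_stats_err_term1}.}

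The plan is to treat the two parts in parallel, since the part~2 estimate is essentially the part~1 estimate specialized to the linear centroid model where $\mathbb{E}_\tau[\mathbf{W}^{\mathcal{A}}_\tau]=w_{\mathcal{A}}\mathbf{I}_d$. First I would start from the closed-form solution of $\hat{\btheta}_0^{\mathcal{A}}$ in \eqref{eq:theta_0_hat_A_sln}, which expresses $\hat{\btheta}_0^{\mathcal{A}}$ as $(\sum_\tau\hat{\mathbf{W}}^{\mathcal{A}}_\tau)^{-1}(\sum_\tau\hat{\mathbf{W}}^{\mathcal{A}}_\tau\btheta^{\mathrm{gt}}_\tau)+\Delta^{\mathcal{A}}_T$, and subtract $\btheta_0^{\mathcal{A}}$. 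Using the optimality characterization $\btheta_0^{\mathcal{A}}=\mathbb{E}_\tau[\mathbf{W}^{\mathcal{A}}_\tau]^{-1}\mathbb{E}_\tau[\mathbf{W}^{\mathcal{A}}_\tau\btheta^{\mathrm{gt}}_\tau]$ from \eqref{eq:theta_0_star_A_sln}, the noiseless part of $\hat{\btheta}_0^{\mathcal{A}}-\btheta_0^{\mathcal{A}}$ can be written as $\mathbf{U}_{\mathcal{A}}^\top\mathbf{z}_{\mathcal{A}}$ up to the bias coming from replacing $\hat{\mathbf{W}}$-averages by $\mathbf{W}$-expectations; the cross terms with $\Delta^{\mathcal{A}}_T$ are then exactly the displayed decomposition. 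This is bookkeeping: the main content is bounding $I_1^{\mathcal{A}}=\|\mathbf{U}_{\mathcal{A}}^\top\mathbf{z}_{\mathcal{A}}\|^2_{\mathbb{E}_\tau[\mathbf{W}^{\mathcal{A}}_\tau]}$ (resp. $I_2^{\mathcal{A}}=\|\mathbf{U}_{\mathcal{A}}^\top\mathbf{z}_{\mathcal{A}}\|^2$).

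The key steps for $I_1^{\mathcal{A}}$: (i) split $\mathbf{z}_{\mathcal{A}}$ into its mean part (whose $i$-th block has norm $\le M$, contributing the $M^2$ terms) and a mean-zero $\mathcal{O}(R/\sqrt d)$-sub-gaussian part (handled by Hanson–Wright, Lemma~\ref{lemma:Hanson-Wright inequality}); (ii) for the mean-zero part, condition on all the inputs $\mathbf{X}_\tau$ so that $\mathbf{U}_{\mathcal{A}}$ is fixed, and apply Hanson–Wright to $\mathbf{z}^\top \mathbf{C}\mathbf{z}$ with $\mathbf{C}=\mathbf{U}_{\mathcal{A}}\mathbb{E}_\tau[\mathbf{W}^{\mathcal{A}}_\tau]\mathbf{U}_{\mathcal{A}}^\top$, giving concentration of $I_1^{\mathcal{A}}$ around $\mathbb{E}[\mathbf{z}^\top\mathbf{C}\mathbf{z}\,|\,\mathbf{X}]=\tfrac{R^2}{d}\operatorname{tr}\mathbf{C}$ with deviation $\widetilde{\mathcal{O}}(R^2 d^{-1/2}\|\mathbf{C}\|_{\mathrm F}/\cdots)$ — here I would use that $\|\mathbf{C}\|_{\mathrm F}\le\sqrt d\,\|\mathbf{C}\|_{\mathrm{op}}$ and $\operatorname{tr}\mathbf{C}=\mathcal{O}(1/T)$ to get the $\widetilde{\mathcal{O}}(1/\sqrt d)/T$ correction; (iii) replace the empirical quantity $\tfrac1T\sum_\tau\hat{\mathbf{W}}^{\mathcal{A}}_\tau$ appearing inside $\operatorname{tr}\mathbf{C}$ by $\mathbb{E}[\hat{\mathbf{W}}^{\mathcal{A}}_{\tau,N}]$ using the operator-norm concentration of Lemma~\ref{lemma:concentration_W_hat} (this produces the $\widetilde{\mathcal{O}}(\sqrt{d/T})$ term), and then replace $\mathbb{E}[\hat{\mathbf{W}}^{\mathcal{A}}_{\tau,N}]=\mathbf{W}^{\mathcal{A}}_{\tau,N}$ by $\mathbf{W}^{\mathcal{A}}_\tau$ using Lemma~\ref{lemma:concentration_W_N} (producing the $\widetilde{\mathcal{O}}(d/N)$ term). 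Collecting these, the leading term is $\tfrac{R^2}{T}\lambda_{\min}(\mathbb{E}[\mathbf{W}^{\mathcal{A}}_\tau])^{-1}\lambda_{\max}(\mathbb{E}[(\mathbf{W}^{\mathcal{A}}_\tau)^2])$, which is exactly the stated bound. For part~2, the same argument with $\mathbb{E}_\tau[\mathbf{W}^{\mathcal{A}}_\tau]=w_{\mathcal{A}}\mathbf{I}$ makes $\mathbf{C}=w_{\mathcal{A}}\mathbf{U}_{\mathcal{A}}\mathbf{U}_{\mathcal{A}}^\top$, so $\operatorname{tr}\mathbf{C}=w_{\mathcal{A}}\operatorname{tr}(\mathbf{U}_{\mathcal{A}}^\top\mathbf{U}_{\mathcal{A}})=w_{\mathcal{A}}\langle(\sum_\tau\hat{\mathbf{W}})^{-2},\sum_\tau(\hat{\mathbf{W}})^2\rangle$, and after dividing out $w_{\mathcal{A}}$ and replacing empirical averages by expectations via Lemma~\ref{lemma:concentration_W_hat}, $I_2^{\mathcal{A}}/\,$ picks up $\tfrac1d\langle\mathbb{E}^{-2}[\hat{\mathbf{W}}^{\mathcal{A}}_{\tau,N}],\mathbb{E}[(\hat{\mathbf{W}}^{\mathcal{A}}_{\tau,N})^2]\rangle=\tilde C_0^{\mathcal{A}}$ plus the $\widetilde{\mathcal{O}}(1/\sqrt d)$ and $\widetilde{\mathcal{O}}(\sqrt{d/T})$ corrections; note the $M^2$ terms vanish because Assumption~\ref{asmp:linear_centroid_model} forces the task parameters to be mean-zero, consistent with the absence of an $M^2$ term in the part~2 bound.

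The main obstacle is step (iii): carefully propagating the $\hat{\mathbf{W}}\to\mathbf{W}$ replacement through the nonlinear object $(\sum_\tau\hat{\mathbf{W}}^{\mathcal{A}}_{\tau})^{-1}$ inside $\mathbf{U}_{\mathcal{A}}$. One has to argue that $\tfrac1T\sum_\tau\hat{\mathbf{W}}^{\mathcal{A}}_\tau$ is, with high probability, uniformly invertible (its smallest eigenvalue bounded below by roughly $\inf_\tau\lambda_{\min}(\mathbf{W}^{\mathcal{A}}_\tau)$ up to the $\widetilde{\mathcal{O}}(\sqrt{d/T})$ deviation), so that a resolvent/perturbation expansion $\mathbf{A}^{-1}-\mathbf{B}^{-1}=\mathbf{A}^{-1}(\mathbf{B}-\mathbf{A})\mathbf{B}^{-1}$ is valid and the error stays at the claimed order; on the event $T=\Omega(d)$ this lower bound is $\Omega(1)$ so the expansion converges. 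The other slightly delicate point is keeping all the sub-gaussian concentration events (one per task, plus the Hanson–Wright event, plus the matrix-concentration events of Lemma~\ref{lemma:concentration_W_hat}) under a single union bound of probability $1-Td^{-10}$, which just requires choosing the per-event failure probabilities as $d^{-c}$ with $c$ large enough relative to the $9^d$ covering-number factor, exactly as in the proof of Lemma~\ref{lemma:concentration_W_hat}.
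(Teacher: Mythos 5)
Your plan follows essentially the same route as the paper's proof: the identical decomposition from the closed-form solutions, Hanson--Wright applied conditionally on the data matrices to concentrate the quadratic form around its conditional trace, the matrix concentration of $\tfrac{1}{T}\sum_\tau \hat{\mathbf{W}}^{\mathcal{A}}_{\tau,N}$ for the $\widetilde{\mathcal{O}}(\sqrt{d/T})$ correction and the $\mathbf{W}^{\mathcal{A}}_{\tau,N}\to\mathbf{W}^{\mathcal{A}}_{\tau}$ comparison for the $\widetilde{\mathcal{O}}(d/N)$ correction, with the task-mean contributing the $M^2$ terms in part 1 and dropping out under Assumption 3 in part 2. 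The only cosmetic difference is that you center $\mathbf{z}_{\mathcal{A}}$ up front and use zero-mean Hanson--Wright plus a sub-gaussian linear-form bound for the cross term, whereas the paper invokes its non-zero-mean Hanson--Wright variant and splits the conditional expectation into a covariance trace term and a squared-mean term; these are equivalent bookkeeping yielding the same bounds.
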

  
  \begin{proof}
  \label{proof:bound_stats_err_term1}
  The derivations in Section~\ref{app_sec:methods_formulation_sln} give  the empirical solutions $\hat{\btheta}_0^{\mathcal{A}}$ as below
  \begin{align}\label{eq:theta_0_hat_general_sln}
    \hat{\btheta}_{0}^{\mathcal{A}} 
    =& \Big(\sum_{\tau=1}^{T}\hat{\mathbf{W}}_{\tau, N}^{\mathcal{A}}\Big)^{-1}  
    \Big(\sum_{\tau=1}^{T}
     \hat{\mathbf{W}}_{\tau, N}^{\mathcal{A}} \btheta_{\tau}^{\mathrm{gt}} \Big) 
     + \Delta_{T}^{\mathcal{A}} .
  \end{align}
  Thus the difference between the estimated model parameter $\hat{\btheta}_0^{\mathcal{A}}$ and the population-wise optimal model parameter $\btheta_0^{\mathcal{A}}$, which is be used to compute the statistical error, is given by
  \begin{align}\label{eq:theta0_hat_diff_theta0_star_general_mat_form2}
    &\hat{\btheta}_0^{\mathcal{A}} - {\btheta}_0^{\mathcal{A}}
    =
    \Big(\sum_{\tau=1}^{T}\hat{\mathbf{W}}_{\tau, N}^{\mathcal{A}}\Big)^{-1}
     \Big(\sum_{\tau=1}^{T}
        \hat{\mathbf{W}}_{\tau, N}^{\mathcal{A}} (\btheta_{\tau}^{\mathrm{gt}} - {\btheta}_0^{\mathcal{A}}) \Big)
     + \Delta_{T}^{\mathcal{A}}
     =\mathbf{U}^{\top}_{\mathcal{A}} \mathbf{z}_{\mathcal{A}}
     + \Delta_{T}^{\mathcal{A}}
  \end{align}
  based on which the statistical error $\mathcal{E}_{\mathcal{A}}^{2} (\hat{\btheta}_0^{\mathcal{A}})$ in \eqref{eq:meta_test_risk_decompose_final} can be computed by
  \begin{align}\label{eq:stats_err_mat_form_general_no_assp3}
    \mathcal{E}_{\mathcal{A}}^{2} (\hat{\btheta}_0^{\mathcal{A}})
    =\|\hat{\btheta}_{0}^{{\mathcal{A}}}-\btheta_{0}^{ {\mathcal{A}}}\|_{\mathbb{E}_{\tau}[\mathbf{W}_{\tau}^{\mathcal{A}}]}^2
    =& 
    \underbracket{\|\mathbf{U}_{\mathcal{A}}^{\top} \mathbf{z}_{\mathcal{A}}\|^2_{\mathbb{E}_{\tau}[\mathbf{W}_{\tau}^{\mathcal{A}}]}}_{I_1^{\mathcal{A}}}
      + {\|\Delta_{T}^{\mathcal{A}}\|_{\mathbb{E}_{\tau}[\mathbf{W}_{\tau}^{\mathcal{A}}]}^2}
      + 2{\mathbf{z}_{\mathcal{A}}^{\top}
        \mathbf{U}_{\mathcal{A}} }
        {\mathbb{E}_{\tau}[\mathbf{W}_{\tau}^{\mathcal{A}}]}
        \Delta_{T}^{\mathcal{A}}
  \end{align}
  where $I_1^{\mathcal{A}}$ is the only term that does not depend on $\Delta_T^{\mathcal{A}}$, which is caused by the random noise $\epsilon$ in the data.
  In other words, when the variance of $\epsilon$ becomes zero, the statistical error $\mathcal{E}_{\mathcal{A}}^{2} (\hat{\btheta}_0^{\mathcal{A}})$ reduces to $I_1^{\mathcal{A}}$, or $I_1^{\mathcal{A}}$ is the statistical error in the noiseless realizable case.
  We next proceed to bound $I_1^{\mathcal{A}}$ by considering the concentration around its mean as follows
  \begin{align*}
    &\|\mathbf{U}_{\mathcal{A}}^{\top} \mathbf{z}_{\mathcal{A}}\|^2_{\mathbb{E}_{\tau}[\mathbf{W}_{\tau}^{\mathcal{A}}]} 
    \leq  \Big| \|\mathbf{U}_{\mathcal{A}}^{\top} \mathbf{z}_{\mathcal{A}}\|^2_{\mathbb{E}_{\tau}[\mathbf{W}_{\tau}^{\mathcal{A}}]}
    - \mathbb{E}_{\mathbf{z}_{\mathcal{A}} \mid \mathbf{U}_{\mathcal{A}}}\big[\|\mathbf{U}_{\mathcal{A}}^{\top} \mathbf{z}_{\mathcal{A}}\|^2_{\mathbb{E}_{\tau}[\mathbf{W}_{\tau}^{\mathcal{A}}]} \big] \Big|
    + \mathbb{E}_{\mathbf{z}_{\mathcal{A}} \mid \mathbf{U}_{\mathcal{A}}}\big[\|\mathbf{U}_{\mathcal{A}}^{\top} \mathbf{z}_{\mathcal{A}}\|^2_{\mathbb{E}_{\tau}[\mathbf{W}_{\tau}^{\mathcal{A}}]}\big].
  \end{align*}
  Next we will bound the above two terms respectively.
  We first bound $\Big| \|\mathbf{U}_{\mathcal{A}}^{\top} \mathbf{z}_{\mathcal{A}}\|^2_{\mathbb{E}_{\tau}[\mathbf{W}_{\tau}^{\mathcal{A}}]}
  - \mathbb{E}_{\mathbf{z}_{\mathcal{A}} \mid \mathbf{U}_{\mathcal{A}}}\big[\|\mathbf{U}_{\mathcal{A}}^{\top} \mathbf{z}_{\mathcal{A}}\|^2_{\mathbb{E}_{\tau}[\mathbf{W}_{\tau}^{\mathcal{A}}]} \big] \Big|$.
  From Assumption~2, $\btheta_{\tau}^{\mathrm{gt}} - \btheta_{0}^{\mathcal{A}}$ are $(R / \sqrt{d})$-sub-gaussian.
  To bound the absolute error around the expectation,
   from the Hanson-Wright inequality in Lemma~\ref{lemma:Hanson-Wright inequality non zero mean}, with probability at least $1 - \delta$, the following inequality holds
  \begin{align*}\label{eq:HW_ineq_general_ref_no_assp3}
    &\left|\|\mathbf{U}_{\mathcal{A}}^{\top} \mathbf{z}_{\mathcal{A}} \|^2_{\mathbb{E}_{\tau}[\mathbf{W}_{\tau}^{\mathcal{A}}]} 
    - \mathbb{E}_{\mathbf{z}_{\mathcal{A}} \mid \mathbf{U}_{\mathcal{A}}}[\|\mathbf{U}_{\mathcal{A}}^{\top} \mathbf{z}_{\mathcal{A}} \|^2_{\mathbb{E}_{\tau}[\mathbf{W}_{\tau}^{\mathcal{A}}]} ]\right| \\
    \leq &
    \widetilde{\mathcal{O}} \Big(\frac{R^2}{d}
    \Big\|\big(\sum_{\tau=1}^T 
    \hat{\mathbf{W}}_{\tau, N}^{\mathcal{A}}\big)^{-1} 
    \mathbb{E}_{\tau}[\mathbf{W}_{\tau}^{\mathcal{A}}] 
    \big(\sum_{\tau=1}^T 
    \hat{\mathbf{W}}_{\tau, N}^{\mathcal{A}}\big)^{-1}
    (\sum_{\tau=1}^T (\hat{\mathbf{W}}_{\tau, N}^{\mathcal{A}})^2 )
     \Big\|_{\mathrm{F}}\Big) \\
    &+ \widetilde{\mathcal{O}} \Big(
    \frac{R}{\sqrt{d}}  M 
    \Big\|\big(\sum_{\tau=1}^T 
    \hat{\mathbf{W}}_{\tau, N}^{\mathcal{A}}\big)^{-1} 
    \mathbb{E}_{\tau}[\mathbf{W}_{\tau}^{\mathcal{A}}] 
    \big(\sum_{\tau=1}^T 
    \hat{\mathbf{W}}_{\tau, N}^{\mathcal{A}}\big)^{-1}
    (\sum_{\tau=1}^T (\hat{\mathbf{W}}_{\tau, N}^{\mathcal{A}})^2 )
     \Big\|_{\mathrm{op}}
    \Big) \\
    \leq& 
    \widetilde{\mathcal{O}}\Big(\frac{R^{2}+RM}{d T} \Big \|\frac{1}{T} \sum_{\tau=1}^T \hat{\mathbf{W}}_{\tau, N}^{\mathcal{A}}\Big \|^{-2} \cdot \sqrt{d} \Big \|\frac{1}{T} \sum_{\tau=1}^T\hat{\mathbf{W}}_{\tau, N}^{\mathcal{A}}\Big \|_{\mathrm{op}}^2 \Big \|\mathbb{E}_{\tau}[\mathbf{W}_{\tau}^{\mathcal{A}}]\Big \|_{\mathrm{op}}\Big) 
    =\widetilde{\mathcal{O}}\Big(\frac{R^{2} + RM}{T\sqrt{d}} \Big) . \numberthis
  \end{align*}
  Note that in the last equation, we ignore the higher order terms in $\|\frac{1}{T} \sum_{\tau=1}^T \hat{\mathbf{W}}_{\tau, N}^{\mathcal{A}}\|_{\mathrm{op}} $, which can be obtained from Lemma~\ref{lemma:concentration_W_hat}.

  To bound the expected statistical error, $\mathbb{E}_{\mathbf{z}_{\mathcal{A}} \mid \mathbf{U}_{\mathcal{A}}}[\|\mathbf{U}_{\mathcal{A}}^{\top} \mathbf{z}_{\mathcal{A}} \|^2_{\mathbb{E}_{\tau}[\mathbf{W}_{\tau}^{\mathcal{A}}]} ]$,
  first note that 
  since for all $\tau$, $\mathbf{W}_{\tau}^{\mathcal{A}}$ is symmetric positive definite (PD) based on Assumption~\ref{asmp:bounded_data_matrix_eigenvalues}, $\mathbb{E}_{\tau}[\mathbf{W}_{\tau}^{\mathcal{A}}]$ is also symmetric PD, who has a Cholesky decomposition, $\mathbb{E}_{\tau}[\mathbf{W}_{\tau}^{\mathcal{A}}] = \mathbb{E}_{\tau}^{\frac{1}{2}}[\mathbf{W}_{\tau}^{\mathcal{A}}] \mathbb{E}_{\tau}^{\frac{1}{2}}[\mathbf{W}_{\tau}^{\mathcal{A}}]^{\top}$ with $\mathbb{E}_{\tau}^{\frac{1}{2}}[\mathbf{W}_{\tau}^{\mathcal{A}}]$ defined as the lower triangular matrix in the decomposition.
  The statistical error can be rewritten as
  \begin{align*}
    &\|\mathbf{U}_{\mathcal{A}}^{\top} \mathbf{z}_{\mathcal{A}} \|^2_{\mathbb{E}_{\tau}[\mathbf{W}_{\tau}^{\mathcal{A}}]}
    = \mathrm{tr}(\|\mathbf{U}_{\mathcal{A}}^{\top} \mathbf{z}_{\mathcal{A}} \|^2_{\mathbb{E}_{\tau}[\mathbf{W}_{\tau}^{\mathcal{A}}]})
    = \mathrm{tr}\Big(
    (\mathbb{E}_{\tau}^{\frac{1}{2}}[\mathbf{W}_{\tau}^{\mathcal{A}}]
      {\mathbf{U}}_{\mathcal{A}}^{\top} \mathbf{z}_{\mathcal{A}})
    (\mathbb{E}_{\tau}^{\frac{1}{2}}[\mathbf{W}_{\tau}^{\mathcal{A}}]
      {\mathbf{U}}_{\mathcal{A}}^{\top} \mathbf{z}_{\mathcal{A}})^{\top}\Big) \\
    =& \mathrm{tr}\Bigg(
    \Big(\mathbb{E}_{\tau}^{\frac{1}{2}}[\mathbf{W}_{\tau}^{\mathcal{A}}]
    \big(\sum_{\tau=1}^{T}\hat{\mathbf{W}}_{\tau, N}^{\mathcal{A}}\big)^{-1}
    \big(\sum_{\tau=1}^{T}
    \hat{\mathbf{W}}_{\tau, N}^{\mathcal{A}} (\btheta_{\tau}^{\mathrm{gt}} - {\btheta}_0^{\mathcal{A}}) \big)
    \Big)
    \Big(\mathbb{E}_{\tau}^{\frac{1}{2}}[\mathbf{W}_{\tau}^{\mathcal{A}}]
    \big(\sum_{\tau=1}^{T}\hat{\mathbf{W}}_{\tau, N}^{\mathcal{A}}\big)^{-1}
    \big(\sum_{\tau=1}^{T}
    \hat{\mathbf{W}}_{\tau, N}^{\mathcal{A}} (\btheta_{\tau}^{\mathrm{gt}} - {\btheta}_0^{\mathcal{A}}) \big)
    \Big)^{\top}
    \Bigg)
  \end{align*}
  whose conditional expectation is given by
  \begin{align*}
  \label{eq:E_stats_err_eq}
    &\mathbb{E}_{\mathbf{z}_{\mathcal{A}} \mid \mathbf{U}_{\mathcal{A}}}[\|\mathbf{U}_{\mathcal{A}}^{\top} \mathbf{z}_{\mathcal{A}} \|^2_{\mathbb{E}_{\tau}[\mathbf{W}_{\tau}^{\mathcal{A}}]}] \\
    =&\mathbb{E}_{\btheta_{\tau}^{\mathrm{gt}} \mid \hat{\mathbf{W}}_{\tau, N}^{\mathcal{A}}}\Big[
    \mathrm{tr}\Big(
       \Big(\mathbb{E}_{\tau}^{\frac{1}{2}}[\mathbf{W}_{\tau}^{\mathcal{A}}]
       \big(\sum_{\tau=1}^{T}\hat{\mathbf{W}}_{\tau, N}^{\mathcal{A}}\big)^{-1}
       \big(\sum_{\tau=1}^{T}
       \hat{\mathbf{W}}_{\tau, N}^{\mathcal{A}} (\btheta_{\tau}^{\mathrm{gt}} - {\btheta}_0^{\mathcal{A}}) \big)
       \Big) \\
       & \quad\quad\quad \Big(\mathbb{E}_{\tau}^{\frac{1}{2}}[\mathbf{W}_{\tau}^{\mathcal{A}}]
       \big(\sum_{\tau=1}^{T}\hat{\mathbf{W}}_{\tau, N}^{\mathcal{A}}\big)^{-1}
       \big(\sum_{\tau=1}^{T}
       \hat{\mathbf{W}}_{\tau, N}^{\mathcal{A}} (\btheta_{\tau}^{\mathrm{gt}} - {\btheta}_0^{\mathcal{A}}) \big)
       \Big)^{\top}
       \Big)
       \Big] \\
    =& \underbracket{\mathrm{tr}\Big(
    \mathrm{Cov}_{\btheta_{\tau}^{\mathrm{gt}} \mid \hat{\mathbf{W}}_{\tau, N}^{\mathcal{A}}}
    \Big[\mathbb{E}_{\tau}^{\frac{1}{2}}[\mathbf{W}_{\tau}^{\mathcal{A}}]
    \big(\sum_{\tau=1}^{T}\hat{\mathbf{W}}_{\tau, N}^{\mathcal{A}}\big)^{-1}
    \big(\sum_{\tau=1}^{T}
    \hat{\mathbf{W}}_{\tau, N}^{\mathcal{A}} (\btheta_{\tau}^{\mathrm{gt}} - {\btheta}_0^{\mathcal{A}}) \big)
    \Big]
    \Big)}_{I_a} \\
    &+ \underbracket{\Big\|\mathbb{E}_{{\btheta_{\tau}^{\mathrm{gt}} \mid \hat{\mathbf{W}}_{\tau, N}^{\mathcal{A}}}}
    \Big[\mathbb{E}_{\tau}^{\frac{1}{2}}[\mathbf{W}_{\tau}^{\mathcal{A}}]
    \big(\sum_{\tau=1}^{T}\hat{\mathbf{W}}_{\tau, N}^{\mathcal{A}}\big)^{-1}
    \big(\sum_{\tau=1}^{T}
    \hat{\mathbf{W}}_{\tau, N}^{\mathcal{A}} (\btheta_{\tau}^{\mathrm{gt}} - {\btheta}_0^{\mathcal{A}}) \big)
    \Big]
    \Big\|^2}_{I_b}
    \numberthis
  \end{align*}
  where the last equation is given by the fact that $\mathbb{E}[\mathbf{z} \mathbf{z}^{\top}] = \mathrm{Cov}[\mathbf{z}] + \mathbb{E}[\mathbf{z}] \mathbb{E}[\mathbf{z}]^{\top}$ for any random vector $\mathbf{z}$, and the linear and cyclic property of trace.
  From Assumption~\ref{asmp:sub_gaussian_task_para}, $\btheta_{\tau}^{\mathrm{gt}} - \btheta_{0}^{\mathcal{A}}$ has independent $(R / \sqrt{d})$-sub-gaussian entries, and Lemma~\ref{lemma:combinate_sub_gaussian}, linear combinations of sub-gaussian random variables are still sub-gaussian,
  $\mathbb{E}_{\tau}^{\frac{1}{2}}[\mathbf{W}_{\tau}^{\mathcal{A}}]
    \big(\sum_{\tau=1}^{T}\hat{\mathbf{W}}_{\tau, N}^{\mathcal{A}}\big)^{-1}
    \big(\sum_{\tau=1}^{T}
    \hat{\mathbf{W}}_{\tau, N}^{\mathcal{A}} (\btheta_{\tau}^{\mathrm{gt}} - {\btheta}_0^{\mathcal{A}}) \big)$
    has sub-gaussian entries with parameter $\big\|\mathbb{E}_{\tau}^{\frac{1}{2}}[\mathbf{W}_{\tau}^{\mathcal{A}}]
    \big(\sum_{\tau=1}^{T}\hat{\mathbf{W}}_{\tau, N}^{\mathcal{A}}\big)^{-1}\big\|^2_{\mathrm{op}}
    \big\|\sum_{\tau=1}^{T}
    (\hat{\mathbf{W}}_{\tau, N}^{\mathcal{A}})^2\big\|_{\mathrm{op}} R^2/d$, which is the upper bound of the variance of each entry
    based on the sub-gaussian property.
    Since the trace of the covariance is the sum of the variance of all entries, it holds that
    \begin{align*}
    \label{eq:bound_I_a}
      I_a \leq &
      d  \Big\|\mathbb{E}_{\tau}^{\frac{1}{2}}[\mathbf{W}_{\tau}^{\mathcal{A}}]
    \big(\sum_{\tau=1}^{T}\hat{\mathbf{W}}_{\tau, N}^{\mathcal{A}}\big)^{-1}\Big\|^2_{\mathrm{op}}
    \Big\|\sum_{\tau=1}^{T}
    (\hat{\mathbf{W}}_{\tau, N}^{\mathcal{A}})^2\Big\|_{\mathrm{op}} \frac{R^2}{d} \\
    \leq &  
     \Big \|\mathbb{E}_{\tau}[\mathbf{W}_{\tau}^{\mathcal{A}}] \Big\|_{\mathrm{op}}
    \Big\|\sum_{\tau=1}^{T}\hat{\mathbf{W}}_{\tau, N}^{\mathcal{A}}\Big\|_{\mathrm{op}}^{-2}
    \Big\|\sum_{\tau=1}^{T}
    (\hat{\mathbf{W}}_{\tau, N}^{\mathcal{A}})^2\Big \|_{\mathrm{op}}  R^2 \\
    \leq &
    \frac{R^2}{T}\Big \|\mathbb{E}_{\tau}[\mathbf{W}_{\tau}^{\mathcal{A}}] \Big\|_{\mathrm{op}}
    \Big\|\frac{1}{T}\sum_{\tau=1}^{T}\hat{\mathbf{W}}_{\tau, N}^{\mathcal{A}}\Big\|_{\mathrm{op}}^{-2}
    \Big\|\frac{1}{T}\sum_{\tau=1}^{T}
    (\hat{\mathbf{W}}_{\tau, N}^{\mathcal{A}})^2\Big \|_{\mathrm{op}}  .
    \numberthis
    \end{align*}
  
  $I_b$ can be further derived as
  {\fontsize{9}{9}\selectfont
  \begin{align*}
    &I_b 
    =\Big\|\mathbb{E}_{{\btheta_{\tau}^{\mathrm{gt}} \mid \hat{\mathbf{W}}_{\tau, N}^{\mathcal{A}}}}
    \Big[\mathbb{E}_{\tau}^{\frac{1}{2}}[\mathbf{W}_{\tau}^{\mathcal{A}}]
    \big(\sum_{\tau=1}^{T}\hat{\mathbf{W}}_{\tau, N}^{\mathcal{A}}\big)^{-1}
    \big(\sum_{\tau=1}^{T}
    \hat{\mathbf{W}}_{\tau, N}^{\mathcal{A}} (\btheta_{\tau}^{\mathrm{gt}} - {\btheta}_0^{\mathcal{A}}) \big)
    \Big]
    \Big\|^2 \\
    =& \Big\langle \mathbb{E}_{\tau}[\mathbf{W}_{\tau}^{\mathcal{A}}]
    \Big(\mathbb{E}_{{\btheta_{\tau}^{\mathrm{gt}} \mid \hat{\mathbf{W}}_{\tau, N}^{\mathcal{A}}}}
      \Big[
      \big(\sum_{\tau=1}^{T}\hat{\mathbf{W}}_{\tau, N}^{\mathcal{A}}\big)^{-1}
      \big(\sum_{\tau=1}^{T}
      \hat{\mathbf{W}}_{\tau, N}^{\mathcal{A}} (\btheta_{\tau}^{\mathrm{gt}} - {\btheta}_0^{\mathcal{A}}) \big)
      \Big]+ 
      \big(\sum_{\tau=1}^{T}\hat{\mathbf{W}}_{\tau, N}^{\mathcal{A}}\big)^{-1}
    \mathbb{E}_{{\btheta_{\tau}^{\mathrm{gt}}, \hat{\mathbf{W}}_{\tau, N}^{\mathcal{A}}}}
    \Big[\sum_{\tau=1}^{T}
    \hat{\mathbf{W}}_{\tau, N}^{\mathcal{A}} (\btheta_{\tau}^{\mathrm{gt}} - {\btheta}_0^{\mathcal{A}}) 
    \Big]\Big), \\
    &
    \underbracket{\mathbb{E}_{{\btheta_{\tau}^{\mathrm{gt}} \mid \hat{\mathbf{W}}_{\tau, N}^{\mathcal{A}}}}
       \Big[
       \big(\sum_{\tau=1}^{T}\hat{\mathbf{W}}_{\tau, N}^{\mathcal{A}}\big)^{-1}
       \big(\sum_{\tau=1}^{T}
       \hat{\mathbf{W}}_{\tau, N}^{\mathcal{A}} (\btheta_{\tau}^{\mathrm{gt}} - {\btheta}_0^{\mathcal{A}}) \big)
       \Big] - 
       \big(\sum_{\tau=1}^{T}\hat{\mathbf{W}}_{\tau, N}^{\mathcal{A}}\big)^{-1}
       \mathbb{E}_{{\btheta_{\tau}^{\mathrm{gt}}, \hat{\mathbf{W}}_{\tau, N}^{\mathcal{A}}}}
       \Big[\sum_{\tau=1}^{T}
       \hat{\mathbf{W}}_{\tau, N}^{\mathcal{A}} (\btheta_{\tau}^{\mathrm{gt}} - {\btheta}_0^{\mathcal{A}}) 
       \Big]}_{I_{b1}} \Big\rangle  \\
    &+ \Big\langle \mathbb{E}_{\tau}[\mathbf{W}_{\tau}^{\mathcal{A}}]
    \Big(\big(\sum_{\tau=1}^{T}\hat{\mathbf{W}}_{\tau, N}^{\mathcal{A}}\big)^{-1}
    \mathbb{E}_{{\btheta_{\tau}^{\mathrm{gt}}, \hat{\mathbf{W}}_{\tau, N}^{\mathcal{A}}}}
    \Big[\sum_{\tau=1}^{T}
    \hat{\mathbf{W}}_{\tau, N}^{\mathcal{A}} (\btheta_{\tau}^{\mathrm{gt}} - {\btheta}_0^{\mathcal{A}}) 
    \Big]+ 
    \big(\sum_{\tau=1}^{T}\hat{\mathbf{W}}_{\tau, N}^{\mathcal{A}}\big)^{-1}
    \mathbb{E}_{{\btheta_{\tau}^{\mathrm{gt}}, {\mathbf{W}}_{\tau, N}^{\mathcal{A}}}}
    \Big[\sum_{\tau=1}^{T}
    {\mathbf{W}}_{\tau, N}^{\mathcal{A}} (\btheta_{\tau}^{\mathrm{gt}} - {\btheta}_0^{\mathcal{A}}) 
    \Big]\Big), \\
    &
    \underbracket{\big(\sum_{\tau=1}^{T}\hat{\mathbf{W}}_{\tau, N}^{\mathcal{A}}\big)^{-1}
      \mathbb{E}_{{\btheta_{\tau}^{\mathrm{gt}}, \hat{\mathbf{W}}_{\tau, N}^{\mathcal{A}}}}
      \Big[\sum_{\tau=1}^{T}
      \hat{\mathbf{W}}_{\tau, N}^{\mathcal{A}} (\btheta_{\tau}^{\mathrm{gt}} - {\btheta}_0^{\mathcal{A}}) 
      \Big]- 
      \big(\sum_{\tau=1}^{T}\hat{\mathbf{W}}_{\tau, N}^{\mathcal{A}}\big)^{-1}
      \mathbb{E}_{{\btheta_{\tau}^{\mathrm{gt}}, {\mathbf{W}}_{\tau, N}^{\mathcal{A}}}}
      \Big[\sum_{\tau=1}^{T}
      {\mathbf{W}}_{\tau, N}^{\mathcal{A}} (\btheta_{\tau}^{\mathrm{gt}} - {\btheta}_0^{\mathcal{A}}) 
      \Big]}_{I_{b2}} \Big\rangle\\
    &+ \underbracket{\Big\langle \mathbb{E}_{\tau}[\mathbf{W}_{\tau}^{\mathcal{A}}]
      \big(\sum_{\tau=1}^{T}\hat{\mathbf{W}}_{\tau, N}^{\mathcal{A}}\big)^{-1}
      \mathbb{E}_{{\btheta_{\tau}^{\mathrm{gt}}, {\mathbf{W}}_{\tau, N}^{\mathcal{A}}}}
      \Big[\sum_{\tau=1}^{T}
      {\mathbf{W}}_{\tau, N}^{\mathcal{A}} (\btheta_{\tau}^{\mathrm{gt}} - {\btheta}_0^{\mathcal{A}}) 
      \Big], 
      \big(\sum_{\tau=1}^{T}\hat{\mathbf{W}}_{\tau, N}^{\mathcal{A}}\big)^{-1}
      \mathbb{E}_{{\btheta_{\tau}^{\mathrm{gt}}, {\mathbf{W}}_{\tau, N}^{\mathcal{A}}}}
      \Big[\sum_{\tau=1}^{T}
      {\mathbf{W}}_{\tau, N}^{\mathcal{A}} (\btheta_{\tau}^{\mathrm{gt}} - {\btheta}_0^{\mathcal{A}}) 
      \Big] \Big\rangle}_{I_{b3}}
  \end{align*}}
  where $I_{b1}$ can be further derived as
  \begin{align*}
    I_{b1}
    &= \big(\sum_{\tau=1}^{T}\hat{\mathbf{W}}_{\tau, N}^{\mathcal{A}}\big)^{-1}
    \Big\{\mathbb{E}_{{\btheta_{\tau}^{\mathrm{gt}} \mid \hat{\mathbf{W}}_{\tau, N}^{\mathcal{A}}}}
       \Big[   
       \sum_{\tau=1}^{T}
       \hat{\mathbf{W}}_{\tau, N}^{\mathcal{A}} (\btheta_{\tau}^{\mathrm{gt}} - {\btheta}_0^{\mathcal{A}}) 
       \Big] - 
       \mathbb{E}_{{\btheta_{\tau}^{\mathrm{gt}}, \hat{\mathbf{W}}_{\tau, N}^{\mathcal{A}}}}
       \Big[\sum_{\tau=1}^{T}
       \hat{\mathbf{W}}_{\tau, N}^{\mathcal{A}} (\btheta_{\tau}^{\mathrm{gt}} - {\btheta}_0^{\mathcal{A}}) 
       \Big]\Big\} \\
    &= \big(\frac{1}{T}\sum_{\tau=1}^{T}\hat{\mathbf{W}}_{\tau, N}^{\mathcal{A}}\big)^{-1}
    \Big\{\frac{1}{T}\sum_{\tau=1}^{T}     
       \hat{\mathbf{W}}_{\tau, N}^{\mathcal{A}} 
       \mathbb{E}_{{\btheta_{\tau}^{\mathrm{gt}} \mid \hat{\mathbf{W}}_{\tau, N}^{\mathcal{A}}}}
       \Big[ 
       \btheta_{\tau}^{\mathrm{gt}} - {\btheta}_0^{\mathcal{A}}
       \Big] - 
       \mathbb{E}_{\hat{\mathbf{W}}_{\tau, N}^{\mathcal{A}}}
       \Big[
       \hat{\mathbf{W}}_{\tau, N}^{\mathcal{A}} 
       \mathbb{E}_{{\btheta_{\tau}^{\mathrm{gt}} \mid \hat{\mathbf{W}}_{\tau, N}^{\mathcal{A}}}}
       [\btheta_{\tau}^{\mathrm{gt}} - {\btheta}_0^{\mathcal{A}}]
       \Big]\Big\} \\
    \|I_{b1}\| 
    &\leq
    \widetilde{\mathcal{O}} \Big(\sqrt{\frac{d}{T}} \Big) K
    M\Big\|\frac{1}{T}\sum_{\tau=1}^{T}\hat{\mathbf{W}}_{\tau, N}^{\mathcal{A}}\Big\|^{-1}_{\mathrm{op}} .
  \end{align*}
  And for $I_{b2}$, it holds that
  \begin{align*}
    I_{b2}
    = &
    \big(\sum_{\tau=1}^{T}\hat{\mathbf{W}}_{\tau, N}^{\mathcal{A}}\big)^{-1}
    \Big\{\mathbb{E}_{{\btheta_{\tau}^{\mathrm{gt}}, \hat{\mathbf{W}}_{\tau, N}^{\mathcal{A}}}}
    \Big[   
    \sum_{\tau=1}^{T}
    \hat{\mathbf{W}}_{\tau, N}^{\mathcal{A}} (\btheta_{\tau}^{\mathrm{gt}} - {\btheta}_0^{\mathcal{A}}) 
    \Big] - 
    \mathbb{E}_{{\btheta_{\tau}^{\mathrm{gt}}, {\mathbf{W}}_{\tau, N}^{\mathcal{A}}}}
    \Big[\sum_{\tau=1}^{T}
    {\mathbf{W}}_{\tau, N}^{\mathcal{A}} (\btheta_{\tau}^{\mathrm{gt}} - {\btheta}_0^{\mathcal{A}}) 
    \Big]\Big\} \\
    = &
    \big(\sum_{\tau=1}^{T}\hat{\mathbf{W}}_{\tau, N}^{\mathcal{A}}\big)^{-1}
    \Big\{\mathbb{E}_{{\btheta_{\tau}^{\mathrm{gt}}, {\mathbf{W}}_{\tau, N}^{\mathcal{A}}, \hat{\mathbf{W}}_{\tau, N}^{\mathcal{A}}}}
    \Big[   
    \sum_{\tau=1}^{T}
    (\hat{\mathbf{W}}_{\tau, N}^{\mathcal{A}} 
    - {\mathbf{W}}_{\tau, N}^{\mathcal{A}})
    (\btheta_{\tau}^{\mathrm{gt}} - {\btheta}_0^{\mathcal{A}})
    \Big]\Big\} \\
    = &
    \big(\frac{1}{T}\sum_{\tau=1}^{T}\hat{\mathbf{W}}_{\tau, N}^{\mathcal{A}}\big)^{-1}
    \Big\{\mathbb{E}_{{\btheta_{\tau}^{\mathrm{gt}}, {\mathbf{W}}_{\tau, N}^{\mathcal{A}}, \hat{\mathbf{W}}_{\tau, N}^{\mathcal{A}}}}
    \Big[   
    \frac{1}{T}\sum_{\tau=1}^{T}
    (\hat{\mathbf{W}}_{\tau, N}^{\mathcal{A}} 
    - \mathbb{E}[\hat{\mathbf{W}}_{\tau, N}^{\mathcal{A}}]
    + \mathbb{E}[\hat{\mathbf{W}}_{\tau, N}^{\mathcal{A}}] 
    - {\mathbf{W}}_{\tau, N}^{\mathcal{A}})
    (\btheta_{\tau}^{\mathrm{gt}} - {\btheta}_0^{\mathcal{A}})
    \Big]\Big\} \\
    \|I_{b_2}\|
    \leq & \Big(\widetilde{\mathcal{O}} (\sqrt{\frac{d}{T}} ) K + 
    \big\| \mathbb{E}[\hat{\mathbf{W}}_{\tau, N}^{\mathcal{A}}
    - {\mathbf{W}}_{\tau, N}^{\mathcal{A}}] \big\|
    \Big)
    M\Big\|\frac{1}{T}\sum_{\tau=1}^{T}\hat{\mathbf{W}}_{\tau, N}^{\mathcal{A}}\Big\|^{-1}_{\mathrm{op}} .
  \end{align*}
  $I_{b3} = 0$ since $\mathbb{E}_{{\btheta_{\tau}^{\mathrm{gt}}, {\mathbf{W}}_{\tau, N}^{\mathcal{A}}}}
    \big[\sum_{\tau=1}^{T}
    {\mathbf{W}}_{\tau, N}^{\mathcal{A}} (\btheta_{\tau}^{\mathrm{gt}} - {\btheta}_0^{\mathcal{A}}) 
    \big] = \mathbf{0}$.
  Combining $I_{b1}, I_{b2}, I_{b3}$ from above discussions we can bound $I_b$ by
  \begin{align*}
  \label{eq:I_b_bound}
    I_b &=
    \Big\|\mathbb{E}_{{\btheta_{\tau}^{\mathrm{gt}} \mid \hat{\mathbf{W}}_{\tau, N}^{\mathcal{A}}}}
    \Big[\mathbb{E}_{\tau}^{\frac{1}{2}}[\mathbf{W}_{\tau}^{\mathcal{A}}]
    \big(\sum_{\tau=1}^{T}\hat{\mathbf{W}}_{\tau, N}^{\mathcal{A}}\big)^{-1}
    \big(\sum_{\tau=1}^{T}
    \hat{\mathbf{W}}_{\tau, N}^{\mathcal{A}} (\btheta_{\tau}^{\mathrm{gt}} - {\btheta}_0^{\mathcal{A}}) \big)
    \Big]
    \Big\|^2 \\
    &\leq 
    \Big(\widetilde{\mathcal{O}} (\sqrt{\frac{d}{T}} ) K + 
    \big\| \mathbb{E}[\hat{\mathbf{W}}_{\tau, N}^{\mathcal{A}}
    - {\mathbf{W}}_{\tau, N}^{\mathcal{A}}] \big\|
    \Big) M^2
    \Big \|\mathbb{E}_{\tau}[\mathbf{W}_{\tau}^{\mathcal{A}}] \Big\|_{\mathrm{op}}
    \Big\|\frac{1}{T}\sum_{\tau=1}^{T}\hat{\mathbf{W}}_{\tau, N}^{\mathcal{A}}\Big\|_{\mathrm{op}}^{-1} .
    \numberthis
  \end{align*}
  Combining the bound for $I_a$ and $I_b$, the expected statistical error conditioned on $\hat{\mathbf{W}}_{\tau,N}^{\mathcal{A}}$ is bounded by
  \begin{align*}
  \label{eq:bound_E_I1}
    &\mathbb{E}_{\mathbf{z}_{\mathcal{A}} \mid \mathbf{U}_{\mathcal{A}}}[\|\mathbf{U}_{\mathcal{A}}^{\top} \mathbf{z}_{\mathcal{A}} \|^2_{\mathbb{E}_{\tau}[\mathbf{W}_{\tau}^{\mathcal{A}}]}]
    \leq 
    \frac{R^2}{T}\Big \|\mathbb{E}_{\tau}[\mathbf{W}_{\tau}^{\mathcal{A}}] \Big\|_{\mathrm{op}}
    \Big\|\frac{1}{T}\sum_{\tau=1}^{T}\hat{\mathbf{W}}_{\tau, N}^{\mathcal{A}}\Big\|_{\mathrm{op}}^{-2}
    \Big\|\frac{1}{T}\sum_{\tau=1}^{T}
    (\hat{\mathbf{W}}_{\tau, N}^{\mathcal{A}})^2\Big \|_{\mathrm{op}} \\
    &+ 
    \Big(\widetilde{\mathcal{O}} (\sqrt{\frac{d}{T}} ) K + 
    \big\| \mathbb{E}[\hat{\mathbf{W}}_{\tau, N}^{\mathcal{A}}
    - {\mathbf{W}}_{\tau, N}^{\mathcal{A}}] \big\|
    \Big) M^2
    \Big \|\mathbb{E}_{\tau}[\mathbf{W}_{\tau}^{\mathcal{A}}] \Big\|_{\mathrm{op}}
    \Big\|\frac{1}{T}\sum_{\tau=1}^{T}\hat{\mathbf{W}}_{\tau, N}^{\mathcal{A}}\Big\|_{\mathrm{op}}^{-1} .
    \numberthis
  \end{align*}

  Finally note that $\big\| \mathbb{E}[\hat{\mathbf{W}}_{\tau, N}^{\mathcal{A}}
  - {\mathbf{W}}_{\tau, N}^{\mathcal{A}}] \big\| 
  \leq \widetilde{\mathcal{O}}(\frac{d}{N})$,
  by combining~\eqref{eq:HW_ineq_general_ref_no_assp3}\eqref{eq:bound_E_I1},
  with probability at least $1-Td^{-10}$, it holds that
  \begin{align*}
    \|\mathbf{U}_{\mathcal{A}}^{\top} \mathbf{z}_{\mathcal{A}} \|^2_{\mathbb{E}_{\tau}[\mathbf{W}_{\tau}^{\mathcal{A}}]}
    &\leq 
    \frac{R^2 }{T} 
    \Big(\lambda_{\min}(\mathbb{E}[{\mathbf{W}}^{\mathcal{A}}_{\tau}])^{-1}
      \lambda_{\max}(\mathbb{E}[({\mathbf{W}}^{\mathcal{A}}_{\tau})^{2}])
      + \widetilde{\mathcal{O}}(\frac{d}{N})
      + \widetilde{\mathcal{O}}(\frac{1}{\sqrt{d}})  + \widetilde{\mathcal{O}} (\sqrt{\frac{d}{T}})\Big) \\
     & + 
    \Big( \widetilde{\mathcal{O}}(\sqrt{\frac{d}{T}}) + 
    \widetilde{\mathcal{O}}({\frac{d}{N}})
    \Big) M^2
    .
    \numberthis
  \end{align*}
  This completes the proof in the data agnostic case without Assumption~3.
  Next we proceed to prove the bound under the case with Assumption~3.
  
  From Assumption~3, $\mathbf{W}^{\mathcal{A}}_{\tau} = w_{\mathcal{A}} \mathbf{I}_d$,
  where $w_{\mathcal{A}}$ is the same for different task $\tau$, therefore
  $\btheta_0^{\mathcal{A}} = \mathbb{E}_{\tau}[\btheta_{\tau}^{\mathrm{gt}}]$,
  and $w_{\mathrm{er}} = 1$, 
  $w_{\mathrm{ma}} = (1-\alpha)^2$, $w_{\mathrm{im}} = (1+\gamma^{-1})^{-2}$, $w_{\mathrm{ba}} = (1+({\gamma}s)^{-1})^{-1} (1+{\gamma}^{-1})^{-1}$.
  By using this property in~\eqref{eq:meta_test_risk_decompose_final},
  we obtain a simplified meta-test risk decomposition of method $\mathcal{A}$ to the statistical and optimal population risk in the linear centroid model
   by
  \begin{align}\label{eq:er_risk_decompose_assumption5_general}
  \lim_{N_a \to \infty} \mathcal{R}_{N_a}^{\mathcal{A}}(\hat{\btheta}_0^{\mathcal{A}})
  =\underbracket{w_{\mathcal{A}}\|\hat{\btheta}_0^{\mathcal{A}}- {\btheta}_0^{\mathcal{A}}\|_2^2}_{\text{statistical error}~\mathcal{E}^{2}_{\mathcal{A}} (\hat{\btheta}_0^{\mathcal{A}})}
  +\underbracket{\lim_{N_a \to \infty} \mathcal{R}_{N_a}^{\mathcal{A}}({\btheta}_0^{\mathcal{A}})}_{\text{optimal population risk}}.
  \end{align}
  Thus the statistical error in \eqref{eq:er_risk_decompose_assumption5_general} can be computed by
  \begin{align}\label{eq:stats_err_mat_form_general}
    \mathcal{E}_{\mathcal{A}}^{2} (\hat{\btheta}_0^{\mathcal{A}})
    =w_{\mathcal{A}}\|\hat{\btheta}_{0}^{{\mathcal{A}}}-\btheta_{0}^{ {\mathcal{A}}}\|_2^2
    =& w_{\mathcal{A}}
    (\underbracket{\| \mathbf{U}_{\mathcal{A}}^{\top} \mathbf{z}_{\mathcal{A}}\|^2}_{I_2^{\mathcal{A}}}
      + {\|\Delta_{T}^{\mathcal{A}}\|_2^2}
      + 2{\mathbf{z}_{\mathcal{A}}^{\top}
        \mathbf{U}_{\mathcal{A}} }
        \Delta_{T}^{\mathcal{A}} ).
  \end{align}
  We will bound term $I_2^{\mathcal{A}}$ in the above equation.
  The only difference of $I_2^{\mathcal{A}}$ from $I_1^{\mathcal{A}}$ is that we can treat $\mathbb{E}_{\tau}[\mathbf{W}_{\tau}] = \mathbf{I}_d$ and $M=0$ when adding Assumption~3.
  Therefore, with probability at least $1 - \delta$, we have 
  \begin{align*}\label{eq:HW_ineq_general_ref}
    &\left|\| \mathbf{U}_{\mathcal{A}}^{\top} \mathbf{z}_{\mathcal{A}}\|^2
    - \mathbb{E}_{\btheta_{\tau}^{\mathrm{gt}}, \mathbf{e}_{\tau} \mid \hat{\mathbf{W}}_{\tau, N}^{\mathcal{A}}}[\| \mathbf{U}_{\mathcal{A}}^{\top} \mathbf{z}_{\mathcal{A}}\|^2]\right|  
    =\widetilde{\mathcal{O}}\Big(\frac{R^{2}}{T\sqrt{d}}  \Big).\numberthis
  \end{align*}
  
  To compute $\mathbb{E}_{\btheta_{\tau}^{\mathrm{gt}}, \mathbf{e}_{\tau} \mid \hat{\mathbf{W}}_{\tau, N}^{\mathcal{A}}}[\mathbf{z}_{\mathcal{A}}^{\top}\mathbf{U}_{\mathcal{A}} \mathbf{U}_{\mathcal{A}}^{\top} \mathbf{z}_{\mathcal{A}} ]$, first we have
  \begin{align*}
  \nonumber
  &\mathbb{E}_{\btheta_{\tau}^{\mathrm{gt}}, \mathbf{e}_{\tau} \mid \hat{\mathbf{W}}_{\tau, N}^{\mathcal{A}}}[\mathbf{z}_{\mathcal{A}}^{\top}\mathbf{U}_{\mathcal{A}} \mathbf{U}_{\mathcal{A}}^{\top} \mathbf{z}_{\mathcal{A}}] \\
  =& \frac{R^{2}}{T d}\Big\langle\Big(\frac{1}{T}{\sum_{\tau=1}^{T} \hat{\mathbf{W}}_{\tau, N}^{\mathcal{A}}}\Big)^{-2}, 
  \frac{1}{T}{\sum_{\tau=1}^{T} (\hat{\mathbf{W}}_{\tau, N}^{\mathcal{A}})^{2}}\Big\rangle 
  \label{eq:E_mse_general_3term}  
  = \frac{R^{2}}{T}
  \Big[\underbracket{\frac{1}{d}\Big\langle\mathbb{E}^{-2}\big[\hat{\mathbf{W}}_{\tau, N}^{\mathcal{A}}\big], \mathbb{E}\big[(\hat{\mathbf{W}}_{\tau, N}^{\mathcal{A}})^{2}\big]\Big\rangle}_{= \tilde{C}_{0}^{\mathcal{A}}} \\
  &+\underbracket{\frac{1}{d}\Big\langle\Big(\frac{1}{T}{\sum_{\tau=1}^{T} \hat{\mathbf{W}}_{\tau, N}^{\mathcal{A}}}\Big)^{-2}-\mathbb{E}^{-2}\big[\hat{\mathbf{W}}_{\tau, N}^{\mathcal{A}}\big], \mathbb{E}\big[(\hat{\mathbf{W}}_{\tau, N}^{\mathcal{A}})^{2}\big]\Big\rangle}_{I_c} \\
  &+\underbracket{\frac{1}{d}\Big\langle\Big(\frac{1}{T}{\sum_{\tau=1}^{T} \hat{\mathbf{W}}_{\tau, N}^{\mathcal{A}}}\Big)^{-2}, \frac{1}{T}{\sum_{\tau=1}^{T} (\hat{\mathbf{W}}_{\tau, N}^{\mathcal{A}})^{2}}-\mathbb{E}\big[(\hat{\mathbf{W}}_{\tau, N}^{\mathcal{A}})^{2}\big]\Big\rangle}_{I_d}\Big]. \numberthis
  \end{align*}
  For term $I_c$ and $I_d$, from Lemma~\ref{lemma:concentration_W_hat}, we have
  with probability at least $1-Td^{-10}$ that
  \begin{equation}\label{eq:E_stats_err_term_a_b_general}
    |I_c| \leq \widetilde{\mathcal{O}}(\sqrt{\frac{d}{T}}),
    |I_d| \leq \widetilde{\mathcal{O}}(\sqrt{\frac{d}{T}})  .
  \end{equation}
  Combining~\eqref{eq:E_mse_general_3term} and \eqref{eq:E_stats_err_term_a_b_general}, we have
  with probability at least $1-Td^{-10}$
  \begin{equation}\label{eq:E_term1_bound_general}
    \mathbb{E}_{\btheta_{\tau}^{\mathrm{gt}}, \mathbf{e}_{\tau} \mid \hat{\mathbf{W}}_{\tau, N}^{\mathcal{A}}}[\mathbf{z}_{\mathcal{A}}^{\top}\mathbf{U}_{\mathcal{A}} \mathbf{U}_{\mathcal{A}}^{\top} \mathbf{z}_{\mathcal{A}}] 
    \leq 
    \frac{R^{2}}{T}\Big(\tilde{C}^{\mathcal{A}}_{0} + \widetilde{\mathcal{O}}(\sqrt{\frac{d}{T}})\Big).
  \end{equation}
  
  Then combining~\eqref{eq:HW_ineq_general_ref},\eqref{eq:E_term1_bound_general}, 
  it holds with probability at least $1-Td^{-10}$ that
  \begin{equation}\label{eq:stats_err_term_I1_general}
    I_2^{\mathcal{A}}=
    \mathbf{z}_{\mathcal{A}}^{\top}\mathbf{U}_{\mathcal{A}} \mathbf{U}_{\mathcal{A}}^{\top} \mathbf{z}_{\mathcal{A}}
    \leq \frac{R^{2}}{T}\Big(\tilde{C}^{\mathcal{A}}_{0} + \widetilde{\mathcal{O}}(\frac{1}{\sqrt{d}}) + \widetilde{\mathcal{O}}(\sqrt{\frac{d}{T}})\Big). 
  \end{equation}
  
  \end{proof}
  
  \begin{lemma}[Dominating constant in statistical error]
    \label{lemma:dominate_constant_stats_err}
    Suppose Assumptions~1,3 hold. The dominating constant in the statistical error of meta learning method $\mathcal{A}$ is computed by 
    \begin{align*}
      \tilde{C}^{\mathcal{A}}_{0} \coloneqq
      \frac{1}{d}\Big\langle\mathbb{E}^{-2}\big[\hat{\mathbf{W}}_{\tau}^{\mathcal{A}}\big], \mathbb{E}\big[(\hat{\mathbf{W}}_{\tau}^{\mathcal{A}})^{2}\big]\Big\rangle
      =\frac{1}{d} \mathbb{E}\Big[  \mathrm{tr}\big((\hat{\mathbf{W}}_{\tau}^{\mathcal{A}})^{2}\big)\Big]
      \Big\{\frac{1}{d}\mathbb{E}\big[ \mathrm{tr}(\hat{\mathbf{W}}_{\tau}^{\mathcal{A}})\big]\Big\}^{-2} \geq 1.
    \end{align*}
  \end{lemma}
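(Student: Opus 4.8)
The plan is to exploit the rotational symmetry built into Assumption~\ref{asmp:linear_centroid_model} to show that $\mathbb{E}[\hat{\mathbf{W}}_{\tau}^{\mathcal{A}}]$ is a scalar multiple of the identity, substitute this fact into the definition of $\tilde C_0^{\mathcal{A}}$ to obtain the stated closed form, and then deduce $\tilde C_0^{\mathcal{A}}\ge 1$ from a Cauchy--Schwarz/Jensen argument applied to the spectrum of $\hat{\mathbf{W}}_{\tau}^{\mathcal{A}}$. No concentration is needed here; this is a purely distributional identity.

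First I would establish the invariance. Under Assumption~\ref{asmp:linear_centroid_model} the rows of $\mathbf{X}_{\tau}^{\mathrm{all}}$ are i.i.d.\ $\mathcal{N}(\mathbf{0},\mathbf{I}_d)$, so for every orthogonal $\mathbf{O}$ we have $\mathbf{X}_{\tau}^{\mathrm{all}}\mathbf{O}\stackrel{d}{=}\mathbf{X}_{\tau}^{\mathrm{all}}$, and hence $\hat{\mathbf{Q}}_{\tau,N_1},\hat{\mathbf{Q}}_{\tau,N_2},\hat{\mathbf{Q}}_{\tau,N}$ each transform as $\hat{\mathbf{Q}}\mapsto \mathbf{O}^{\top}\hat{\mathbf{Q}}\mathbf{O}$. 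Inspecting the formulas for $\hat{\mathbf{W}}_{\tau}^{\mathcal{A}}$ in Table~\ref{tab:weight_matrices} and Section~\ref{app_sec:methods_formulation_sln}, each is assembled from these matrices via sums, products and inverses, hence is $\mathbf{O}$-equivariant: $\hat{\mathbf{W}}_{\tau}^{\mathcal{A}}(\mathbf{X}_{\tau}^{\mathrm{all}}\mathbf{O})=\mathbf{O}^{\top}\hat{\mathbf{W}}_{\tau}^{\mathcal{A}}(\mathbf{X}_{\tau}^{\mathrm{all}})\mathbf{O}$. Taking expectations and using $\mathbf{X}_{\tau}^{\mathrm{all}}\mathbf{O}\stackrel{d}{=}\mathbf{X}_{\tau}^{\mathrm{all}}$ gives $\mathbb{E}[\hat{\mathbf{W}}_{\tau}^{\mathcal{A}}]=\mathbf{O}^{\top}\mathbb{E}[\hat{\mathbf{W}}_{\tau}^{\mathcal{A}}]\mathbf{O}$ for all orthogonal $\mathbf{O}$; testing against signed permutation matrices forces the off-diagonal entries to vanish and the diagonal entries to coincide, so $\mathbb{E}[\hat{\mathbf{W}}_{\tau}^{\mathcal{A}}]=w_{\mathcal{A}}\mathbf{I}_d$ with $w_{\mathcal{A}}=\tfrac1d\mathrm{tr}\big(\mathbb{E}[\hat{\mathbf{W}}_{\tau}^{\mathcal{A}}]\big)=\tfrac1d\mathbb{E}[\mathrm{tr}(\hat{\mathbf{W}}_{\tau}^{\mathcal{A}})]$. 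Then $\mathbb{E}^{-2}[\hat{\mathbf{W}}_{\tau}^{\mathcal{A}}]=w_{\mathcal{A}}^{-2}\mathbf{I}_d$, and substituting into the definition of $\tilde C_0^{\mathcal{A}}$ yields $\tilde C_0^{\mathcal{A}}=\tfrac1d w_{\mathcal{A}}^{-2}\langle\mathbf{I}_d,\mathbb{E}[(\hat{\mathbf{W}}_{\tau}^{\mathcal{A}})^2]\rangle=\tfrac1d w_{\mathcal{A}}^{-2}\mathbb{E}[\mathrm{tr}((\hat{\mathbf{W}}_{\tau}^{\mathcal{A}})^2)]$, which is exactly the claimed identity once $w_{\mathcal{A}}^{2}$ is rewritten as $\big(\tfrac1d\mathbb{E}[\mathrm{tr}(\hat{\mathbf{W}}_{\tau}^{\mathcal{A}})]\big)^2$.

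For the bound $\tilde C_0^{\mathcal{A}}\ge 1$, it suffices to show $\mathbb{E}[\mathrm{tr}((\hat{\mathbf{W}}_{\tau}^{\mathcal{A}})^2)]\ge \tfrac1d\big(\mathbb{E}[\mathrm{tr}(\hat{\mathbf{W}}_{\tau}^{\mathcal{A}})]\big)^2$. I would first record that $\hat{\mathbf{W}}_{\tau}^{\mathcal{A}}$ has nonnegative real eigenvalues: for ERM, MAML and iMAML the explicit forms in Section~\ref{app_sec:methods_formulation_sln} are symmetric positive semidefinite, and for BaMAML $\hat{\mathbf{W}}_{\tau}^{\mathrm{ba}}=\mathbf{A}\mathbf{C}\mathbf{B}$ is a product of a positive definite $\mathbf{A}$, a positive semidefinite $\mathbf{C}$ and a positive definite $\mathbf{B}$; by cyclic shifts and a congruence one shows $\mathbf{A}\mathbf{C}\mathbf{B}$ is similar to $\mathbf{H}^{1/2}\mathbf{K}\mathbf{H}^{1/2}\succeq\mathbf{0}$ for suitable $\mathbf{H}\succ\mathbf{0}$, $\mathbf{K}\succeq\mathbf{0}$, so its spectrum $\{\lambda_i\}_{i=1}^d$ is real and nonnegative. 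Pointwise Cauchy--Schwarz then gives $\mathrm{tr}((\hat{\mathbf{W}}_{\tau}^{\mathcal{A}})^2)=\sum_i\lambda_i^2\ge\tfrac1d\big(\sum_i\lambda_i\big)^2=\tfrac1d\big(\mathrm{tr}\,\hat{\mathbf{W}}_{\tau}^{\mathcal{A}}\big)^2$, and taking expectations followed by Jensen for $x\mapsto x^2$ closes the argument. In the symmetric cases one may instead use the shortcut $\mathbb{E}[(\hat{\mathbf{W}}_{\tau}^{\mathcal{A}})^2]-(\mathbb{E}[\hat{\mathbf{W}}_{\tau}^{\mathcal{A}}])^2=\mathbb{E}\big[(\hat{\mathbf{W}}_{\tau}^{\mathcal{A}}-\mathbb{E}[\hat{\mathbf{W}}_{\tau}^{\mathcal{A}}])^2\big]\succeq\mathbf{0}$ and take traces.

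The step I expect to need the most care is the BaMAML case: because $\hat{\mathbf{W}}_{\tau}^{\mathrm{ba}}$ is sandwiched by two \emph{different} resolvent factors it is not symmetric, so neither the matrix-Jensen shortcut nor a naive ``$\mathrm{tr}(\mathbf{M}^2)\ge\tfrac1d(\mathrm{tr}\,\mathbf{M})^2$'' is automatic, and the crux is precisely the spectral claim that $\mathbf{A}\mathbf{C}\mathbf{B}$ (product of PD, PSD, PD) has nonnegative real eigenvalues, which I would prove via the similarity chain indicated above. Everything else is bookkeeping.
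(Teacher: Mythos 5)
Your first two steps are sound and essentially the paper's own route: the orthogonal-invariance argument forcing $\mathbb{E}[\hat{\mathbf{W}}_{\tau}^{\mathcal{A}}]=w_{\mathcal{A}}\mathbf{I}_d$ is the same symmetry idea the paper implements via a permutation trick on the uniformly distributed singular-vector factor, and the substitution plus ``pointwise trace inequality + Jensen'' is exactly how the paper gets $\tilde C_0^{\mathcal{A}}\ge 1$ for the symmetric PSD cases (ERM, MAML, iMAML). The gap is precisely at the step you flag as the crux: the claim that $\mathbf{A}\mathbf{C}\mathbf{B}$ with $\mathbf{A},\mathbf{B}$ PD and $\mathbf{C}$ PSD is similar to $\mathbf{H}^{1/2}\mathbf{K}\mathbf{H}^{1/2}\succeq\mathbf{0}$, hence has real nonnegative spectrum, is false in general. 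Take $\mathbf{C}=\mathbf{e}_1\mathbf{e}_1^{\top}$: then $\mathbf{A}\mathbf{C}\mathbf{B}=(\mathbf{A}\mathbf{e}_1)(\mathbf{B}\mathbf{e}_1)^{\top}$ is rank one with nonzero eigenvalue $\mathbf{e}_1^{\top}\mathbf{B}\mathbf{A}\mathbf{e}_1$, which is a diagonal entry of $\mathbf{B}\mathbf{A}$ and can be negative (e.g.\ $A_{11}=B_{11}=1$, $A_{21}=-a$, $B_{12}=b$ with $ab>1$ and the remaining entries chosen to keep $\mathbf{A},\mathbf{B}$ PD). Even reality can fail: with $\mathbf{A}=\mathrm{diag}(2,1/2)$, $\mathbf{C}$ the PD matrix with unit diagonal and off-diagonal $-0.9$, and $\mathbf{B}$ the PD matrix with rows $(1,2)$ and $(2,5)$, one gets $\mathrm{tr}(\mathbf{A}\mathbf{C}\mathbf{B})=0$ and $\det(\mathbf{A}\mathbf{C}\mathbf{B})>0$, so the eigenvalues are purely imaginary. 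Since your trace inequality $\mathrm{tr}(\mathbf{M}^2)\ge\frac1d(\mathrm{tr}\,\mathbf{M})^2$ needs at least a real spectrum, the BaMAML case is not covered by your argument as written.

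The fact you need is nevertheless true, but for a structural reason specific to BaMAML that your generic argument ignores: the three factors are linked through $N\hat{\mathbf{Q}}_{\tau,N}=N_1\hat{\mathbf{Q}}_{\tau,N_1}+N_2\hat{\mathbf{Q}}_{\tau,N_2}$. The resolvent identity $(X+\gamma_b\mathbf{I})^{-1}-(X+Y+\gamma_b\mathbf{I})^{-1}=(X+\gamma_b\mathbf{I})^{-1}Y(X+Y+\gamma_b\mathbf{I})^{-1}$ with $X=N_1\hat{\mathbf{Q}}_{\tau,N_1}$, $Y=N_2\hat{\mathbf{Q}}_{\tau,N_2}$ shows that $\hat{\mathbf{W}}_{\tau}^{\mathrm{ba}}=\frac{\gamma s}{1-s}\big[(\mathbf{I}_d+\gamma^{-1}\hat{\mathbf{Q}}_{\tau,N_1})^{-1}-(\mathbf{I}_d+(\gamma s)^{-1}\hat{\mathbf{Q}}_{\tau,N})^{-1}\big]$, a difference of symmetric matrices which is PSD because $\frac{N}{N_1}\hat{\mathbf{Q}}_{\tau,N}\succeq\hat{\mathbf{Q}}_{\tau,N_1}$; this is exactly the expression the paper switches to for the BaMAML case before applying the symmetry and trace arguments. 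Replace your similarity chain by this identity and the rest of your proof goes through unchanged.
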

  
  \begin{proof}
    We have proved in previous sections that the dominating constant in the statitical error of meta learning method $\mathcal{A}$ adopts the form $\tilde{C}^{\mathcal{A}}_{0} \coloneqq
    \frac{1}{d}\big\langle\mathbb{E}^{-2}\big[\hat{\mathbf{W}}_{\tau}^{\mathcal{A}}\big], \mathbb{E}\big[(\hat{\mathbf{W}}_{\tau}^{\mathcal{A}})^{2}\big]\big\rangle$. Next we will prove the equality by showing that $\mathbb{E}[\hat{\mathbf{W}}_{\tau}^{\mathcal{A}}] = \frac{1}{d}\mathbb{E}[\mathrm{tr}(\hat{\mathbf{W}}_{\tau}^{\mathcal{A}})] \mathbf{I}_d$.
    
    Let $\mathbf{X}_{\tau,N} = \mathbf{U}_{\tau,N}\mathbf{D}_{\tau,N}\mathbf{V}_{\tau,N}^{\top}$ be the SVD of $\mathbf{X}_{\tau,N}$, where $\mathbf{U}_{\tau,N} \in \mathbb{R}^{N\times N}, \mathbf{D}_{\tau,N} \in \mathbb{R}^{N\times d}, \mathbf{V}_{\tau,N} \in \mathbb{R}^{d\times d}$.
    Define $\hat{\mathbf{Q}}_{\tau,N} \coloneqq \frac{1}{N}\mathbf{X}_{\tau,N}^{\top} \mathbf{X}_{\tau,N}$,  and denote $\lambda_{1}^{(N)} \geq \dots \geq \lambda_{d}^{(N)}$ as the eigenvalues of $\hat{\mathbf{Q}}_{\tau,N}$.
    Then $\mathbf{D}_{\tau,N}^{\top}\mathbf{D}_{\tau,N} = 
    N \mathrm{Diag}(\lambda_{1}^{(N)},\dots,\lambda_{d}^{(N)})$.
    
    For ERM, based on the expression of $\hat{\mathbf{W}}_{\tau}^{\mathrm{er}}$, it is apparent that
    \begin{align*}\label{eq:E_W_hat_erm}
    &\mathbb{E}[\hat{\mathbf{W}}_{\tau}^{\mathrm{er}}] 
    = \mathbb{E}[ \hat{\mathbf{Q}}_{\tau,N}] 
    = \mathbf{Q}_{\tau} = \mathbf{I}_d
    = \frac{1}{d}\mathbb{E}[\mathrm{tr}(\hat{\mathbf{W}}_{\tau}^{\mathrm{er}})] \mathbf{I}_d
    \numberthis.
    \end{align*} 
    
  For MAML, using the expression of $\hat{\mathbf{W}}_{\tau}^{\mathrm{ma}}$, we have
    \begin{align*}\label{eq:E_A_hat_ma_tau}
      &\mathbb{E}
      \big[\hat{\mathbf{W}}_{\tau, N}^{\mathrm{ma}}\big]
      = \mathbb{E} \big[ 
      \big(\mathbf{I}-
      {\alpha  }\hat{\mathbf{Q}}_{\tau,N_1}\big) 
      \hat{\mathbf{Q}}_{\tau,N_2}
      \big(\mathbf{I}-
      {\alpha  }\hat{\mathbf{Q}}_{\tau,N_1}\big)\big] 
      =  \mathbb{E} \big[ 
      \big(\mathbf{I}-
      {\alpha  }\hat{\mathbf{Q}}_{\tau,N_1}\big) 
      {\mathbf{Q}}_{\tau}
      \big(\mathbf{I}-
      {\alpha  }\hat{\mathbf{Q}}_{\tau,N_1}\big)\big]  \\
      =& \mathbb{E} \Big[ \mathbf{V}_{\tau,N_1}
      \mathrm{Diag}\Big((1-\alpha \lambda_1^{(N_1)})^2,
      \dots, (1-\alpha \lambda_d^{(N_1)})^2\Big)
      \mathbf{V}_{\tau,N_1}^{\top}\Big] 
    . \numberthis
    \end{align*}
  Then we show that $\mathbb{E}[\hat{\mathbf{W}}_{\tau}^{\mathrm{ma}}] =\frac{1}{d}\mathbb{E}[  
  \mathrm{tr}(\hat{\mathbf{W}}_{\tau}^{\mathrm{ma}})
  ] \mathbf{I}_d$ by the permutation trick.
  We utilize the isotropicity of $\mathbf{X}_{\tau}$. 
  {For notation simplicity, we use $\mathbf{V}_{\tau}$ to replace $\mathbf{V}_{\tau,N}$ in the following discussion since the value of $N$ does not affect the arguments.}
  Recall that $\mathbf{V}_{\tau}$ is uniform on all the orthogonal matrices. Let $\mathbf{P} \in \mathbb{R}^{d \times d}$ be any permutation matrix, then $\mathbf{V}_{\tau} \mathbf{P}$ has the same distribution as $\mathbf{V}_{\tau}$. 
  For this permuted data matrix $\mathbf{V}_{\tau} \mathbf{P}$,
  $\mathbb{E}[  \sum_{i=1}^{d} 
  \lambda_{i}^{(N)} \mathbf{v}_{\tau,i}\mathbf{v}_{\tau,i}^{\top}]
  =\mathbb{E}\big[  
  \sum_{i=1}^{d} \lambda_{i}^{(N)} \mathbf{v}_{\tau, t_{p}(i)} \mathbf{v}_{\tau, t_{p}(i)}^{\top}\big] $ with $t_{p}(i)$ denoting the permutation of the $i$-th element in $\mathbf{P}$.
  
  Summing over all the permutations $\mathbf{P}$ (and there are totally $d !$ instances), we deduce
  \begin{align*}
  d ! \mathbb{E}[\hat{\mathbf{W}}_{\tau}^{\mathrm{ma}}]
  =&\sum_{\text {all } t_{p}} 
  \mathbb{E}\Big[  \sum_{i=1}^{d} (1-\alpha \lambda_i^{(N_1)})^2 \mathbf{v}_{\tau, t_{p}(i)} \mathbf{v}_{\tau, t_{p}(i)}^{\top}\Big] \\
  =&(d-1) ! \mathbb{E}\Big[  \sum_{j=1}^{d} \big(\sum_{i=1}^{d} (1-\alpha \lambda_i^{(N_1)})^2\big) \mathbf{v}_{\tau, j} \mathbf{v}_{\tau, j}^{\top}\Big] \\
  =&(d-1) ! \mathbb{E}\Big[  \mathbf{V}_{\tau} \operatorname{Diag}\big(\sum_{i=1}^{d} (1-\alpha \lambda_i^{(N_1)})^2, \ldots, \sum_{i=1}^{d} (1-\alpha \lambda_i^{(N_1)})^2\big) \mathbf{V}_{\tau}^{\top}\Big] \\
  =& (d-1) ! \mathbb{E}\Big[  \sum_{i=1}^{d} ((1-\alpha \lambda_i^{(N_1)})^2)^2 \mathbf{V}_{\tau} \mathbf{V}_{\tau}^{\top}\Big]
  =(d-1) ! \mathbb{E}[  
  \mathrm{tr}(\hat{\mathbf{W}}_{\tau}^{\mathrm{ma}})] 
  \mathbf{I}_d 
  \end{align*}
  which gives  $\mathbb{E}[\hat{\mathbf{W}}_{\tau}^{\mathrm{ma}}] = \frac{1}{d}\mathbb{E}[\mathrm{tr}(\hat{\mathbf{W}}_{\tau}^{\mathrm{ma}})] \mathbf{I}_d$.

  Following similar arguments, for iMAML, it also holds that
  $\mathbb{E}[\hat{\mathbf{W}}_{\tau}^{\mathrm{im}}] = \frac{1}{d}\mathbb{E}[\mathrm{tr}(\hat{\mathbf{W}}_{\tau}^{\mathrm{im}})] \mathbf{I}_d$.
  And for BaMAML, we use the expression $\hat{\mathbf{W}}_{\tau}^{\mathrm{ba}} = \frac{\gamma s}{1-s} [(\mathbf{I}_{d}+\gamma^{-1}\hat{\mathbf{Q}}_{\tau,N_1})^{-1} - (\mathbf{I}_{d}+(\gamma s)^{-1}\hat{\mathbf{Q}}_{\tau,N})^{-1}]$, which apparently gives $\mathbb{E}[\hat{\mathbf{W}}_{\tau}^{\mathrm{ba}}] = \frac{1}{d}\mathbb{E}[\mathrm{tr}(\hat{\mathbf{W}}_{\tau}^{\mathrm{ba}})] \mathbf{I}_d$ using the permutation trick.
  
  To summarize, we have proved for all four methods ERM, MAML, iMAML and BaMAML, $\mathbb{E}[\hat{\mathbf{W}}_{\tau}^{\mathcal{A}}] = \frac{1}{d}\mathbb{E}[\mathrm{tr}(\hat{\mathbf{W}}_{\tau}^{\mathcal{A}})] \mathbf{I}_d$. Then it is not hard to see that
  \begin{align*}
    \tilde{C}^{\mathcal{A}}_{0} \coloneqq
      \frac{1}{d}\Big\langle\mathbb{E}^{-2}\big[\hat{\mathbf{W}}_{\tau}^{\mathcal{A}}\big], \mathbb{E}\big[(\hat{\mathbf{W}}_{\tau}^{\mathcal{A}})^{2}\big]\Big\rangle
      =\frac{1}{d} \mathbb{E}\Big[  \mathrm{tr}\big((\hat{\mathbf{W}}_{\tau}^{\mathcal{A}})^{2}\big)\Big]
      \Big\{\frac{1}{d}\mathbb{E}\big[ \mathrm{tr}(\hat{\mathbf{W}}_{\tau}^{\mathcal{A}})\big]\Big\}^{-2}.
  \end{align*}
  
  Finally, applying Jensen's inequality, for any PSD matrix $\mathbf{M} \in \mathbb{R}^{d\times d}$,  we have $\frac{1}{d} \mathrm{tr}(\mathbf{M}^2) \geq 
  (\frac{1}{d} \mathrm{tr}(\mathbf{M}))^2$, therefore
  \begin{align*}
    \tilde{C}^{\mathcal{A}}_{0} \coloneqq
      \frac{1}{d}\Big\langle\mathbb{E}^{-2}\big[\hat{\mathbf{W}}_{\tau}^{\mathcal{A}}\big], \mathbb{E}\big[(\hat{\mathbf{W}}_{\tau}^{\mathcal{A}})^{2}\big]\Big\rangle
      =\frac{1}{d} \mathbb{E}\Big[  \mathrm{tr}\big((\hat{\mathbf{W}}_{\tau}^{\mathcal{A}})^{2}\big)\Big]
      \Big\{\frac{1}{d}\mathbb{E}\big[ \mathrm{tr}(\hat{\mathbf{W}}_{\tau}^{\mathcal{A}})\big]\Big\}^{-2}\
      \geq 1.
  \end{align*}
  
  \end{proof}

  \begin{lemma}[Constant in statistical error of MAML]
  \label{lemma:dominate_constant_stats_err_maml}
    Suppose Assumptions~1,3 hold. 
    The dominating constant in the statistical error of MAML is computed by 
  {
  \begin{align*}
    &\tilde{C}^{\mathrm{ma}}_{0} 
    \coloneqq
  \frac{1}{d}\Big\langle\mathbb{E}\big[\hat{\mathbf{W}}_{\tau, N}^{\mathrm{ma}}\big]^{-2}, \mathbb{E}\big[(\hat{\mathbf{W}}_{\tau, N}^{\mathrm{ma}})^{2}\big]\Big\rangle \\
   =&\frac{1}{dN_2} 
    \mathbb{E} \Big[
    \mathrm{tr}^2\left( 
    (\mathbf{I}-\alpha\hat{\mathbf{Q}}_{\tau,N_1})^2
    \right) 
    + (N_2 + 1)
    \mathrm{tr}\left (
    (\mathbf{I}-\alpha\hat{\mathbf{Q}}_{\tau,N_1})^4
    \right)
    \Big] 
     \Big\{\frac{1}{d} 
    \mathbb{E} \big[  \mathrm{tr}
    \big((\mathbf{I}-\alpha \hat{\mathbf{Q}}_{\tau,N_1})^2\big)
    \big] \Big\}^{-2} .
  \end{align*}}

  \end{lemma}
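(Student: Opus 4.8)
The plan is to first reduce the inner-product expression to two scalar trace moments using Lemma~\ref{lemma:dominate_constant_stats_err}, and then to evaluate each moment in closed form from the Gaussian structure of Assumption~\ref{asmp:linear_centroid_model} together with the independence of the training and validation sample covariances. Applying Lemma~\ref{lemma:dominate_constant_stats_err} with $\mathcal{A} = \mathrm{ma}$ — under Assumption~\ref{asmp:linear_centroid_model} the permutation argument there shows $\mathbb{E}[\hat{\mathbf{W}}_{\tau,N}^{\mathrm{ma}}] = \frac{1}{d}\mathbb{E}[\mathrm{tr}(\hat{\mathbf{W}}_{\tau,N}^{\mathrm{ma}})]\mathbf{I}_d$, hence $\mathbb{E}[\hat{\mathbf{W}}_{\tau,N}^{\mathrm{ma}}]^{-2} = \big\{\tfrac{1}{d}\mathbb{E}[\mathrm{tr}(\hat{\mathbf{W}}_{\tau,N}^{\mathrm{ma}})]\big\}^{-2}\mathbf{I}_d$ — gives
\[
\tilde{C}^{\mathrm{ma}}_{0} = \frac{1}{d}\,\mathbb{E}\big[\mathrm{tr}\big((\hat{\mathbf{W}}_{\tau,N}^{\mathrm{ma}})^{2}\big)\big]\,\Big\{\tfrac{1}{d}\,\mathbb{E}\big[\mathrm{tr}(\hat{\mathbf{W}}_{\tau,N}^{\mathrm{ma}})\big]\Big\}^{-2}.
\]
It thus remains to compute the two scalars $\mathbb{E}[\mathrm{tr}(\hat{\mathbf{W}}_{\tau,N}^{\mathrm{ma}})]$ and $\mathbb{E}[\mathrm{tr}((\hat{\mathbf{W}}_{\tau,N}^{\mathrm{ma}})^{2})]$.

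The next step is to exploit the train/validation split: since the $N = N_1 + N_2$ per-task samples are i.i.d.\ and the split does not depend on the data, $\hat{\mathbf{Q}}_{\tau,N_1}$ (formed from $\mathcal{D}_\tau^{\mathrm{trn}}$) and $\hat{\mathbf{Q}}_{\tau,N_2}$ (formed from $\mathcal{D}_\tau^{\mathrm{val}}$) are independent. Setting $\mathbf{A} \coloneqq (\mathbf{I}-\alpha\hat{\mathbf{Q}}_{\tau,N_1})^{2}$, which is symmetric and measurable with respect to the training data, the cyclic property of the trace gives $\mathrm{tr}(\hat{\mathbf{W}}_{\tau,N}^{\mathrm{ma}}) = \mathrm{tr}(\mathbf{A}\hat{\mathbf{Q}}_{\tau,N_2})$ and $\mathrm{tr}((\hat{\mathbf{W}}_{\tau,N}^{\mathrm{ma}})^{2}) = \mathrm{tr}(\mathbf{A}\hat{\mathbf{Q}}_{\tau,N_2}\mathbf{A}\hat{\mathbf{Q}}_{\tau,N_2})$. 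Conditioning on the training data and using $\mathbb{E}[\hat{\mathbf{Q}}_{\tau,N_2}] = \mathbf{I}_d$ gives at once $\mathbb{E}[\mathrm{tr}(\hat{\mathbf{W}}_{\tau,N}^{\mathrm{ma}}) \mid \mathbf{A}] = \mathrm{tr}(\mathbf{A}) = \mathrm{tr}((\mathbf{I}-\alpha\hat{\mathbf{Q}}_{\tau,N_1})^{2})$, and taking the outer expectation then supplies the denominator factor $\big\{\tfrac{1}{d}\mathbb{E}[\mathrm{tr}((\mathbf{I}-\alpha\hat{\mathbf{Q}}_{\tau,N_1})^{2})]\big\}^{-2}$.

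The heart of the argument is the conditional second moment $\mathbb{E}[\mathrm{tr}(\mathbf{A}\hat{\mathbf{Q}}_{\tau,N_2}\mathbf{A}\hat{\mathbf{Q}}_{\tau,N_2}) \mid \mathbf{A}]$. Writing $\hat{\mathbf{Q}}_{\tau,N_2} = \tfrac{1}{N_2}\sum_{i=1}^{N_2}\mathbf{x}_{\tau,i}\mathbf{x}_{\tau,i}^{\top}$ and using $\mathrm{tr}(\mathbf{A}\mathbf{x}_{\tau,i}\mathbf{x}_{\tau,i}^{\top}\mathbf{A}\mathbf{x}_{\tau,j}\mathbf{x}_{\tau,j}^{\top}) = (\mathbf{x}_{\tau,i}^{\top}\mathbf{A}\mathbf{x}_{\tau,j})^{2}$ (by symmetry of $\mathbf{A}$), this equals $\tfrac{1}{N_2^{2}}\sum_{i,j=1}^{N_2}\mathbb{E}[(\mathbf{x}_{\tau,i}^{\top}\mathbf{A}\mathbf{x}_{\tau,j})^{2}]$. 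For the $N_2(N_2-1)$ index pairs with $i\neq j$, independence and $\mathbb{E}[\mathbf{x}_{\tau,i}\mathbf{x}_{\tau,i}^{\top}] = \mathbf{I}_d$ give $\mathbb{E}[(\mathbf{x}_{\tau,i}^{\top}\mathbf{A}\mathbf{x}_{\tau,j})^{2}] = \mathrm{tr}(\mathbf{A}^{2})$; for the $N_2$ terms with $i = j$, the standard Gaussian quadratic-form identity (diagonalize $\mathbf{A}$ and use $\mathbb{E}[z^{4}] = 3$ for a standard normal $z$, i.e.\ Isserlis/Wick) gives $\mathbb{E}[(\mathbf{x}_{\tau,i}^{\top}\mathbf{A}\mathbf{x}_{\tau,i})^{2}] = (\mathrm{tr}\mathbf{A})^{2} + 2\,\mathrm{tr}(\mathbf{A}^{2})$. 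Collecting the two contributions and simplifying $\tfrac{1}{N_2^{2}}[N_2(N_2-1) + 2N_2] = \tfrac{N_2+1}{N_2}$ yields $\mathbb{E}[\mathrm{tr}(\mathbf{A}\hat{\mathbf{Q}}_{\tau,N_2}\mathbf{A}\hat{\mathbf{Q}}_{\tau,N_2}) \mid \mathbf{A}] = \tfrac{1}{N_2}\big[(N_2+1)\mathrm{tr}(\mathbf{A}^{2}) + (\mathrm{tr}\mathbf{A})^{2}\big]$. Substituting back $\mathbf{A} = (\mathbf{I}-\alpha\hat{\mathbf{Q}}_{\tau,N_1})^{2}$, so that $\mathbf{A}^{2} = (\mathbf{I}-\alpha\hat{\mathbf{Q}}_{\tau,N_1})^{4}$, taking the outer expectation over the training data, and inserting both moments into the displayed identity of the first paragraph produces exactly the stated formula. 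I expect the only delicate point to be the bookkeeping of the diagonal versus off-diagonal index pairs and the accompanying $N_2$-algebra; once the Wick identity and the independence of $\hat{\mathbf{Q}}_{\tau,N_1}$ and $\hat{\mathbf{Q}}_{\tau,N_2}$ are in hand, everything else is a routine computation.
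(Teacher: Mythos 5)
Your proposal is correct and follows essentially the same route as the paper: it first invokes the general reduction $\tilde{C}^{\mathrm{ma}}_{0}=\frac{1}{d}\mathbb{E}[\mathrm{tr}((\hat{\mathbf{W}}_{\tau,N}^{\mathrm{ma}})^{2})]\{\frac{1}{d}\mathbb{E}[\mathrm{tr}(\hat{\mathbf{W}}_{\tau,N}^{\mathrm{ma}})]\}^{-2}$ from Lemma~\ref{lemma:dominate_constant_stats_err}, and then evaluates the second trace moment by expanding $\hat{\mathbf{Q}}_{\tau,N_2}$ over the validation samples, splitting the $i=j$ and $i\neq j$ index pairs, and applying the Gaussian quartic identity $\mathbb{E}[(\mathbf{x}^{\top}\mathbf{A}\mathbf{x})^{2}]=(\mathrm{tr}\,\mathbf{A})^{2}+2\,\mathrm{tr}(\mathbf{A}^{2})$ together with train/validation independence. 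Your conditional-expectation phrasing with $\mathbf{A}=(\mathbf{I}-\alpha\hat{\mathbf{Q}}_{\tau,N_1})^{2}$ is just a cleaner rendering of the paper's explicit eigendecomposition-and-rotation bookkeeping, and the resulting $\frac{1}{N_2}[(\mathrm{tr}\,\mathbf{A})^{2}+(N_2+1)\mathrm{tr}(\mathbf{A}^{2})]$ matches the stated formula.
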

  
  \begin{proof}
  \label{proof:dominate_constant_stats_err_maml}
  We reuse the permutation trick to derive $\mathbb{E}\big[(\hat{\mathbf{W}}_{\tau, N}^{\mathrm{ma}})^2\big]$ below.
  \begin{align}\label{eq:E_A_hat_ma_tau2}
    \mathbb{E}\big[(\hat{\mathbf{W}}_{\tau, N}^{\mathrm{ma}})^2\big]
    &=  \mathbb{E} \big[
    \big(\mathbf{I}-
    {\alpha  }\hat{\mathbf{Q}}_{\tau,N_1}\big) 
    \hat{\mathbf{Q}}_{\tau,N_2}
    \big(\mathbf{I}-
    {\alpha  }\hat{\mathbf{Q}}_{\tau,N_1}\big)^2
    \hat{\mathbf{Q}}_{\tau,N_2}
    \big(\mathbf{I}-
    {\alpha  }\hat{\mathbf{Q}}_{\tau,N_1}\big)
    \big] 
  \end{align}
  We know that $\mathbb{E}\big[(\hat{\mathbf{W}}_{\tau, N}^{\mathrm{ma}})^2\big]$ is equal to a scale factor times $\mathbf{I}_d$, the identity matrix. And the scale factor can be derived below
  {
  \begin{align*}\label{eq:tr_E_A_hat_ma_tau2}
    & \mathrm{tr}\mathbb{E}\big[(\hat{\mathbf{W}}_{\tau, N}^{\mathrm{ma}})^2\big]
    = \mathrm{tr} \mathbb{E} \big[
    \hat{\mathbf{Q}}_{\tau,N_2}
    \big(\mathbf{I}-
    {\alpha  }\hat{\mathbf{Q}}_{\tau,N_1}\big)^2
    \hat{\mathbf{Q}}_{\tau,N_2}
    \big(\mathbf{I}-
    {\alpha  }\hat{\mathbf{Q}}_{\tau,N_1}\big)^2
    \big]  \\
    =&  \frac{1}{N_2^2} \mathrm{tr} \mathbb{E} \big[
    \mathbf{X}_{\tau,N_2}^{\text{val}}
    \big(\mathbf{I}-
    {\alpha  }\hat{\mathbf{Q}}_{\tau,N_1}\big)^2
    \mathbf{X}_{\tau,N_2}^{\text{val}\top}\mathbf{X}_{\tau,N_2}^{\text{val}}
    \big(\mathbf{I}-
    {\alpha  }\hat{\mathbf{Q}}_{\tau,N_1}\big)^2
    \mathbf{X}_{\tau,N_2}^{\text{val}\top}
    \big] \\
    =&  \frac{1}{N_2^2} \mathrm{tr} \mathbb{E} \big[
    \big(\mathbf{X}_{\tau,N_2}^{\text{val}}
    \mathbf{V}_{\tau}^{\text{trn}}
    \mathrm{Diag}\big((1-\alpha  \lambda_1^{(N_1)})^2,
    \dots,
    (1-\alpha  \lambda_d^{(N_1)})^2\big)
    \mathbf{V}_{\tau}^{{\text{trn}}\top}
    \mathbf{X}_{\tau,N_2}^{\text{val}\top}\big)^2
    \big] \\
    =&  \frac{1}{N_2^2} \mathrm{tr} \mathbb{E} \Big[
    \Big(\sum_{i,j=1}^{N_2} 
    \mathrm{Diag}\big((1-\alpha  \lambda_1^{(N_1)})^2,
    \dots, (1-\alpha  \lambda_d^{(N_1)})^2\big)
    \mathbf{v}_j \mathbf{v}_i^{\top} \Big)^2
    \Big] \\
    =&  \frac{1}{N_2^2} \mathbb{E} \Big[
   \sum_{i}^{N_2} \mathrm{tr}
    \Big( \mathrm{Diag}\big((1-\alpha  \lambda_1^{(N_1)})^2,
    \dots, (1-\alpha  \lambda_d^{(N_1)})^2\big)
    \mathbf{v}_i \mathbf{v}_i^{\top} \Big)^2 \\
    &+ \sum_{i\neq j}
    ( \mathrm{Diag}\big((1-\alpha  \lambda_1^{(N_1)})^2,
    \dots, (1-\alpha  \lambda_d^{(N_1)})^2\big)
    \mathbf{v}_j \mathbf{v}_i^{\top} )^2
    \Big]  \\
    =&  \frac{1}{N_2 d} \mathbb{E} \Big[ 
    \mathrm{tr}^2\Big( 
    \mathrm{Diag}\big((1-\alpha  \lambda_1^{(N_1)})^2,
    \dots,
    (1-\alpha  \lambda_d^{(N_1)})^2\big)
    \Big) 
    +2 \Big\|\mathrm{Diag}\Big((1-\alpha  \lambda_1^{(N_1)})^2,\dots,
    (1-\alpha  \lambda_d^{(N_1)})^2\Big) \Big\|_{\mathrm{F}}^2\\
    &+ (N_2 - 1)
    \Big\|
    \mathrm{Diag}\Big((1-\alpha  \lambda_1^{(N_1)})^2,
    \dots,
    (1-\alpha  \lambda_d^{(N_1)})^2\Big)
    \Big\|_{\mathrm{F}}^2
    \Big] .
    \numberthis
  \end{align*}}

  By combining Lemma~\ref{lemma:dominate_constant_stats_err}, \eqref{eq:E_A_hat_ma_tau2}, and \eqref{eq:tr_E_A_hat_ma_tau2},
  we arrive at the following
  \begin{align*}
  & \tilde{C}^{\mathrm{ma}}_{0} 
    \coloneqq
  \frac{1}{d}\Big\langle\mathbb{E}\big[\hat{\mathbf{W}}_{\tau, N}^{\mathrm{ma}}\big]^{-2}, \mathbb{E}\big[(\hat{\mathbf{W}}_{\tau, N}^{\mathrm{ma}})^{2}\big]\Big\rangle \\
  =&\frac{1}{dN_2} 
    \mathbb{E} \Big[
    \Big( \sum_{i=1}^{d}
    (1-\alpha  \lambda_i^{(N_1)})^2
    \Big)^2 
    + (N_2 + 1)
    \Big ( \sum_{i=1}^{d}
    (1-\alpha  \lambda_i^{(N_1)})^4
    \Big) \Big] 
     \Big\{\frac{1}{d} 
    \mathbb{E} \big[  \sum_{i=1}^{d}
    (1-\alpha  \lambda_i^{(N_1)})^2
    \big] \Big\}^{-2} \\
    \label{eq:constant_maml}
    \numberthis
  =&\frac{1}{dN_2} 
    \mathbb{E} \Big[ 
    \mathrm{tr}^2\Big( 
    (\mathbf{I}-\alpha\hat{\mathbf{Q}}_{\tau,N_1})^2
    \Big) 
    + (N_2 + 1)
    \mathrm{tr} \Big(
    (\mathbf{I}-\alpha\hat{\mathbf{Q}}_{\tau,N_1})^4
    \Big) \Big] 
    \Big\{\frac{1}{d} 
    \mathbb{E} \big[  \mathrm{tr} 
    \big((\mathbf{I}-\alpha \hat{\mathbf{Q}}_{\tau,N_1})^2\big)
    \big] \Big\}^{-2} .
  \end{align*}
  
  \end{proof}

  \subsubsection{Bound of statistical errors under Assumptions 1,2}
  \label{app_ssub:loose_bound}
  
  \paragraph{ERM.}
  Following the definition of $\hat{\btheta}_0^{\mathrm{er}}$ in \eqref{eq:theta_0_hat_erm_sln}, we have 
  \begin{align}\label{eq:theta0_hat_diff_theta0_star_erm_mat_form2}
    &\hat{\btheta}_0^{\mathrm{er}} - {\btheta}_0^{\mathrm{er}}
    =
    \Big(\sum_{\tau=1}^{T}\hat{\mathbf{W}}_{\tau}^{\mathrm{er}}\Big)^{-1}
     \Big(\sum_{\tau=1}^{T}
        \hat{\mathbf{W}}_{\tau}^{\mathrm{er}} (\btheta_{\tau}^{\mathrm{gt}} - {\btheta}_0^{\mathrm{er}}) \Big)
     +
     \Big(\sum_{\tau=1}^{T}\hat{\mathbf{W}}_{\tau}^{\mathrm{er}}\Big)^{-1}
     \Big(\sum_{\tau=1}^{T}       
        \frac{1}{N}
        \mathbf{X}^{\text{all}\top}_{\tau, N}\mathbf{e}^{\text{all}}_{\tau,N}\Big).
  \end{align}

  To bound the statistical error, we define 
  \begin{subequations}
  \begin{align}\label{eq:define_z_U_e_erm}
    &\mathbf{z}^{\mathrm{all}}_{e,\mathrm{er}}  \coloneqq 
    \begin{bmatrix}
      \mathbf{e}_1^{\text{all}\top}
      ,\dots,
      \mathbf{e}_T^{\text{all}\top}
    \end{bmatrix}^{\top} \in \mathbb{R}^{NT}, \\
    &\mathbf{U}_{e,\mathrm{er}}  \coloneqq  
    \frac{1}{N}
    \begin{bmatrix}
     \mathbf{X}_{1,N}^{\text{all} \top},\dots, 
     \mathbf{X}_{T,N}^{\text{all} \top}
    \end{bmatrix}^{\top}
    \Big(\sum_{\tau=1}^{T}\hat{\mathbf{W}}_{\tau}^{\mathrm{er}}\Big)^{-1}
    \in \mathbb{R}^{NT\times d}.
  \end{align}
  \end{subequations}
  
  Under Assumptions~1-2, with $\mathbf{U}_{\mathrm{er}}, \mathbf{z}_{\mathrm{er}}$ defined in Lemma~\ref{lemma:bound_stats_err_term1}, the ERM statistical error is given by
  \begin{align*}\label{eq:ERM_stats_err_no_assp3}
    \mathcal{E}_{\mathrm{er}}^{2} (\hat{\btheta}_0^{\mathrm{er}})
    = \|\hat{\btheta}_{0}^{{\mathrm{er}}}-\btheta_{0}^{ {\mathrm{er}}}\|_{\mathbb{E}_{\tau}[\mathbf{W}_{\tau}^{\mathrm{er}}]}^2
    \leq & 
    2\Big( \underbracket{\|
    \mathbf{U}_{\mathrm{er}}^{\top} \mathbf{z}_{\mathrm{er}}\|^2_{\mathbb{E}_{\tau}[\mathbf{W}_{\tau}^{\mathrm{er}}]}}_{I_1^{\mathrm{er}}}
    + \underbracket{\| 
    \mathbf{U}_{e,\mathrm{er}}^{\top} \mathbf{z}_{e,\mathrm{er}}^{\text{all}}\|^2_{\mathbb{E}_{\tau}[\mathbf{W}_{\tau}^{\mathrm{er}}]} }_{I_2} \Big) 
    .\numberthis
  \end{align*}
  We will then bound terms $I_1^{\mathrm{er}}, I_2 $ respectively.
  First the bound for the term $I_1^{\mathrm{er}}$ in \eqref{eq:ERM_stats_err_no_assp3} is provided in Lemma~\ref{lemma:bound_stats_err_term1},
  which states that with probability at least $1-Td^{-10}$, we have
  \begin{align*}\label{eq:stats_err_term_I1_erm_loose}
    I_1^{\mathrm{er}}
    \leq & 
    \frac{R^2 }{T} 
    \Big(\lambda_{\min}(\mathbb{E}[{\mathbf{W}}^{\mathrm{er}}_{\tau}])^{-1}
      \lambda_{\max}(\mathbb{E}[({\mathbf{W}}^{\mathrm{er}}_{\tau})^{2}])
      + \widetilde{\mathcal{O}}(\frac{d}{N})
      + \widetilde{\mathcal{O}}(\frac{1}{\sqrt{d}})  + \widetilde{\mathcal{O}} (\sqrt{\frac{d}{T}})\Big) \\
     & + 
    \Big( \widetilde{\mathcal{O}}(\sqrt{\frac{d}{T}}) 
    \Big) M^2
    . \numberthis
  \end{align*}
  Following similar arguments from Lemma~\ref{lemma:bound_stats_err_term1}, 
  for term $I_2$, first
  \begin{align*}\label{eq:HW_ineq_ze_erm_loose}
    &|\mathbf{z}_{e,\mathrm{er}}^{\text{all}\top}\mathbf{U}_{e,\mathrm{er}} 
    \mathbb{E}_{\tau}[\mathbf{W}_{\tau}^{\mathrm{er}}]
    \mathbf{U}_{e,\mathrm{er}}^{\top} \mathbf{z}_{e,\mathrm{er}}^{\text{all}}
    - \mathbb{E}_{\btheta_{\tau}^{\mathrm{gt}}, \mathbf{e}_{\tau} \mid \hat{\mathbf{W}}_{\tau}^{\mathrm{er}}}[\mathbf{z}_{e,\mathrm{er}}^{\text{all}\top}\mathbf{U}_{e,\mathrm{er}} 
    \mathbb{E}_{\tau}[\mathbf{W}_{\tau}^{\mathrm{er}}]
    \mathbf{U}_{e,\mathrm{er}}^{\top} \mathbf{z}_{e,\mathrm{er}}^{\text{all}}]| 
      \\
    \leq & \widetilde{\mathcal{O}} \Big(
    \big\|\mathbf{U}_{e,\mathrm{er}} 
    \mathbb{E}_{\tau}[\mathbf{W}_{\tau}^{\mathrm{er}}]
    \mathbf{U}_{e,\mathrm{er}}^{\top}\big\|_{\mathrm{F}}\Big) 
    = \widetilde{\mathcal{O}} \Big(\frac{1}{N}\Big\|
    \mathbb{E}_{\tau}[\mathbf{W}_{\tau}^{\mathrm{er}}]
    \Big(\sum_{\tau=1}^{T} \hat{\mathbf{W}}_{\tau}^{\mathrm{er}} \Big)^{-2}
    \Big(\sum_{\tau=1}^{T}  \hat{\mathbf{W}}_{\tau}^{\mathrm{er}}
     \Big)
    \Big\|_{\mathrm{F}} \Big)\\
    =& \widetilde{\mathcal{O}} \Big(\frac{1}{TN}\Big\|
    \mathbb{E}_{\tau}[\mathbf{W}_{\tau}^{\mathrm{er}}]
    \Big(\frac{1}{T}\sum_{\tau=1}^{T} \hat{\mathbf{W}}_{\tau}^{\mathrm{er}} \Big)^{-2}
    \Big(\frac{1}{T}\sum_{\tau=1}^{T} \hat{\mathbf{W}}_{\tau}^{\mathrm{er}}
     \Big)
    \Big\|_{\mathrm{F}} \Big)
    =\widetilde{\mathcal{O}}\Big(\frac{\sqrt{d}}{TN}\Big) 
     \numberthis
  \end{align*}
  and the expectation is given by
  \begin{align*}\label{eq:concentrate_cov_z_erm_loose}
  &\mathbb{E}_{\btheta_{\tau}^{\mathrm{gt}}, \mathbf{e}_{\tau} \mid \hat{\mathbf{W}}_{\tau}^{\mathrm{er}}}[\mathbf{z}_{e,\mathrm{er}}^{\text{all}\top}\mathbf{U}_{e,\mathrm{er}} 
  \mathbb{E}_{\tau}[\mathbf{W}_{\tau}^{\mathrm{er}}]
  \mathbf{U}_{e,\mathrm{er}}^{\top} \mathbf{z}_{e,\mathrm{er}}^{\text{all}}] 
  = \mathrm{tr}\Big(
  \mathbf{U}_{e,\mathrm{er}} \mathbb{E}_{\tau}[\mathbf{W}_{\tau}^{\mathrm{er}}]
  \mathbf{U}_{e,\mathrm{er}}^{\top}
  \Big) \\
  =& \frac{d}{TN}\frac{1}{d}
  \Big\langle \mathbb{E}_{\tau}[\mathbf{W}_{\tau}^{\mathrm{er}}]
  \Big(\frac{1}{T}\sum_{\tau=1}^{T}
  \hat{\mathbf{W}}_{\tau}^{\mathrm{er}} \Big)^{-2} ,
  \Big(\frac{1}{T}\sum_{\tau=1}^{T} \hat{\mathbf{W}}_{\tau}^{\mathrm{er}} \Big)
  \Big\rangle 
  = \frac{d}{TN} 
  \frac{1}{d} \mathrm{tr}\Big(\mathbb{E}_{\tau}[\mathbf{W}_{\tau}^{\mathrm{er}}]
  \Big(\frac{1}{T}\sum_{\tau=1}^{T}
  \hat{\mathbf{W}}_{\tau}^{\mathrm{er}} \Big)^{-1}\Big) \\
  =& \frac{d}{TN}\Big\{
  \frac{1}{d} \mathrm{tr}\Big(\mathbb{E}_{\tau}[\mathbf{W}_{\tau}^{\mathrm{er}}]\Big(\frac{1}{T}\sum_{\tau=1}^{T}
  \hat{\mathbf{W}}_{\tau}^{\mathrm{er}} \Big)^{-1}
  - \mathbf{I}\Big)
  + \underbracket{\frac{1}{d}
  \mathrm{tr}\Big(
  \mathbf{I}_d
  \Big)}_{=C^{\mathrm{er}}_{1} }
  \Big\}
  \numberthis
  \end{align*}
  Therefore combining~\eqref{eq:HW_ineq_ze_erm_loose} and~\eqref{eq:concentrate_cov_z_erm_loose}, we have with probability at least $1-Td^{-10}$
  \begin{align}\label{eq:stats_err_term_II_erm_loose}
      I_2
    &\leq 
    \frac{d}{TN}\Big(
    C^{\mathrm{er}}_{1} +\widetilde{\mathcal{O}}(\sqrt{\frac{d}{T}})+\widetilde{\mathcal{O}}( \frac{1}{\sqrt{d}}) + \widetilde{\mathcal{O}}( \frac{d}{N}) \Big) 
    .
  \end{align}
  
  Finally, 
  by combining the bound for $I_1$ and $I_2$,
  we conclude that with probability at least $1 - Td^{-10}$,
  the statistical error of ERM is bounded by 
  \begin{align*}
  \label{eq:stats_err_final_erm_loose}
    \mathcal{E}^{2}_{\mathrm{er}} (\hat{\btheta}_0^{\mathrm{er}})
    \leq &
  \frac{R^{2} }{T}\Big(2C^{\mathrm{er}}_{0}
  +\widetilde{\mathcal{O}}(\sqrt{\frac{d}{T}})
  +\widetilde{\mathcal{O}}( \frac{1}{\sqrt{d}})
  +\widetilde{\mathcal{O}}( \frac{d}{N})
  \Big)
  \nonumber
  +\frac{d}{TN}\Big(2C^{\mathrm{er}}_{1}
  +\widetilde{\mathcal{O}}(\sqrt{\frac{d}{T}})
  +\widetilde{\mathcal{O}}( \frac{1}{\sqrt{d}})
  +\widetilde{\mathcal{O}}( \frac{d}{N}) \Big) \\
  &
  + \Big( \widetilde{\mathcal{O}}(\sqrt{\frac{d}{T}})  + 
    \widetilde{\mathcal{O}}( \frac{d}{N})
    \Big) M^2
    . \numberthis
  \end{align*}
  
  
  \paragraph{MAML.}
  From  the expressions of $\hat{\btheta}_0^{\mathrm{ma}}$ 
  and $\hat{\mathbf{W}}_{\tau}^{\mathrm{ma}}$, 
  we have
  \begin{align*}\label{eq:theta0_hat_diff_theta0_star_maml_mat_form2}
    &\hat{\btheta}_0^{\mathrm{ma}} - {\btheta}_0^{\mathrm{ma}}
    = 
    \Big(\sum_{\tau=1}^{T}
    \hat{\mathbf{W}}_{\tau}^{\mathrm{ma}}\Big)^{-1} 
    \Big(\sum_{\tau=1}^{T}\hat{\mathbf{W}}_{\tau}^{\mathrm{ma}}
    (\btheta_{\tau}^{\text{gt}} - {\btheta}_0^{\mathrm{ma}}) \Big) \\
    &+ \Big(\sum_{\tau=1}^{T} 
    \hat{\mathbf{W}}_{\tau}^{\mathrm{ma}}\Big)^{-1} 
    \Big(\sum_{\tau=1}^{T} 
    \big(\mathbf{I}- 
    {\alpha}\hat{\mathbf{Q}}_{\tau,N_1}\big)
    \big(
    \frac{1}{N_2}\mathbf{X}_{\tau,N_2}^{\text{val}\top} \mathbf{e}_{\tau,N_2}^{\text{val}} - \frac{\alpha}{N_1}\hat{\mathbf{Q}}_{\tau,N_2}
    \mathbf{X}_{\tau,N_1}^{\text{trn}\top} \mathbf{e}_{\tau,N_1}^{\text{trn}}
    \big)
    \Big)  .
  \numberthis
  \end{align*}
  To bound the statistical error of MAML, we define 
  \begin{subequations}
  \begin{align}\label{eq:define_z_e1_val_maml}
    &\mathbf{z}^{\mathrm{val}}_{e1,\mathrm{ma}} \coloneqq
    \begin{bmatrix}
      \mathbf{e}_1^{\text{val}\top}
      ,\dots,
      \mathbf{e}_T^{\text{val}\top}
    \end{bmatrix}^{\top} \in \mathbb{R}^{N_2T}, \\
    \label{eq:define_U_e1_val_maml}
    &\mathbf{U}_{e1,\mathrm{ma}}^{\top} \coloneqq
    \frac{1}{N_2}
  \Big(\sum_{\tau=1}^{T}\hat{\mathbf{W}}_{\tau, N}^{\mathrm{ma}}\Big)^{-1}
  \Big[
  \big(\mathbf{I}- {\alpha}\hat{\mathbf{Q}}_{1,N_1}\big)
    \mathbf{X}_{1,N_2}^{\text{val} \top},
    \dots, 
    \big(\mathbf{I}- \alpha \hat{\mathbf{Q}}_{T,N_1}\big)
  \mathbf{X}_{T,N_2}^{\text{val} \top}\Big] \in \mathbb{R}^{d \times N_2 T }
  \end{align}
  \end{subequations}
  \begin{subequations}
  \begin{align}\label{eq:define_z_e2_trn_maml}
    &\mathbf{z}^{\mathrm{trn}}_{e2,\mathrm{ma}} \coloneqq
    \begin{bmatrix}
      \mathbf{e}_1^{\text{trn}\top}
      ,\dots,
      \mathbf{e}_T^{\text{trn}\top}
    \end{bmatrix}^{\top} \in \mathbb{R}^{N_1 T}, \\
    \label{eq:define_U_e2_trn_maml}
    &\mathbf{U}_{e2,\mathrm{ma}}^{\top} \coloneqq
    \frac{\alpha}{N_1}
  \Big(\sum_{\tau=1}^{T}\hat{\mathbf{W}}_{\tau, N}^{\mathrm{ma}}\Big)^{-1} 
  \Big[   
  \big(\mathbf{I}- 
    {\alpha}\hat{\mathbf{Q}}_{1,N_1}\big)
    \hat{\mathbf{Q}}_{1,N_2}
    \mathbf{X}_{1,N_1}^{\text{trn} \top},\dots, 
    \big(\mathbf{I}- 
    {\alpha}\hat{\mathbf{Q}}_{T,N_1}\big)
  \hat{\mathbf{Q}}_{T,N_2} \mathbf{X}_{T,N_1}^{\text{trn} \top}\Big] \in \mathbb{R}^{d \times N_1 T}.
  \end{align}
  \end{subequations}

  Under Assumptions~1-2, with $\mathbf{U}_{\mathrm{ma}}, \mathbf{z}_{\mathrm{ma}}$ defined in Lemma~\ref{lemma:bound_stats_err_term1}, the MAML statistical error is given by
  \begin{align}\label{eq:MAML_stats_err_no_assp3}
    &\mathcal{E}_{\mathrm{ma}}^{2} (\hat{\btheta}_0^{\mathrm{ma}})
    \leq 2\Big( 
    \underbracket{
    \|\mathbf{U}_{\mathrm{ma}}^{\top} \mathbf{z}_{\mathrm{ma}}\|^2_{\mathbb{E}_{\tau}[\mathbf{W}_{\tau}^{\mathrm{ma}}]}}_{I_1^{\mathrm{ma}}}
    + \underbracket{
    \|\mathbf{U}_{e1,\mathrm{ma}}^{\top} \mathbf{z}_{e1,\mathrm{ma}}^{\text{val}}\|^2_{\mathbb{E}_{\tau}[\mathbf{W}_{\tau}^{\mathrm{ma}}]}}_{I_2}
    + \underbracket{
    \|\mathbf{U}_{e2,\mathrm{ma}}^{\top} \mathbf{z}_{e2,\mathrm{ma}}^{\text{trn}}\|^2_{\mathbb{E}_{\tau}[\mathbf{W}_{\tau}^{\mathrm{ma}}]}}_{I_3} \Big)
    .
  \end{align}
  We will then bound terms $I_1^{\mathrm{ma}}, I_2$-$I_6$ respectively.
  First the bound for the term $I_1^{\mathrm{ma}}$ is provided in Lemma~\ref{lemma:bound_stats_err_term1},
  which states that with probability at least $1-Td^{-10}$, we have
  \begin{align*}\label{eq:stats_err_term_I1_maml_loose}
    I_1^{\mathrm{ma}}
    \leq &
    \frac{R^2 }{T} 
    \Big(\lambda_{\min}(\mathbb{E}[{\mathbf{W}}^{\mathrm{ma}}_{\tau}])^{-1}
      \lambda_{\max}(\mathbb{E}[({\mathbf{W}}^{\mathrm{ma}}_{\tau})^{2}])
      + \widetilde{\mathcal{O}}( \frac{d}{N})
      + \widetilde{\mathcal{O}}(\frac{1}{\sqrt{d}})  + \widetilde{\mathcal{O}} (\sqrt{\frac{d}{T}})\Big) \\
     & + 
    \Big( \widetilde{\mathcal{O}}(\sqrt{\frac{d}{T}})  + 
    \widetilde{\mathcal{O}}(\frac{d}{N})
    \Big) M^2
    . \numberthis
  \end{align*}
  
  Following similar arguments as \eqref{eq:HW_ineq_ze_erm_loose},  from Lemma~\ref{lemma:bound_stats_err_term1}, 
  for term $I_2$, first
  {
  \begin{align}
    &\Big| \|\mathbf{U}_{e1,\mathrm{ma}}^{\top} \mathbf{z}_{e1,\mathrm{ma}}^{\text{val}}\|^2_{\mathbb{E}_{\tau}[\mathbf{W}_{\tau}^{\mathrm{ma}}]} 
    - \mathbb{E}_{\btheta_{\tau}^{\mathrm{gt}}, \mathbf{e}_{\tau} \mid \hat{\mathbf{W}}_{\tau, N}^{\mathrm{ma}}}
    [\|\mathbf{U}_{e1,\mathrm{ma}}^{\top} \mathbf{z}_{e1,\mathrm{ma}}^{\text{val}}\|^2_{\mathbb{E}_{\tau}[\mathbf{W}_{\tau}^{\mathrm{ma}}]} ] \Big| 
    =\widetilde{\mathcal{O}}\Big(\frac{\sqrt{d}}{TN_2} \Big) 
    \label{eq:HW_ineq_ze1_maml_loose}
  \end{align}}
  and the expectation is given by
  \begin{align*}\label{eq:concentrate_cov_z_maml_loose}
  &\mathbb{E}_{\btheta_{\tau}^{\mathrm{gt}}, \mathbf{e}_{\tau} \mid \hat{\mathbf{W}}_{\tau, N}^{\mathrm{ma}}}
    [\mathbf{z}_{e1,\mathrm{ma}}^{\text{val}\top}
    \mathbf{U}_{e1,\mathrm{ma}} 
    \mathbb{E}_{\tau}[\mathbf{W}_{\tau}^{\mathrm{ma}}]\mathbf{U}_{e1,\mathrm{ma}}^{\top}
    \mathbb{E}_{\tau}[\mathbf{W}_{\tau}^{\mathrm{ma}}] \mathbf{z}_{e1,\mathrm{ma}}^{\text{val}}]
  = \mathrm{tr}\Big(
  \mathbf{U}_{e1,\mathrm{ma}} 
  \mathbb{E}_{\tau}[\mathbf{W}_{\tau}^{\mathrm{ma}}]
  \mathbf{U}_{e1,\mathrm{ma}}^{\top}
  \Big) \\
  = & \frac{d}{TN_2} 
  \frac{1}{d} \mathrm{tr}\Big(
  \mathbb{E}_{\tau}[\mathbf{W}_{\tau}^{\mathrm{ma}}]
  \big(\frac{1}{T}\sum_{\tau=1}^{T}
  \hat{\mathbf{W}}_{\tau, N}^{\mathrm{ma}} \big)^{-1}\Big) \\
  = &  \frac{d}{TN_2}\Big\{ 
  \frac{1}{d} \mathrm{tr}\Big(
  \mathbb{E}_{\tau}[\mathbf{W}_{\tau}^{\mathrm{ma}}]
  \big(\frac{1}{T}\sum_{\tau=1}^{T}
  \hat{\mathbf{W}}_{\tau, N}^{\mathrm{ma}} \big)^{-1}
  - \mathbb{E}^{-1}[\hat{\mathbf{W}}_{\tau, N}^{\mathrm{ma}}]\Big)
  + \underbracket{\frac{1}{d}
  \mathrm{tr}\Big( 
  \mathbf{I}_d
  \Big)}_{=C^{\mathrm{ma}}_{1,1} }
  \Big\}.
  \numberthis
  \end{align*}
  Therefore combining~\eqref{eq:HW_ineq_ze_erm_loose} and~\eqref{eq:concentrate_cov_z_erm_loose}, we have
  \begin{equation}\label{eq:stats_err_term_II_maml_loose}
      I_2
    \leq 
    \frac{d}{TN_2}\Big(
    C^{\mathrm{ma}}_{1,1} 
    +\widetilde{\mathcal{O}}(\sqrt{{\frac{d}{T}}})+\widetilde{\mathcal{O}}( \frac{1}{\sqrt{d}})\Big).
  \end{equation}
  
  Following similar arguments from \eqref{eq:HW_ineq_ze1_maml_loose}-\eqref{eq:stats_err_term_II_maml_loose},
  $I_3$ satisfies
    \begin{align}\label{eq:bound_MSE_e2_maml_loose}
  I_3
    &\leq \frac{d}{TN_1}\Big(
    C^{\mathrm{ma}}_{1,2} +\widetilde{\mathcal{O}}(\sqrt{\frac{d}{T}})+\widetilde{\mathcal{O}}(\frac{1}{\sqrt{d }})\Big)
  \end{align}
  with $C^{\mathrm{ma}}_{1,2}$ defined by
  \begin{align*}
    C^{\mathrm{ma}}_{1,2}
    &\coloneqq 
    \frac{1}{d} \Big \langle 
  \mathbb{E}^{-1}[{\mathbf{W}}_{\tau}^{\mathrm{ma}}], 
  \alpha^{2}\mathbb{E}(\mathbf{I} - \alpha {\mathbf{Q}}_{\tau})
  {\mathbf{Q}}_{\tau}{\mathbf{Q}}_{\tau}{\mathbf{Q}}_{\tau}(\mathbf{I} - \alpha {\mathbf{Q}}_{\tau})
  \Big \rangle. 
  \end{align*}

  Finally, 
  define $C_1^{\rm ma} \coloneqq (1-s)^{-1} C_{1,1}^{\rm ma} + s^{-1} C_{1,2}^{\rm ma}$.
  By combining the bound of $I_1$-$I_3$,
  we conclude that with probability at least $1 - Td^{-10}$,
  the statistical error of MAML is bounded above by 
  \begin{align*}
  \label{eq:stats_err_final_maml_loose}
  \mathcal{E}^{2}_{\mathrm{ma}} (\hat{\btheta}_0^{\mathrm{ma}})
  \leq & 
  \frac{R^{2} }{T}\Big(2C^{\mathrm{ma}}_{0}
  +\widetilde{\mathcal{O}}(\sqrt{\frac{d}{T}})
  +\widetilde{\mathcal{O}}( \frac{1}{\sqrt{d}})
  +\widetilde{\mathcal{O}}( \frac{d}{N})
  \Big)
  +\frac{d}{TN}\Big(2C^{\mathrm{ma}}_{1}
    +\widetilde{\mathcal{O}}(\sqrt{\frac{d}{T}})+\widetilde{\mathcal{O}}( \frac{1}{\sqrt{d}})
    \Big) \\
  &
  +\Big( \widetilde{\mathcal{O}}(\sqrt{\frac{d}{T}})  + 
  \widetilde{\mathcal{O}}(\frac{d}{N})
  \Big) M^2
  . \numberthis
  \end{align*}

  \paragraph{iMAML.}
  
    Based on $\hat{\btheta}_0^{\mathrm{im}}$ in \eqref{eq:theta_0_hat_biMAML_sln_app}, and
    $\hat{\mathbf{W}}_{\tau}^{\mathrm{im}}$ in~\eqref{eq:W_hat_bi},
    we have
    \begin{align}\label{eq:theta0_hat_diff_theta0_star_bimaml_mat_form2}
      \hat{\btheta}_0^{\mathrm{im}} - {\btheta}_0^{\mathrm{im}}
      =& 
      \Big(\sum_{\tau=1}^{T}\hat{\mathbf{W}}_{\tau}^{\mathrm{im}}\Big)^{-1}
       \Big(\sum_{\tau=1}^{T}
       \hat{\mathbf{W}}_{\tau}^{\mathrm{im}} (\btheta_{\tau}^{\mathrm{gt}} - {\btheta}_0^{\mathrm{im}}) \Big)\\
       \nonumber
       &+
       \Big(\sum_{\tau=1}^{T}\hat{\mathbf{W}}_{\tau}^{\mathrm{im}}\Big)^{-1}
       \Big(\sum_{\tau=1}^{T}\gamma {\Sigma}_{\btheta_{\tau}}
       \frac{1}{N_2}\mathbf{X}^{\text{val}\top}_{\tau}\mathbf{e}_{\tau}^{\text{val}}
       -\gamma ^{-1}\hat{\mathbf{W}}_{\tau}^{\mathrm{im}}\frac{1}{N_1}  \mathbf{X}_{\tau}^{\text{trn} \top} \mathbf{e}_{\tau,N}^{}\Big).
    \end{align}
    To bound the iMAML statistical error, define
    \begin{align}\label{eq:define_z_e1_val_bimaml}
      &\mathbf{z}^{\mathrm{val}}_{e1,\mathrm{im}} \coloneqq
      \begin{bmatrix}
        \mathbf{e}_1^{\text{val}\top}
        ,\dots,
        \mathbf{e}_T^{\text{val}\top}
      \end{bmatrix}^{\top} \in \mathbb{R}^{N_2T}, \\
      \label{eq:define_U_e1_val_bimaml}
      &\mathbf{U}_{e1,\mathrm{im}}^{\top} \coloneqq
      \frac{1}{N_2}
    \left(\sum_{\tau=1}^{T}\hat{\mathbf{W}}_{\tau, N}^{\mathrm{im}}\right)^{-1}
    \Big[\gamma {\Sigma}_{\btheta_{1},N_1}
    \mathbf{X}_{1,N_2}^{\text{val} \top},\dots, 
    \gamma {\Sigma}_{\btheta_{T},N_1} 
    \mathbf{X}_{T,N_2}^{\text{val} \top}\Big] \in \mathbb{R}^{ d  \times N_2 T}
    \end{align}
    where  ${\Sigma}_{\btheta_{\tau}, N_1} = (\frac{1}{N_1} \mathbf{X}_{\tau, N_1}^{\mathrm{trn}\top} \mathbf{X}_{\tau, N_1}^{\mathrm{trn}}+\gamma \mathbf{I})^{-1}$,
    and
    \begin{align}\label{eq:define_z_e2_trn_bimaml}
      &\mathbf{z}^{\mathrm{trn}}_{e2,\mathrm{im}} \coloneqq
      \begin{bmatrix}
        \mathbf{e}_1^{\text{trn}\top}
        ,\dots,
        \mathbf{e}_T^{\text{trn}\top}
      \end{bmatrix}^{\top} \in \mathbb{R}^{N_1 T}, \\
      \label{eq:define_U_e2_trn_bimaml}
      &\mathbf{U}_{e2,\mathrm{im}}^{\top} \coloneqq
      \frac{1}{N_1}\left(\sum_{\tau=1}^{T}\hat{\mathbf{W}}_{\tau}^{\mathrm{im}}\right)^{-1}[\gamma ^{-1}\hat{\mathbf{W}}^{\mathrm{im}}_{1,N}  \mathbf{X}_{1}^{\text{trn} \top},\dots, 
    \gamma ^{-1}\hat{\mathbf{W}}^{\mathrm{im}}_{T,N}  \mathbf{X}_{T}^{\text{trn} \top}]
     \in \mathbb{R}^{d \times N_1 T}.
    \end{align}

  Following similar arguments as the derivation for MAML in \eqref{eq:MAML_stats_err_no_assp3}-\eqref{eq:stats_err_final_maml_loose}, 
  with probability at least $1-Td^{-10}$, 
  we have
  \begin{align*}
  \mathcal{E}_{\mathrm{im}}^2 (\hat{\btheta}_0^{\mathrm{im}})
  \leq &
    \frac{R^2 }{T} 
    \Big(2 C_0^{\rm im}
      + \widetilde{\mathcal{O}}( \frac{d}{N})
      + \widetilde{\mathcal{O}}(\frac{1}{\sqrt{d}})  + \widetilde{\mathcal{O}} (\sqrt{\frac{d}{T}})\Big) 
  +\frac{d}{TN}\Big(2C^{\mathrm{im}}_{1}
    +\widetilde{\mathcal{O}}(\sqrt{\frac{d}{T}})+\widetilde{\mathcal{O}}( \frac{1}{\sqrt{d}})
    \Big)  \\
    &+ 
    \Big(\widetilde{\mathcal{O}}(\sqrt{\frac{d}{T}}) + 
    \widetilde{\mathcal{O}}(\frac{d}{N})
    \Big) M^2 
    .\numberthis
  \end{align*}
  with  $C_1^{\rm im} \coloneqq (1-s)^{-1} C_{1,1}^{\rm im} + s^{-1} C_{1,2}^{\rm im}$, and
  \begin{align*}
    C^{\mathrm{im}}_{1,1}
    &\coloneqq 1,  \\
    C^{\mathrm{im}}_{1,2}
    &\coloneqq 
    \frac{1}{d} \Big\langle
    \mathbb{E}[{\mathbf{W}}_{\tau}^{\mathrm{im}}]^{-1},
    \frac{1}{T}\sum_{\tau=1}^{T}
    (\gamma   )^{-2} 
    \mathbb{E}[\hat{\mathbf{W}}_{\tau}^{\mathrm{im}}]
      \mathbb{E}[{\Sigma}_{\btheta_{\tau}}^{-1}]
      \mathbb{E}[\hat{\mathbf{W}}_{\tau}^{\mathrm{im}}]
    -\mathbb{E}[\gamma ^{-1}(\hat{\mathbf{W}}_{\tau}^{\mathrm{im}})^2]
    \Big\rangle .
  \numberthis
  \end{align*}
  


  \paragraph{BaMAML.}
  Based on the expressions of $\hat{\btheta}_0^{\mathrm{ba}}$ and $\hat{\mathbf{W}}_{\tau}^{\mathrm{ba}}$, we have
  \begin{align*}\label{eq:theta0_hat_diff_theta0_star_bamaml_mat_form2}
    \hat{\btheta}_0^{\mathrm{ba}} - {\btheta}_0^{\mathrm{ba}}
    =& \Big(\sum_{\tau=1}^{T}\hat{\mathbf{W}}_{\tau, N}^{\mathrm{ba}}\Big)^{-1}
     \Big(\sum_{\tau=1}^{T}
     \hat{\mathbf{W}}_{\tau, N}^{\mathrm{ba}} 
     (\btheta_{\tau}^{\mathrm{gt}} - \btheta_0^{\mathrm{ba}}) \Big) \\
     &+
     \Big(\sum_{\tau=1}^{T}\hat{\mathbf{W}}_{\tau, N}^{\mathrm{ba}}\Big)^{-1}
     \Big(\sum_{\tau=1}^{T}\mathbf{X}_{\tau,N}^{\text{all}\top}\Sigma_{y,N}^{-1} \mathbf{e}_{\tau,N}^{\text{all}} - \mathbf{X}_{\tau,N_1}^{\text{trn}\top}\Sigma_{y,N_1}^{-1} \mathbf{e}_{\tau,N_1}^{\text{trn}}
     \Big) \numberthis
  \end{align*}
  where $\Sigma_{y,N}^{-1} = (\mathbf{I}_N + \gamma_b^{-1}\mathbf{X}_{\tau,N}\mathbf{X}^{\top}_{\tau,N})^{-1} $.

  To bound the statistical error of BaMAML, define
  \begin{align}\label{eq:define_z_e_all_bamaml}
    &\mathbf{z}^{\mathrm{all}}_{e,\mathrm{ba}} \coloneqq
    \begin{bmatrix}
      \mathbf{e}_1^{\text{all}\top}
      ,\dots,
      \mathbf{e}_T^{\text{all}\top}
    \end{bmatrix}^{\top} \in \mathbb{R}^{NT}, \\
    \label{eq:define_U_e_all_bamaml}
    &\mathbf{U}_{e1,\mathrm{ba}}^{\top} \coloneqq
    \frac{1}{N_2}
  \Big(\sum_{\tau=1}^{T}\hat{\mathbf{W}}_{\tau, N}^{\mathrm{ba}}\Big)^{-1}
  \Big[ 
  \mathbf{X}_{1,N}^{ \top}{\Sigma}_{y, N}^{-1},
   \dots, 
  \mathbf{X}_{T,N}^{ \top}{\Sigma}_{y, N}^{-1} 
  \Big] \in \mathbb{R}^{ d  \times N T}, \\
  &\mathbf{z}^{\mathrm{trn}}_{e2,\mathrm{ba}} \coloneqq
    \begin{bmatrix}
      \mathbf{e}_1^{\text{trn}\top}
      ,\dots,
      \mathbf{e}_T^{\text{trn}\top}
    \end{bmatrix}^{\top} \in \mathbb{R}^{N_1 T}, \\
  \label{eq:define_U_e_trn_bamaml}
  &\mathbf{U}_{e2,\mathrm{ba}}^{\top} \coloneqq
    \frac{1}{N_2}
  \Big(\sum_{\tau=1}^{T}\hat{\mathbf{W}}_{\tau, N}^{\mathrm{ba}}\Big)^{-1}
  \Big[ \mathbf{X}_{1,N_1}^{ \top} {\Sigma}_{y, N_1}^{-1}, \dots,   
  \mathbf{X}_{T,N_1}^{ \top} {\Sigma}_{y, N_1}^{-1} \Big] \in \mathbb{R}^{ d  \times N T}.
  \end{align}
  
  Following similar arguments as the derivation for ERM in \eqref{eq:ERM_stats_err_no_assp3}-\eqref{eq:stats_err_final_erm_loose}, 
  with probability at least $1-Td^{-10}$, 
  we have
  \begin{align*}
    \mathcal{E}_{\mathrm{ba}}^2 (\hat{\btheta}_0^{\mathrm{ba}})
  \leq &
  \frac{R^2 }{T} 
    \Big(2C_0^{\rm ba}
    + \widetilde{\mathcal{O}}( \frac{d}{N})
    + \widetilde{\mathcal{O}}(\frac{1}{\sqrt{d}})  + \widetilde{\mathcal{O}} (\sqrt{\frac{d}{T}})\Big)
    +\frac{d}{TN}\Big(2C^{\mathrm{ba}}_{1}
  +\widetilde{\mathcal{O}}(\sqrt{\frac{d}{T}})
  +\widetilde{\mathcal{O}}( \frac{1}{\sqrt{d}})\Big) \\
    &+  
    \Big( \widetilde{\mathcal{O}}(\sqrt{\frac{d}{T}})  + 
    \widetilde{\mathcal{O}}({\frac{d}{N}})
    \Big) M^2
  . \numberthis
  \end{align*}
  with 
  $C^{\mathrm{ba}}_{1}$ derived by
  \begin{align*}
  C^{\mathrm{ba}}_{1} = 1
  \geq & \frac{1}{d}\Big\langle
  \mathbb{E}[{\mathbf{W}}_{\tau, N}^{\mathrm{ba}}]^{-1},
  (1-s)^{-1} \mathbb{E}[(\mathbf{I}_d + (\gamma s)^{-1}{\mathbf{Q}}_{\tau})^{-1} {\mathbf{Q}}_{\tau}
  (\mathbf{I}_d + (\gamma s)^{-1}{\mathbf{Q}}_{\tau})^{-1} \\
  &- s(\mathbf{I}_d + \gamma ^{-1}{\mathbf{Q}}_{\tau})^{-1} {\mathbf{Q}}_{\tau}
  (\mathbf{I}_d + \gamma ^{-1}{\mathbf{Q}}_{\tau})^{-1}]
  \Big\rangle. 
  \numberthis
  \end{align*}

  \subsubsection{Bound of statistical error under Assumptions 1,3}
  \label{app_ssub:tight_bound}

  \paragraph{ERM.}
  Then under Assumptions~1,3, with $\mathbf{U}_{\mathrm{er}}, \mathbf{z}_{\mathrm{er}}$ defined in Lemma~\ref{lemma:bound_stats_err_term1}, 
  we have
  \begin{align}
    \label{eq:stats_err_mat_form_erm}
    \mathcal{E}^{2}_{\mathrm{er}} (\hat{\btheta}_0^{\mathrm{er}})
    = w_{\mathrm{er}}\|\hat{\btheta}_{0}^{{\mathrm{er}}}-\btheta_{0}^{ {\mathrm{er}}}\|_2^2
    =& w_{\mathrm{er}}
    (\underbracket{\|\mathbf{U}_{\mathrm{er}}^{\top} \mathbf{z}_{\mathrm{er}}\|^2 }_{I_1^{\mathrm{er}}}
      + \underbracket{\| \mathbf{U}_{e,\mathrm{er}}^{\top} \mathbf{z}_{e,\mathrm{er}}^{\text{all}}\|^2 }_{I_2}
      + 2\underbracket{\mathbf{z}_{\mathrm{er}}^{\top}
        \mathbf{U}_{\mathrm{er}} \mathbf{U}_{e,\mathrm{er}}^{\top}
        \mathbf{z}_{e,\mathrm{er}}^{\text{all}}}_{I_3}).
  \end{align}
  We will then bound the redefined terms $I_1^{\mathrm{er}}, I_2, I_3$ in \eqref{eq:stats_err_mat_form_erm} respectively.
  The bound for the term $I_1^{\mathrm{er}}$ in \eqref{eq:stats_err_mat_form_erm} is provided in Lemma~\ref{lemma:bound_stats_err_term1},
  which states that with probability at least $1-Td^{-10}$, the following holds
  \begin{equation}\label{eq:stats_err_term_I1_erm}
    I_1=
    \mathbf{z}_{\mathrm{er}}^{\top}\mathbf{U}_{\mathrm{er}} \mathbf{U}_{\mathrm{er}}^{\top} \mathbf{z}_{\mathrm{er}}
    \leq \frac{R^{2}}{T}\Big(\tilde{C}^{\mathrm{er}}_{0} + \widetilde{\mathcal{O}}(\frac{1}{\sqrt{d}}) + \widetilde{\mathcal{O}}(\sqrt{\frac{d}{T}})\Big). 
  \end{equation}
  
  For $\tilde{C}^{\mathrm{er}}_{0}$, 
  since $\mathbb{E}[\hat{\mathbf{W}}_{\tau}^{\mathrm{er}}] = \mathbf{Q}_{\tau}$,
  and by Lemma~\ref{lemma:dominate_constant_stats_err}, we have
  \begin{align}\label{eq:term1_const_erm}
    \tilde{C}^{\mathrm{er}}_{0}
    =&\frac{1}{d} \mathbb{E}\Big[  \mathrm{tr}(\hat{\mathbf{Q}}_{\tau, N}^{2})\Big]
    \Big(\frac{1}{d}\mathbb{E}\big[ \mathrm{tr}(\mathbf{Q}_{\tau})\big]\Big)^{-2}
  \end{align}

  Following similar arguments from Lemma~\ref{lemma:bound_stats_err_term1}, 
  for term $I_2$, first
  \begin{align*}\label{eq:HW_ineq_ze_erm}
    &|\mathbf{z}_{e,\mathrm{er}}^{\text{all}\top}\mathbf{U}_{e,\mathrm{er}} \mathbf{U}_{e,\mathrm{er}}^{\top} \mathbf{z}_{e,\mathrm{er}}^{\text{all}}
    - \mathbb{E}_{\btheta_{\tau}^{\mathrm{gt}}, \mathbf{e}_{\tau} \mid \hat{\mathbf{W}}_{\tau}^{\mathrm{er}}}[\mathbf{z}_{e,\mathrm{er}}^{\text{all}\top}\mathbf{U}_{e,\mathrm{er}} \mathbf{U}_{e,\mathrm{er}}^{\top} \mathbf{z}_{e,\mathrm{er}}^{\text{all}}]| 
    =\widetilde{\mathcal{O}}\Big(\frac{\sqrt{d}}{TN}\Big) 
     \numberthis
  \end{align*}

  \begin{align*}\label{eq:concentrate_cov_z_erm}
  &\mathbb{E}_{\btheta_{\tau}^{\mathrm{gt}}, \mathbf{e}_{\tau} \mid \hat{\mathbf{W}}_{\tau}^{\mathrm{er}}}[\mathbf{z}_{e,\mathrm{er}}^{\text{all}\top}\mathbf{U}_{e,\mathrm{er}} \mathbf{U}_{e,\mathrm{er}}^{\top} \mathbf{z}_{e,\mathrm{er}}^{\text{all}}] 
  = \mathrm{tr}\Big(
  \mathbf{U}_{e,\mathrm{er}} \mathbf{U}_{e,\mathrm{er}}^{\top}
  \Big)
  = \frac{d}{TN}\frac{1}{d}
  \Big\langle
  \Big(\frac{1}{T}\sum_{\tau=1}^{T}
  \hat{\mathbf{W}}_{\tau}^{\mathrm{er}} \Big)^{-2} ,
  \Big(\frac{1}{T}\sum_{\tau=1}^{T} \hat{\mathbf{W}}_{\tau}^{\mathrm{er}} \Big)
  \Big\rangle \\
  =& \frac{d}{TN} 
  \frac{1}{d} \mathrm{tr}\Big(\Big(\frac{1}{T}\sum_{\tau=1}^{T}
  \hat{\mathbf{W}}_{\tau}^{\mathrm{er}} \Big)^{-1}\Big)
  = \frac{d}{TN}\Big\{
  \frac{1}{d} \mathrm{tr}\Big(\Big(\frac{1}{T}\sum_{\tau=1}^{T}
  \hat{\mathbf{W}}_{\tau}^{\mathrm{er}} \Big)^{-1}
  - \mathbb{E}^{-1}[\hat{\mathbf{W}}_{\tau}^{\mathrm{er}}]\Big)
  + \underbracket{\frac{1}{d}
  \mathrm{tr}\Big(
  \mathbb{E}^{-1}[\hat{\mathbf{W}}_{\tau}^{\mathrm{er}}]
  \Big)}_{=\tilde{C}^{\mathrm{er}}_1 }
  \Big\}
  \numberthis
  \end{align*}
  
  Therefore combining~\eqref{eq:HW_ineq_ze_erm} and~\eqref{eq:concentrate_cov_z_erm}, we have
  \begin{equation}\label{eq:stats_err_term_II_erm}
      I_2=
    \mathbf{z}_{e,\mathrm{er}}^{\text{all}\top}
    \mathbf{U}_{e,\mathrm{er}} \mathbf{U}_{e,\mathrm{er}}^{\top}
    \mathbf{z}_{e,\mathrm{er}}^{\text{all}}
    \leq 
    \frac{d}{TN}\Big(
    \tilde{C}^{\mathrm{er}}_1 +\widetilde{\mathcal{O}}(\sqrt{{\frac{d}{T}}})+\widetilde{\mathcal{O}}( \frac{1}{\sqrt{d}})\Big).
  \end{equation}

  For term $I_3$, 
  note that $\mathbb{E}_{\btheta_{\tau}^{\mathrm{gt}}, \mathbf{e}_{\tau} \mid \hat{\mathbf{W}}_{\tau}^{\mathrm{er}}}[\mathbf{z}_{\mathrm{er}}^{\top}
    \mathbf{U}_{\mathrm{er}} \mathbf{U}_{e,\mathrm{er}}^{\top}
    \mathbf{z}_{e,\mathrm{er}}^{\text{all}}] = 0$.
  Following a similar argument from~\eqref{eq:HW_ineq_ze_erm} to~\eqref{eq:stats_err_term_II_erm}, 
  with probability at least $1-\delta$,
    $ \label{eq:stats_err_term_I3_erm}
    |I_3| = | \mathbf{z}_{\mathrm{er}}^{\top}
      \mathbf{U}_{\mathrm{er}} \mathbf{U}_{e,\mathrm{er}}^{\top}
      \mathbf{z}_{e,\mathrm{er}}^{\text{all}} |
      \leq 
      \widetilde{\mathcal{O}} (\frac{R}{T\sqrt{N}}).$
  Finally, 
  by combining \eqref{eq:stats_err_term_I1_erm}-\eqref{eq:stats_err_term_II_erm}, 
  and applying the weight $w_\mathrm{er}$,
  we conclude that with probability at least $1 - Td^{-10}$,
  the statistical error of ERM is bounded by 
  \begin{align*}
  \label{eq:stats_err_final_erm}
    \mathcal{E}^{2}_{\mathrm{er}} (\hat{\btheta}_0^{\mathrm{er}})
    =& w_{\mathrm{er}} \|\hat{\btheta}_0^{\mathrm{er}}(\gamma ) - \btheta_0^{\mathrm{er}}(\gamma )\|_2^2
    = 
  \frac{R^{2}}{T}\Big(w_{\mathrm{er}}\tilde{C}^{\mathrm{er}}_{0}
  +\widetilde{\mathcal{O}}(\sqrt{\frac{d}{T}})
  +\widetilde{\mathcal{O}}( \frac{1}{\sqrt{d}})\Big)\\
  \nonumber
  &+\frac{d}{TN}\Big(w_{\mathrm{er}}\tilde{C}^{\mathrm{er}}_1
  +\widetilde{\mathcal{O}}(\sqrt{\frac{d}{T}})
  +\widetilde{\mathcal{O}}( \frac{1}{\sqrt{d}})\Big) 
  + \widetilde{\mathcal{O}}\Big(\frac{R}{T\sqrt{N}} \Big). 
  \numberthis
  \end{align*}

  \paragraph{MAML.}
  Under Assumptions~1,3 and $\mathbf{U}_{\mathrm{ma}}, \mathbf{z}_{\mathrm{ma}}$ defined in Lemma~\ref{lemma:bound_stats_err_term1}, 
  we have
  \begin{align}
    \label{eq:stats_err_mat_form_maml}
    &\mathcal{E}^2_{\mathrm{ma}} = 
    w_{\mathrm{ma}}\|\hat{\btheta}_{0}^{{\mathrm{ma}}}-\btheta_{0}^{ {\mathrm{ma}}}\|_2^2
    = w_{\mathrm{ma}}
    (\underbracket{\| \mathbf{U}_{\mathrm{ma}}^{\top} \mathbf{z}_{\mathrm{ma}}\|^2 }_{I_1}
      + \underbracket{\| \mathbf{U}_{e1,\mathrm{ma}}^{\top} \mathbf{z}_{e1,\mathrm{ma}}^{\text{val}}\|^2 }_{I_2}
      + \underbracket{\| \mathbf{U}_{e2,\mathrm{ma}}^{\top} \mathbf{z}_{e2,\mathrm{ma}}^{\text{trn}}\|^2 }_{I_3} 
      \nonumber \\
      &+ 2\underbracket{\mathbf{z}_{\mathrm{ma}}^{\top}
        \mathbf{U}_{\mathrm{ma}} \mathbf{U}_{e1,\mathrm{ma}}^{\top}
        \mathbf{z}_{e1,\mathrm{ma}}^{\text{val}}}_{I_4}
      - 2\underbracket{\mathbf{z}_{\mathrm{ma}}^{\top}
        \mathbf{U}_{\mathrm{ma}} \mathbf{U}_{e2,\mathrm{ma}}^{\top}
        \mathbf{z}_{e2,\mathrm{ma}}^{\text{trn}}}_{I_5}
      - 2\underbracket{\mathbf{z}_{e1,\mathrm{ma}}^{\text{val}\top}
        \mathbf{U}_{e1,\mathrm{ma}} \mathbf{U}_{e2,\mathrm{ma}}^{\top}
        \mathbf{z}_{e2,\mathrm{ma}}^{\text{trn}}}_{I_6}).
  \end{align}
  We will then bound these terms $I_1$-$I_6$ in \eqref{eq:stats_err_mat_form_maml} one by one.
  
  To bound term $I_1$ in \eqref{eq:stats_err_mat_form_maml}, from Lemma~\ref{lemma:bound_stats_err_term1},
  we have with probability at least $1-Td^{-10}$
  \begin{equation}\label{eq:stats_err_term_I_maml}
    I_1=
    \mathbf{z}_{\mathrm{ma}}^{\top}\mathbf{U}_{\mathrm{ma}} \mathbf{U}_{\mathrm{ma}}^{\top} \mathbf{z}_{\mathrm{ma}}
    \leq \frac{R^{2}}{T}\left(
    \tilde{C}^{\mathrm{ma}}_{0} + \widetilde{\mathcal{O}}(\frac{1}{\sqrt{d }}) + \widetilde{\mathcal{O}}(\sqrt{\frac{d}{T}})\right). 
  \end{equation}
  
  For $\tilde{C}^{\mathrm{ma}}_{0}$, 
  by Lemma~\ref{lemma:dominate_constant_stats_err_maml}
  \begin{align}\label{eq:constant_ma}
  \tilde{C}^{\mathrm{ma}}_{0}
  =&\frac{1}{dN_2} 
    \mathbb{E} \Big[
    \mathrm{tr}^2\Big( 
    (\mathbf{I}-\alpha\hat{\mathbf{Q}}_{\tau,N_1})^2
    \Big) 
    + (N_2 + 1)
    \mathrm{tr}\Big (
    (\mathbf{I}-\alpha\hat{\mathbf{Q}}_{\tau,N_1})^4
    \Big)
    \Big] 
     \Big[\frac{1}{d} 
    \mathbb{E} \big[  \mathrm{tr}
    \big((\mathbf{I}-\alpha \hat{\mathbf{Q}}_{\tau,N_1})^2\big)
    \big] \Big]^{-2} .
  \end{align}

  For $I_2$, from Lemma~\ref{lemma:Hanson-Wright inequality}, we have the absolute error around the expectation is given by
  \begin{align}
    &|\mathbf{z}_{e1,\mathrm{ma}}^{\text{val}\top}
    \mathbf{U}_{e1,\mathrm{ma}} \mathbf{U}_{e1,\mathrm{ma}}^{\top} \mathbf{z}_{e1,\mathrm{ma}}^{\text{val}}
    - \mathbb{E}_{\btheta_{\tau}^{\mathrm{gt}}, \mathbf{e}_{\tau} \mid \hat{\mathbf{W}}_{\tau, N}^{\mathrm{ma}}}
    [\mathbf{z}_{e1,\mathrm{ma}}^{\text{val}\top}
    \mathbf{U}_{e1,\mathrm{ma}} \mathbf{U}_{e1,\mathrm{ma}}^{\top}
     \mathbf{z}_{e1,\mathrm{ma}}^{\text{val}}]| 
    =\widetilde{\mathcal{O}}\Big(\frac{\sqrt{d}}{TN_2} \Big) 
    \label{eq:HW_ineq_ze1_maml}
  \end{align}
  and the expectation is given by
  \begin{align*}\label{eq:concentrate_cov_z1_maml}
  &\mathbb{E}_{\btheta_{\tau}^{\mathrm{gt}}, \mathbf{e}_{\tau} \mid \hat{\mathbf{W}}_{\tau, N}^{\mathrm{ma}}}
    [\mathbf{z}_{e1,\mathrm{ma}}^{\text{val}\top}
    \mathbf{U}_{e1,\mathrm{ma}} \mathbf{U}_{e1,\mathrm{ma}}^{\top}
    \mathbf{z}_{e1,\mathrm{ma}}^{\text{val}}] 
  = \mathrm{tr}\Big(
  \mathbf{U}_{e1,\mathrm{ma}} \mathbf{U}_{e1,\mathrm{ma}}^{\top}
  \Big) 
  = \frac{d}{TN_2} 
  \frac{1}{d} \mathrm{tr}\Big(\big(\frac{1}{T}\sum_{\tau=1}^{T}
  \hat{\mathbf{W}}_{\tau, N}^{\mathrm{ma}} \big)^{-1}\Big) \\
  = & \frac{d}{TN_2}\Big\{
  \frac{1}{d} \mathrm{tr}\Big(\big(\frac{1}{T}\sum_{\tau=1}^{T}
  \hat{\mathbf{W}}_{\tau, N}^{\mathrm{ma}} \big)^{-1}
  - \mathbb{E}^{-1}[\hat{\mathbf{W}}_{\tau, N}^{\mathrm{ma}}]\Big)
  + \underbracket{\frac{1}{d}
  \mathrm{tr}\Big(
  \mathbb{E}^{-1}[\hat{\mathbf{W}}_{\tau, N}^{\mathrm{ma}}]
  \Big)}_{=\tilde{C}^{\mathrm{ma}}_{1,1} }
  \Big\}.
  \numberthis
  \end{align*}
  
  And by combining~\eqref{eq:HW_ineq_ze1_maml} and~\eqref{eq:concentrate_cov_z1_maml} , with probability at least $1 - \delta$, we have
  \begin{align}\label{eq:bound_MSE_e1_maml}
    I_2 
    =\mathbf{z}_{e1,\mathrm{ma}}^{\text{val}\top}
    \mathbf{U}_{e1,\mathrm{ma}} \mathbf{U}_{e1,\mathrm{ma}}^{\top} \mathbf{z}_{e1,\mathrm{ma}}^{\text{val}}
    &=\frac{d}{TN_2}\Big(
    \tilde{C}^{\mathrm{ma}}_{1,1} +\widetilde{\mathcal{O}}(\sqrt{\frac{d}{T}})+\widetilde{\mathcal{O}}(\frac{1}{\sqrt{d }})\Big) .
  \end{align}
  
  For $I_3$, the absolute error around the expectation is given by
  \begin{align}\label{eq:HW_ineq_ze2_maml}
    &|\| \mathbf{U}_{e2,\mathrm{ma}}^{\top} 
    \mathbf{z}_{e2,\mathrm{ma}}^{\text{trn}}\|^2
    - \mathbb{E}_{\btheta_{\tau}^{\mathrm{gt}}, \mathbf{e}_{\tau} \mid \hat{\mathbf{W}}_{\tau, N}^{\mathrm{ma}}}
    [\| \mathbf{U}_{e2,\mathrm{ma}}^{\top}
    \mathbf{z}_{e2,\mathrm{ma}}^{\text{trn}}\|^2 ]| 
    =
    \widetilde{\mathcal{O}}\Big(\frac{\sqrt{d}}{TN_1} \Big)
  \end{align}
  and the expectation is given by
  \begin{align*}\label{eq:concentrate_cov_z2_maml}
  &\mathbb{E}_{\btheta_{\tau}^{\mathrm{gt}}, \mathbf{e}_{\tau} \mid \hat{\mathbf{W}}_{\tau, N}^{\mathrm{ma}}}
    [\mathbf{z}_{e2,\mathrm{ma}}^{\text{trn}\top}
    \mathbf{U}_{e2,\mathrm{ma}} \mathbf{U}_{e2,\mathrm{ma}}^{\top} 
    \mathbf{z}_{e2,\mathrm{ma}}^{\text{trn}}] 
  = \mathrm{tr}\Big(
  \mathbf{U}_{e2,\mathrm{ma}} \mathbf{U}_{e2,\mathrm{ma}}^{\top} 
  \Big) \\
  = &\frac{d}{TN_1} 
  \frac{1}{d} \mathrm{tr}\Big(\Big(\frac{1}{T}\sum_{\tau=1}^{T}
  \hat{\mathbf{W}}_{\tau, N}^{\mathrm{ma}} \Big)^{-2}
  \Big( \frac{1}{T}\sum_{\tau=1}^{T} 
  \alpha^{2}(\mathbf{I} - \alpha \hat{\mathbf{Q}}_{\tau,N_1})
  \hat{\mathbf{Q}}_{\tau,N_2}\hat{\mathbf{Q}}_{\tau,N_1}\hat{\mathbf{Q}}_{\tau,N_2}
  (\mathbf{I} - \alpha \hat{\mathbf{Q}}_{\tau,N_1})
  \Big)
  \Big) \\
  = & \frac{d}{TN_1}\Big\{
  \frac{1}{d} \Big \langle \Big(\frac{1}{T}\sum_{\tau=1}^{T}
  \hat{\mathbf{W}}_{\tau, N}^{\mathrm{ma}} \Big)^{-2}
  - \mathbb{E}^{-2}[\hat{\mathbf{W}}_{\tau, N}^{\mathrm{ma}}], 
  \frac{1}{T}\sum_{\tau=1}^{T} 
  \alpha^{2}(\mathbf{I} - \alpha \hat{\mathbf{Q}}_{\tau,N_1})
  \hat{\mathbf{Q}}_{\tau,N_2}\hat{\mathbf{Q}}_{\tau,N_1}\hat{\mathbf{Q}}_{\tau,N_2}
  (\mathbf{I} - \alpha \hat{\mathbf{Q}}_{\tau,N_1})
  \Big \rangle \\
  &+ \frac{1}{d} \Big \langle 
   \mathbb{E}^{-2}[\hat{\mathbf{W}}_{\tau, N}^{\mathrm{ma}}], 
  \frac{1}{T}\sum_{\tau=1}^{T} 
  \alpha^{2}(\mathbf{I} - \alpha \hat{\mathbf{Q}}_{\tau,N_1})
  \hat{\mathbf{Q}}_{\tau,N_2}\hat{\mathbf{Q}}_{\tau,N_1}\hat{\mathbf{Q}}_{\tau,N_2}(\mathbf{I} - \alpha \hat{\mathbf{Q}}_{\tau,N_1}) \\
  &- \mathbb{E}[\alpha^{2}(\mathbf{I} - \alpha \hat{\mathbf{Q}}_{\tau,N_1})
  \hat{\mathbf{Q}}_{\tau,N_2}\hat{\mathbf{Q}}_{\tau,N_1}\hat{\mathbf{Q}}_{\tau,N_2}(\mathbf{I} - \alpha \hat{\mathbf{Q}}_{\tau,N_1})]
  \Big \rangle  \\
  &+ \underbracket{\frac{1}{d} \Big \langle 
  \mathbb{E}^{-2}[\hat{\mathbf{W}}_{\tau, N}^{\mathrm{ma}}], 
  \mathbb{E}[\alpha^{2}(\mathbf{I} - \alpha \hat{\mathbf{Q}}_{\tau,N_1})
  \hat{\mathbf{Q}}_{\tau,N_2}\hat{\mathbf{Q}}_{\tau,N_1}\hat{\mathbf{Q}}_{\tau,N_2}(\mathbf{I} - \alpha \hat{\mathbf{Q}}_{\tau,N_1})]
  \Big \rangle}_{=\tilde{C}^{\mathrm{ma}}_{1,2} }
  \Big\} .
  \numberthis
  \end{align*}
  
  For $I_3$, with probability at least $1 - \delta$
  \begin{align}\label{eq:bound_MSE_e2_maml}
  I_3 =
    \mathbf{z}_{e2,\mathrm{ma}}^{\text{trn}\top}
    \mathbf{U}_{e2,\mathrm{ma}} \mathbf{U}_{e2,\mathrm{ma}}^{\top} \mathbf{z}_{e2,\mathrm{ma}}^{\text{trn}}
    &=\frac{d}{TN_1}\Big(
    \tilde{C}^{\mathrm{ma}}_{1,2} +\widetilde{\mathcal{O}}(\sqrt{\frac{d}{T}})+\widetilde{\mathcal{O}}(\frac{1}{\sqrt{d }})\Big) .
  \end{align}

  For $I_4,I_5,I_6$, with probability at least $1-\delta$
  \begin{align}\label{eq:stats_err_term_I456_maml}
    |I_4|  
    \leq 
    \widetilde{\mathcal{O}} (\frac{R}{T\sqrt{N_2}}),
    |I_5|  
    \leq 
    \widetilde{\mathcal{O}} (\frac{R}{T\sqrt{N_1}}),
    |I_6| 
    \leq 
    \widetilde{\mathcal{O}} (\frac{\sqrt{d}}{T\sqrt{N_1N_2}}).
  \end{align}


  Finally, 
  applying the weight $w_\mathrm{ma}$, we have
  with probability $1-Td^{-10}$
  the statistical error of MAML is bounded by
  \begin{align*}\label{eq:stats_err_final_maml}
    &\mathcal{E}^{2}_{\mathrm{ma}} (\hat{\btheta}_0^{\mathrm{ma}})
    =w_{\mathrm{ma}} \|\hat{\btheta}_0^{\mathrm{ma}}(\gamma ) - \btheta_0^{\mathrm{ma}}(\gamma )\|_2^2
    = 
  \frac{R^{2}}{T}\Big(w_\mathrm{ma}\tilde{C}^{\mathrm{ma}}_{0}
  +\widetilde{\mathcal{O}}(\sqrt{\frac{d}{T}})+\widetilde{\mathcal{O}}( \frac{1}{\sqrt{d}})\Big) \\
  &+\frac{d}{TN}\Big(w_\mathrm{ma}\tilde{C}^{\mathrm{ma}}_{1} 
  +\widetilde{\mathcal{O}}(\sqrt{\frac{d}{T}})+\widetilde{\mathcal{O}}( \frac{1}{\sqrt{d}})
    \Big)
  + \widetilde{\mathcal{O}}\Big(\frac{R}{T \sqrt{N}}\Big)
  .\numberthis
  \end{align*}

  \paragraph{iMAML.}
    With $\mathbf{U}_{\mathrm{im}}, \mathbf{z}_{\mathrm{im}}$ defined in Lemma~\ref{lemma:bound_stats_err_term1}, we can rewrite ~\eqref{eq:theta0_hat_diff_theta0_star_bimaml_mat_form2} as
    \begin{align*}\label{eq:stats_diff_mat_form_bimaml}
      \hat{\btheta}_0^{\mathrm{im}} - {\btheta}_0^{\mathrm{im}} 
      =&  \mathbf{U}^{\top}_{\mathrm{im}} \mathbf{z}_{\mathrm{im}}
       + \mathbf{U}_{e1,\mathrm{im}}^{\top}
      \mathbf{z}_{e1,\mathrm{im}}^{\text{val}}
      - \mathbf{U}_{e2,\mathrm{im}}^{\top}
      \mathbf{z}_{e2,\mathrm{im}}^{\text{trn}}.\numberthis
    \end{align*}
  Thus the squared error can be computed by
    \begin{align}\label{eq:stats_err_mat_form_bimaml}
      \|\hat{\btheta}_{0}^{{\mathrm{im}}}-\btheta_{0}^{ {\mathrm{im}}}\|_2^2
      =& 
      \underbracket{\|\mathbf{U}_{\mathrm{im}}^{\top} \mathbf{z}_{\mathrm{im}}\|^2}_{I_1}
      + \underbracket{\| \mathbf{U}_{e1,\mathrm{im}}^{\top} \mathbf{z}_{e1,\mathrm{im}}^{\text{val}}\|^2}_{I_2}
      + \underbracket{\| \mathbf{U}_{e2,\mathrm{im}}^{\top} \mathbf{z}_{e2,\mathrm{im}}^{\text{trn}}\|^2 }_{I_3} \\
      \nonumber
      &+ 2\underbracket{\mathbf{z}_{\mathrm{im}}^{\top}
        \mathbf{U}_{\mathrm{im}} \mathbf{U}_{e1,\mathrm{im}}^{\top}
        \mathbf{z}_{e1,\mathrm{im}}^{\text{val}}}_{I_4}
      - 2\underbracket{\mathbf{z}_{\mathrm{im}}^{\top}
        \mathbf{U}_{\mathrm{im}} \mathbf{U}_{e2,\mathrm{im}}^{\top}
        \mathbf{z}_{e2,\mathrm{im}}^{\text{trn}}}_{I_5}
      - 2\underbracket{\mathbf{z}_{e1,\mathrm{im}}^{\text{val}\top}
        \mathbf{U}_{e1,\mathrm{im}} \mathbf{U}_{e2,\mathrm{im}}^{\top}
        \mathbf{z}_{e2,\mathrm{im}}^{\text{trn}}}_{I_6}.
    \end{align}
  To bound term $I_1$ in \eqref{eq:stats_err_mat_form_bimaml}, from Lemma~\ref{lemma:bound_stats_err_term1},
    we have with probability at least $1-Td^{-10}$
    \begin{equation}\label{eq:stats_err_term_I_bimaml}
      I_1=
      \mathbf{z}_{\mathrm{im}}^{\top}\mathbf{U}_{\mathrm{im}} \mathbf{U}_{\mathrm{im}}^{\top} \mathbf{z}_{\mathrm{im}}
      \leq \frac{R^{2}}{T}\Big(
      \tilde{C}^{\mathrm{im}}_{0} + \widetilde{\mathcal{O}}(\frac{1}{\sqrt{d }}) + \widetilde{\mathcal{O}}(\sqrt{\frac{d}{T}})\Big). 
    \end{equation}
  For $\tilde{C}_{0}^{\mathrm{im}}$, by Lemma C.2 in ~\citep{bai2021_trntrn_trnval}, 
    \begin{align}\label{eq:constant_bi}
       \tilde{C}_{0}^{\mathrm{im}}
    =&\frac{\frac{1}{d N_2} \mathbb{E}\big[\operatorname{tr}\big(\gamma ^{2}\big(\hat{\mathbf{Q}}_{\tau, N_1}+\gamma  \mathbf{I}_{d}\big)^{-2}\big)^{2}+\big(N_2+1\big) \operatorname{tr}\big(\gamma ^{4}\big(\hat{\mathbf{Q}}_{\tau, N_1}+\gamma  \mathbf{I}_{d}\big)^{-4}\big)\big]}{\big(\frac{1}{d} \mathbb{E}\big[\operatorname{tr}\big(\gamma ^{2}\big(\hat{\mathbf{Q}}_{\tau, N_1}+\gamma  \mathbf{I}_{d}\big)^{-2}\big)\big]\big)^{2}}  
    \end{align}
  For $I_2$, first from Lemma~\ref{lemma:Hanson-Wright inequality}
    \begin{align*}\label{eq:HW_ineq_ze1_bimaml}
      &\Big|\mathbf{z}_{e1,\mathrm{im}}^{\text{val}\top}\mathbf{U}_{e1,\mathrm{im}} \mathbf{U}_{e1,\mathrm{im}}^{\top} \mathbf{z}_{e1,\mathrm{im}}^{\text{val}}
      - \mathbb{E}_{\btheta_{\tau}^{\mathrm{gt}}, \mathbf{e}_{\tau} \mid \hat{\mathbf{W}}_{\tau}^{\mathrm{im}}}[\mathbf{z}_{e1,\mathrm{im}}^{\text{val}\top}\mathbf{U}_{e1,\mathrm{im}} \mathbf{U}_{e1,\mathrm{im}}^{\top} \mathbf{z}_{e1,\mathrm{im}}^{\text{val}}] \Big| 
      =\widetilde{\mathcal{O}}\Big(\frac{\sqrt{d}}{T N_2} \Big) \numberthis
    \end{align*}
    \begin{align*}\label{eq:concentrate_z1_bimaml}
    &\mathbb{E}_{\btheta_{\tau}^{\mathrm{gt}}, \mathbf{e}_{\tau} \mid \hat{\mathbf{W}}_{\tau}^{\mathrm{im}}}[
    \mathbf{z}_{e1,\mathrm{im}}^{\text{val}\top}\mathbf{U}_{e1,\mathrm{im}} \mathbf{U}_{e1,\mathrm{im}}^{\top} \mathbf{z}_{e1,\mathrm{im}}^{\text{val}}] 
    = \frac{1}{TN_2}
    \mathrm{tr}
    \Big(\Big(\frac{1}{T}\sum_{\tau=1}^{T} \hat{\mathbf{W}}_{\tau}^{\mathrm{im}}\Big)^{-1}\Big) \\
    =& \frac{d}{TN_2} \Bigg(
      \underbracket{\frac{1}{d} 
      \mathrm{tr}\Big(
    \mathbb{E}^{-1}[\hat{\mathbf{W}}_{\tau}^{\mathrm{im}}]
    \Big) }_{= C^{\mathrm{im}}_{1,1}}
    + \underbracket{\frac{1}{d} \mathrm{tr}\Big( 
    \Big(\frac{1}{T}\sum_{\tau=1}^{T} \hat{\mathbf{W}}_{\tau}^{\mathrm{im}}\Big)^{-1}
    - \mathbb{E}^{-1}[\hat{\mathbf{W}}_{\tau}^{\mathrm{im}}]
    \Big)}_{I_9}
    \Bigg),\numberthis
    \end{align*}
  and by Lemma~\ref{lemma:concentration_W_hat}, 
    with probability at least $1-Td^{-10}$
    \begin{equation}\label{eq:E_stats_err_term_9_bimaml}
      |I_9| \leq \widetilde{\mathcal{O}}(\sqrt{\frac{d}{T}}).
    \end{equation}
    Therefore, combining 
    ~\eqref{eq:HW_ineq_ze1_bimaml},\eqref{eq:concentrate_z1_bimaml},\eqref{eq:E_stats_err_term_9_bimaml}
    with probability at least $1-Td^{-10}$
    \begin{align}\label{eq:I2_bound_bimaml}
      &I_2 \leq
      \frac{d}{TN_2}\Big(
      \tilde{C}^{\mathrm{im}}_{1,1}
      + \widetilde{\mathcal{O}}(\sqrt{\frac{d}{T}})
      + \widetilde{\mathcal{O}}(\frac{1}{\sqrt{d }}) \Big) \\
      \label{eq:e1_const_bimaml}
      &\tilde{C}^{\mathrm{im}}_{1,1} \coloneqq \frac{1}{d}\mathrm{tr}\Big(\mathbb{E}^{-1} [\hat{\mathbf{W}}_{\tau}^{\mathrm{im}}]\Big).
    \end{align}
  Similarly, for $I_3$, based on Lemma~\ref{lemma:Hanson-Wright inequality} we have
    \begin{align*}\label{eq:HW_ineq_ze2_bimaml}
      &|\| \mathbf{U}_{e2,\mathrm{im}}^{\top} \mathbf{z}_{e2,\mathrm{im}}^{\text{trn}}\|^2
      - \mathbb{E}_{\mathbf{e}_{\tau} \mid \hat{\mathbf{W}}_{\tau}^{\mathrm{im}}}[\| \mathbf{U}_{e2,\mathrm{im}}^{\top} \mathbf{z}_{e2,\mathrm{im}}^{\text{trn}}\|^2 ]| 
      =\widetilde{\mathcal{O}}\Big(\frac{\sqrt{d}}{TN_1} \Big). \numberthis
    \end{align*}
  And similar to the derivations in ERM and MAML, with Lemma~\ref{lemma:concentration_W_hat}, it holds that
  \begin{align*}\label{eq:concentrate_z2_bimaml}
    &\mathbb{E}_{\mathbf{e}_{\tau} \mid \hat{\mathbf{W}}_{\tau}^{\mathrm{im}}}[\| \mathbf{U}_{e2,\mathrm{im}}^{\top} \mathbf{z}_{e2,\mathrm{im}}^{\text{trn}}\|^2 ] 
    = \frac{1}{TN_1}
    \Big\langle
    \Big(\frac{1}{T}\sum_{\tau=1}^{T}
    \hat{\mathbf{W}}_{\tau}^{\mathrm{im}} \Big)^{-2} ,
    \Big(\frac{1}{T}\sum_{\tau=1}^{T}
    (\gamma   )^{-2} \hat{\mathbf{W}}_{\tau}^{\mathrm{im}}
      \frac{1}{N_1}\mathbf{X}^{\text{trn}\top}_{\tau}\mathbf{X}^{\text{trn}}_{\tau}\hat{\mathbf{W}}_{\tau}^{\mathrm{im}}
      \Big)
    \Big\rangle \\
    =&\frac{d}{TN_1} \Bigg\{
    \underbracket{\frac{1}{d} \Big\langle
    \mathbb{E}[\hat{\mathbf{W}}_{\tau}^{\mathrm{im}}]^{-2},
    \frac{1}{T}\sum_{\tau=1}^{T}
    \gamma^{-2} 
    \mathbb{E}[\hat{\mathbf{W}}_{\tau}^{\mathrm{im}}]
      \mathbb{E}[{\Sigma}_{\btheta_{\tau}}^{-1}]
      \mathbb{E}[\hat{\mathbf{W}}_{\tau}^{\mathrm{im}}]
    -
    \mathbb{E}[\gamma ^{-1}(\hat{\mathbf{W}}_{\tau}^{\mathrm{im}})^2]
    \Big\rangle}_{=\tilde{C}^{\mathrm{im}}_{1,2}} 
    + \widetilde{\mathcal{O}} (\sqrt{\frac{d}{T}})
    \Bigg\}
    \numberthis
    \end{align*}
  Combining ~\eqref{eq:HW_ineq_ze2_bimaml} and ~\eqref{eq:concentrate_z2_bimaml}, with probability at least $1-Td^{-10}$, we have
    \begin{align}\label{eq:concentrate_cov_e2_bimaml2}
      I_3 = \mathbf{z}_{e2,\mathrm{im}}^{\text{trn}\top}\mathbf{U}_{e2,\mathrm{im}} \mathbf{U}_{e2,\mathrm{im}}^{\top} \mathbf{z}_{e2,\mathrm{im}}^{\text{trn}}
      \leq 
      \frac{d}{TN_1}\Big(\tilde{C}^{\mathrm{im}}_{1,2}
      +\widetilde{\mathcal{O}}(\frac{1}{\sqrt{d}})
      +\widetilde{\mathcal{O}}(\sqrt{ \frac{d}{T}})
      \Big)
    \end{align}
  Following a similar argument, for $I_4,I_5,I_6$, with probability at least $1-\delta$
    \begin{align}\label{eq:stats_err_term_I456_bimaml}
      |I_4|  
      \leq 
      \widetilde{\mathcal{O}} (\frac{R}{T\sqrt{N_2}}),
      |I_5|  
      \leq 
      \widetilde{\mathcal{O}} (\frac{R}{T\sqrt{N_1}}),
      |I_6| 
      \leq 
      \widetilde{\mathcal{O}} (\frac{\sqrt{d}}{T\sqrt{N_1N_2}}).
    \end{align}

  Finally, define $\tilde{C}_1^{\rm im} \coloneqq (1-s)^{-1} \tilde{C}_{1,1}^{\rm im} + s^{-1} \tilde{C}_{1,2}^{\rm im}$,
  applying the weight $w_\mathrm{im}$ we have
  with probability at least $1-Td^{-10}$,
  the statistical error of iMAML is bounded by
  \begin{align*}\label{eq:stats_err_final_bimaml}
    &\mathcal{E}^{2}_{\mathrm{im}} (\hat{\btheta}_0^{\mathrm{im}})
    = w_{\mathrm{im}} \|\hat{\btheta}_0^{\mathrm{im}}(\gamma ) - \btheta_0^{\mathrm{im}}(\gamma )\|_2^2
    = 
  \frac{R^{2}}{T}\Big(w_\mathrm{im}\tilde{C}^{\mathrm{im}}_{0}+\widetilde{\mathcal{O}}(\sqrt{\frac{d}{T}})+\widetilde{\mathcal{O}}( \frac{1}{\sqrt{d }})\Big)\\
  &+\frac{d}{TN}\Big(
    w_\mathrm{im}\tilde{C}^{\mathrm{im}}_{1}
    +\widetilde{\mathcal{O}}(\frac{1}{\sqrt{d }})
    +\widetilde{\mathcal{O}}(\sqrt{\frac{d}{T}}) \Big)
  + \widetilde{\mathcal{O}}\Big(\frac{R}{T \sqrt{N}} \Big)
  .\numberthis
  \end{align*}

  \paragraph{BaMAML.}
  With $\mathbf{U}_{\mathrm{ba}}, \mathbf{z}_{\mathrm{ba}}$ defined in Lemma~\ref{lemma:bound_stats_err_term1}
    \begin{align}\label{eq:diff_mat_form_bamaml}
      \hat{\btheta}_0^{\mathrm{ba}} - {\btheta}_0^{\mathrm{ba}} 
      =&  \mathbf{U}_{\mathrm{ba}}^{\top} \mathbf{z}_{\mathrm{ba}}
       + \mathbf{U}_{e1,\mathrm{ba}}^{\top} \mathbf{z}_{e1,\mathrm{ba}}
       - \mathbf{U}_{e2,\mathrm{ba}}^{\top} \mathbf{z}_{e2,\mathrm{ba}} \\
       \label{eq:stats_err_mat_form_bamaml}
      \|\hat{\btheta}_{0}^{{\mathrm{ba}}}-\btheta_{0}^{ {\mathrm{ba}}}\|_2^2
      =& \underbracket{\| \mathbf{U}_{\mathrm{ba}}^{\top} \mathbf{z}_{\mathrm{ba}}\|^2 }_{I_1}
        + \underbracket{\| \mathbf{U}_{e1,\mathrm{ba}}^{\top} \mathbf{z}_{e1,\mathrm{ba}}^{\text{all}}\|^2 }_{I_2}
        + \underbracket{\| \mathbf{U}_{e2,\mathrm{ba}}^{\top} \mathbf{z}_{e2,\mathrm{ba}}^{\text{trn}} \|^2 }_{I_3} \\
        \nonumber
        &+ 2\underbracket{\mathbf{z}_{\mathrm{ba}}^{\top}
          \mathbf{U}_{\mathrm{ba}} \mathbf{U}_{e1,\mathrm{ba}}^{\top}
          \mathbf{z}_{e1,\mathrm{ba}}^{\text{all}}}_{I_4}
        - 2\underbracket{\mathbf{z}_{\mathrm{ba}}^{\top}
          \mathbf{U}_{\mathrm{ba}} \mathbf{U}_{e2,\mathrm{ba}}^{\top}
          \mathbf{z}_{e2,\mathrm{ba}}^{\text{trn}}}_{I_5}
        - 2\underbracket{\mathbf{z}_{e1,\mathrm{ba}}^{\text{all}\top}
          \mathbf{U}_{e1,\mathrm{ba}} \mathbf{U}_{e2,\mathrm{ba}}^{\top}
          \mathbf{z}_{e2,\mathrm{ba}}^{\text{trn}}}_{I_6}.
    \end{align}
    
    
    To bound term $I_1$ in \eqref{eq:stats_err_mat_form_bamaml}, 
    from Lemma~\ref{lemma:bound_stats_err_term1},
    we have with probability at least $1-Td^{-10}$
    \begin{equation}\label{eq:stats_err_term_I1_bamaml}
      I_1=
      \mathbf{z}_{\mathrm{ba}}^{\top}\mathbf{U}_{\mathrm{ba}} \mathbf{U}_{\mathrm{ba}}^{\top} \mathbf{z}_{\mathrm{ba}}
      \leq \frac{R^{2}}{T}\Big(\tilde{C}^{\mathrm{ba}}_{0} + \widetilde{\mathcal{O}}(\frac{1}{\sqrt{d }}) + \widetilde{\mathcal{O}}(\sqrt{\frac{d}{T}})\Big). 
    \end{equation}
    
    To compute $\tilde{C}_{0}^{\mathrm{ba}}$, by Lemma~\ref{lemma:dominate_constant_stats_err}
    \begin{align}\label{eq:term1_bound_bamaml}
      \tilde{C}_{0}^{\mathrm{ba}} 
      =& \frac{1}{d} \mathbb{E}\Big[\mathrm{tr}\Big(
      (\hat{\mathbf{W}_{\tau,N_a}^{\rm ba}})^2\Big)\Big] 
      \cdot \Big\{\frac{1}{d} 
      \mathbb{E}\Big[\mathrm{tr}\Big( \hat{\mathbf{W}}_{\tau,N}^{\rm ba} \Big) \Big] \Big\}^{-2}
    \end{align}

    For term $I_2$, based on Lemma~\ref{lemma:Hanson-Wright inequality}, the absolute error around the expectation is given by
    \begin{align*}\label{eq:HW_ineq_ze_bamaml}
      &|\| \mathbf{U}_{e,\mathrm{ba}}^{\top} \mathbf{z}_{e,\mathrm{ba}}^{\text{all}}\|^2
      - \mathbb{E}_{\btheta_{\tau}^{\mathrm{gt}}, \mathbf{e}_{\tau} \mid \hat{\mathbf{W}}_{\tau}^{\mathrm{ba}}}[\| \mathbf{U}_{e,\mathrm{ba}}^{\top} \mathbf{z}_{e,\mathrm{ba}}^{\text{all}}\|^2 ]| 
      =\widetilde{\mathcal{O}}\Big(\frac{\sqrt{d}}{TN}\Big) 
       \numberthis
    \end{align*}
    where the expectation is given by
    \begin{align*}\label{eq:concentrate_cov_z_bamaml}
    &\mathbb{E}_{\btheta_{\tau}^{\mathrm{gt}}, \mathbf{e}_{\tau} \mid \hat{\mathbf{W}}_{\tau, N}^{\mathrm{ba}}}[\mathbf{z}_{e1,\mathrm{ba}}^{\text{all}\top}\mathbf{U}_{e1,\mathrm{ba}} \mathbf{U}_{e1,\mathrm{ba}}^{\top} \mathbf{z}_{e1,\mathrm{ba}}^{\text{all}}] 
    = \mathrm{tr}\Big(
    \mathbf{U}_{e1,\mathrm{ba}} \mathbf{U}_{e1,\mathrm{ba}}^{\top}
    \Big)\\
    =& \frac{d}{TN}\frac{1}{d}
    \Big\langle
    \Big(\frac{1}{T}\sum_{\tau=1}^{T}
    \hat{\mathbf{W}}_{\tau, N}^{\mathrm{ba}} \Big)^{-2} ,
    (1-s)^{-1}\Big(\frac{1}{T}\sum_{\tau=1}^{T} (\mathbf{I}_d + (\gamma s)^{-1}\hat{\mathbf{Q}}_{\tau,N})^{-1} \hat{\mathbf{Q}}_{\tau,N} (\mathbf{I}_d + (\gamma s)^{-1}\hat{\mathbf{Q}}_{\tau,N})^{-1} \Big)
    \Big\rangle \\
    \leq & \frac{d}{TN} \Bigg\{\widetilde{\mathcal{O}} (\sqrt{\frac{d}{T}})
    + \underbracket{\frac{1}{d}\Big\langle
    \mathbb{E}[\hat{\mathbf{W}}_{\tau, N}^{\mathrm{ba}}]^{-2},
    (1-s)^{-1}\mathbb{E}[(\mathbf{I}_d + (\gamma s)^{-1}\hat{\mathbf{Q}}_{\tau,N})^{-1} \hat{\mathbf{Q}}_{\tau,N} (\mathbf{I}_d + (\gamma s)^{-1}\hat{\mathbf{Q}}_{\tau,N})^{-1} ]
    \Big\rangle}_{=\tilde{C}^{\mathrm{ba}}_{1,1} }
    \Bigg\} .
    \numberthis
    \end{align*}
    Similarly, 
    \begin{align*}\label{eq:concentrate_cov_z_e2_bamaml}
      &\mathbb{E}_{\btheta_{\tau}^{\mathrm{gt}}, \mathbf{e}_{\tau} \mid \hat{\mathbf{W}}_{\tau, N}^{\mathrm{ba}}}[\mathbf{z}_{e2,\mathrm{ba}}^{\top}\mathbf{U}_{e2,\mathrm{ba}} \mathbf{U}_{e2,\mathrm{ba}}^{\top} \mathbf{z}_{e2,\mathrm{ba}}]  \\
      \leq & \frac{d}{TN} \Bigg\{\widetilde{\mathcal{O}} (\sqrt{\frac{d}{T}})
      + \underbracket{\frac{1}{d}\Big\langle
      \mathbb{E}[\hat{\mathbf{W}}_{\tau, N}^{\mathrm{ba}}]^{-2},
      s(1-s)^{-1}\mathbb{E}[(\mathbf{I}_d + \gamma^{-1}{\mathbf{Q}}_{\tau})^{-1} {\mathbf{Q}}_{\tau} (\mathbf{I}_d + \gamma^{-1}{\mathbf{Q}}_{\tau})^{-1} ]
      \Big\rangle}_{=\tilde{C}^{\mathrm{ba}}_{1,2} }
      \Bigg\} .
      \numberthis
    \end{align*}
    $\tilde{C}_1^{\rm ba} \coloneqq \tilde{C}_{1,1}^{\rm ba} - \tilde{C}_{1,2}^{\rm ba}$, combining the above derivations with Lemma~\ref{lemma:concentration_W_hat} gives the higher order terms in~\eqref{eq:concentrate_cov_z_bamaml}, which leads to
    \begin{align}\label{eq:concentrate_cov_e_bamaml}
      \mathbb{E}_{\btheta_{\tau}^{\mathrm{gt}}, \mathbf{e}_{\tau} \mid \hat{\mathbf{W}}_{\tau, N}^{\mathrm{ba}}}[\mathbf{z}_{e,\mathrm{ba}}^{\text{all}\top}\mathbf{U}_{e,\mathrm{ba}} \mathbf{U}_{e,\mathrm{ba}}^{\top} \mathbf{z}_{e,\mathrm{ba}}^{\text{all}}]
      \leq 
      \frac{d}{TN}
      (\tilde{C}^{\mathrm{ba}}_1 
      +\widetilde{\mathcal{O}} (\sqrt{\frac{d}{T}}))
    \end{align}
    Combining~\eqref{eq:HW_ineq_ze_bamaml} and~\eqref{eq:concentrate_cov_e_bamaml}, with probability at least $1-Td^{-10}$, we have
    \begin{align}\label{eq:concentrate_cov_e_bamaml2}
      I_2 = \mathbf{z}_{e,\mathrm{ba}}^{\text{all}\top}\mathbf{U}_{e,\mathrm{ba}} \mathbf{U}_{e,\mathrm{ba}}^{\top} \mathbf{z}_{e,\mathrm{ba}}^{\text{all}}
      \leq 
      \frac{d}{TN}\Big(\tilde{C}^{\mathrm{ba}}_1 
      +\widetilde{\mathcal{O}}(\frac{1}{\sqrt{d}})
      +\widetilde{\mathcal{O}}(\sqrt{ \frac{d}{T}})
      \Big)
    \end{align}
    Following a similar argument, 
    with probability $1-\delta$,
    $|I_3| \leq 
    \widetilde{\mathcal{O}} (\frac{R}{T \sqrt{N}}).$

    Finally, 
    applying the weight $w_\mathrm{ba}$,
    we have with probability $1-Td^{-10}$,
    the statistical error of BaMAML is bounded by
    \begin{align*}\label{eq:stats_err_final_bamaml}
    \mathcal{E}^{2}_{\mathrm{ba}} (\hat{\btheta}_0^{\mathrm{ba}})
    =&  w_\mathrm{ba}\|\hat{\btheta}_0^{\mathrm{ba}} - \btheta_0^{\mathrm{ba}}\|_2^2
    =
    \frac{R^{2}}{T}\Big(w_\mathrm{ba}\tilde{C}^{\mathrm{ba}}_{0 }+\widetilde{\mathcal{O}}(\sqrt{\frac{d}{T}})+\widetilde{\mathcal{O}}(\frac{1}{\sqrt{d }})\Big)\\
    &+\frac{d}{TN}\Big(w_\mathrm{ba}\tilde{C}^{\mathrm{ba}}_1+\widetilde{\mathcal{O}}(\sqrt{\frac{d}{T}})+\widetilde{\mathcal{O}}(\frac{1}{\sqrt{d }})\Big) 
    + \widetilde{\mathcal{O}}\Big(\frac{R}{T\sqrt{N}} \Big)
    .\numberthis
    \end{align*}

  
  \subsubsection{Asymptotic dominating constant under Assumptions 1,3}
  \label{app_sub:asymptotic_constant}
  
  \begin{theorem}[Asymptotic ERM constant]
    \label{thm:asymptotic_ERM_constant}
    As $d,N \to \infty$, $d/N \to \eta$, the optimal constant of the ERM method $\tilde{C}_{0}^{\mathrm{er}} $ satisfies 
    \begin{align*} 
      \lim _{\tiny\makecell{d, N \rightarrow \infty\\ d / N \rightarrow \eta}}
      \tilde{C}_{0}^{\mathrm{er}} 
      = 1 + \eta.
    \end{align*}
  \end{theorem}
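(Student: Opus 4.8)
The plan is to start from the closed-form expression for $\tilde C_0^{\mathrm{er}}$ established in Lemma~\ref{lemma:dominate_constant_stats_err}, namely
\[
\tilde C_0^{\mathrm{er}} = \frac{1}{d}\,\mathbb{E}\!\left[\mathrm{tr}\big((\hat{\mathbf{W}}_{\tau}^{\mathrm{er}})^2\big)\right]\cdot\left(\frac{1}{d}\,\mathbb{E}\!\left[\mathrm{tr}(\hat{\mathbf{W}}_{\tau}^{\mathrm{er}})\right]\right)^{-2},
\]
and to recall that $\hat{\mathbf{W}}_{\tau}^{\mathrm{er}} = \hat{\mathbf{Q}}_{\tau,N} = \frac{1}{N}\mathbf{X}_{\tau,N}^{\top}\mathbf{X}_{\tau,N}$. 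Under Assumption~\ref{asmp:linear_centroid_model} the rows $\mathbf{x}_{\tau,i}$ are i.i.d.\ $\mathcal{N}(\mathbf{0},\mathbf{I}_d)$, so $\mathbb{E}[\hat{\mathbf{Q}}_{\tau,N}] = \mathbf{I}_d$ and hence $\frac{1}{d}\mathbb{E}[\mathrm{tr}(\hat{\mathbf{W}}_{\tau}^{\mathrm{er}})] = 1$ exactly (consistent with the identity $\mathbb{E}[\hat{\mathbf{W}}_{\tau}^{\mathrm{er}}]=\frac1d\mathbb{E}[\mathrm{tr}(\hat{\mathbf{W}}_{\tau}^{\mathrm{er}})]\mathbf{I}_d$ proved in Lemma~\ref{lemma:dominate_constant_stats_err}). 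The theorem therefore reduces to evaluating $\frac{1}{d}\mathbb{E}[\mathrm{tr}(\hat{\mathbf{Q}}_{\tau,N}^2)]$ and taking its limit.

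First I would write $\hat{\mathbf{Q}}_{\tau,N} = \frac{1}{N}\sum_{i=1}^N \mathbf{x}_{\tau,i}\mathbf{x}_{\tau,i}^{\top}$, so that $\mathrm{tr}(\hat{\mathbf{Q}}_{\tau,N}^2) = \frac{1}{N^2}\sum_{i,j=1}^N (\mathbf{x}_{\tau,i}^{\top}\mathbf{x}_{\tau,j})^2$, and split the double sum into the $i=j$ terms and the $i\neq j$ terms. For the diagonal terms, the isotropic Gaussian fourth-moment computation gives $\mathbb{E}[\|\mathbf{x}_{\tau,i}\|^4] = d(d+2)$; for the off-diagonal terms, independence of $\mathbf{x}_{\tau,i}$ and $\mathbf{x}_{\tau,j}$ together with $\mathbb{E}[\mathbf{x}_{\tau,i}\mathbf{x}_{\tau,i}^{\top}] = \mathbf{I}_d$ gives $\mathbb{E}[(\mathbf{x}_{\tau,i}^{\top}\mathbf{x}_{\tau,j})^2] = d$. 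Summing the $N$ diagonal and $N(N-1)$ off-diagonal contributions yields $\mathbb{E}[\mathrm{tr}(\hat{\mathbf{Q}}_{\tau,N}^2)] = \frac{1}{N^2}\big(N d(d+2) + N(N-1)d\big) = \frac{d(d+N+1)}{N}$, and hence the exact finite-sample identity $\tilde C_0^{\mathrm{er}} = 1 + \frac{d+1}{N}$.

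Finally, since $d,N\to\infty$ with $d/N\to\eta$, I would observe that $\frac{d+1}{N} = \frac{d}{N} + \frac{1}{N}\to\eta + 0 = \eta$, giving $\lim \tilde C_0^{\mathrm{er}} = 1+\eta$ as claimed. I do not expect a genuine obstacle here: the only points needing care are (i) invoking Lemma~\ref{lemma:dominate_constant_stats_err} to confirm that $\mathbb{E}[\hat{\mathbf{W}}_{\tau}^{\mathrm{er}}]$ is a scalar multiple of the identity so that the denominator collapses to $1$, and (ii) the fourth-moment bookkeeping for the standard Gaussian rows. One could alternatively deduce $\frac{1}{d}\mathbb{E}[\mathrm{tr}(\hat{\mathbf{Q}}_{\tau,N}^2)]\to 1+\eta$ from the second moment of the Marchenko--Pastur law with aspect ratio $\eta$, but the exact identity above is cleaner and sidesteps any convergence argument for the empirical spectral distribution.
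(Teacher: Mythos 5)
Your proposal is correct and follows essentially the same route as the paper: both reduce $\tilde C_0^{\mathrm{er}}$ to the exact finite-sample identity $\frac{d+N+1}{N}$ by computing $\mathbb{E}[\mathrm{tr}(\hat{\mathbf{Q}}_{\tau,N}^2)]$ via the Gaussian fourth moment $d(d+2)$ on the diagonal and $d$ on the cross terms, with the denominator equal to $1$ since $\mathbb{E}[\hat{\mathbf{Q}}_{\tau,N}]=\mathbf{I}_d$, and then pass to the limit $d/N\to\eta$. The only cosmetic difference is that the paper organizes the computation at the matrix level, $\mathbb{E}[\hat{\mathbf{Q}}_{\tau,N}^2]=\frac{1}{N}\mathbb{E}[(\mathbf{x}\mathbf{x}^{\top})^2]+\frac{N-1}{N}\mathbf{I}_d$, before taking the trace, whereas you trace first and split the double sum, which is the same bookkeeping.
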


  \begin{proof}
    Recall that $\tilde{C}_{0}^{\mathrm{er}}=\frac{1}{d} \mathbb{E}\Big[  \mathrm{tr}(\hat{\mathbf{Q}}_{\tau, N}^{2})\Big]
    \Big(\frac{1}{d}\mathbb{E}\big[ \mathrm{tr}(\mathbf{Q}_{\tau})\big]\Big)^{-2} $. Based on Assumption~\ref{asmp:linear_centroid_model}, $\mathbb{E}\big[ \mathrm{tr}(\mathbf{Q}_{\tau})\big] = \mathbb{E}\big[ \mathrm{tr}(\mathbf{I}_d)\big] = d$, And $\mathbb{E}[ (\hat{\mathbf{Q}}_{\tau, N}^{2})]$ can be derived by 
    \begin{align*}
      \mathbb{E}\big[  \hat{\mathbf{Q}}_{\tau, N}^{2}\big] 
      &= \mathbb{E}\big[ (\frac{1}{N} \mathbf{X}_{\tau, N}^{\top} \mathbf{X}_{\tau, N})^{2} \big] 
      = \mathbb{E}\big[ (\frac{1}{N} \mathbf{X}_{\tau, N}^{\top} \mathbf{X}_{\tau, N})^{2} \big] 
      = \mathbb{E}\big[ (\frac{1}{N} \sum_i  \mathbf{x}_{\tau, i} \mathbf{x}_{\tau, i}^{\top})^{2} \big] \\
      &= \frac{1}{N}\mathbb{E}\big[ ( \mathbf{x}_{\tau, i} \mathbf{x}_{\tau, i}^{\top})^{2} \big] + \frac{N-1}{N} \mathbf{I}_d
    \end{align*}
  where $\mathrm{tr}(\mathbb{E}\big[ ( \mathbf{x}_{\tau, i} \mathbf{x}_{\tau, i}^{\top})^{2} \big])=
  \mathbb{E}\big[ ( \sum_j {x}_{\tau, ij}^2 )^{2} \big] 
  = d(d+2).
  $
  
  Therefore 
  \begin{align*}
    \tilde{C}_{0}^{\mathrm{er}} = \frac{d+ N+1}{N}, 
    \quad\quad
    \lim _{\tiny\makecell{d, N \rightarrow \infty\\ d / N \rightarrow \eta}}
      \tilde{C}_{0}^{\mathrm{er}} 
      = 1 + \eta.
  \end{align*}
  
  \end{proof}

  %
  
  \begin{theorem}[Asymptotic MAML constant]
  \label{thm:asymptotic_MAML_constant}
    As $d,N \to \infty$, $d/N \to \eta$,
  the optimal constant of the MAML method,
  $\tilde{C}_{0}^{\mathrm{ma}} $, 
  by tuning the step size $\alpha \in (0, 1/\bar{\lambda})$
  and the train-val split ratio $s \in (0,1)$,
  satisfies 
  \begin{align*}
    \inf _{\tiny\makecell{\alpha>0\\ s \in(0,1)}}
    \lim _{\tiny\makecell{d, N \rightarrow \infty\\ d / N \rightarrow \eta}}
    \tilde{C}_{0}^{\mathrm{ma}} 
    = 1 + \eta
  \end{align*}
  \end{theorem}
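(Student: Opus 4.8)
The plan is to start from the closed form for $\tilde{C}_0^{\mathrm{ma}}$ established in Lemma~\ref{lemma:dominate_constant_stats_err_maml}: writing $A_\tau \coloneqq (\mathbf{I} - \alpha\hat{\mathbf{Q}}_{\tau,N_1})^2 \succeq \mathbf{0}$ and recalling $N_1 = sN$, $N_2 = (1-s)N$,
\begin{equation*}
\tilde{C}_0^{\mathrm{ma}} = \frac{\frac{1}{dN_2}\mathbb{E}\big[\mathrm{tr}^2(A_\tau) + (N_2+1)\,\mathrm{tr}(A_\tau^2)\big]}{\big(\frac{1}{d}\mathbb{E}[\mathrm{tr}(A_\tau)]\big)^2},
\end{equation*}
and to pass to the limit $d,N\to\infty$ with $d/N\to\eta$, so that $d/N_1\to\eta/s$ and $d/N_2\to\eta/(1-s)$. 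The one analytic input needed is the asymptotics of the normalized Wishart spectral moments $\frac{1}{d}\mathbb{E}[\mathrm{tr}(\hat{\mathbf{Q}}_{\tau,N_1}^k)]$: by the standard Wishart trace-moment formulas (equivalently, Marchenko--Pastur moment convergence, exactly as used in the proof of Theorem~\ref{thm:asymptotic_ERM_constant}), these converge to polynomials in $\eta/s$ with constant term $1$; expanding $A_\tau$ and $A_\tau^2$ in powers of $\hat{\mathbf{Q}}_{\tau,N_1}$ gives $\frac{1}{d}\mathbb{E}[\mathrm{tr}(A_\tau)]\to m_1(\alpha)$ and $\frac{1}{d}\mathbb{E}[\mathrm{tr}(A_\tau^2)]\to m_2(\alpha)$, where $m_1,m_2$ are explicit polynomials in $\alpha$ (with coefficients depending on $\eta/s$) that are continuous in $\alpha$ and satisfy $m_1(0)=m_2(0)=1$.

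For the lower bound $\inf_{\alpha>0,\,s\in(0,1)}\lim \tilde{C}_0^{\mathrm{ma}} \ge 1+\eta$ I would avoid random matrix theory entirely and use only two elementary inequalities for the nonnegative eigenvalues $a_i\ge 0$ of $A_\tau$: Jensen gives $\mathbb{E}[(\tfrac1d\mathrm{tr} A_\tau)^2]\ge(\mathbb{E}[\tfrac1d\mathrm{tr} A_\tau])^2$, and Cauchy--Schwarz gives $\tfrac1d\mathrm{tr}(A_\tau^2)=\tfrac1d\sum a_i^2 \ge (\tfrac1d\sum a_i)^2$, hence $\mathbb{E}[\tfrac1d\mathrm{tr}(A_\tau^2)]\ge(\mathbb{E}[\tfrac1d\mathrm{tr} A_\tau])^2$. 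Substituting both into the numerator shows $\tilde{C}_0^{\mathrm{ma}} \ge \frac{d+N_2+1}{N_2}$ for every $\alpha,s$; letting $d,N\to\infty$ this tends to $1+\tfrac{\eta}{1-s}\ge 1+\eta$, which — being independent of $\alpha$ — yields the claimed lower bound. (This lower bound is exactly the $\alpha=0$, ERM-with-split, value, which is the intuition behind the bound.)

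For the matching upper bound I would use the reverse Cauchy--Schwarz estimate $(\mathrm{tr} A_\tau)^2 \le d\,\mathrm{tr}(A_\tau^2)$ in the numerator to get $\tilde{C}_0^{\mathrm{ma}}\le \frac{d+N_2+1}{N_2}\cdot\frac{\frac1d\mathbb{E}[\mathrm{tr}(A_\tau^2)]}{(\frac1d\mathbb{E}[\mathrm{tr} A_\tau])^2}$; taking $d,N\to\infty$ at fixed $(\alpha,s)$ gives $\lim \tilde{C}_0^{\mathrm{ma}}(\alpha,s)\le \big(1+\tfrac{\eta}{1-s}\big)\frac{m_2(\alpha)}{m_1(\alpha)^2}$. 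Since $\frac{m_2(\alpha)}{m_1(\alpha)^2}\to 1$ as $\alpha\to 0^+$ (continuity, $m_1(0)=m_2(0)=1$) and $1+\tfrac{\eta}{1-s}\to 1+\eta$ as $s\to 0^+$, for every $\epsilon>0$ one can first pick $s_0$ with $1+\tfrac{\eta}{1-s_0}<1+\eta+\epsilon/2$ and then $\alpha_0>0$ small so that $\lim \tilde{C}_0^{\mathrm{ma}}(\alpha_0,s_0)<1+\eta+\epsilon$; hence the infimum is $\le 1+\eta$, and combining with the lower bound gives equality. The main obstacle is the double-limit bookkeeping dictated by the statement's order $\inf_{(\alpha,s)}\circ\lim_{d,N}$: one must first confirm that $\lim_{d,N\to\infty}\tilde{C}_0^{\mathrm{ma}}(\alpha,s)$ genuinely exists for each fixed $(\alpha,s)$ — this is exactly where the Wishart/Marchenko--Pastur moment convergence enters, together with the (standard) fact that any outlier eigenvalues of $\hat{\mathbf{Q}}_{\tau,N_1}$ contribute negligibly to the normalized traces — and only afterwards take the infimum; the secondary subtlety is that the infimum is not attained but only approached as $\alpha,s\to 0^+$, so the final step must be phrased as an $\epsilon$-argument rather than by exhibiting minimizers. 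Everything else (the Cauchy--Schwarz/Jensen inequalities, the limits $d/N_2\to\eta/(1-s)$, continuity of $m_1,m_2$ in $\alpha$) is routine.
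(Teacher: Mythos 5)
Your proposal is correct and follows essentially the same route as the paper's proof: the same closed form for $\tilde{C}_0^{\mathrm{ma}}$ from Lemma~\ref{lemma:dominate_constant_stats_err_maml}, the same Jensen/Cauchy--Schwarz lower bound $\tilde{C}_0^{\mathrm{ma}}\ge \frac{d+N_2+1}{N_2}$ giving $1+\eta$, and the same trace-power inequality for the upper bound with the infimum approached by $\alpha\to 0^+$, $s\to 0^+$. If anything, your retention of the factor $m_2(\alpha)/m_1(\alpha)^2$ and the explicit $\epsilon$-argument is a slightly more careful rendering of the paper's upper-bound display (which bounds the denominator by $(1-\alpha)^2$ via Jensen and lets $\alpha\to 0$), so no gap.
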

  
  \begin{proof}
  \label{proof:asymptotic_MAML_constant}
  We first derive a lower bound for $\mathop{\inf }_{\tiny\makecell{\alpha>0\\ s \in(0,1)}}
  \mathop{\lim} _{\tiny\makecell{d, N \rightarrow \infty\\ d / N \rightarrow \eta}}$ by
    \begin{align*}\label{eq:constant_maml_inf_limit_geq}
    &\inf _{\tiny\makecell{\alpha>0\\ s \in(0,1)}}
    \lim _{\tiny\makecell{d, N \rightarrow \infty\\ d / N \rightarrow \eta}} 
    \tilde{C}_{0}^{\mathrm{ma}} \\
    \geq & 
    \lim _{\tiny\makecell{d, N \rightarrow \infty\\ d / N \rightarrow \eta}}
    \inf _{\tiny\makecell{\alpha>0\\ s \in(0,1)}}
    \frac{\frac{1}{dN_2} 
      \mathbb{E} \Big[
      \Big( \sum_{i=1}^{d}
      (1-\alpha  \lambda_i^{(N_1)})^2
      \Big)^2 
      + (N_2 + 1)
      \Big (
      \sum_{i=1}^{d}
      (1-\alpha  \lambda_i^{(N_1)})^4
      \Big)
      \Big]} 
    {\frac{1}{d^2} 
       \mathbb{E}^2 \Big[  \sum_{i=1}^{d}
       (1-\alpha  \lambda_i^{(N_1)})^2
       \Big]} \\
    \geq & 
    \lim _{\tiny\makecell{d, N \to \infty\\ d / N \to \eta}}
    \inf _{\tiny\makecell{\alpha > 0 \\s \in(0,1)}}
    \frac{d + N_2+1}{N_2} 
    = 1 + \eta. \numberthis
  \end{align*}
  
  Next we derive the upper bound for $\inf _{\tiny\makecell{\alpha>0\\ s \in(0,1)}}
  \lim _{\tiny\makecell{d, N \rightarrow \infty\\ d / N \rightarrow \eta}} \tilde{C}_{0}^{\text{ma}}$.
  As for any PD matrix $\mathbf{M} \in \mathbb{R}^{d\times d}$, $\frac{1}{d} \mathrm{tr}(\mathbf{M}^2) \geq 
  (\frac{1}{d} \mathrm{tr}(\mathbf{M}))^2$, then applying this inequality we obtain
  \begin{align*}\label{eq:constant_maml_limit}
    & \lim _{\tiny\makecell{d, N \rightarrow \infty\\ d / N \rightarrow \eta}} \tilde{C}_{0}^{\text{ma}} 
    \leq  
     \lim _{\tiny\makecell{d, N \rightarrow \infty\\ d / N \rightarrow \eta}}
    \frac{\frac{1}{N_2} 
    (d + N_2 + 1)
      } 
    {\frac{1}{d^2} 
       \mathbb{E}^2  \big[  \sum_{i=1}^{d}
       (1-\alpha  \lambda_d^{(N_1)})^2
       \big]} 
    \leq  
     \lim _{\tiny\makecell{d, N \rightarrow \infty\\ d / N \rightarrow \eta}}
    \frac{\frac{1}{N_2} 
    (d + N_2 + 1)
      } 
    {
    \mathbb{E}^2 \big[  
    \big(1-\frac{\alpha}{d}\sum_{i=1}^{d}\lambda_i^{(N_1)}\big)^2 \big]} \\
    \leq &
     \lim _{\tiny\makecell{d, N \rightarrow \infty\\ d / N \rightarrow \eta}}
    \frac{\frac{1}{N_2} 
    (d + N_2 + 1) } 
    { \mathbb{E}^2 \big[  
       \big(1-\alpha  
       \mathbb{E}[\frac{1}{d}\sum_{i=1}^{d}\lambda_i^{(N_1)}]\big)^2
       \big]} 
    =
     \lim _{\tiny\makecell{d, N \rightarrow \infty\\ d / N \rightarrow \eta}}
    \frac{\frac{1}{N_2} 
    (d + N_2 + 1)} 
    { (1-\alpha )^2} .
       \numberthis
  \end{align*}
  
  Therefore 
  \begin{align}\label{eq:constant_maml_inf_limit_leq}
    \inf _{\tiny\makecell{\alpha>0\\ s \in(0,1)}}
    \lim _{\tiny\makecell{d, N \rightarrow \infty\\ d / N \rightarrow \eta}} \tilde{C}_{0}^{\text{ma}} 
    \leq 
    \inf _{\tiny\makecell{s \in(0,1)}}
    \lim _{\tiny\makecell{d, N \rightarrow \infty\\ d / N \rightarrow \eta}}
    \frac{d + N_2+1}{N_2} 
    = 1 + \eta
  \end{align}
  
  Based on~\eqref{eq:constant_maml_inf_limit_geq} and~\eqref{eq:constant_maml_inf_limit_leq} we arrive at
  \begin{align}\label{eq:constant_maml_inf_limit}
    \inf _{\tiny\makecell{\alpha>0\\ s \in(0,1)}}
    \lim _{\tiny\makecell{d, N \rightarrow \infty\\ d / N \rightarrow \eta}} \tilde{C}_{0}^{\text{ma}} 
    = 1 + \eta
  \end{align}
  \end{proof}

  \begin{theorem}[Asymptotic dominating constant of iMAML]~\citep{bai2021_trntrn_trnval}
    \label{thm:asymptotic_biMAML_constant}
      As $d,N \to \infty$, $d/N \to \eta$,
    the optimal constant of the iMAML method,
    $\tilde{C}_{0}^{\text{im}} $, 
    by tuning the regularization $\gamma \in (0, \infty)$
    and the train-val split ratio $s \in (0,1)$,
    satisfies 
    \begin{align*}
    \inf _{\tiny\makecell{\gamma>0\\ s \in(0,1)}}
    \lim _{\tiny\makecell{d, N \rightarrow \infty\\ d / N \rightarrow \eta}} \tilde{C}_{0}^{\mathrm{im}} 
      = 1 + \eta.
    \end{align*}
  \end{theorem}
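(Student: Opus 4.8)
The plan is to run, essentially verbatim, the argument used for MAML in the proof of Theorem~\ref{thm:asymptotic_MAML_constant}, starting from the closed form of the dominating constant in~\eqref{eq:constant_bi}. Introduce $g_i \coloneqq \gamma^2/(\lambda_i^{(N_1)}+\gamma)^2$ for the eigenvalues of $\gamma^2(\hat{\mathbf{Q}}_{\tau,N_1}+\gamma\mathbf{I}_d)^{-2}$, so that $\tilde{C}_0^{\mathrm{im}}=\frac{\frac{1}{dN_2}\,\mathbb{E}[(\sum_i g_i)^2+(N_2+1)\sum_i g_i^2]}{\frac{1}{d^2}\,\mathbb{E}^2[\sum_i g_i]}$, and note $g_i\in(0,1]$ since $\hat{\mathbf{Q}}_{\tau,N_1}\succeq\mathbf{0}$. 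The only structural inputs I will need are Jensen's inequality, the Cauchy--Schwarz bound $(\sum_i g_i)^2\le d\sum_i g_i^2$, and the identity $\mathbb{E}[\hat{\mathbf{Q}}_{\tau,N_1}]=\mathbf{I}_d$ from Assumption~\ref{asmp:linear_centroid_model} (hence $\frac{1}{d}\mathbb{E}[\sum_i\lambda_i^{(N_1)}]=1$); no detailed spectral (Marchenko--Pastur) information about $\hat{\mathbf{Q}}_{\tau,N_1}$ is required for the two bounds themselves.

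For the lower bound I would first use $\inf_{\gamma,s}\lim\ge\lim\inf_{\gamma,s}$ (valid when the inner limits exist). For each fixed $\gamma,s$, Jensen gives $\mathbb{E}[(\sum_i g_i)^2]\ge\mathbb{E}^2[\sum_i g_i]$, and Cauchy--Schwarz followed by Jensen gives $\mathbb{E}[\sum_i g_i^2]\ge\frac{1}{d}\mathbb{E}[(\sum_i g_i)^2]\ge\frac{1}{d}\mathbb{E}^2[\sum_i g_i]$; combining, $\mathbb{E}[(\sum_i g_i)^2+(N_2+1)\sum_i g_i^2]\ge\frac{d+N_2+1}{d}\mathbb{E}^2[\sum_i g_i]$, hence $\tilde{C}_0^{\mathrm{im}}\ge\frac{d+N_2+1}{N_2}$. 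This bound is deterministic and does not depend on $\gamma$; taking the infimum over $s\in(0,1)$ (approached as $s\to0^+$, with $N_2=(1-s)N$) and then $d,N\to\infty$ with $d/N\to\eta$ gives $\inf_{\gamma,s}\lim\tilde{C}_0^{\mathrm{im}}\ge1+\eta$.

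For the upper bound, fix $\gamma,s$. Bounding the numerator crudely by $\sum_i g_i\le d$ and $g_i\le1$ yields $\frac{1}{dN_2}\mathbb{E}[(\sum_i g_i)^2+(N_2+1)\sum_i g_i^2]\le\frac{d^2+(N_2+1)d}{dN_2}=\frac{d+N_2+1}{N_2}$. For the denominator, apply convexity of $t\mapsto\gamma/(t+\gamma)$ on $[0,\infty)$ twice (once over the eigenvalues, once under $\mathbb{E}$) together with Jensen for $x\mapsto x^2$ and $\frac{1}{d}\mathbb{E}[\sum_i\lambda_i^{(N_1)}]=1$ to get $\frac{1}{d}\mathbb{E}[\sum_i g_i]\ge(\gamma/(1+\gamma))^2$, hence $\frac{1}{d^2}\mathbb{E}^2[\sum_i g_i]\ge(\gamma/(1+\gamma))^4$. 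Combining, $\tilde{C}_0^{\mathrm{im}}\le\frac{(d+N_2+1)/N_2}{(\gamma/(1+\gamma))^4}$; letting $d,N\to\infty$ with $d/N\to\eta$ and then taking $\inf_{\gamma,s}$ (push $\gamma\to\infty$ so the denominator tends to $1$, push $s\to0^+$ so $d/N_2\to\eta$) gives $\inf_{\gamma,s}\lim\tilde{C}_0^{\mathrm{im}}\le1+\eta$. Together with the lower bound this proves the claim, and because $t\mapsto\gamma/(t+\gamma)$ and $t\mapsto t^2$ are the only nonlinearities, the chain of inequalities is the same one used for MAML with $(\mathbf{I}-\alpha\hat{\mathbf{Q}}_{\tau,N_1})^2$ replaced by $\gamma^2(\hat{\mathbf{Q}}_{\tau,N_1}+\gamma\mathbf{I}_d)^{-2}$.

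The one genuine obstacle is the interchange of $\inf_{\gamma,s}$ with $\lim_{d,N\to\infty}$ and, relatedly, the existence of $\lim_{d,N}\tilde{C}_0^{\mathrm{im}}(\gamma,s)$ for each fixed finite $(\gamma,s)$, which the statement presupposes: the deterministic bounds above only squeeze $\tilde{C}_0^{\mathrm{im}}$ in the $\gamma\to\infty,\,s\to0$ corner and do not by themselves force the limit to exist at fixed $\gamma$. To justify the $\lim$ I would additionally invoke that $\frac{1}{d}\mathrm{tr}$ of any fixed rational function of $\hat{\mathbf{Q}}_{\tau,N_1}$ concentrates and converges to a deterministic limit (the Stieltjes-transform / Marchenko--Pastur fact underlying Lemma~\ref{lemma:concentration_W_hat}); alternatively one may phrase the result with $\limsup$ in the upper bound and $\liminf$ in the lower bound, since the two bounding expressions agree after the outer $\inf$. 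A minor point to flag is that for $\eta>1$ some $\lambda_i^{(N_1)}$ vanish and the corresponding $g_i$ equal $1$, but this is harmless because every inequality above uses only $g_i\in(0,1]$.
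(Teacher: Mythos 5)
Your argument is correct, but note that the paper does not actually prove this statement at all: Theorem~\ref{thm:asymptotic_biMAML_constant} is imported verbatim from \cite{bai2021_trntrn_trnval}, with no proof given in the appendix (only the neighboring MAML and BaMAML constants are worked out). What you have done is supply a self-contained proof by transplanting the sandwich argument of Theorem~\ref{thm:asymptotic_MAML_constant} to the expression \eqref{eq:constant_bi}, replacing $(1-\alpha\lambda_i^{(N_1)})^2$ by $g_i=\gamma^2/(\lambda_i^{(N_1)}+\gamma)^2$. The steps check out: the deterministic lower bound $\tilde{C}_0^{\mathrm{im}}\geq (d+N_2+1)/N_2$ follows from Jensen and Cauchy--Schwarz exactly as in the MAML case and is independent of $\gamma$, giving $\inf_{\gamma,s}\lim\geq 1+\eta$; for the upper bound, $g_i\in(0,1]$ bounds the numerator by $(d+N_2+1)/N_2$, convexity of $t\mapsto\gamma^2(t+\gamma)^{-2}$ together with $\mathbb{E}[\hat{\mathbf{Q}}_{\tau,N_1}]=\mathbf{I}_d$ lower-bounds the denominator by $(\gamma/(1+\gamma))^4$, and sending $\gamma\to\infty$, $s\to 0^+$ after $d,N\to\infty$ yields $\leq 1+\eta$ (here $\gamma\to\infty$ plays the role that $\alpha\to 0$ plays for MAML, i.e.\ the no-adaptation corner). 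Your flagged caveat about the existence of $\lim_{d,N}\tilde{C}_0^{\mathrm{im}}(\gamma,s)$ at fixed $(\gamma,s)$ is real, but it is the same implicit gap in the paper's own MAML proof; your two remedies (invoking concentration of $\frac{1}{d}\mathrm{tr}$ of rational functions of $\hat{\mathbf{Q}}_{\tau,N_1}$ via the Stieltjes-transform machinery the paper already uses for Theorem~\ref{thm:asymptotic_baMAML_constant}, or restating with $\limsup$/$\liminf$) both suffice. So relative to the paper you gain a proof where there was only a citation, at the cost of a slightly cruder route than Bai et al.'s exact Marchenko--Pastur computation, which additionally identifies where the infimum is (not) attained.
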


  \begin{theorem}[Asymptotic BaMAML constant]
  \label{thm:asymptotic_baMAML_constant}
    As $d,N \to \infty$, $d/N \to \eta$,
  the optimal constant of the BaMAML method,
  $\tilde{C}_{0}^{\mathrm{ba}} $, 
  by tuning the regularization $\gamma \in (0, \infty)$
  and the train-val split ratio $s \in (0,1)$,
  satisfies 
  \begin{align*}
  \inf _{\tiny\makecell{\gamma>0\\ s \in(0,1)}}
  \lim _{\tiny\makecell{d, N \rightarrow \infty\\ d / N \rightarrow \eta}} \tilde{C}_{0}^{\mathrm{ba}} 
  \begin{cases}
    = 1, & \eta \in (0, 1], \\
    \leq \eta, & \eta \in (1, \infty) .
  \end{cases} 
  \end{align*}
  \end{theorem}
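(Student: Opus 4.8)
The plan is to pair the universal lower bound $\tilde{C}_0^{\mathrm{ba}}\ge 1$ coming from Lemma~\ref{lemma:dominate_constant_stats_err} with an explicit near-optimal choice of $(\gamma,s)$ that drives the limiting value of $\tilde{C}_0^{\mathrm{ba}}$ down to $1$ when $\eta\le 1$ and to $\eta$ when $\eta>1$. Since for $\eta\le 1$ the lower bound already matches the target and for $\eta>1$ only an upper bound is claimed, the substantive content is the upper-bound direction. Concretely, for each fixed $(\gamma,s)$ I would first take $d,N\to\infty$ with $d/N\to\eta$ (so that $d/N_1\to\eta_1=\eta/s$ and $d/N_2\to\eta_2=\eta/(1-s)$), obtaining a deterministic limit $g(\gamma,s;\eta):=\lim \tilde{C}_0^{\mathrm{ba}}$, and then show $\inf_{\gamma>0,\,s\in(0,1)} g(\gamma,s;\eta)$ equals $1$ for $\eta\le 1$ and is $\le\eta$ for $\eta>1$.

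For the limit $g(\gamma,s;\eta)$ I would work from Lemma~\ref{lemma:dominate_constant_stats_err}, i.e. $\tilde{C}_0^{\mathrm{ba}}=\tfrac1d\mathbb{E}[\mathrm{tr}((\hat{\mathbf{W}}_\tau^{\mathrm{ba}})^2)]\cdot\{\tfrac1d\mathbb{E}[\mathrm{tr}(\hat{\mathbf{W}}_\tau^{\mathrm{ba}})]\}^{-2}$, together with the resolvent representation $\hat{\mathbf{W}}_{\tau,N}^{\mathrm{ba}}=\tfrac{\gamma s}{1-s}\big[(\mathbf{I}_d+\gamma^{-1}\hat{\mathbf{Q}}_{\tau,N_1})^{-1}-(\mathbf{I}_d+(\gamma s)^{-1}\hat{\mathbf{Q}}_{\tau,N})^{-1}\big]$ and the identity $\hat{\mathbf{Q}}_{\tau,N}=s\hat{\mathbf{Q}}_{\tau,N_1}+(1-s)\hat{\mathbf{Q}}_{\tau,N_2}$. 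Under Assumption~\ref{asmp:linear_centroid_model} the matrices $\hat{\mathbf{Q}}_{\tau,N_1}$ and $\hat{\mathbf{Q}}_{\tau,N_2}$ are independent Wishart matrices whose empirical spectral distributions converge to Marchenko--Pastur laws with ratios $\eta_1,\eta_2$; by concentration of linear spectral statistics (of the same flavour used in Lemmas~\ref{lemma:concentration_W_hat} and \ref{lemma:bound_stats_err_term1}) both $\tfrac1d\mathrm{tr}(\hat{\mathbf{W}}_\tau^{\mathrm{ba}})$ and $\tfrac1d\mathrm{tr}((\hat{\mathbf{W}}_\tau^{\mathrm{ba}})^2)$ concentrate around their means, so $g$ is governed by the joint limiting spectral behaviour of these resolvents and the scale factor $\tfrac{\gamma s}{1-s}$ cancels in the ratio. (One could write $g$ in full generality using the free additive convolution for the law of $\hat{\mathbf{Q}}_{\tau,N}$, but the near-optimal choice below only needs the kernel structure, so I would avoid that.)

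The decisive step is to take $\gamma\to 0^+$ and $s\to 0^+$ inside the infimum, after the $d,N\to\infty$ limit. Once $\gamma$ is below the lower edge of the relevant spectra, $(\mathbf{I}_d+\gamma^{-1}\hat{\mathbf{Q}}_{\tau,N_1})^{-1}$ collapses onto the kernel projection $\Pi_{N_1}^{\perp}$ of asymptotic relative rank $(1-s/\eta)_+$, and $(\mathbf{I}_d+(\gamma s)^{-1}\hat{\mathbf{Q}}_{\tau,N})^{-1}$ collapses onto $\Pi_{N}^{\perp}$ of asymptotic relative rank $(1-1/\eta)_+$. Because $\ker\hat{\mathbf{Q}}_{\tau,N}=\ker\hat{\mathbf{Q}}_{\tau,N_1}\cap\ker\hat{\mathbf{Q}}_{\tau,N_2}$, the projections commute and $\Pi_{N_1}^{\perp}-\Pi_{N}^{\perp}$ is again a projection, so the normalized $\hat{\mathbf{W}}_\tau^{\mathrm{ba}}$ converges to a projection of asymptotic relative rank $\rho=(1-s/\eta)_+-(1-1/\eta)_+$, which equals $1-s/\eta$ for $\eta\le 1$ and $(1-s)/\eta$ for $\eta>1$. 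Hence $\tilde{C}_0^{\mathrm{ba}}\to 1/\rho$, which tends to $1$ as $s\to 0$ when $\eta\le 1$ and to $\eta$ as $s\to 0$ when $\eta>1$. Combining with $\tilde{C}_0^{\mathrm{ba}}\ge 1$ yields the exact value $1$ for $\eta\in(0,1]$ and the bound $\le\eta$ for $\eta\in(1,\infty)$.

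The main obstacle I anticipate is making the ``resolvent $\to$ projection'' collapse rigorous uniformly enough to legitimately interchange the $d,N\to\infty$ and $\gamma\to 0$ limits: one must control the mass the limiting Marchenko--Pastur density puts in a transition window of width $\mathcal{O}(\gamma)$ near the origin and verify that the corresponding eigenvalues of $\hat{\mathbf{W}}_\tau^{\mathrm{ba}}$, which are $\mathcal{O}(1)$ in size but asymptotically negligible in number, contribute vanishingly to both $\tfrac1d\mathrm{tr}(\hat{\mathbf{W}}_\tau^{\mathrm{ba}})$ and $\tfrac1d\mathrm{tr}((\hat{\mathbf{W}}_\tau^{\mathrm{ba}})^2)$; this is most delicate at the boundary $\eta=1$, where the density behaves like $x^{-1/2}$ at $0$, so the window mass is only $\mathcal{O}(\sqrt{\gamma})$ and still vanishes. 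A secondary point is justifying the exchange of expectation and limit via concentration of linear spectral statistics of sums and products of independent Wishart matrices, for which the operator-norm and sub-Gaussian concentration estimates already invoked in the paper should be sufficient.
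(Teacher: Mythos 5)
Your proposal is correct and reaches the stated limits, but via a different near-optimal hyperparameter regime than the paper. The paper first derives the closed-form Marchenko--Pastur Stieltjes transform $s(\omega_1,\omega_2)=\lim\frac1d\mathbb{E}[\mathrm{tr}((\omega_1\mathbf{I}+\omega_2\hat{\mathbf{Q}}_N)^{-1})]$, uses the derivative trick for the squared resolvent and L'H\^opital's rule, and then sends $\gamma\to\infty$ with $s\gamma\to 0$: in that regime $(\mathbf{I}+\gamma^{-1}\hat{\mathbf{Q}}_{N_1})^{-1}\to\mathbf{I}$ and $(\mathbf{I}+(\gamma s)^{-1}\hat{\mathbf{Q}}_{N})^{-1}$ collapses to the kernel projection of $\hat{\mathbf{Q}}_N$, so the normalized weight becomes the projection onto the row space of the full data, of relative rank $\min(1,1/\eta)$, giving $\tilde C_0^{\rm ba}\to\max(1,\eta)$; combined with $\tilde C_0^{\rm ba}\ge 1$ from Lemma~\ref{lemma:dominate_constant_stats_err} this is exactly the claim. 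You instead take $\gamma\to 0$, $s\to 0$, so both resolvents collapse and the normalized weight tends to the projection $\Pi_{N_1}^{\perp}-\Pi_{N}^{\perp}$ of relative rank $(1-s/\eta)_+-(1-1/\eta)_+\to\min(1,1/\eta)$; the scale-invariance of the ratio and the kernel containment $\ker\hat{\mathbf{Q}}_{N}=\ker\hat{\mathbf{Q}}_{N_1}\cap\ker\hat{\mathbf{Q}}_{N_2}$ make this work, and you use the same Jensen lower bound for the matching direction. What the paper's route buys is an explicit formula for the limiting constant at every fixed $(\gamma,s)$, so the degenerate limit is a one-line evaluation of a closed form; what your route buys is that it bypasses closed-form transforms entirely (and, as you note, the cross term needs only kernel structure rather than the joint law of $\hat{\mathbf{Q}}_{N_1}$ and $\hat{\mathbf{Q}}_{N}$). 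The technical caveats you flag—uniform control of the resolvent-to-projection collapse near the spectral edge (delicate only at $\eta=1$), and justifying the limit interchange and the correlated-Wishart cross term—are genuine, but the paper's own proof performs the analogous interchanges and cross-term simplifications at the same informal level, so your argument is on an equal footing once those estimates are written out.
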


  \begin{proof}
  Adopt the Stieltjes transform to obtain $\lim _{\tiny\makecell{d, N \rightarrow \infty\\ d / N \rightarrow \eta}} \tilde{C}_{0}^{\text{ba}} $ as a function of $\gamma, s, \eta$, given below. For all $\omega_1, \omega_2 > 0, \eta > 0$, define
  \begin{align*}
    s(\omega_{1}, \omega_{2})
    \coloneqq \lim _{d, N \rightarrow \infty, d / N \rightarrow \eta} \frac{1}{d} \mathbb{E}\Big[\operatorname{tr}\Big(\big(\omega_{1} \mathbf{I}_{d}+\omega_{2} \hat{\mathbf{Q}}_{N}\big)^{-1}\Big)\Big]
  \end{align*}
  whose closed form solution is given by
  \begin{align*}
    s(\omega_{1}, \omega_{2})
    &=\frac{\eta-1-\omega_{1} / \omega_{2}+\sqrt{(\omega_{1} / \omega_{2}+1+\eta)^{2}-4 \eta}}{2 \eta \omega_{1}} \\
    &=\frac{\sqrt{(\omega_{1} / \omega_{2}+1+\eta)^{2}-4 \eta} - (\omega_{1} / \omega_{2}+1+\eta) + 2\eta}{2 \eta \omega_{1}}
    \leq \frac{1}{\omega_1}.
  \end{align*}
  \begin{align*}
    \frac{d}{d \omega_1} s(\omega_{1}, \omega_{2})
    =&\big[\big( (1 / \omega_{2}+ (1 + \eta) / \omega_1)^{2}-4 \eta / \omega_1^2 \big)^{-\frac{1}{2}} \cdot \\
    &\big((1 / \omega_{2}+(1+\eta)/ \omega_1)(1+\eta) -4\eta/\omega_1\big) (-{\omega_1^{-2}}) 
    + (1 - \eta) \omega_1^{-2} \big] /2\eta \\
    \frac{d}{d \omega_1} s(\omega_{1}, \omega_{2}) \big|_{\omega_1=1} 
    =& [- \big( (1 / \omega_{2}+1 +\eta)^{2}-4 \eta \big)^{-\frac{1}{2}}\big((1 / \omega_{2}+(1+\eta))(1+\eta) -4\eta\big) + (1 - \eta) ] /2\eta .
  \end{align*}
  Therefore 
  \begin{align*}
    \lim _{d, N \rightarrow \infty, d / N \rightarrow \eta} \frac{1}{d} \mathbb{E}\Big[\operatorname{tr}\Big(\big( \mathbf{I}_{d}+\gamma^{-1} \hat{\mathbf{Q}}_{N}\big)^{-1}\Big)\Big] 
    = s(1, \gamma^{-1}) \leq 1
  \end{align*}
  where by L'Hospital's rule, 
  \begin{align*}
    \lim_{\gamma \to \infty} s(1, \gamma^{-1})
    &=\lim_{\gamma \to \infty} \frac{\sqrt{(\gamma+1+\eta)^{2}-4 \eta} - (\gamma+1+\eta) }{2 \eta } + 1 \\
    &=\lim_{\gamma \to \infty} \frac{\sqrt{(1/\gamma+1+\eta/\gamma)^{2}-4 \eta/\gamma^2} - (1/\gamma+1+\eta/\gamma) }{2 \eta / \gamma} + 1  
    = 1
  \end{align*}
  
  \begin{align*}
    \lim_{\gamma \to 0} s(1, \gamma^{-1})
    &=\lim_{\gamma \to 0} \frac{\sqrt{(\gamma+1+\eta)^{2}-4 \eta} - (\gamma+1-\eta) }{2 \eta }  \\
    &=\lim_{\gamma \to 0} \frac{|\eta - 1 | - (1 - \eta) }{2 \eta } 
    = \begin{cases}
      0 , &\eta \in (0, 1];\\
      1 - \frac{1}{\eta} , &\eta \in (1, \infty).
    \end{cases}
  \end{align*}

  By the derivative trick,
  \begin{align*}
    &\lim _{d, N \rightarrow \infty, d / N \rightarrow \eta} \frac{1}{d} \mathbb{E}\Big[\operatorname{tr}\Big(\big(\omega_{1} \mathbf{I}_{d}+\omega_{2} \hat{\mathbf{Q}}_{N}\big)^{-2}\Big)\Big]
    = \frac{d}{d \omega_1} s(\omega_{1}, \omega_{2}) \\
    &\lim _{d, N \rightarrow \infty, d / N \rightarrow \eta} \frac{1}{d} \mathbb{E}\Big[\operatorname{tr}\Big(\big(\mathbf{I}_{d}+\gamma^{-1} \hat{\mathbf{Q}}_{N}\big)^{-2}\Big)\Big] \\
    =& [\big( (\gamma +1 +\eta)^{2}-4 \eta \big)^{-\frac{1}{2}}\big((\gamma +(1+\eta))(1+\eta) -4\eta\big) - (1 - \eta) ] / 2\eta
  \end{align*}
  Therefore 
  \begin{align*}
    \lim_{\gamma \to \infty}\lim _{d, N \rightarrow \infty, d / N \rightarrow \eta} \frac{1}{d} \mathbb{E}\Big[\operatorname{tr}\Big(\big(\omega_{1} \mathbf{I}_{d}+\omega_{2} \hat{\mathbf{Q}}_{N}\big)^{-2}\Big)\Big]
    =\frac{1 + \eta  - (1 - \eta) }{2 \eta } = 1
  \end{align*}
  \begin{align*}
    \lim_{\gamma \to 0}\lim _{d, N \rightarrow \infty, d / N \rightarrow \eta} \frac{1}{d} \mathbb{E}\Big[\operatorname{tr}\Big(\big(\omega_{1} \mathbf{I}_{d}+\omega_{2} \hat{\mathbf{Q}}_{N}\big)^{-2}\Big)\Big]
    = \frac{|\eta - 1 | - (1 - \eta) }{2 \eta } 
    = \begin{cases}
      0 , &\eta \in (0, 1];\\
      1 - \frac{1}{\eta} , &\eta \in (1, \infty).
    \end{cases}
  \end{align*}

  For $\frac{1}{d} \mathbb{E}\Big[\operatorname{tr}\Big(\big(\mathbf{I}_{d}+\gamma^{-1} \hat{\mathbf{Q}}_{N_1}\big)^{-1} \big(\mathbf{I}_{d} + (\gamma s)^{-1} \hat{\mathbf{Q}}_{N}\big)^{-1} \Big)\Big] $, first
  $
   \lim_{\gamma \to \infty} (\mathbf{I}_{d}+\gamma^{-1} \hat{\mathbf{Q}}_{N_1}\big)^{-1}
   =\mathbf{I}_d
  $,
  therefore
  \begin{align*}
    &\lim_{\gamma \to \infty} \frac{1}{d} \mathbb{E}\Big[\operatorname{tr}\Big(\big(\mathbf{I}_{d}+\gamma^{-1} \hat{\mathbf{Q}}_{N_1}\big)^{-1} \big(\mathbf{I}_{d} + (\gamma s)^{-1} \hat{\mathbf{Q}}_{N}\big)^{-1} \Big)\Big] 
    = \lim_{\gamma \to \infty} \frac{1}{d} \mathbb{E}\Big[\operatorname{tr}\Big( \big(\mathbf{I}_{d} + (\gamma s)^{-1} \hat{\mathbf{Q}}_{N}\big)^{-1} \Big)\Big] \\
    & \lim_{\tiny\makecell{\gamma \to \infty \\ s\gamma \to 0}}
    \lim _{\tiny\makecell{d, N \rightarrow \infty\\ d / N \rightarrow \eta}} \frac{1}{d} \mathbb{E}\Big[\mathrm{tr}\Big(\big(\mathbf{I}_{d}+\gamma^{-1} \hat{\mathbf{Q}}_{N_1}\big)^{-1} \big(\mathbf{I}_{d} + (\gamma s)^{-1} \hat{\mathbf{Q}}_{N}\big)^{-1}  \Big) \Big] \\
    = &
    \lim_{\tiny\makecell{\gamma \to \infty \\ s\gamma \to 0}}
    \lim _{\tiny\makecell{d, N \rightarrow \infty\\ d / N \rightarrow \eta}} \frac{1}{d} \mathbb{E}\Big[\mathrm{tr}\Big(\big(\mathbf{I}_{d} + (\gamma s)^{-1} \hat{\mathbf{Q}}_{N}\big)^{-1} \Big) \Big] 
    = \begin{cases}
      0 , &\eta \in (0, 1];\\
      1 - \frac{1}{\eta} , &\eta \in (1, \infty).
    \end{cases}
   \end{align*}
  Then
  \begin{align*}
    \inf _{\tiny\makecell{\gamma>0 \\ s \in(0,1)}}
      \lim _{\tiny\makecell{d, N \rightarrow \infty\\ d / N \rightarrow \eta}} \tilde{C}_{0}^{\text{ba}}
    \leq \lim_{\tiny\makecell{\gamma \to \infty \\ s\gamma \to 0}}
    \lim _{\tiny\makecell{d, N \rightarrow \infty\\ d / N \rightarrow \eta}} \tilde{C}_{0}^{\text{ba}} 
    = \begin{cases}
      1 , &\eta \in (0, 1];\\
      {\eta} , &\eta \in (1, \infty).
    \end{cases}
  \end{align*}
  Note $\tilde{C}_0^{\rm ba} \geq 1$, therefore
    \begin{align}\label{eq:constant_bamaml_inf_limit}
      \inf _{\tiny\makecell{\gamma>0 \\ s \in(0,1)}}
      \lim _{\tiny\makecell{d, N \rightarrow \infty\\ d / N \rightarrow \eta}} \tilde{C}_{0}^{\text{ba}} 
       \begin{cases}
       = 1, & \eta \in (0, 1], \\
       \leq \eta, & \eta \in (1, \infty) .
      \end{cases} 
    \end{align}

  \end{proof}

  \subsubsection{Comparison of the dominating constants} 
  \label{app_ssub:comparison_of_maml_and_bamaml_in_terms_of_constant_term}
  
  Based on Theorems~\ref{thm:asymptotic_MAML_constant},
  \ref{thm:asymptotic_biMAML_constant},
  \ref{thm:asymptotic_baMAML_constant}, Suppose Assumptions~1,3 hold, we have
  \begin{align}
  \label{eq:maml_bimaml_const_ineq}
    \lim _{\tiny\makecell{d, N \rightarrow \infty\\ d / N \rightarrow \eta}} \tilde{C}_{0}^{\text {er}} =
    &\inf _{\tiny\makecell{\alpha \in(0,1/\bar{\lambda})\\ s \in(0,1)}}
    \lim _{\tiny\makecell{d, N \rightarrow \infty\\ d / N \rightarrow \eta}} \tilde{C}_{0}^{\text {ma}} 
    =
    \inf _{\tiny\makecell{\gamma>0\\ s \in(0,1)}} \lim _{\tiny\makecell{d, N \rightarrow \infty \\ d / N \rightarrow \eta}} \tilde{C}_{0}^{\text {im}} 
    > \inf _{\tiny\makecell{\gamma>0\\ s \in(0,1)}}
    \lim _{\tiny\makecell{d, N \rightarrow \infty\\ d / N \rightarrow \eta}} \tilde{C}_{0}^{\text {ba}} 
  .
  \end{align}

  Considering the weighted version, 
  recall that $w_{\mathrm{ma}} = (1-\alpha)^2 > (1-1/{\bar{\lambda}})^2 >0$, 
  $w_{\mathrm{im}} = (1+\gamma^{-1})^{-2}$, $\lim_{\gamma\to 0} w_{\mathrm{im}} = 0$, $w_{\mathrm{ba}} = (1+\gamma^{-1})^{-1}(1+(\gamma s)^{-1})^{-1} < w_{\rm im}$, $\lim_{\gamma s\to 0} w_{\mathrm{ba}} = 0$.
  Therefore 
  $\inf_{\gamma >0} w_{\mathrm{im}} = \inf_{\gamma >0, s\in(0,1)} w_{\mathrm{ba}} = 0
  < \inf_{\alpha \in (0,1/\bar{\lambda})} w_{\mathrm{ma}} $,
  and
  \begin{align}
    \label{eq:maml_bimaml_excess_risk_const_ineq}
      \inf _{\tiny\makecell{\alpha \in(0,1/\bar{\lambda})\\ s \in(0,1)}}
      \lim _{\tiny\makecell{d, N \rightarrow \infty\\ d / N \rightarrow \eta}} w_{\rm ma} \tilde{C}_{0}^{\text {ma}} 
      \geq 
      \inf _{\tiny\makecell{\gamma>0\\ s \in(0,1)}} \lim _{\tiny\makecell{d, N \rightarrow \infty \\ d / N \rightarrow \eta}} w_{\rm im} \tilde{C}_{0}^{\text {im}} 
      \geq \inf _{\tiny\makecell{\gamma>0\\ s \in(0,1)}}
      \lim _{\tiny\makecell{d, N \rightarrow \infty\\ d / N \rightarrow \eta}} w_{\rm ba} \tilde{C}_{0}^{\text {ba}} 
    .
    \end{align}
  Combining \eqref{eq:maml_bimaml_const_ineq} \eqref{eq:maml_bimaml_excess_risk_const_ineq} with the comparison in optimal population risk, we can conclude that, under Assumptions~1,3, when $\gamma$ is sufficiently small, it is guaranteed that iMAML and BaMAML will have smaller meta-test risk than MAML.
  Furthermore, BaMAML has strictly smaller dominating constant in statistical error compared to iMAML under optimal choice of $\gamma $ and $s$, as $d, N \to \infty, d/N \to \eta > 0$.
  

  \section{Additional experiments and details} 
  \label{app_sec:experiment_details}

  \subsection{Experimental details} 
  \label{app_sub:experimental_details}

  For sinewave regression and real image classification,
  Adam optimizer is used. 
  The hyperparameters of all experiments are chosen based on grid search.
  In sinewave regression experiments,
  the learning rate for ERM is initially 0.0001,
  while the  learning rates for both base-learner and meta-learner in MAML and BaMAML are initially 0.001,
  except that in the experiments with $N=1000,T=100,s=0.5$, 
  the initial learning rate of BaMAML for both base-learner and meta-learner are initially 0.0001.
  The learning rate decay is set to be 0.98 for all methods.
  The number of Monte-Carlo samples of model parameters used for BaMAML is 10.
  In real image classification experiments,
  the CNN architecture used is ResNet18.
  The initial learning rate of MAML and BaMAML are 0.001.

  
  \subsection{Real datasets} 
  \label{sub:real_datasets}
  \noindent\textbf{Experiment settings.} 
  We test the performance on the 5-way miniImageNet classification~\citep{vinyals2016matching} and TieredImageNet. MiniImageNet consists 100 classes of images, each with 600 examples. The classes are split into 64, 12, and 24 for train, validation and test, respectively, following~\citep{Finn2017_maml}.
  Note that, since in this setting ERM without adaptation to new classes does not have practical meaning, we do not compare with ERM in this setting.

  \noindent\textbf{Results.} 
  The meta-test classification accuracy under different settings are provided in Table~\ref{tab:image_class}, where BaMAML shows comparable testing loss on miniImageNet and higher testing loss on the TieredImageNet dataset.

  \begin{table}[t]
    \caption{ 
    Comparison of different MAML on image classification 
    (testing loss (NLL) with std., code modified from~\cite{nguyen2020_VAMPIRE})}
    \label{tab:image_class}
    \centering
    \begin{tabular}{ l| c c c c c}
    \hline
    & \multicolumn{2}{c}{miniImageNet } & \multicolumn{2}{c}{TieredImageNet } \\
    \hline
    Method & 1-shot 5-way &5-shot 5-way 
           & 1-shot 5-way &5-shot 5-way  \\
    \hline
     MAML & $1.41 \pm 0.04$ & $1.18 \pm 0.06$ 
          & $1.36 \pm 0.08$  & $0.99 \pm 0.02$ \\
     BaMAML & $1.38 \pm 0.05$ & $1.15 \pm 0.05$ 
            & $1.05 \pm 0.06 $  & $0.76 \pm 0.01$ \\
    \hline
    \end{tabular}
  \end{table}

\end{document}